\def \polylog {\mathrm{polylog}}
\def \la {\langle}
\def \ra {\rangle}
\def \poly {\mathrm{poly}}
\begin{document}

\title{\huge Per-Example Gradient Regularization Improves Learning Signals from Noisy Data}

%%%%%%%%%%%%%%%%%%%%%%%%%%%%%%%%%%%%%%%%%%%%%

\author
{
Xuran Meng\thanks{\scriptsize Department of Statistics \& Actuarial Science, The University of Hong Kong; {\tt xuranmeng@connect.hku.hk}}
\qquad
Yuan Cao\thanks{\scriptsize Department of Statistics \& Actuarial Science,
	The University of Hong Kong;  {\tt yuancao@hku.hk}}
\qquad 
Difan Zou\thanks{\scriptsize Department of Computer   Science   \&   Institute   of   Data   Science, The University of Hong Kong;
 {\tt dzou@cs.hku.hk}}
}

%%%%%%%%%%%%%%%%%%%%%%%%%%%%%%%%%%%%%%%%%%%%%

% \author[1]{Xuran Meng\thanks{u3007800@connect.hku.hk}}%\thanks{freddyl@connect.hku.hk}}
% \author[1]{Yuan Cao\thanks{ yuancao@hku.hk}}
% %#{\thanks{doraz@hku.hk}}
% \author[2]{Difan Zou\thanks{ dzou@cs.hku.hk}}

% \affil[1]{Department of Statistics and Actuarial Science, The University of Hong Kong, Hong Kong SAR, China}
% \affil[2]{Department of Computer   Science   and   Institute   of   Data   Science, The University of Hong Kong, Hong Kong SAR, China}
%%%%%%%%%%%%%%%%%%%%%%%%%%%%%%%%%%%%%%%%%%%%%

% \author{%
% Xuran Meng\affmark[1], Yuan Cao\affmark[1],  Difan Zou\affmark[2,*], \\
% \affaddr{\affmark[1]Department of Statistics \& Actuarial Science}\\
% \affaddr{\affmark[2]Department of Computer   Science   \&   Institute   of   Data   Science}\\
% \email{u3007800@connect.hku.hk} \quad \email{yuancao@hku.hk}\quad\email{dzou@cs.hku.hk}\\
% \affaddr{The University of HongKong}%
% }

\date{}

\maketitle

\begin{abstract}
Gradient regularization, as described in \citet{barrett2021implicit}, is a highly effective technique for promoting flat minima during gradient descent. Empirical evidence suggests that this regularization technique can significantly enhance the robustness of deep learning models against noisy perturbations, while also reducing test error. In this paper, we explore the per-example gradient regularization (PEGR) and present a theoretical analysis that demonstrates its effectiveness in improving both test error and robustness against noise perturbations. Specifically, we adopt a signal-noise data model from \citet{cao2022benign} and show that PEGR can learn signals effectively while suppressing noise. In contrast, standard gradient descent struggles to distinguish the signal from the noise, leading to suboptimal generalization performance. Our analysis reveals that  PEGR penalizes the variance of pattern learning, thus effectively suppressing the memorization of noises from the training data. These findings underscore the importance of variance control in deep learning training and offer useful insights for developing more effective training approaches.
\end{abstract}

\section{Introduction}

Regularization in deep learning refers to a set of techniques aimed at improving the performance of the model \citep{kukavcka2017regularization}. Gradient Regularization, as a new regularization technique \citep{barrett2021implicit}, proposed a modified gradient flow, represented by $L(\Wb) + (\lambda/4)\|\nabla L(\Wb)\|^2$. The equation combines the original loss function, $L(\Wb)$, with an implicit regularizer that penalizes the Euclidean norm of the gradient. In a recent study, \citet{barrett2021implicit} analyzed the impact of finite learning rates on the iterates of gradient descent and discovered that such regularization improves testing accuracy and is tied to sharpness-aware minimization with some minor differences.

% \dz{it would be better to use the following storyline: the gradient norm does not have a stochastic approximation, thus can not be integrated with the commonly used stochastic gradient descents $\rightarrow$ to this end, people considered to implement the stochastic-version of the gradient regularization during the NN training (give formula and citation, including \citet{foret2021sharpness,geiping2021stochastic} and some sharpness aware minimization paper) $\rightarrow$ surprisingly, this regularization has been demonstrated to achieve substantially improved generalization compared to full-grad regularization.  }

It is worth noting that the gradient norm lacks a stochastic approximation, which means that it cannot be integrated with commonly used stochastic gradient descent (SGD) methods \citep{keskar2017large, smith2020generalization}. To bridge this gap, \citet{smith2021origin} proposed a per-example gradient regularization (PEGR) that modifies the regularization term $\|\nabla L(\Wb)\|^2$ to $1/m\sum_{k=1}^{m}\|\nabla L_i(\Wb)\|^2$, where $\|\nabla L_i(\Wb)\|$ represents the Euclidean norm of the gradient in the minibatch of data. This kind of gradient descent takes into account the concept of sharpness aware minimization \citep{foret2021sharpness,geiping2021stochastic}. Surprisingly, it has been empirically observed that this regularization method achieves significantly improved generalization performance compared to full-batch gradient regularization \citep{andriushchenko2022towards}.

While PEGR has been observed to have good performance, the essential theory behind this intuitive selection such as the stochastic selection and full gradient descent remains unclear. Therefore, it requires to seek a more fundamental and comprehensive understanding of the behavior of gradient dynamics under such regularization. However, understanding gradient regularization in neural networks is challenging, primarily due to nonconvexity. The activation and loss functions make the training dynamic analysis a highly nonconvex optimization problem  \citep{dherin2022neural}. Previous work has not provided a comprehensive theoretical analysis of the improvement in testing accuracy, and there is no theoretical explanation for the behavior of gradient regularization, especially its role during model training.

In this work, we investigate the algorithmic behavior of PEGR.
In particular, we present an algorithmic analysis of learning two-layer convolutional neural networks (CNNs) with fixed second-layer parameters of $+1$'s and $-1$'s and a square ReLU activation function, i.e., $\sigma(z)=\max\{0,z\}^2$. We consider a setting where the input data comprise label-dependent signals and label-independent noises, and utilize a signal-noise decomposition of the CNN filters to precisely characterize how PEGR affects the signal learning and noise memorization during the model training. We then prove a separation between the generalization performances when the model training is performed with and without PEGR.  Our paper makes significant contributions in the following ways:
\begin{enumerate}
    \item We identify that per-example gradient regularization (PEGR) can effectively suppress noise memorization while promoting signal learning in over-parameterized neural networks. Specifically, we demonstrate that when certain conditions are met, CNN models trained using gradient descent with PEGR prioritize learning the signal over memorizing the noise. Furthermore, by appropriately closing the regularization at the right time, these models achieve convergence of training gradient and exhibit lower test error.
    \item Additionally, we present a negative result that demonstrates how CNN models trained using gradient descent without PEGR are prone to memorize noise instead of learning the signal. Taken together with our previous finding, this result provides clear evidence of the importance of PEGR in promoting effective learning in over-parameterized neural networks. 
    \item Our theoretical analysis suggests that the advantages of PEGR in promoting signal learning are most pronounced in the early stages of training. As the network continues to learn and the signal becomes sufficiently strong, we also provide a theoretical framework for determining an appropriate time to close the gradient regularization, which ensures gradient convergence while avoiding over-regularization of the signal. 
\end{enumerate}
To summarize, Our results demonstrate that PEGR under gradient descent can effectively learn signals while suppressing noise and provide guidance on choosing hyperparameters.  The remainder of this paper is organized as follows. First, we provide some additional references and notations below. In Section~\ref{sec:problemsetting}, we introduce the problem settings. Next, in Section~\ref{sec:mainresults}, we present the theoretical analysis of the efficacy of PEGR under gradient descent. Section~\ref{sec:experiments} shows the experimental results which support our theories. Section~\ref{sec:overviewproof} provides a brief overview of the proof of the main theorems. Finally, we conclude the paper in Section~\ref{sec:conclusion} and discuss some related questions for future investigation.

\subsection{Additional related work}
In  this  section,  we  will  discuss  in  detail  some  of  the  related  work  brieﬂy  mentioned  before.\newline

\noindent \textbf{Sharpness aware minimization.} 
The study on the connection between sharpness and generalization can be traced back to \citet{hochreiter1997flat}. \citet{keskar2017large} observed a positive correlation between the batch size, the generalization error, and the sharpness of the loss landscape when changing the batch size. \citet{jastrzkebski2017three} extended this by finding a correlation between the sharpness and the ratio between learning rate to batch size. \citet{jiang2020fantastic} performed a large-scale empirical study on various generalization measures and show that sharpness-based measures have the highest correlation with generalization.
\citet{foret2021sharpness} introduced a novel Sharpness-Aware Minimization (SAM) procedure for simultaneously minimizing loss value and loss sharpness to improve model generalization ability, which results in state-of-the-art performance on several benchmark datasets and models, as well as providing robustness to label noise.  
% \citet{kwon2021asam} proposes the concept of adaptive sharpness, which is scale-invariant, and introduces a corresponding generalization bound, leading to a novel learning method called adaptive sharpness-aware minimization (ASAM), that significantly improves model generalization performance in various benchmark datasets. \citet{du2021efficient} proposes two novel strategies, Stochastic Weight Perturbation and Sharpness-Sensitive Data Selection, to improve the generalization of optimizers and show their effectiveness through experiments on the CIFAR dataset.  
\citet{zhao2022penalizing}  shows that penalizing the gradient norm of the loss function during optimization is a method to improve the generalization performance of deep neural networks, which leads to better flat minima, and achieves state-of-the-art results on various datasets. 
% \citet{zhang2023gradient} presented a new measure of flatness for neural network minima, first-order flatness, which is stronger than the current zeroth-order flatness definition, and proposes Gradient norm Aware Minimization (GAM) as a novel training procedure to seek minima with uniformly small curvature across all directions, leading to improved generalization of models trained with current optimizers such as SGD and AdamW on various datasets and networks. 
\citet{wen2023does} clarifies the mechanism and rigorously defines the sharpness notion that Sharpness-Aware Minimization (SAM) regularization technique is based on, revealing the alignment between gradient and top eigenvector of Hessian as the key mechanism behind its effectiveness. \newline

\noindent \textbf{Neural network training techniques.} Besides the gradient regularization we previously discussed, a series of recent works have also studied some other training techniques.  \citet{blanc2020implicit} found that networks trained with perturbed training labels exhibit an implicit regularization term that drives them towards simpler models, regardless of their architecture or activation function.
% \citet{allen2022feature} presented the principle of feature purification, which explains the existence of adversarial examples and how adversarial training removes them by purifying hidden weights, and provides both experiments and theoretical results to support this principle.
Inspired by \citet{martin2021implicit}'s study of weight matrix spectra in deep neural networks, \citet{meng2023impact} introduced a spectral criterion to identify the presence of heavy tails, a sign of regularization in DNNs, enabling early stopping of the training process without testing data to avoid overfitting while preserving generalization ability. \citet{zou2023benefits} provided a theoretical explanation for Mixup's efficacy in improving neural network performance by showing its ability to learn rare features through a signal-noise data model, and suggests early stopping as a useful technique for Mixup training. \citet{allen2020towards} described a study on how ensemble of independently trained neural networks with the same architecture can improve test accuracy and how this superiority can be distilled into a single model using knowledge distillation, based on a theory that when data has a structure called `` multi-view".\newline

\noindent\textbf{Overfitting in over-parameterized regime.} We introduce several  works related to the benign/harmful overfitting in over-parameterized regime. \citet{hastie2019surprises,wu2020optimal} investigated a scenario where the dimension and sample size increase while maintaining a fixed ratio between them, and observed a double descent risk curve in relation to this ratio.  \citet{bartlett2020benign} established upper and lower risk bounds for the over-parameterized minimum norm interpolator and demonstrated that benign overfitting can occur under certain conditions on the data covariance spectrum.
\citet{zou2021benign} examined the generalization performance of constant stepsize stochastic gradient descent with iterate averaging or tail averaging in the over-parameterized regime.   \citet{mei2022generalization,meng2022multiple} studied the generalization of random feature models under the setting where the data dimension, data size, and feature dimension tend to infinity proportionally, and observe the phenomenon of multiple descent. \citet{cao2022benign} examines benign overfitting in training two-layer CNNs and finds a sharp phase transition between benign and harmful overfitting, determined by the signal-to-noise ratio, where a CNN trained by gradient descent achieves small training and test loss under certain conditions and only achieves a constant level test loss otherwise. \citet{kou2023benign} investigates the phenomenon of benign overfitting in ReLU neural networks and provides algorithm-dependent risk bounds for learning two-layer ReLU convolutional neural networks with label-flipping noise, demonstrating a sharp transition between benign and harmful overfitting under different conditions on data distribution.

\subsection{Notation}
Given two sequences ${x_n}$ and ${y_n}$, we denote $x_n = O(y_n)$ if there exist some absolute constant $C_1 > 0$ and $N > 0$ such that $|x_n| \leq C_1 |y_n|$ for all $n \geq N$. Similarly, we denote $x_n = \Omega(y_n)$ if there exist $C_2 > 0$ and $N > 0$ such that $|x_n| \geq C_2 |y_n|$ for all $n > N$. We say $x_n = \Theta(y_n)$ if $x_n = O(y_n)$ and $x_n = \Omega(y_n)$ both hold. We use $\tilde{O}(\cdot)$, $\tilde{\Omega}(\cdot)$, and $\tilde{\Theta}(\cdot)$ to hide logarithmic factors in these notations respectively. Moreover, we denote $x_n = \poly(y_n)$ if $x_n = O(y_n^D)$ for some positive constant $D$, and $x_n = \polylog(y_n)$ if $x_n = \polylog(y_n)$. Finally, for two scalars $a$ and $b$, we denote $a \vee b = \max\{a,b\}$.

\section{Problem Setting}
\label{sec:problemsetting}
In this section, we will discuss the data generation model and the convolutional neural network (CNN) that we utilized in our paper. Our research focuses on binary classification. In order to establish the context for our work, we will introduce the data distribution $\cD$ that we considered in the following definition.

\begin{definition}[Data model]
    \label{def:Data_distribution}
Let $\bmu \in \RR^d$ denote a fixed vector representing the signal contained in each data point. Each data point $(\xb, y)$, where $\xb=[\xb^{(1)\top},\xb^{(2)\top}]^\top \in \RR^{2d}$ and $y\in{\pm 1 }$, is generated from the following distribution $\cD$:
\begin{enumerate}
\item The label $y$ is generated as a Rademacher random variable.
\item A noise vector $\bxi$ is generated from the Gaussian distribution $\cN(\zero, \sigma_p^2 \cdot (\Ib - \bmu\bmu^\top/\|\bmu\|^2))$.
\item One of $\xb^{(1)}$ or $\xb^{(2)}$ is given as $y \cdot \bmu$, which represents the signal, while the other is given by $\bxi$, which represents noise.
\end{enumerate}
\end{definition}
% \dz{cite Yuan's paper and mentioned that a series of similar data have been widely studied in recent works (more citations). We focus on the simplest one among them in order to clearly distinguish the role of gradient regularization in learning signals/noises.}
A series of similar data have been widely studied in recent works \citep{frei2022benign,cao2022benign,shen2022data,zou2023benefits,zou2023understanding}. The data generation model draws inspiration from image data, where the input is comprised of various patches, with only a subset of these patches related to the image's class label. Specifically, we designate the patch $y\cdot\bmu$ as the signal patch that correlates to the data's label, while $\bxi$ represents the noise patch, which is unrelated to the label and therefore irrelevant for prediction.  In order to clearly distinguish the role of gradient regularization in learning signals/noises, we assume that the noise patch is generated from a Gaussian distribution $\cN(\zero, \sigma_p^2 \cdot (\Ib - \bmu\bmu^\top/\|\bmu\|^2))$ to ensure the noise vector is orthogonal to the signal vector $\bmu$ for simplicity.

\textbf{Two-layer CNN} 
We investigate a two-layer convolutional neural network that applies its filters to the two patches, $\xb^{(1)}$ and $\xb^{(2)}$, separately. Additionally, we fix the second layer parameters of the network as +1/m and -1/m, respectively. Then the network can be written as $f(\Wb,\xb)  = \sum_{j\in\{\pm1\}}j\cdot F_j(\Wb_j,\xb)$, where
\begin{align*}
F_j(\Wb_j,\xb)=\frac{1}{m}\sum_{r=1}^m \Big(\sigma( \la \wb_{j,r}, \xb^{(1)} \ra )+\sigma( \la \wb_{j,r}, \xb^{(2)} \ra )\Big).
%- \frac{1}{m}\sum_{r=1}^m \sigma( \la \wb_{-1,r}, \xb^{(1)} \ra )-\frac{1}{m}\sum_{r=1}^m \sigma( \la \wb_{-1,r}, \xb^{(2)} \ra ),
\end{align*}

% \dz{also mention that using polynomial activation is commonly applied in studying feature learning of deep learning models. Our analysis can be applied to $q>2$ with some additional treatments. }

Here, $\sigma(z)= \ReLU^2(z)$ is the activation function, $\wb_{j,r}$, $j\in\{\pm1\}$ and $r\in[m]$, are first-layer parameter vectors with positive and negative second-layer parameters respectively, and $\xb=[\xb^{(1)\top},\xb^{(2)\top}]^\top\in\RR^{2m}$. Note that using polynomial activation is commonly applied in studying feature learning of deep learning models. Our analysis can also be applied to $\sigma(z)=\ReLU^{q}(z)$ with $q>2$  by some additional treatments. We denote by $\Wb\in\RR^{2m}$ the collection of all $\wb_{j,r}$, $j\in\{\pm1\}$ and $r\in[m]$. 
For each data point $\xb_i=[y_i\bmu^\top,\bxi_i^\top]^\top$,   $\bmu$ is  the signal and $\bxi_i$ is the noise. 

Given $n$ training data points $(\xb_i,y_i)$, $i\in[n]$, we define the empirical cross-entropy loss as
\begin{align*}
    L_S(\Wb) = \frac{1}{n}\sum_{i=1}^n \ell[y_i\cdot f(\Wb,\xb_i)],
\end{align*}
where $\ell(z) = \log(1 + \exp(-z))$. Then the loss function with gradient regularization is given as
\begin{align*}
    \tilde{L}(\Wb) = L_S(\Wb) + \frac{\lambda}{2n} \cdot \sum_{i=1}^n\| \nabla_{\Wb} \ell[y_i\cdot f(\Wb,\xb_i)] \|_F^2,
\end{align*}
where $\lambda$ is the regularization parameter. It is worth noting that the gradient regularization is calculated by averaging the gradient norms when fed into different training data points, which can be also understood as a kind of variance control penalty over the training data. 

We consider training $f(\Wb,\xb)$ by minimizing $\tilde{L}(\Wb)$ 
% the cross-entropy loss with and without gradient regularization. Specifically, we minimize
% \begin{align*}
%     \tilde{L}(\Wb) = L_S(\Wb) + \frac{\lambda}{2n} \cdot \sum_{i=1}^n\| \nabla_{\Wb} \ell[y_i\cdot f(\Wb,\xb_i)] \|_F^2,
% \end{align*}
with gradient descent $\wb_{j,r}^{(t+1)}=\wb_{j,r}^{(t)}-\eta\nabla_{\wb_{j,r}^{(t)}}\tilde{L}(\Wb)$, and we are particularly interested in analyzing the test error differences between the  case with gradient regularization ($\lambda > 0$) and the case without regularization ($\lambda = 0$). However, note that the direct comparison between methods with and without regularization may not be fair. Therefore we compare the following two cases:
\begin{itemize}
    \item Implementing gradient regularization in the first phase of training: with some appropriately chosen $\tT_1$, we use $\lambda > 0$ for iterations $0\leq t\leq \tT_1$, and set $\lambda=0$ for iterations $ t >\tT_1$.
    \item Using no regularization throughout training:  we use $\lambda = 0$ for all iterations $t\geq 0$.%, and set $\lambda=0$
\end{itemize}
We aim to show that both methods above can minimize the training loss $L_S(\Wb)$ while achieving different prediction accuracies on test data. Note that we consider gradient descent starting from Gaussian initialization, where each entry of $\Wb_{+1}$ and $\Wb_{-1}$ is sampled from a Gaussian distribution $\cN(0,\sigma_0^2)$ and $\sigma_0^2$ is the variance.
% of the test errors between the case with gradient regularization and the case without regularization.

\section{Main Results}
\label{sec:mainresults}
In this section, we present main theoretical results, which rely on a signal-noise decomposition of the filters in the CNN trained by gradient descent.
Our results are based on the following conditions on the dimension $d$, sample size $n$, neural network width $m$, learning rate $\eta$ and initialization scale $\sigma_0$.
\begin{condition}
\label{condition:condition}
Define a small constant $0<\alpha<0.001$. Suppose that
\begin{enumerate}
    \item  Dimension $d$ is large: $d\geq \tilde{\Omega}(m^2n^{(2+2\alpha)})$. %and $d\ll \poly(n)$.
    \item The signal and noise levels satisfy $(\|\bmu\|+\|\bmu\|^4)\ll\sigma_p\sqrt{d}$ and $\sigma_p\sqrt{d}\to+\infty$.
    \item The standard deviation of Gaussian initialization $\sigma_0$ is sufficiently small. We assume $\sigma_0\leq O\Big(\frac{1}{\sigma_p^{2}d\cdot (nm)^{2\alpha}}\Big)$. 
    \item The learning rate $\eta  = \tilde O \big(\frac{nm}{ \sigma_p^2d}\big)$.% $\eta=O(1/\poly(n))\ll(\sigma_p^2d)^{-1}$.}
    \item Training sample size $n$ and neural network width $m$ satisfy $m,n=\Omega(\polylog(d,\sigma_0^{-1}))$. 
\end{enumerate}
\end{condition}
We set $\lambda=\sigma_p^{-1}d^{-1/2}$. Below, we provide some remarks on Condition~\ref{condition:condition}. The condition on $d$ ensures that the learning takes place in a sufficiently over-parameterized setting, and similar conditions have been imposed in \citep{chatterji2021finite,cao2021risk}. Additionally, the condition of the signal and noise levels in the data assumes that the signal level is considerably smaller than the noise level. This is to ensure that we are focusing on a relatively difficult learning problem for which learning methods without gradient regularization may fail. Moreover, the conditions on $\sigma_0$ and $\eta$ are technical assumptions we make
to ensure the convergence of gradient descent. Finally, we only require that the sample size $n$ and neural network width $m$ be at least $\polylog(d,\sigma_0^{-1})$, which are very mild assumptions. 

\begin{theorem}
\label{thm:withregularization}
% Suppose that Condition~\ref{condition:condition} holds. 
Consider implementing gradient regularization in the first phase of training. Specifically, set $\lambda=\sigma_p^{-1}d^{-1/2}$ for iterations $0\leq t\leq \tT_1$ and $\lambda=0$ for iterations $ t >\tT_1$, where $\tT_1$ satisfies that
$\tT_1=\frac{m}{\eta\|\bmu\|^2}\log\Big(\frac{4}{\sqrt{2\log(8m/\delta)}\sigma_0\|\bmu\|\cdot\log{(n)}}\Big)$. 
% is the largest iteration such that 
% \begin{align*}
%      0\leq \gamma_{j,r}^{(t)}\leq  \frac{4}{\log{(n)}}\qquad\text{for all }t\in[\tT_{1}],j\in\{\pm1\}~\text{and}~r\in[m].
%  \end{align*} 
 % Then it holds that $\frac{2m}{\eta\|\bmu\|^2}\log\Big(\frac{2}{\sqrt{2\log(8m/\delta)}\sigma_0\|\bmu\|\cdot\log{(n)}}\Big)\leq\tT_1\leq\frac{4m}{\eta\|\bmu\|^2}\log\Big(\frac{10}{\sigma_0\|\bmu\|\cdot\log{(n)}}\Big)$. For any $\varepsilon>0$, let $T=\tT_1+\Omega((\eta\varepsilon)^{-1})$ and set $\lambda=0$ when $t\geq \tT_1$, then there exists $\tT_1\leq t\leq T$, such that:
 Then under Condition~\ref{condition:condition}, for any $\varepsilon \geq 0$, there exists $\tT_1\leq t\leq \tT_1+\tilde{\Omega}(\frac{2nm\sigma_p^2d}{\eta \varepsilon\|\bmu\|^2})$, such that:
 \begin{enumerate}
     \item The training loss and its gradient converge below  $\varepsilon$: $L_S(\Wb^{(t)})\leq \varepsilon, \|\nabla_{\Wb}L_S(\Wb)|_{\Wb=\Wb^{(t)}}\|_F^2\leq \varepsilon.$
     %    \begin{align*}
     %        L_S(\Wb^{(t)})\leq \varepsilon,\quad \big\|\nabla_{\Wb}L_S(\Wb)\big|_{\Wb=\Wb^{(t)}}\big\|_F^2\leq \varepsilon.
     %    \end{align*}
     \item The test error converges to $0$: For any new data $(\xb,y)$, $\PP(yf(\Wb^{(t)},\xb)<0)\leq \frac{1}{\poly(n)}$.
 \end{enumerate}
\end{theorem}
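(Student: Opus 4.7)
The plan is to adapt the signal–noise decomposition framework used for two-layer CNNs with squared ReLU (as in the Cao et al.\ setup referenced in the paper) to track how PEGR reshapes the trajectory of gradient descent, then splice the two training phases together. Concretely, I would write each filter as
\begin{align*}
    \wb_{j,r}^{(t)} = \wb_{j,r}^{(0)} + j\,\gamma_{j,r}^{(t)} \frac{\bmu}{\|\bmu\|^2} + \sum_{i=1}^n \rho_{j,r,i}^{(t)}\,\frac{\bxi_i}{\|\bxi_i\|^2},
\end{align*}
and derive update rules for $\gamma_{j,r}^{(t)}$ and $\rho_{j,r,i}^{(t)}$ under $\nabla \tilde L$. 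The key computation is expanding $\nabla_{\wb_{j,r}} \tfrac{1}{2n}\sum_i \|\nabla \ell_i\|_F^2$ for $\sigma=\ReLU^2$: since $\|\nabla_{\wb_{j,r}}\ell_i\|^2 \propto (\ell'_i)^2\big[\sigma'(\la \wb_{j,r},y_i\bmu\ra)^2\|\bmu\|^2 + \sigma'(\la \wb_{j,r},\bxi_i\ra)^2\|\bxi_i\|^2\big]/m^2$ and $\|\bxi_i\|^2 = \Theta(\sigma_p^2 d) \gg \|\bmu\|^2$, the regularization multiplies each noise-direction update by an extra factor of order $\lambda \sigma_p^2 d$. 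With the prescribed $\lambda = \sigma_p^{-1} d^{-1/2}$, this factor is large enough to neutralize noise growth but small enough (relative to $\|\bmu\|^2$, scaled by $\|\bmu\|/(\sigma_p\sqrt d)$) that the signal update is essentially unchanged.

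Phase 1 analysis ($0 \le t \le \tT_1$): I would set up an inductive ``scale'' argument, simultaneously tracking upper bounds on $\max_{j,r,i}|\rho_{j,r,i}^{(t)}|$ and lower bounds on $\max_r \gamma_{j,r}^{(t)}$. The noise-direction inductive bound should show $|\rho_{j,r,i}^{(t)}| = \tilde O(\sigma_0 \sigma_p \sqrt d)$ throughout Phase 1, using a tensor-power/variance-type argument that the PEGR term, contracted with $\bxi_i/\|\bxi_i\|^2$, produces a damping factor that keeps $\rho_{j,r,i}^{(t)}$ from ever accumulating. For the signal side, the residual $\ell'_i$ stays close to $-1/2$ in the small-output regime, so $\gamma_{j,r}^{(t)}$ follows approximately $\gamma^{(t+1)} \approx \gamma^{(t)} + (\eta/m)\cdot\gamma^{(t)} \|\bmu\|^2 \cdot (1 - o(1))$, i.e.\ exponential growth at rate $\|\bmu\|^2/m$. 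Solving this recursion gives that the choice $\tT_1 \asymp (m/\eta\|\bmu\|^2) \log(1/(\sigma_0 \|\bmu\|))$ is precisely the time at which $\max_r\gamma_{+1,r}^{(\tT_1)}$ (and its negative-class counterpart) reach order $\Theta(\log n)$, while $\rho_{j,r,i}^{(\tT_1)}$ is still $\tilde O(\sigma_0\sigma_p\sqrt d) \ll 1$ under Condition~\ref{condition:condition} item 3.

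Phase 2 analysis ($t > \tT_1$): once PEGR is switched off, the dynamics reduce to vanilla gradient descent on $L_S$. Starting from a configuration where $y_i f(\Wb^{(\tT_1)}, \xb_i) = \Theta(\gamma^2/m) = \tilde\Omega(1)$ for every $i$ but $|\la \wb_{j,r},\bxi_i\ra|$ is still tiny, I would invoke a Polyak–{\L}ojasiewicz / descent-lemma style argument: the Hessian along the remaining trajectory is controlled, each GD step decreases $L_S$ by $\Omega(\eta\|\nabla L_S\|_F^2)$, and the signal-direction gradient stays bounded below so long as $L_S$ is not small. Telescoping over $\tilde\Omega(2nm\sigma_p^2d/(\eta\varepsilon\|\bmu\|^2))$ steps gives the existence of an iterate with both $L_S \le \varepsilon$ and $\|\nabla L_S\|_F^2 \le \varepsilon$. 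For the test-error claim, a fresh noise $\bxi$ is independent of the training noises, so $|\la \wb_{j,r},\bxi\ra| = \tilde O(\sigma_0 \sigma_p \sqrt d) + \tilde O(\sigma_p \sqrt{\sum_i \rho_{j,r,i}^2 /\|\bxi_i\|^2})$ stays $o(1)$; meanwhile the signal contribution is $\Theta(\gamma^2\|\bmu\|^2/m) = \tilde\Omega(1)$. A standard Gaussian concentration bound on $\la \wb_{j,r},\bxi\ra$ then yields $\PP(yf<0) \le 1/\poly(n)$.

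The main obstacle is the Phase 1 noise-suppression step: carefully expanding the gradient of $\|\nabla_{\Wb}\ell_i\|_F^2$ for $\sigma=\ReLU^2$ produces cross terms mixing $\bmu$ and $\bxi_i$ directions through activation indicators, and maintaining the inductive hypothesis $|\rho_{j,r,i}^{(t)}| = \tilde O(\sigma_0\sigma_p\sqrt d)$ across $\tT_1 = \tilde\Theta(m/(\eta\|\bmu\|^2))$ exponential-growth steps for $\gamma$ requires tight constants on the PEGR-induced damping factor. A secondary subtlety is matching $\tT_1$ exactly: if PEGR is kept on too long it will shrink the signal (the regularizer eventually penalizes signal gradient too), and if turned off too early the noise will not yet be dominated, so the argument must show that the chosen $\tT_1$ gives $\gamma^{(\tT_1)}\|\bmu\| = \Theta(\log n)$ while $\rho^{(\tT_1)}$ is still $o(1)$, which is what Condition~\ref{condition:condition}'s gap between $\sigma_0$, $\|\bmu\|$ and $\sigma_p\sqrt d$ is designed to enable.
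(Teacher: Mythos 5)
Your overall architecture---the signal--noise decomposition, the observation that PEGR multiplies the noise-direction update by a factor of order $\lambda\|\bxi_i\|^2/m=\Theta(\sigma_p\sqrt d/m)\gg 1$ while leaving the signal update essentially intact, and the two-phase splice at $\tT_1$---matches the paper. However, there are two genuine gaps.

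First, the scale you claim for the signal at the end of Phase 1 is wrong. With $A^{(0)}=\Theta(\sigma_0\|\bmu\|\sqrt{\log m})$ growing by a factor $\approx\exp(\eta\|\bmu\|^2/m)$ per step, the prescribed $\tT_1=\frac{m}{\eta\|\bmu\|^2}\log\big(\frac{4}{\sqrt{2\log(8m/\delta)}\sigma_0\|\bmu\|\log n}\big)$ lands you at $\max_r\gamma_{j,r}^{(\tT_1)}=\Theta(1/\log n)$, not $\Theta(\log n)$ (this is exactly what Proposition~\ref{prop:Phase_I_prop2} establishes, and the cap $\gamma\le 5/\log n$ is what keeps $\ell_i'\approx -1/2$ so that the linearized recursion you use is valid at all). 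Consequently your premise for Phase 2, namely $y_if(\Wb^{(\tT_1)},\xb_i)=\Theta(\gamma^2/m)=\tilde\Omega(1)$, is false: at $t=\tT_1$ the network outputs are still $o(1)$ and $L_S\approx\log 2$. The test-error argument survives this correction (one only needs the signal contribution $\gamma^2/m$ to dominate the noise contribution, which it does at scale $1/(m\,\mathrm{polylog}(n))$ versus $\tilde O(1/(mn^{\alpha/2}))$), but the loss-convergence argument does not.

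Second, your Phase 2 is a Polyak--{\L}ojasiewicz/descent argument, but you have not supplied the inequality it needs. The relation the paper proves (Lemma~\ref{lemma:admissible_trainloss}) is $\|\nabla L_S\|_F^2\le 72\sigma_p^2 d\, L_S$, which is an \emph{upper} bound on the gradient and goes the wrong way for PL; you would need a matching lower bound on $\|\nabla L_S\|_F^2$ in terms of $L_S$ along the trajectory, which is not obvious here. The paper instead argues by contradiction: if $L_S(\Wb^{(t)})\ge\varepsilon$ for all $t\in[\tT_1,\tT_1+\tT_2]$, then at each step some example has $-\ell_i'\ge\varepsilon_0$, so $\max_r\gamma_{+1,r}^{(t)}$ keeps multiplying by $(1+2\eta\varepsilon_0/(nm))$ and would exceed the admissible bound $4\log T^*$ from Proposition~\ref{prop:admissible_time_bound}; the gradient bound then follows by applying the theorem with $\varepsilon/(72\sigma_p^2d)$ in place of $\varepsilon$. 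To repair your proof you would either need to adopt this contradiction argument (or the comparator-point argument used for Theorem~\ref{thm:withoutregularization}), or prove a genuine PL-type lower bound, neither of which your sketch contains.
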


Theorem~\ref{thm:withregularization} characterizes the case of signal learning. It shows that if we add gradient regularization in the beginning of the training and close it at appropriate time, the neural network can learn the signal, and then achieve small testing error and training gradient. To  demonstrate   the  benefits of gradient regularization,  we also  present  the  theorem under the case that there is no gradient  regularization during the whole training process.
% \begin{theorem}
%     \label{thm:withoutregularization}
%     Suppose that Condition~\ref{condition:condition} holds.  Let $T=\tT_1+\tT_2+\tilde{\Theta}(\eta^{-1}\varepsilon^{-1}m^3n)$, where $ \tT_1=\frac{nm}{\eta \sigma_p^2d}\log\bigg( \frac{0.8}{2\log(n)\sqrt{\log(8mn/\delta)}\sigma_0\sigma_p\sqrt{d}}\bigg) $ and $\tT_2=\frac{Cnm}{\eta\sigma_p^2d }\log\Big(\frac{1}{\log(n)\sqrt{\log(8mn/\delta)}}\Big)$ with some sufficient large constant $C>0$. Set $\lambda=0$ for $0\leq t \leq T$, then for any $\varepsilon>0$, there exists $0\leq t\leq T$ such that:
%     \begin{enumerate}
%         \item The training gradient converges to $\varepsilon$: $\big\|\nabla_{\Wb}L_S(\Wb)\big|_{\Wb=\Wb^{(t)}}\big\|_F^2\leq \varepsilon$.
%         \item The test error is large: For any new data $(\xb,y)$, $\PP(yf(\Wb^{(t)},\xb)<0)\geq \frac{1}{2.01}$.
%     \end{enumerate}
% \end{theorem}
\begin{theorem}
    \label{thm:withoutregularization}
    Consider using no regularization throughout training, i.e., $\lambda = 0$ for all $t\geq 0$. Then under Condition~\ref{condition:condition}, for any $\varepsilon >0$, there exists $0\leq t\leq \tilde{O} \big( \frac{nm}{\eta \sigma_p^2d} + \eta^{-1}\varepsilon^{-1}m^3n\big)$ such that:
    \begin{enumerate}
        \item The training loss and its gradient converge below $\varepsilon$: $L_S(\Wb^{(t)})\leq \varepsilon, \|\nabla_{\Wb}L_S(\Wb)|_{\Wb=\Wb^{(t)}}\|_F^2\leq \varepsilon.$
        % \begin{align*}
        %     L_S(\Wb^{(t)})\leq \varepsilon,\quad \big\|\nabla_{\Wb}L_S(\Wb)\big|_{\Wb=\Wb^{(t)}}\big\|_F^2\leq \varepsilon.
        % \end{align*}
        \item The test error is large: For any new data $(\xb,y)$, $\PP(yf(\Wb^{(t)},\xb)<0)\geq \frac{1}{2.01}$.
    \end{enumerate}
    % $0\leq t\leq \frac{nm}{\eta \sigma_p^2d}\log\bigg( \frac{0.8}{2\log(n)\sqrt{\log(8mn/\delta)}\sigma_0\sigma_p\sqrt{d}}\bigg) + \frac{Cnm}{\eta\sigma_p^2d }\log\Big(\frac{1}{\log(n)\sqrt{\log(8mn/\delta)}}\Big) + \tilde{\Theta}(\eta^{-1}\varepsilon^{-1}m^3n)$
    % Suppose that Condition~\ref{condition:condition} holds.  Let $T=\tT_1+\tT_2+\tilde{\Theta}(\eta^{-1}\varepsilon^{-1}m^3n)$, where $ \tT_1=\frac{nm}{\eta \sigma_p^2d}\log\bigg( \frac{0.8}{2\log(n)\sqrt{\log(8mn/\delta)}\sigma_0\sigma_p\sqrt{d}}\bigg) $ and $\tT_2=\frac{Cnm}{\eta\sigma_p^2d }\log\Big(\frac{1}{\log(n)\sqrt{\log(8mn/\delta)}}\Big)$ with some sufficient large constant $C>0$. Set $\lambda=0$ for $0\leq t \leq T$, then for any $\varepsilon>0$, there exists $0\leq t\leq T$ such that:
    % \begin{enumerate}
    %     \item The training gradient converges to $\varepsilon$: $\big\|\nabla_{\Wb}L_S(\Wb)\big|_{\Wb=\Wb^{(t)}}\big\|_F^2\leq \varepsilon$.
    %     \item The test error is large: For any new data $(\xb,y)$, $\PP(yf(\Wb^{(t)},\xb)<0)\geq \frac{1}{2.01}$.
    % \end{enumerate}
\end{theorem}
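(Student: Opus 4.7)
}

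My plan is to follow the signal-noise decomposition framework used in the analysis of Theorem~\ref{thm:withregularization} (and in the spirit of \citet{cao2022benign}), but now track what happens when no gradient-norm penalty suppresses the per-example gradient variance. Concretely, I would write each filter as
\begin{align*}
    \wb_{j,r}^{(t)} = \wb_{j,r}^{(0)} + j\gamma_{j,r}^{(t)}\cdot \frac{\bmu}{\|\bmu\|^2} + \sum_{i=1}^n \rho_{j,r,i}^{(t)}\cdot \frac{\bxi_i}{\|\bxi_i\|^2},
\end{align*}
and derive the recursions governing $\gamma_{j,r}^{(t)}$ (signal learning) and $\rho_{j,r,i}^{(t)}$ (noise memorization) from the $\lambda=0$ gradient descent update. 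The near-orthogonality of $\bmu$ and $\{\bxi_i\}$ (Definition~\ref{def:Data_distribution}) plus standard Gaussian concentration should give $\|\bxi_i\|^2=\Theta(\sigma_p^2 d)$ and $\langle \bxi_i,\bxi_{i'}\rangle = \tilde O(\sigma_p^2\sqrt d)$ for $i\neq i'$ with high probability under Condition~\ref{condition:condition}, so that the decomposition coefficients evolve almost independently.

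Next, I would compare the per-step growth rates of $\gamma_{j,r}$ and $\rho_{j,r,i}$ during an early stage where the logits $y_if(\Wb^{(t)},\xb_i)$ are still $O(1)$ so that $\ell'$ is $\Theta(1)$. For the ReLU$^2$ activation, each update contributes a factor proportional to $\sigma'(\langle \wb_{j,r},\bxi_i\rangle) = 2\langle \wb_{j,r},\bxi_i\rangle_+$ on the noise side and $2\langle \wb_{j,r},\bmu\rangle_+$ on the signal side. Because $\|\bxi_i\|^2/\|\bmu\|^2 \gg 1$ under the SNR assumption $\|\bmu\|\ll\sigma_p\sqrt d$, the noise-direction inner products at initialization dominate the signal-direction ones by a large factor, and this gap is amplified at each step; so $\max_{j,r,i}\rho_{j,r,i}^{(t)}$ reaches order $1$ while $\max_{j,r}\gamma_{j,r}^{(t)}$ remains $o(1)$. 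I would formalize this via a two-sided induction (an upper bound on $\gamma$ and a lower bound on the leading $\rho$'s) along the lines of the ``automatic balance'' arguments in benign-overfitting proofs, and show the noise-memorization phase ends after $\tilde O(nm/(\eta\sigma_p^2 d))$ iterations with $\sum_r \rho_{j,r,i}^{(t)}=\Theta(1)$ for every $(j,i)$ with $y_i=j$.

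For the convergence part (item 1), once all training samples are fit by their own noise memorization, the loss becomes convex-like along the relevant directions and I can invoke a standard online-to-batch / Polyak-type argument: track $\|\Wb^{(t)}-\Wb^\star\|_F^2$ against some comparator $\Wb^\star$ that realizes a target margin, bound the per-step decrease by the loss value, and telescope to obtain both $L_S(\Wb^{(t)})\le \varepsilon$ and an averaged gradient bound $\|\nabla L_S\|_F^2 \le \varepsilon$ at some iterate within the claimed $\tilde O(nm/(\eta\sigma_p^2 d)+\eta^{-1}\varepsilon^{-1}m^3 n)$ horizon.

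Finally, for the test-error lower bound (item 2), I would draw a fresh $(\xb,y)$ with noise $\bxi\perp\{\bxi_i\}$ in expectation and decompose $yf(\Wb^{(t)},\xb) = \mathrm{signal}(t) + \mathrm{noise}(t)$. Using the smallness of $\gamma_{j,r}^{(t)}$, the signal contribution is at most $\tilde O(\|\bmu\|^{4}/\sigma_p^2 d)$, while the noise contribution is a zero-mean sub-Gaussian quantity with standard deviation of constant order (because $\langle \bxi,\bxi_i\rangle = \tilde O(\sigma_p^2\sqrt d)$ while $\|\bxi_i\|^2=\Theta(\sigma_p^2 d)$, giving noise-inner-products of order $1/\sqrt d$ that combine over the $\rho_{j,r,i}^{(t)}$'s). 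A Gaussian/Berry-Esseen-type anti-concentration on $\bxi$ then yields $\PP(yf<0)\ge 1/2.01$, completing item 2. The main obstacle I anticipate is the quantitative control needed to simultaneously (i) push $\rho$ large enough to drive the loss below $\varepsilon$ and (ii) keep $\gamma$ provably small throughout the entire training horizon, since the horizon for item 1 is long and a naive bound on $\gamma^{(t)}$ may blow up; this is where the sharp $\|\bmu\|\ll\sigma_p\sqrt d$ scaling and the ReLU$^2$ quadratic factor must be used most carefully.
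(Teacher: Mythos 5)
Your proposal follows essentially the same route as the paper's proof: the signal--noise decomposition with a staged induction showing $\max_{j=y_i,r}\rho_{j,r,i}^{(t)}$ reaches constant order within $\tilde O(nm/(\eta\sigma_p^2 d))$ steps while $\gamma_{j,r}^{(t)}$ stays at $\tilde O(\sigma_0\|\bmu\|)$, followed by a comparator-based telescoping of $\|\Wb^{(t)}-\Wb^\star\|_F^2$ (with $\Wb^\star$ built from the noise directions) for loss and gradient convergence, and an anti-concentration argument on the fresh noise patch for the $1/2.01$ test-error lower bound. The only imprecision is your claim that the noise contribution to $yf(\Wb^{(t)},\xb)$ has constant-order standard deviation --- under $d\geq\tilde\Omega(m^2n^{2+2\alpha})$ it is in fact $\tilde O(\mathrm{poly}(n)/d)$, but the argument survives because it only requires this symmetric noise term to dominate the far smaller signal bias, which is exactly what the paper verifies.
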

Clearly, Theorem~\ref{thm:withoutregularization} is the case without gradient regularization. In this case, the CNN trained 
by gradient descent mainly memorizes noises in the training data and does not learn enough signal. This, together with Theorem~\ref{thm:withregularization}, gives a clear statement:
\begin{enumerate}
    \item[$\bullet$] By modifying the learning algorithm, the per-example gradient regularization can improve the learning of significant patterns within the data (often referred to as ``signal'') while simultaneously discouraging the memorizing of irrelevant or random variation (known as ``noise''). This approach enhances the model's ability to extract relevant features and generalize well to new data.
\end{enumerate}

In this study, we examine the per-example gradient regularization (PEGR), which calculates the gradient of the loss function for each training data point. The full version of gradient regularization (FGR) considers interactions between different training data, which can make the dynamics more complex. We show in the experiments that the full version of gradient descent is unable to improve signal learning while preventing the memorization of noise.

\begin{remark}
\label{remark:GD}
Based on the analysis in Section~\ref{subsec:whyPEGR}, we see that the addition of PEGR to the loss function gives a suppressing term for both noise memorization and signal learning. This suppression term is particularly effective in the presence of high levels of noise, reducing the impact of noise memorization. Conversely, the subtracted term in signal learning is relatively small, allowing for signal learning to progress without significant interference. 
\end{remark}

\section{Experiments}
\label{sec:experiments}
In this section, we conduct numeric experiments and real data experiments, in Subsection~\ref{subsec:numericexperiments} and ~\ref{subsec:realdataexperiment} separately. Both experiments show that PEGR will improve the test accuracy.
\subsection{Numerical experiments}
\label{subsec:numericexperiments}
In this section, we perform numerical experiments on several synthetic data sets, which take different values on the $\sigma_p$, to verify our theoretical results. The synthetic data is generated according to Definition~\ref{def:Data_distribution}. In particular, we set $\|\bmu\|^2=1$, data dimension $d=400$, neural network width $m=10$ and training data size $n=20$. We train all the data sets in total $1500$ epochs with learning rate $\eta=0.02$. In the case of Theorem~\ref{thm:withregularization}, we close the PEGR at epoch $800$. The tuning parameter $\lambda$ are set to be $0.01$. Note that the value of $\sigma_0$ we set is based on the value of $\sigma_p$. When $\sigma_p=0.5$ or $\sigma_p=1$, we set $\sigma_0=0.01$; when $\sigma_p=1.5$, we set $\sigma_0=0.001$. 
We present the results of our experiments in Figure~\ref{fig:trainloss}-\ref{fig:testacc}. We compare the performance of per-example gradient regularization (PEGR), full gradient regularization (FGR), and standard training through numerical experiments. FGR is represented by the following formula:
\begin{align*}
    \hat{L}(\Wb)=L_S(\Wb)+\lambda \bigg\|\frac{1}{n}\sum_{i=1}^n \nabla_{\Wb}\ell(y_i\cdot f(\Wb,\xb_i)) \bigg\|_F^2.
\end{align*}
The standard learning is just $\lambda=0$. \newline

\noindent\textbf{Training Loss:}
\begin{figure}[t!]
\centering
\subfigure[PEGR]{
\includegraphics[width=0.33\columnwidth]{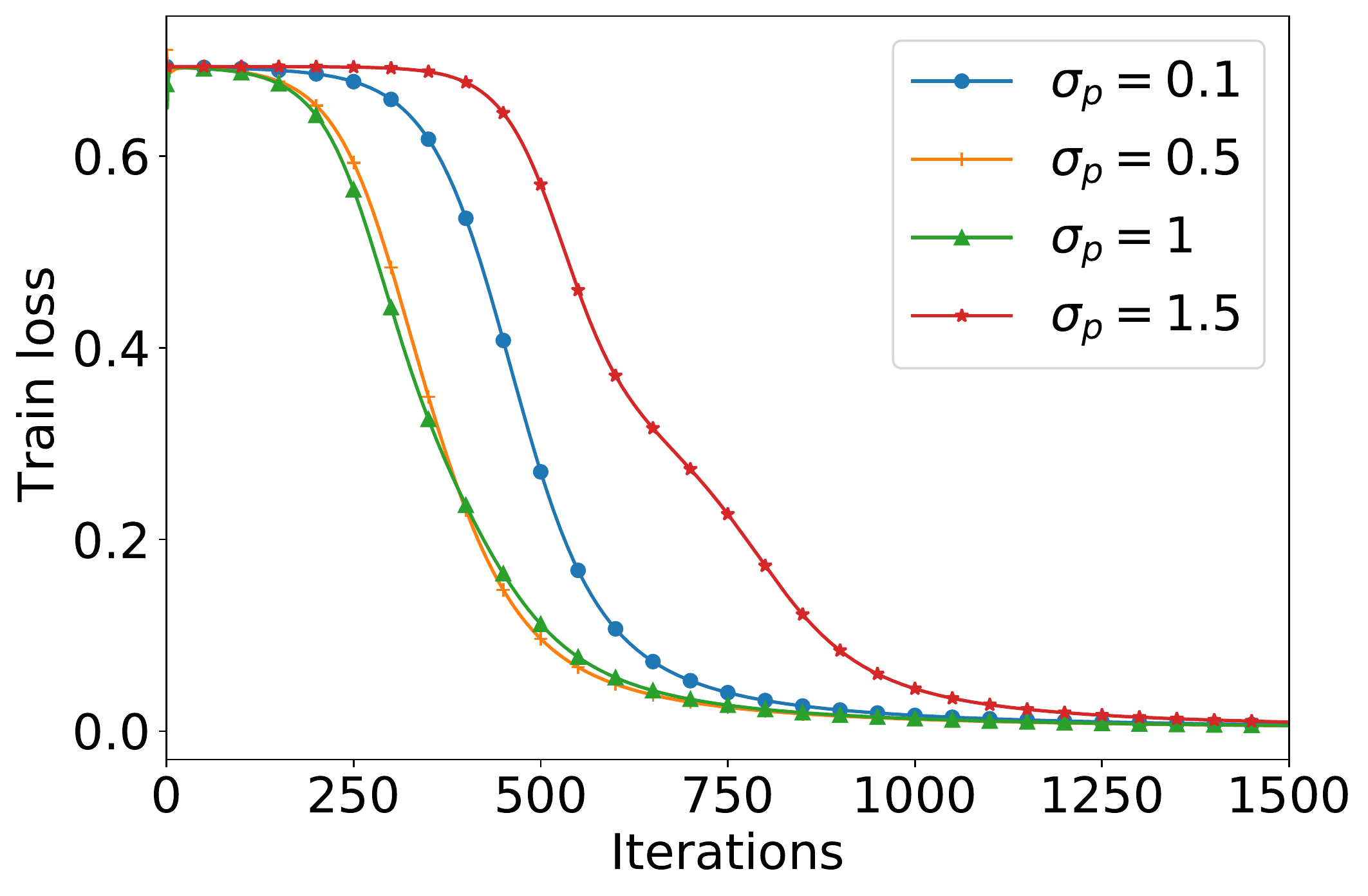}
}%
\subfigure[FGR]{
\includegraphics[width=0.33\columnwidth]{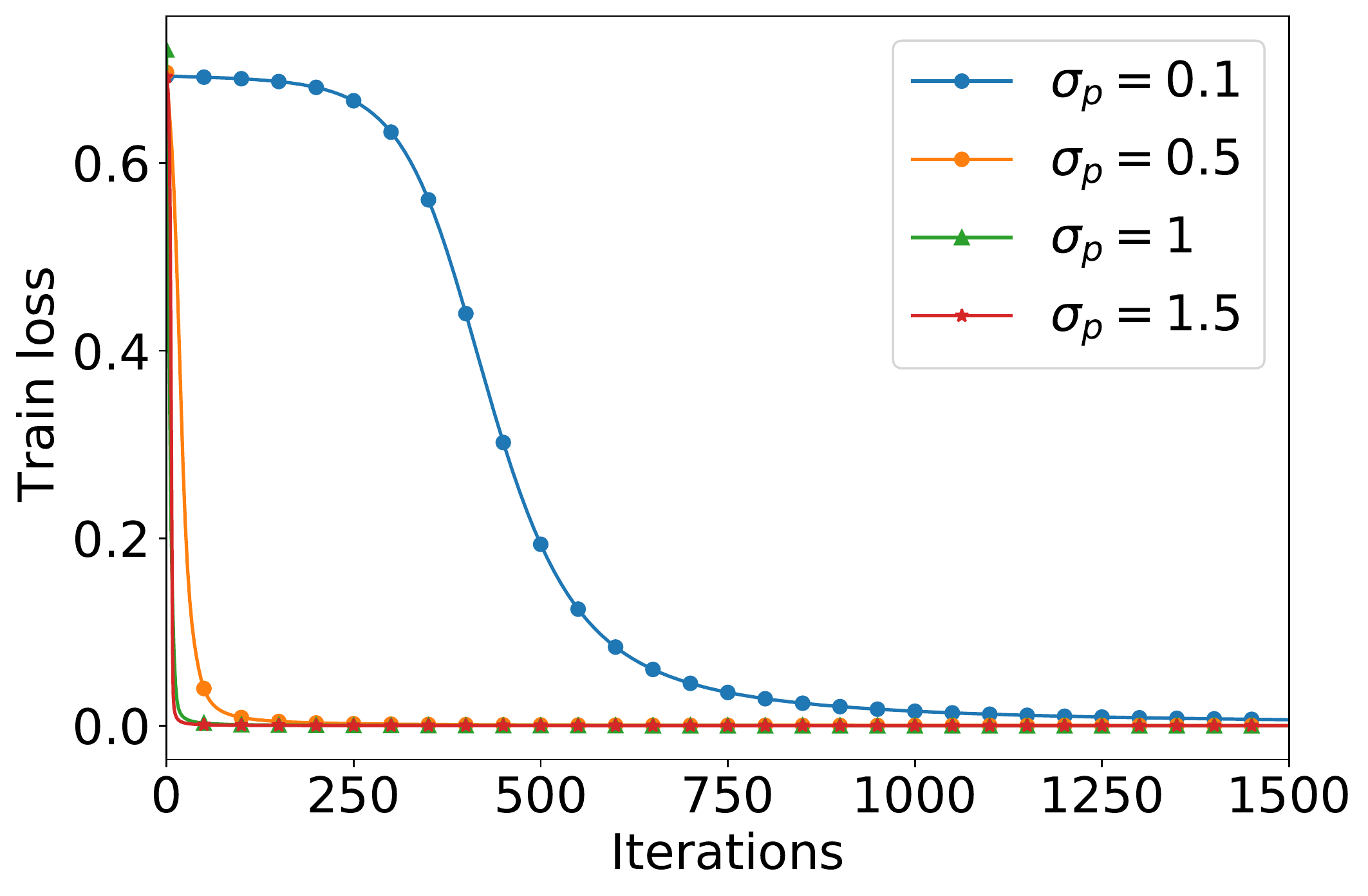}
}%
\subfigure[Standard]{
\includegraphics[width=0.33\columnwidth]{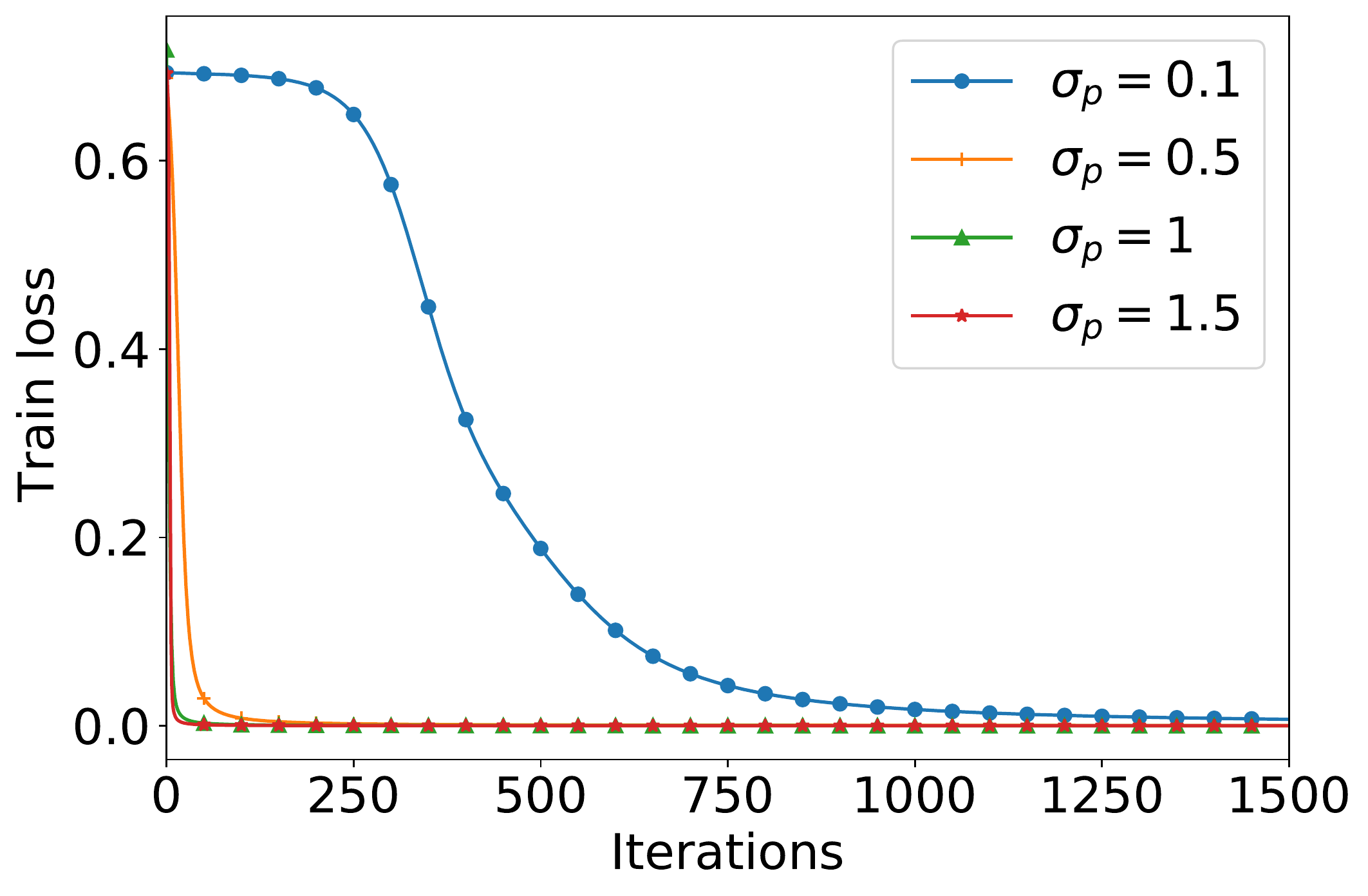}
}
\caption{~Training loss under different algorithms. (a) gives the training loss curve under per-example gradient regularization; (b) gives the training loss curve under full gradient regularization; (c) gives the training loss curve under standard training.\label{fig:trainloss}}
\end{figure}
 The training loss is denoted by $L_S{(\Wb^{(t)})}$. It's easy to observe that when training begins, the training loss is approximately $\log(2)\approx 0.69$. Theorems~\ref{thm:withregularization} and \ref{thm:withoutregularization} provide proofs for the convergence of the training loss. As depicted in Figure~\ref{fig:trainloss}, when we eliminate regularization, all training losses converge to $0$. \newline

\noindent\textbf{Signal Learning:}
\begin{figure}[t!]
\centering
\subfigure[PEGR]{
\includegraphics[width=0.33\columnwidth]{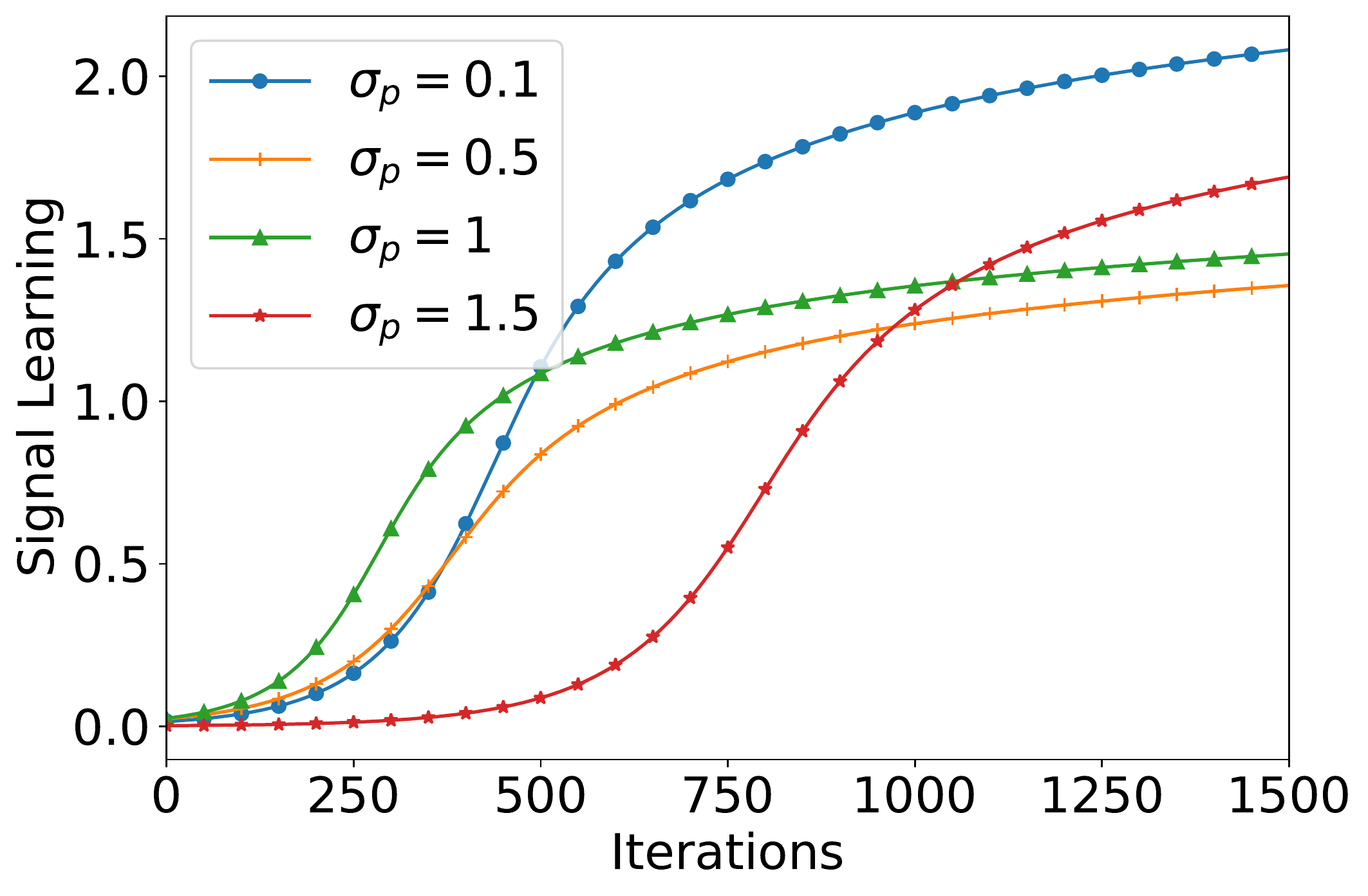}
}%
\subfigure[FGR]{
\includegraphics[width=0.33\columnwidth]{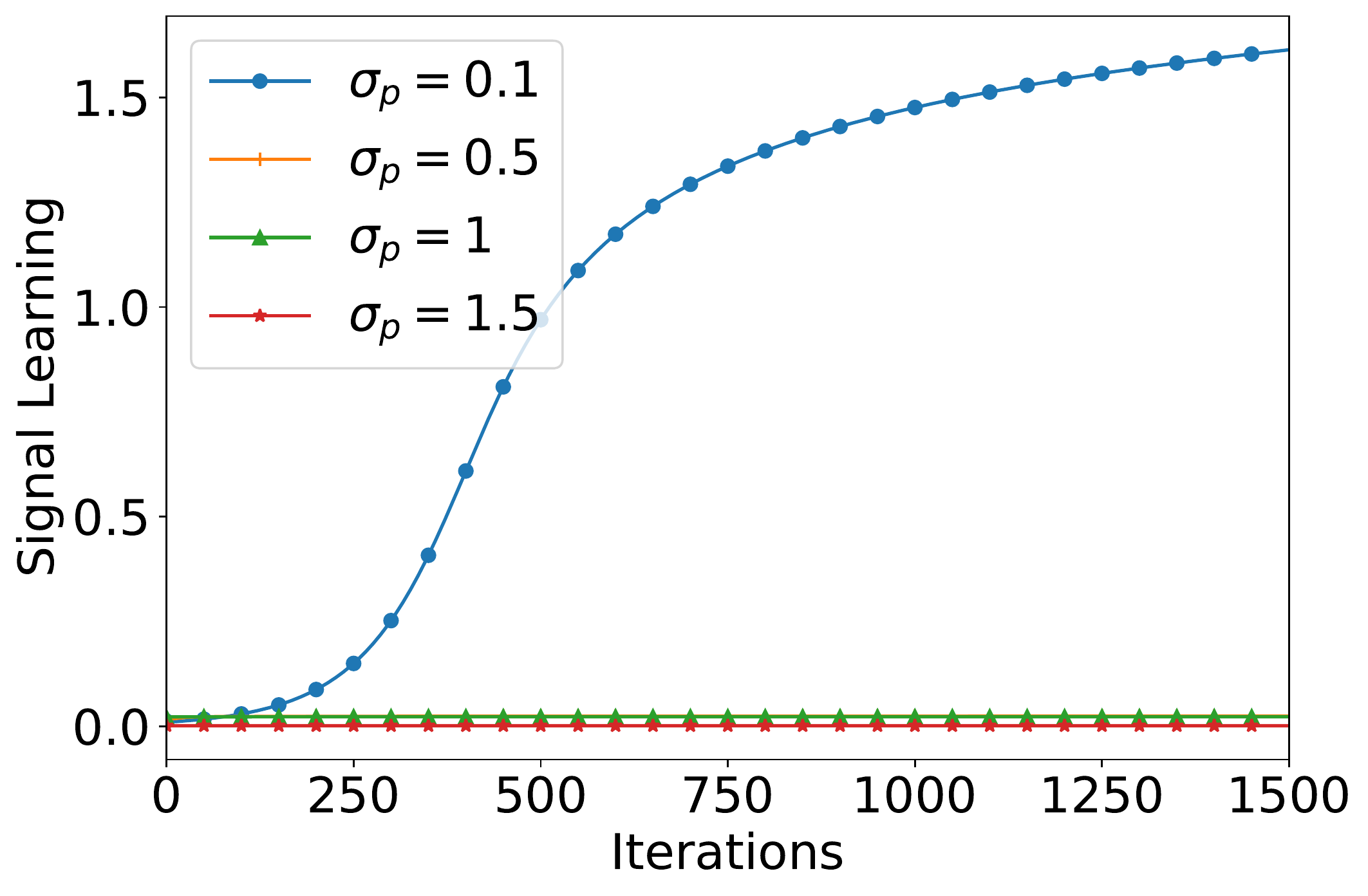}
}%
\subfigure[Standard]{
\includegraphics[width=0.33\columnwidth]{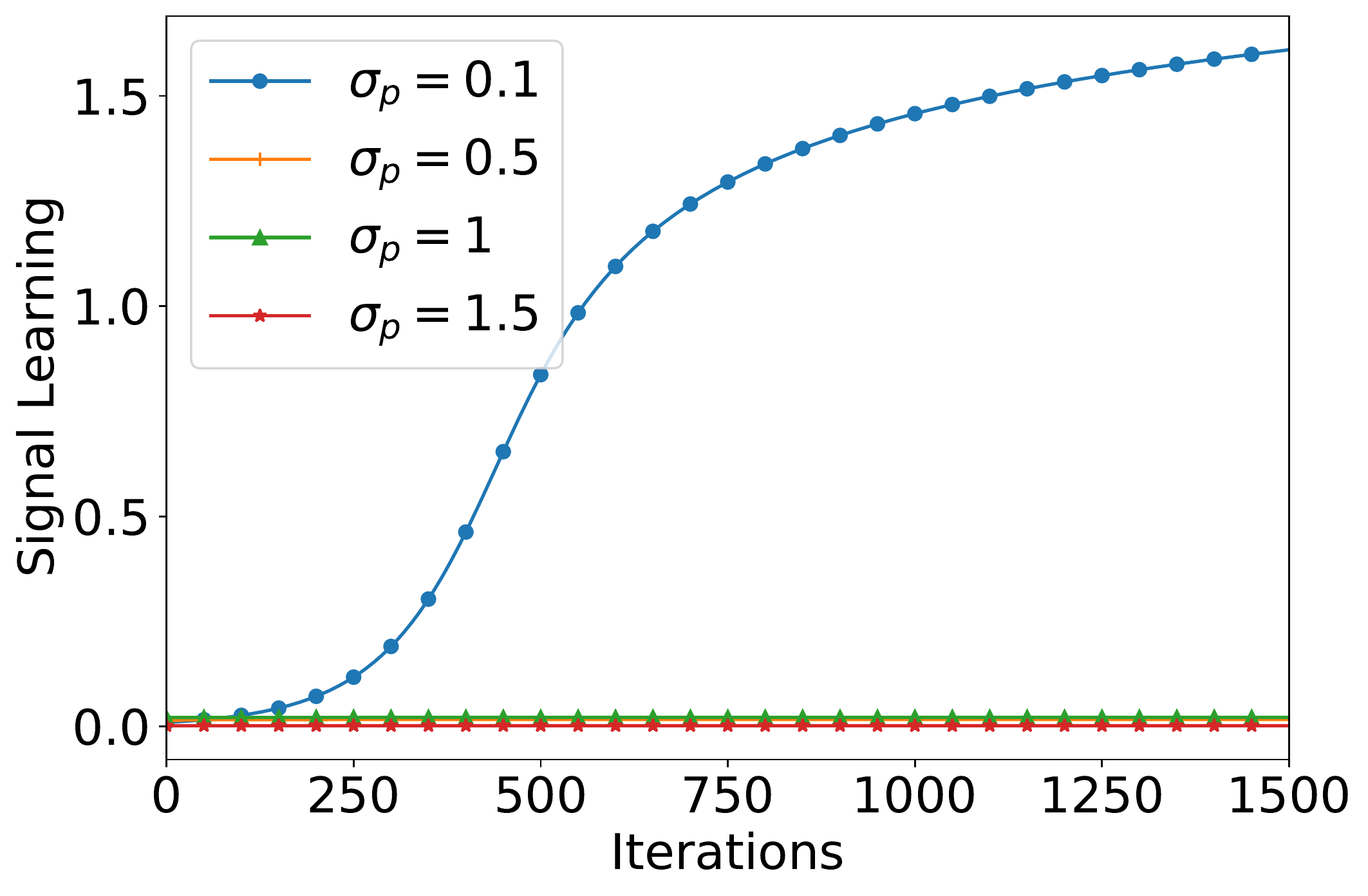}
}
\caption{~Signal learning where $y$-axis is $\max_{j,r}|\la \wb_{j,r},\bmu\ra|$ under different algorithms. (a) gives the signal learning curve under per-example gradient regularization; (b) gives the signal learning curve under full gradient regularization; (c) gives the signal learning curve under standard training.\label{fig:signal}}
\end{figure}
We define the signal learning by
$
    \text{Signal}=\max_{j,r}|\la \wb_{j,r},\bmu\ra|. 
$
The results shown in Figure~\ref{fig:signal} demonstrate that PEGR effectively promotes signal learning, while FGR and standard training fail to achieve the same results. It's important to note that when $\sigma_p=0.1$, signal learning increases in all cases due to the lower level of noise compared to when $\sigma_p=0.5, 1, 1.5$. In scenarios where the data SNR is high, standard training or FGR can effectively learn the signal. Additinally, it's worth noting that PEGR does not suppress the signal even when it is strong.\newline

\noindent\textbf{Noise Memorizing:}
\begin{figure}[t!]
\centering
\subfigure[PEGR]{
\includegraphics[width=0.33\columnwidth]{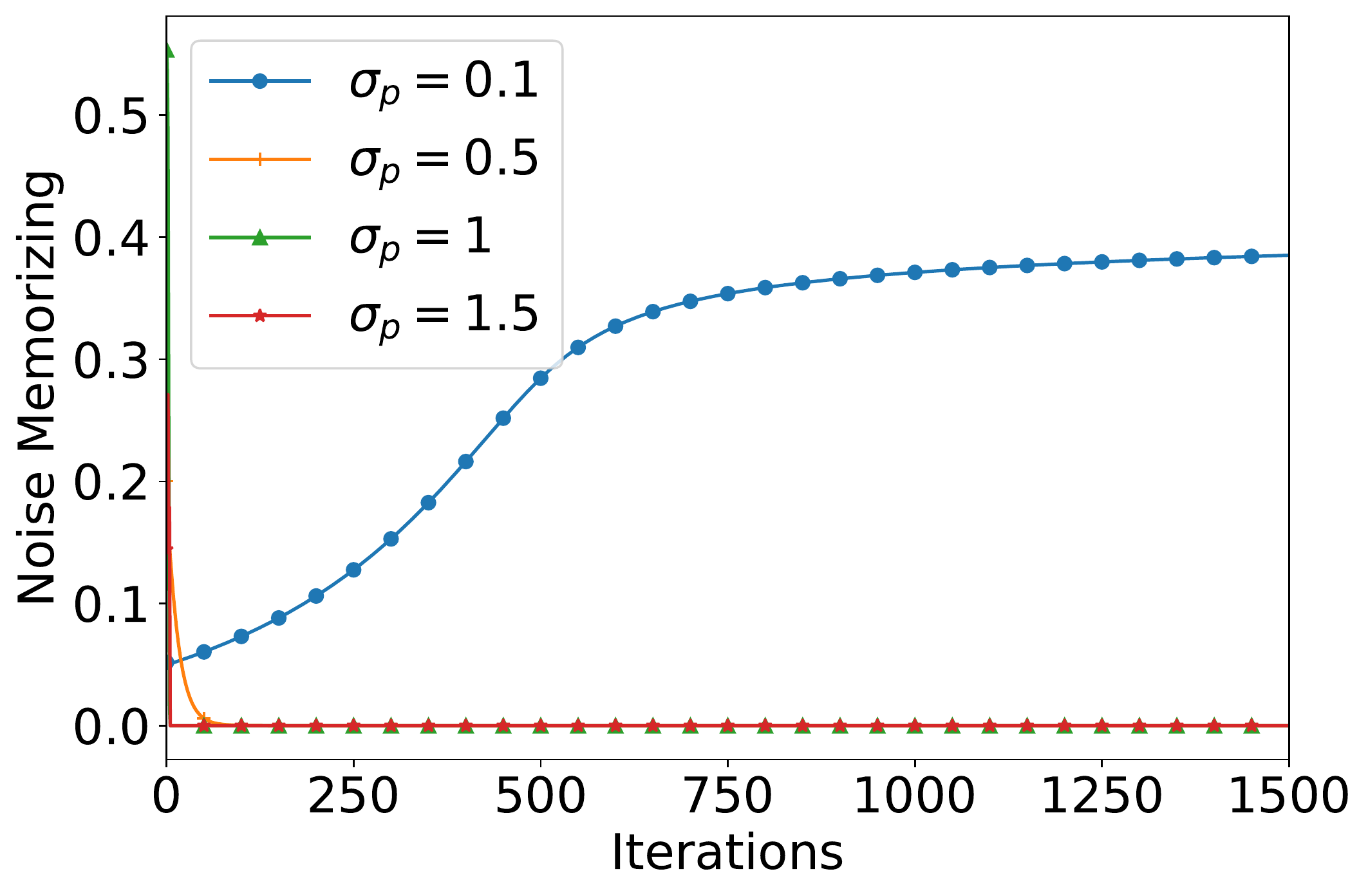}
}%
\subfigure[FGR]{
\includegraphics[width=0.33\columnwidth]{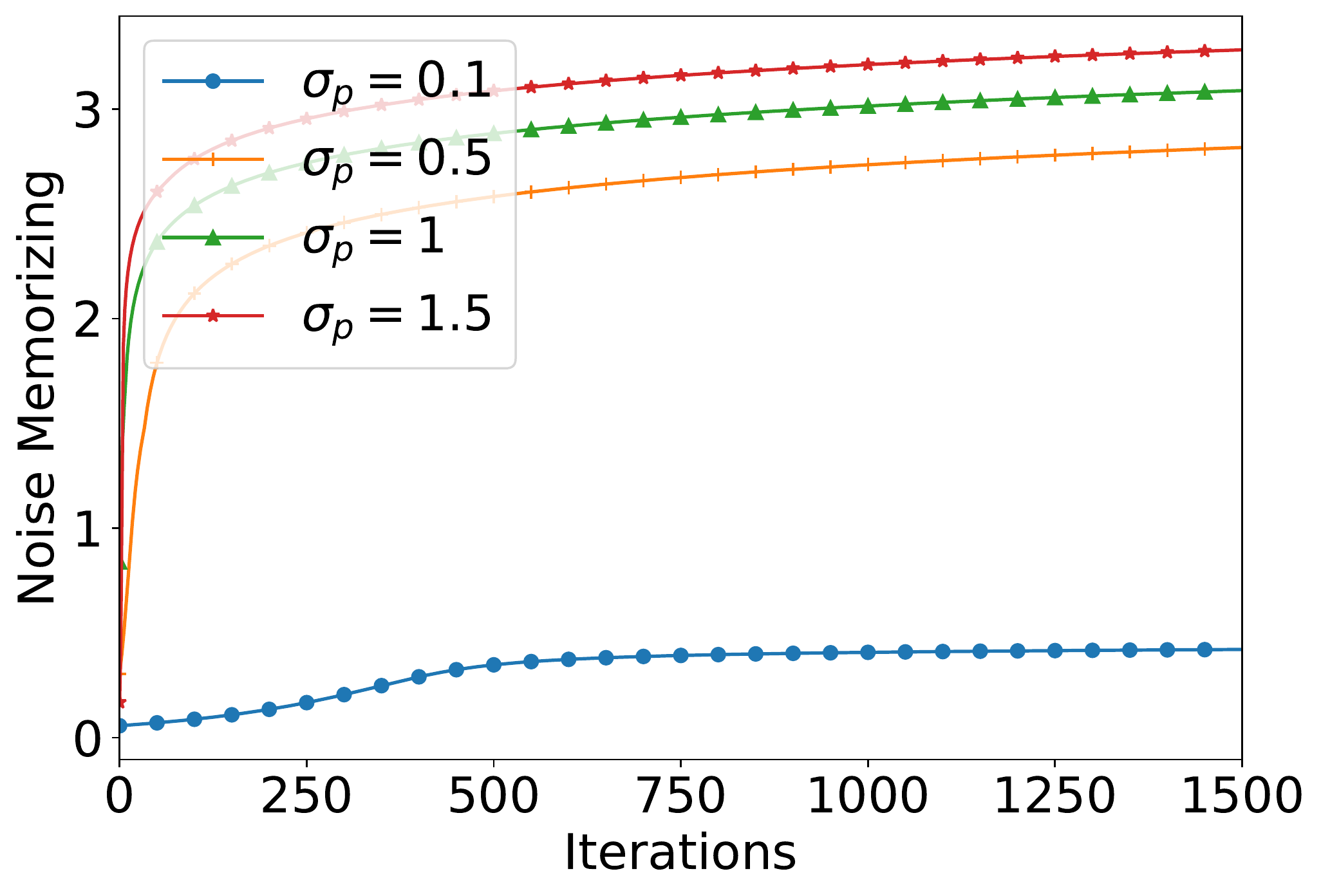}
}%
\subfigure[Standard]{
\includegraphics[width=0.33\columnwidth]{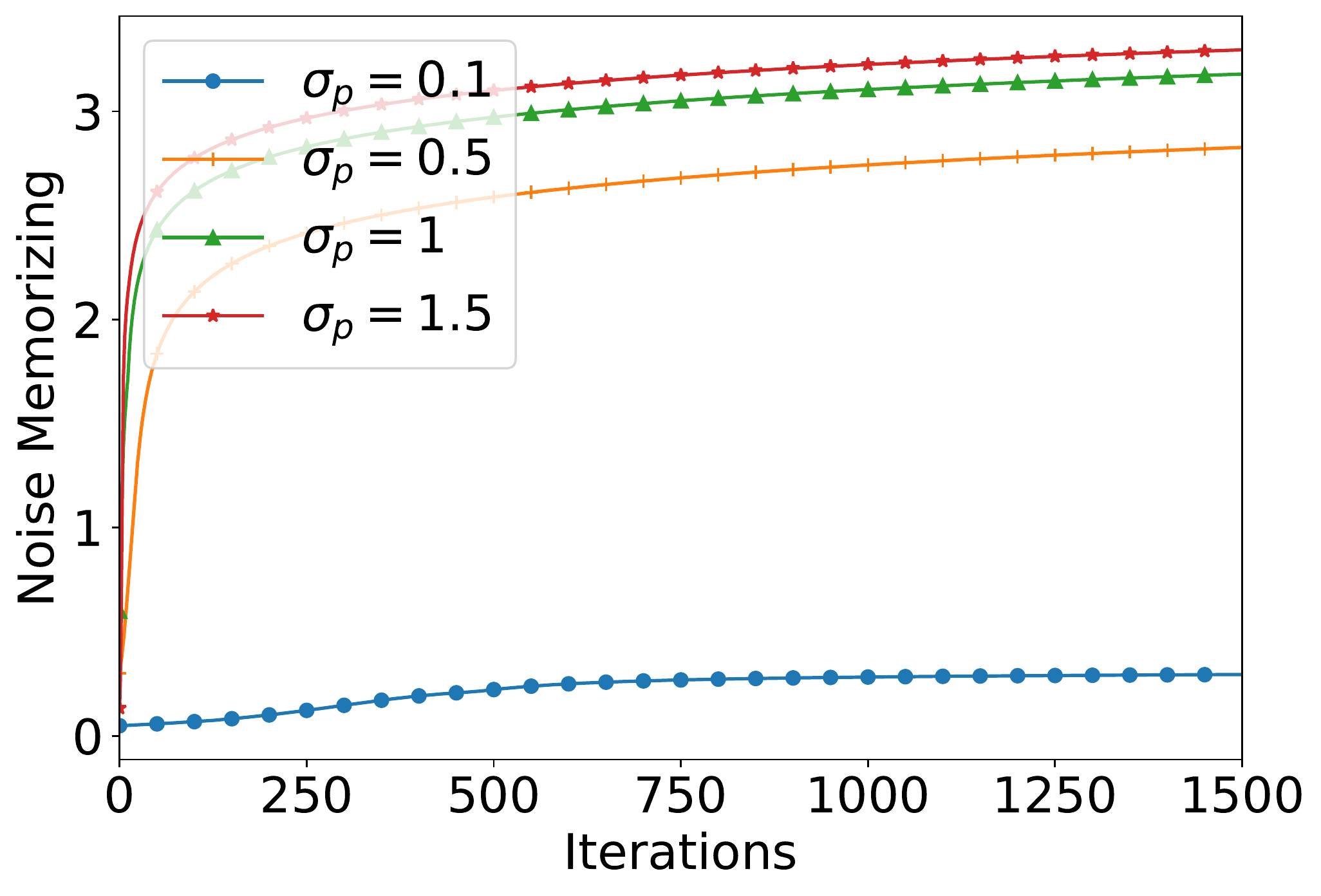}
}
\caption{~Noise memorizing where $y$-axis is $\max_{j,r,i}\la \wb_{j,r},\bxi_i\ra$ under different algorithms. (a) gives the noise  memorizing curve under per-example gradient regularization; (b) gives the noise  memorizing curve under full gradient regularization; (c) gives the noise  memorizing curve under standard training.\label{fig:noise}}
\end{figure}
We define the noise  memorizing by
$
    \text{Noise}=\max_{j,r,i}\la \wb_{j,r},\bxi_i\ra. 
$
The findings depicted in Figure~\ref{fig:noise} demonstrate that PEGR is highly effective at suppressing noise memorization, while  FGR and standard training  fail to produce the same outcomes. It is noteworthy that when $\sigma_p=0.1$, noise memorization increases in all cases, likely due to the low level of noise. Nevertheless, it is important to note that noise memorization does not impede signal learning when $\sigma_p=0.1$. In situations where the data SNR is low,  FGR or standard training  may not be capable of effectively learning the signal. In such scenarios, PEGR suppresses noise and facilitates signal learning. \newline

\noindent \textbf{Test Accuracy:}
\begin{figure}[t!]
\centering
\subfigure[PEGR]{
\includegraphics[width=0.33\columnwidth]{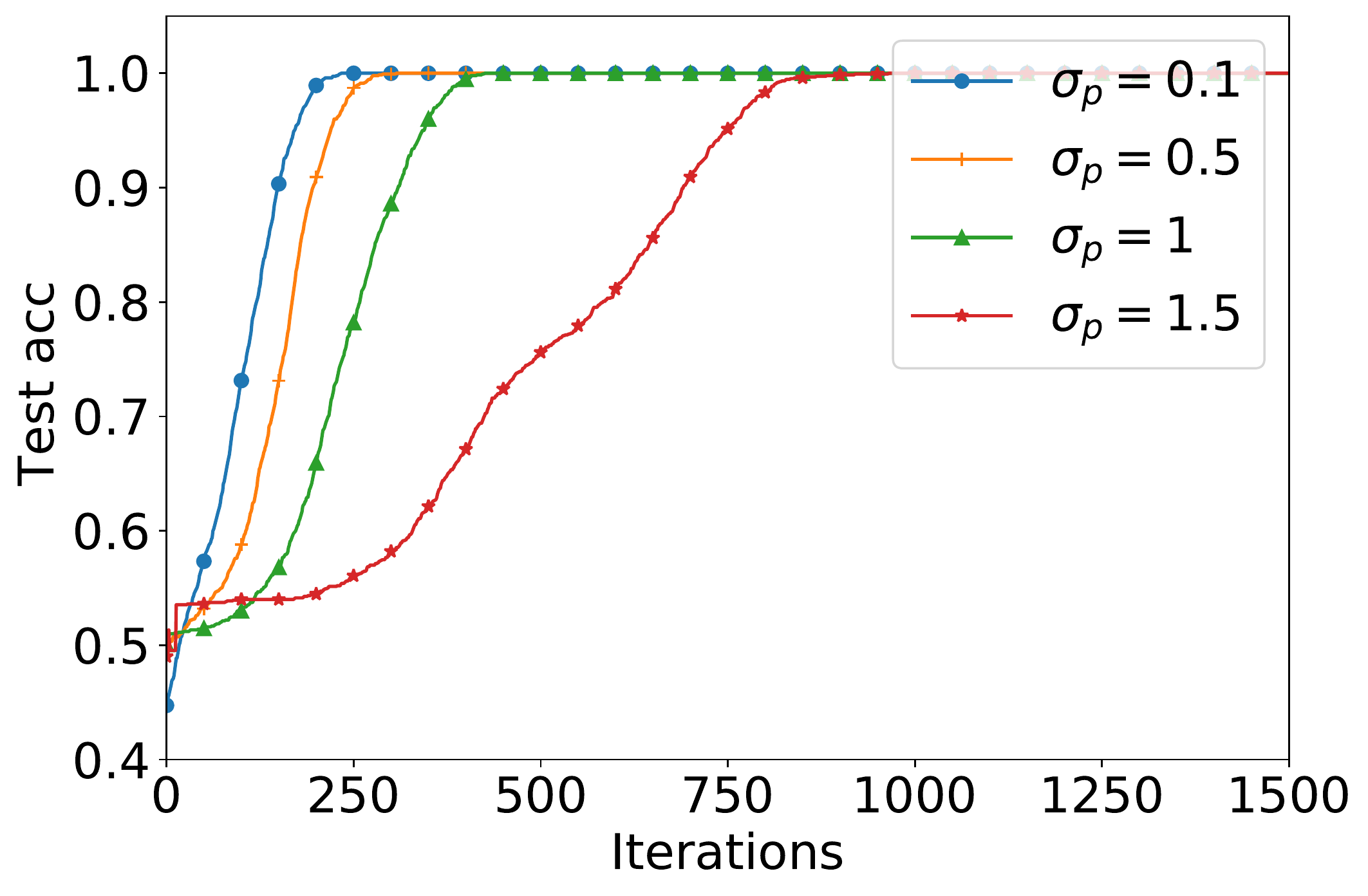}
}%
\subfigure[FGR]{
\includegraphics[width=0.33\columnwidth]{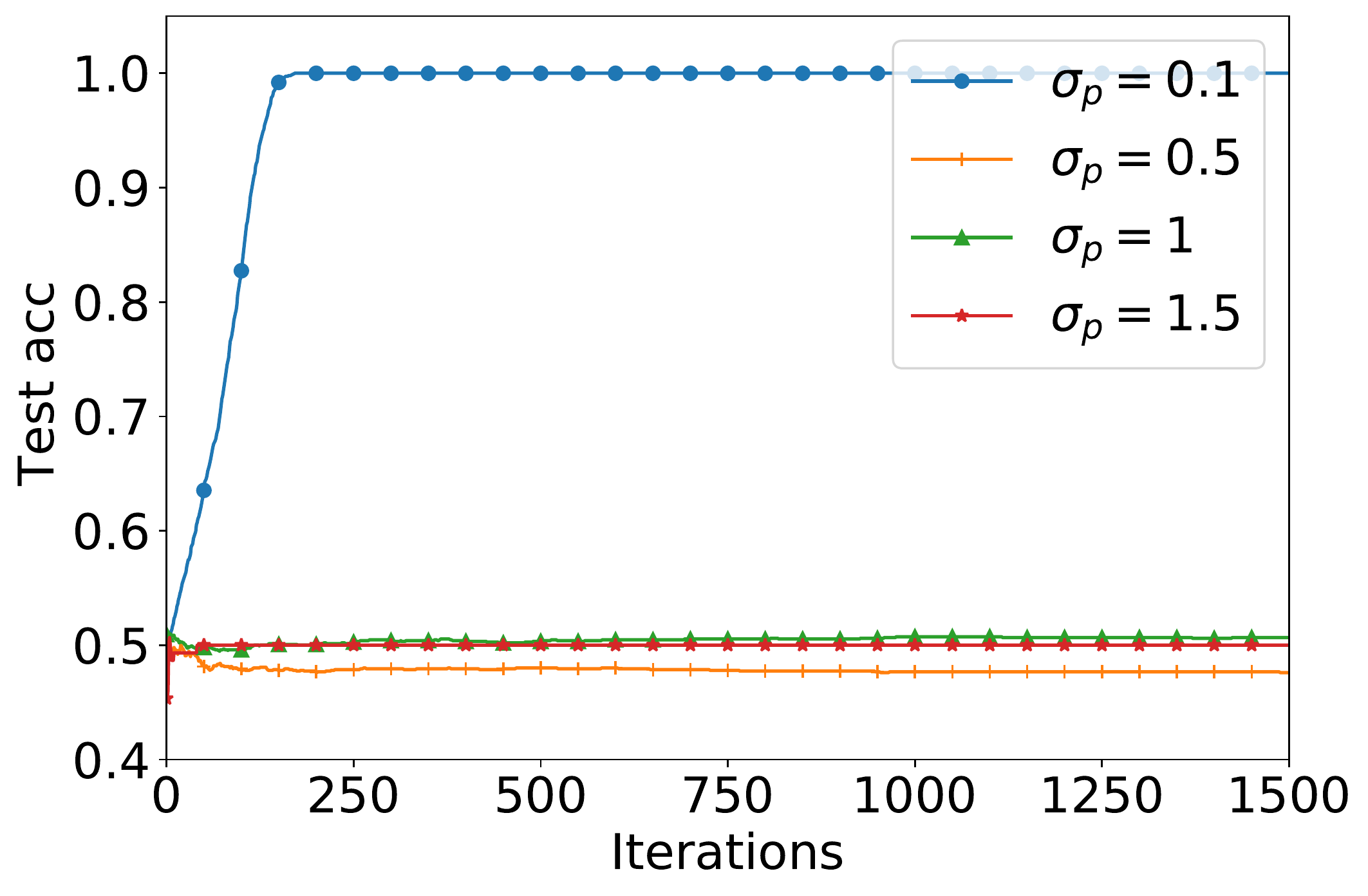}
}%
\subfigure[Standard]{
\includegraphics[width=0.33\columnwidth]{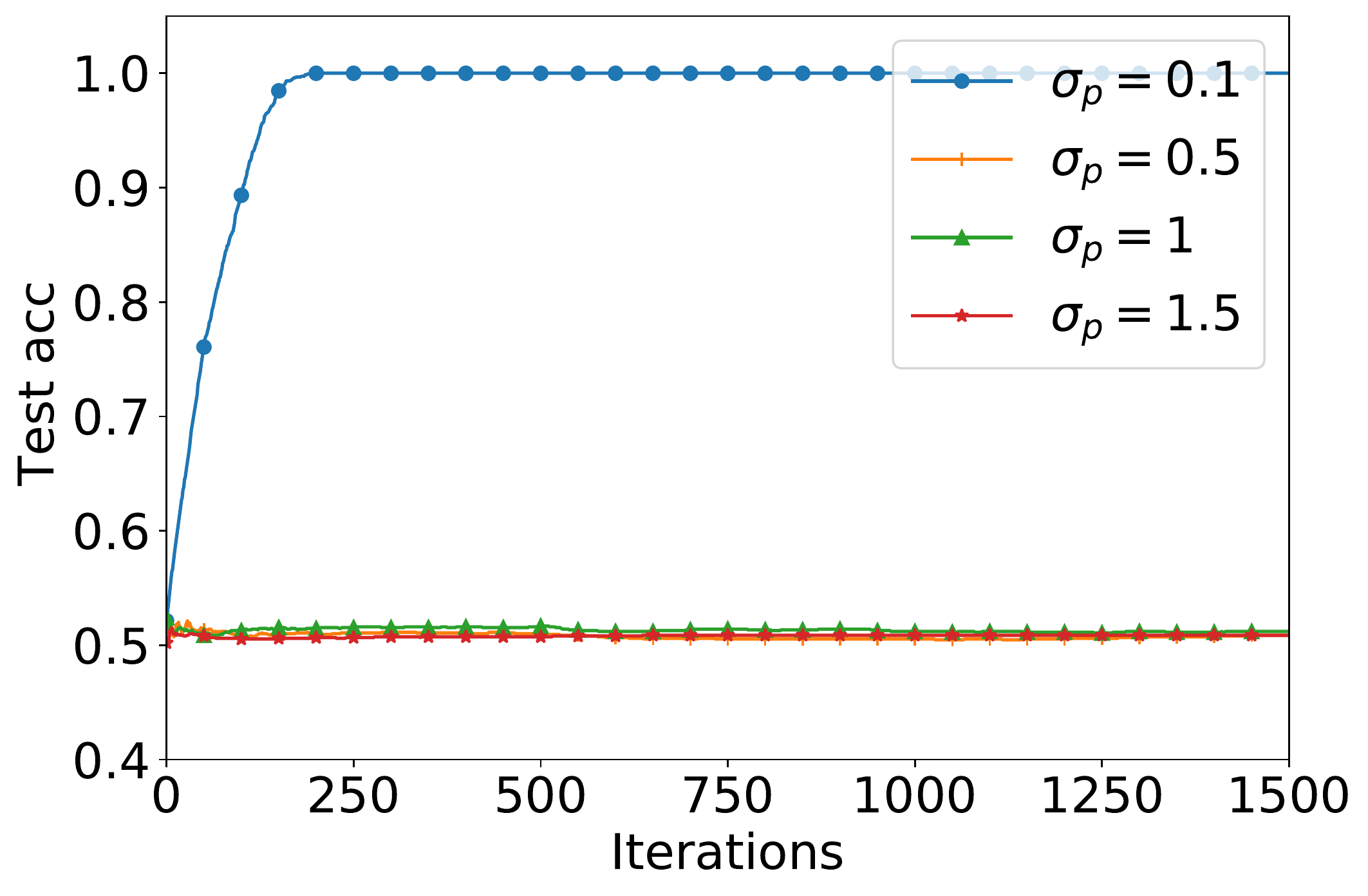}
}
\caption{~Test accuracy under different algorithms. (a) gives the test accuracy curve under per-example gradient regularization; (b) gives the test accuracy curve under full gradient regularization; (c) gives the test accuracy curve under standard training.\label{fig:testacc}}
\end{figure}
PEGR undoubtedly achieves the highest test accuracy by enhancing signal learning and suppressing noise memorization. Figure~\ref{fig:testacc} shows that the test accuracies in PEGR tend to $1$ under different values of $\sigma_p$, as proven in Theorem~\ref{thm:withregularization} and Theorem~\ref{thm:withoutregularization}. It is worth noting that, while all algorithms have a test accuracy tending to $1$ when $\sigma_p=0.1$ due to the high SNR, only PEGR has all test accuracy tend to $1$ when compared with FGR and standard training.

In summary, the numerical experiments show that per-example gradient regularization (PEGR) enhances signal learning while suppressing noise memorization compared to full gradient regularization (FGR) and standard learning approaches. Furthermore, the training loss of all methods converges to zero. Importantly, the test accuracy achieved by PEGR is significantly higher than the test accuracy obtained by the other algorithms. These findings provide strong empirical evidence in support of our theoretical demonstration, indicating that PEGR holds promise as an effective approach for improving learning performance in neural networks.

\subsection{Real data experiments}
\label{subsec:realdataexperiment}
In this section, we report on real data experiments conducted on the MNIST dataset using Lenet5 as the model architecture to assess the effectiveness of PEGR in suppressing noise memorization and enhancing signal learning. To achieve this, we added standard noise with a mean of 0 and varied the standard deviation of the noise, $\sigma_p$, to 0, 5, 10, and 15, respectively. The test data was kept free from noise to evaluate how well the signal was learned. Our experiments were conducted with the following parameters: a learning rate $\eta$ of 0.01, initialization $\sigma_0$ of 0.1, batch size of 64, PEGR closed after 25 epochs, and a tuning parameter of $\lambda=0.1$. The training loss and test accuracy results are presented below.\newline

\noindent\textbf{Training loss:}
\begin{figure}[t!]
\centering
\subfigure[$\sigma_p=0$]{
\includegraphics[width=0.25\columnwidth]{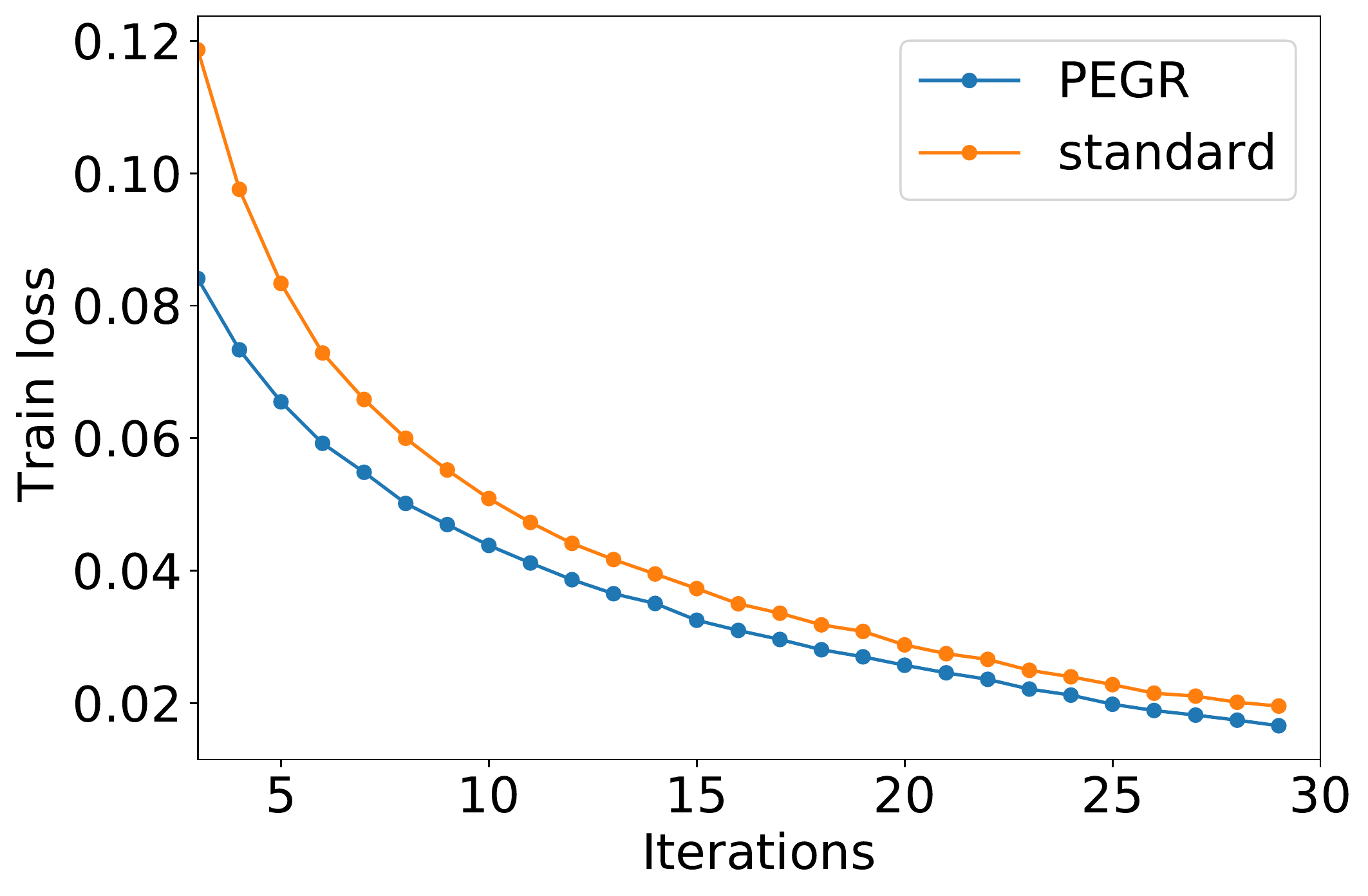}
}%
\subfigure[$\sigma_p=5$]{
\includegraphics[width=0.25\columnwidth]{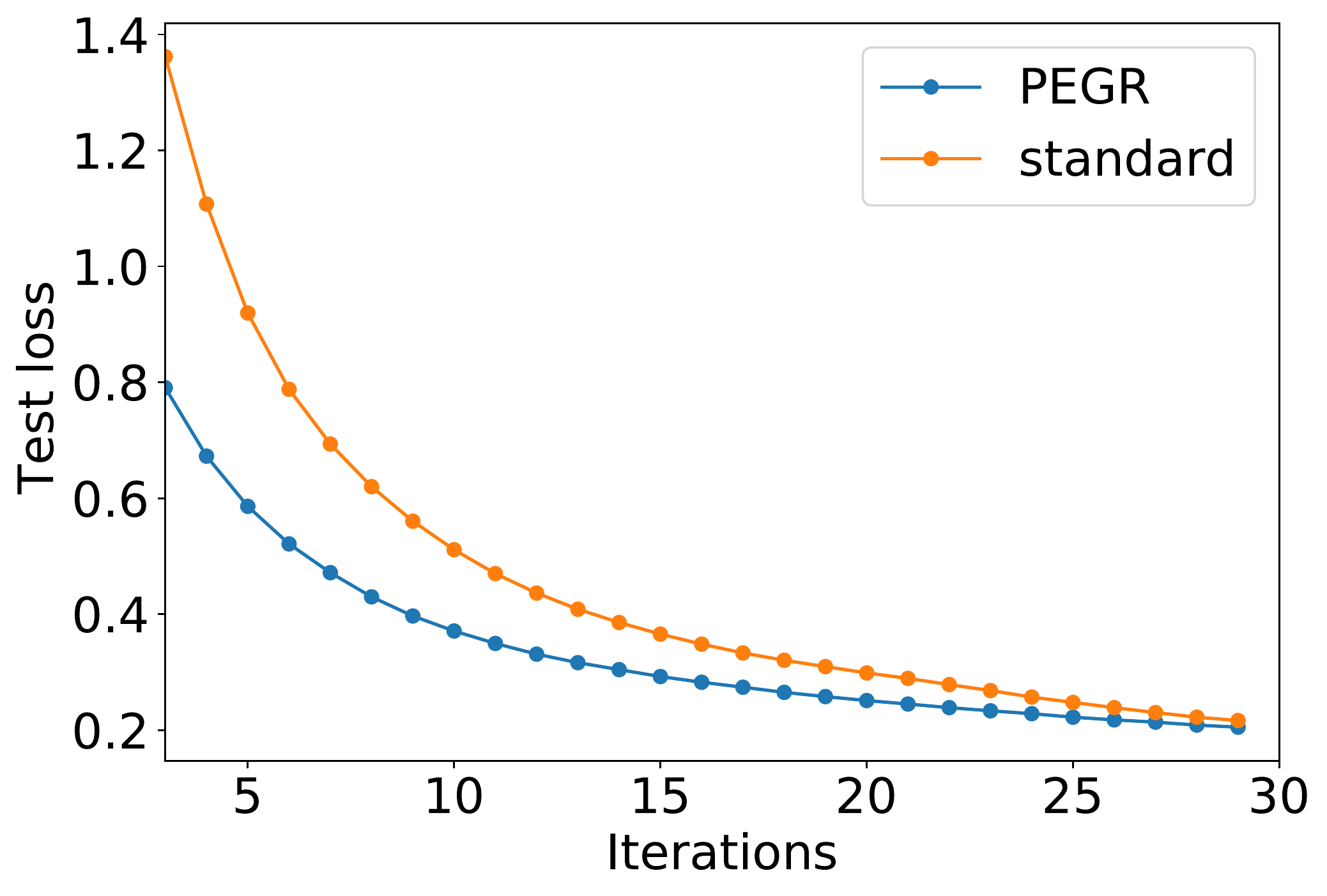}
}%
\subfigure[$\sigma_p=10$]{
\includegraphics[width=0.25\columnwidth]{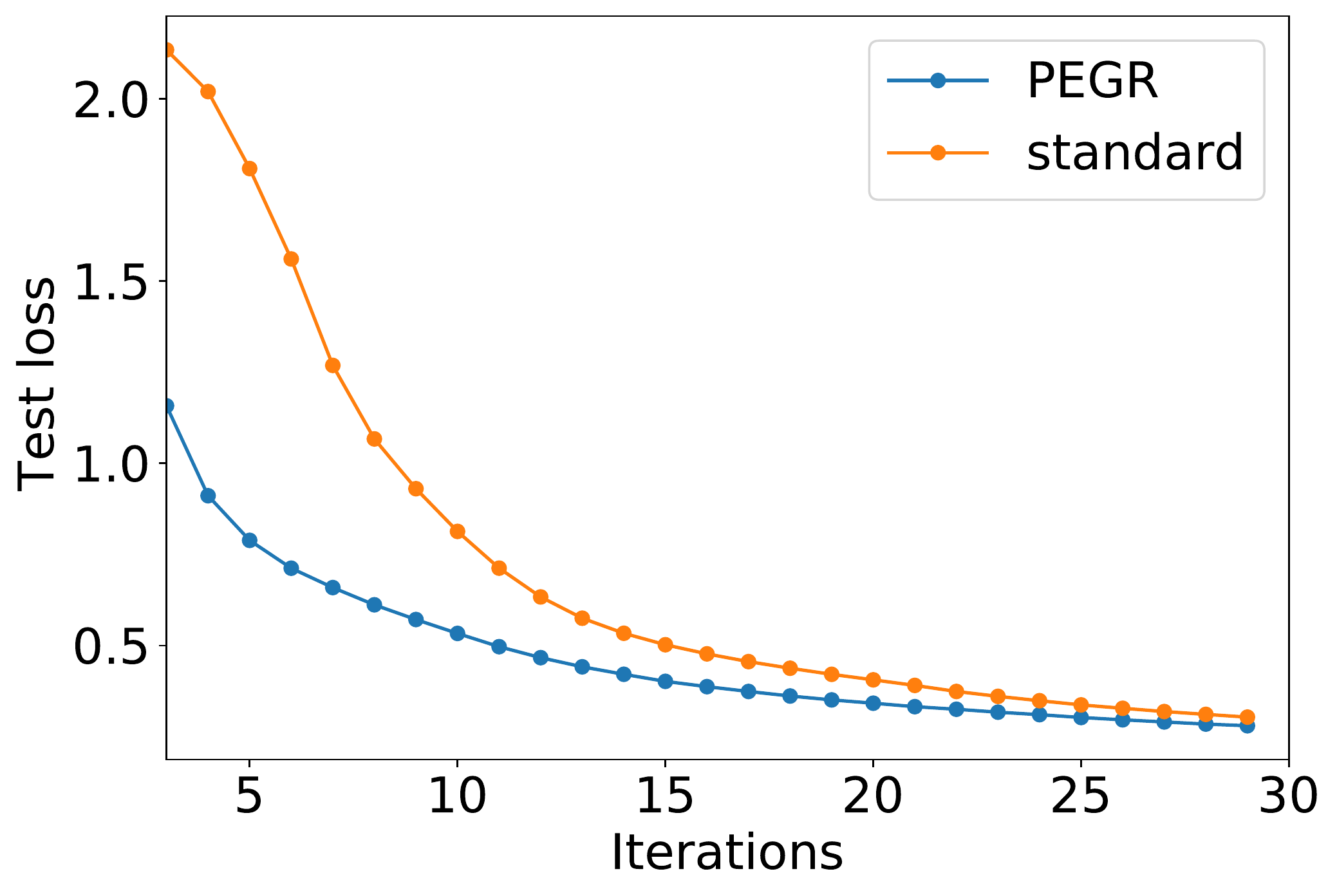}
}%
\subfigure[$\sigma_p=15$]{
\includegraphics[width=0.25\columnwidth]{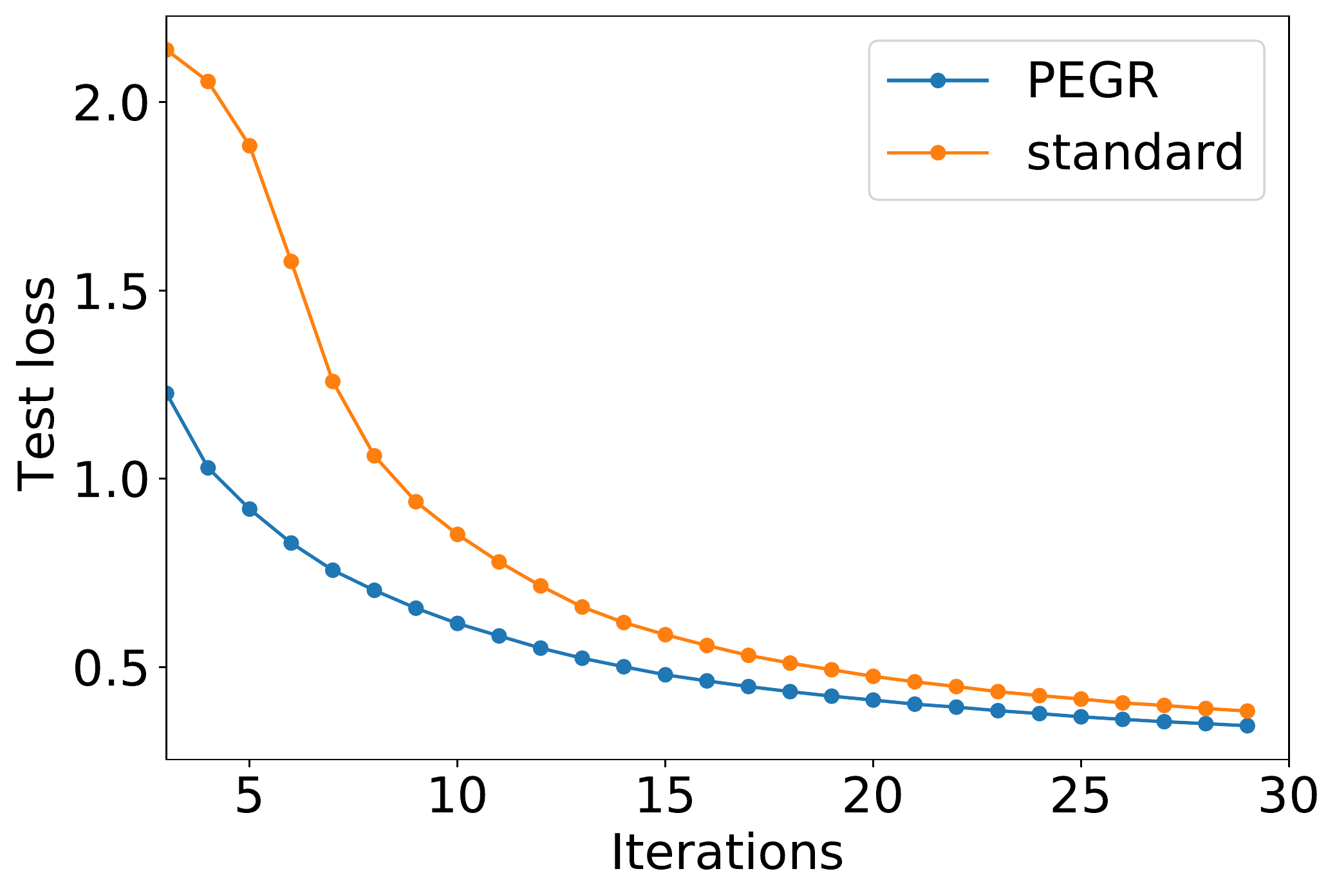}
}
\caption{~Training loss under different $\sigma_p$. (a) gives the training loss curve under $\sigma_p=0$; (b) gives the training loss curve under $\sigma_p=5$; (c) gives the training loss curve under $\sigma_p=10$; (d) gives the training loss curve under $\sigma_p=15$.\label{fig:trueloss}}
\end{figure}
As dipicted in Figure~\ref{fig:trueloss}, the training loss for both PEGR and standard training approaches show a consistent decrease with increasing iterations, implying that the training process is stable and allowing us to evaluate the efficacy of PEGR in real data training. The observed stable training status allows for reliable comparison of the two approaches and enables us to assess the potential of PEGR in enhancing signal learning in real-world data sets.\newline

\noindent \textbf{Test accuracy:}
\begin{figure}[t!]
\centering
\subfigure[$\sigma_p=0$]{
\includegraphics[width=0.25\columnwidth]{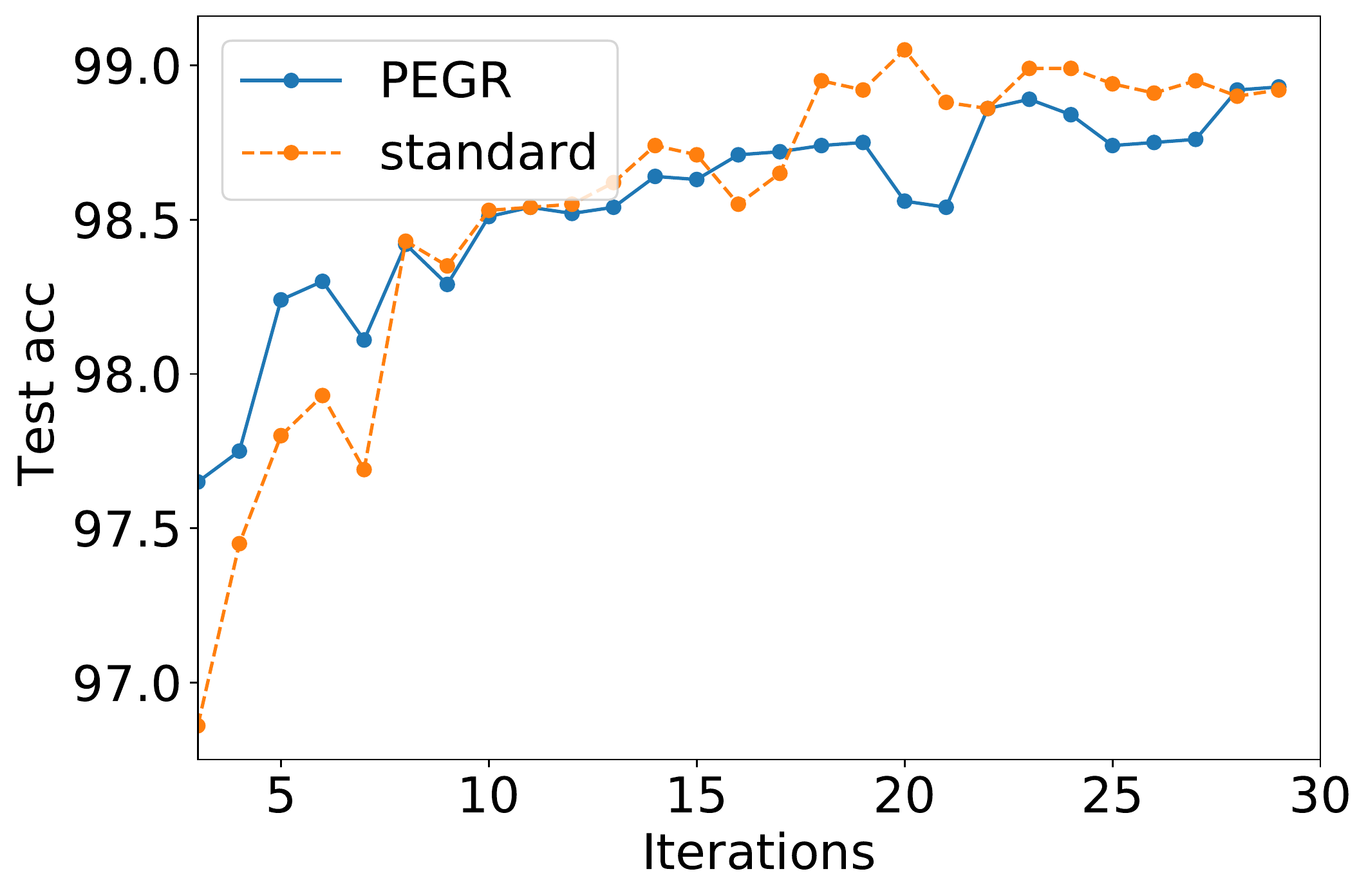}
}%
\subfigure[$\sigma_p=5$]{
\includegraphics[width=0.25\columnwidth]{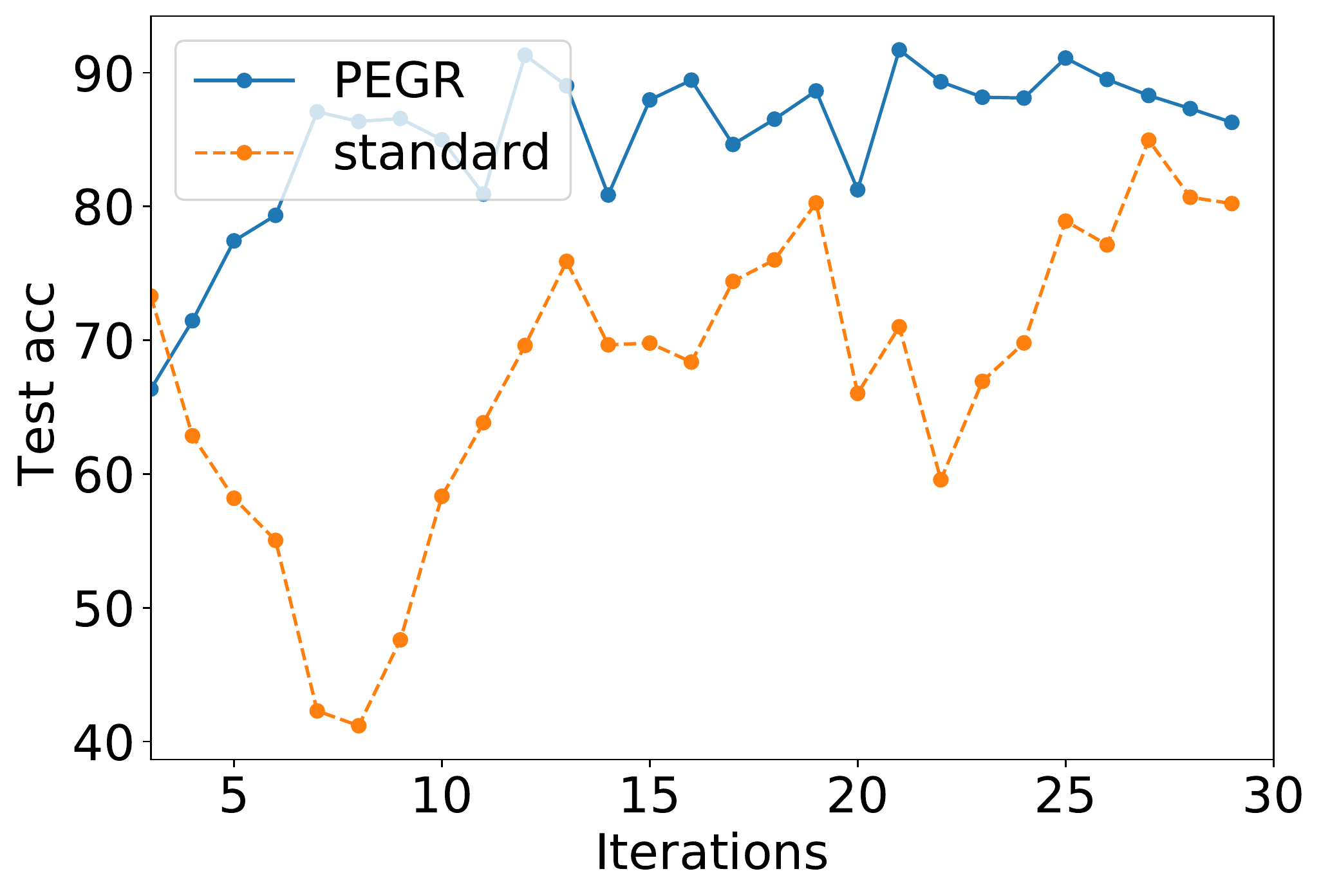}
}%
\subfigure[$\sigma_p=10$]{
\includegraphics[width=0.25\columnwidth]{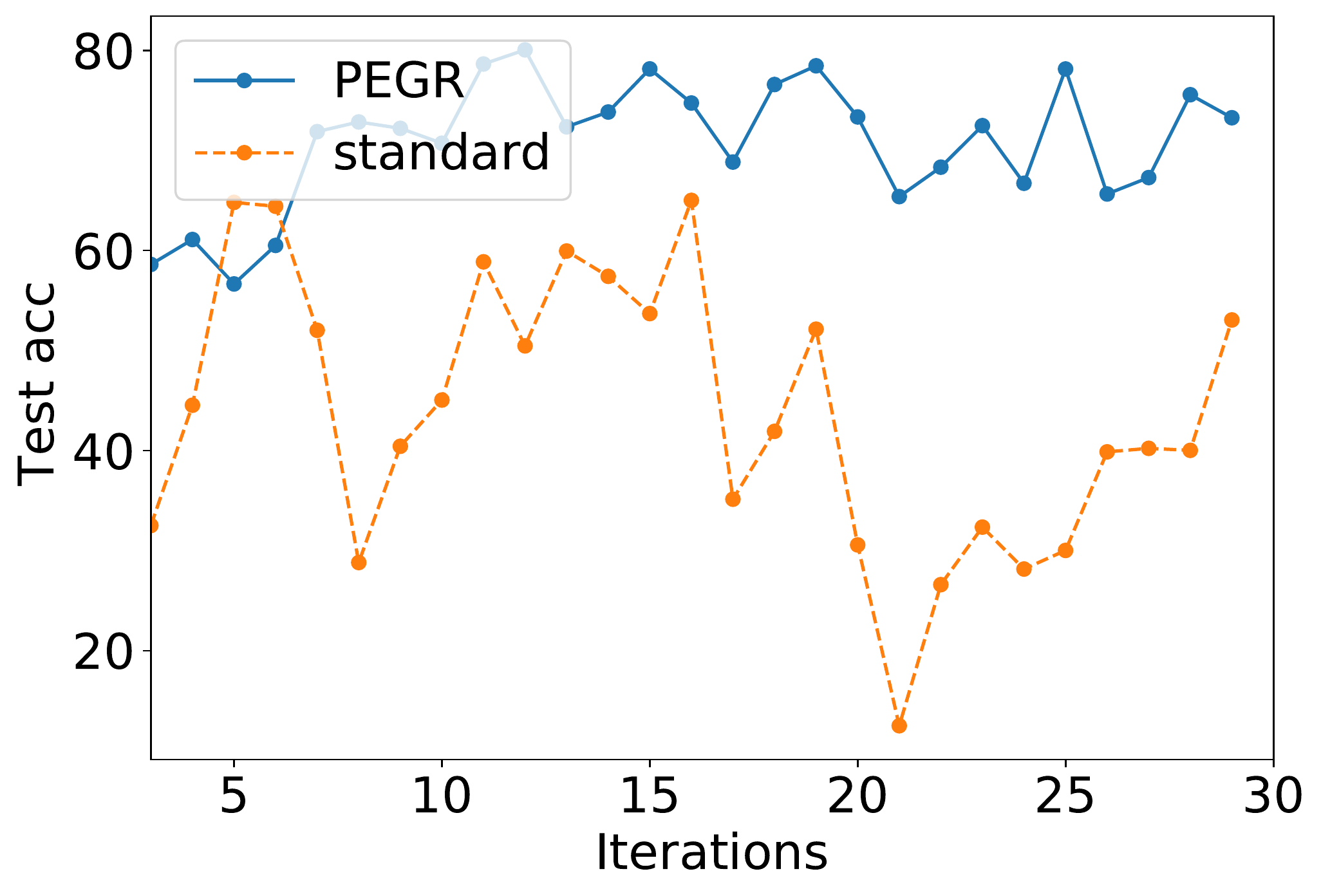}
}%
\subfigure[$\sigma_p=15$]{
\includegraphics[width=0.25\columnwidth]{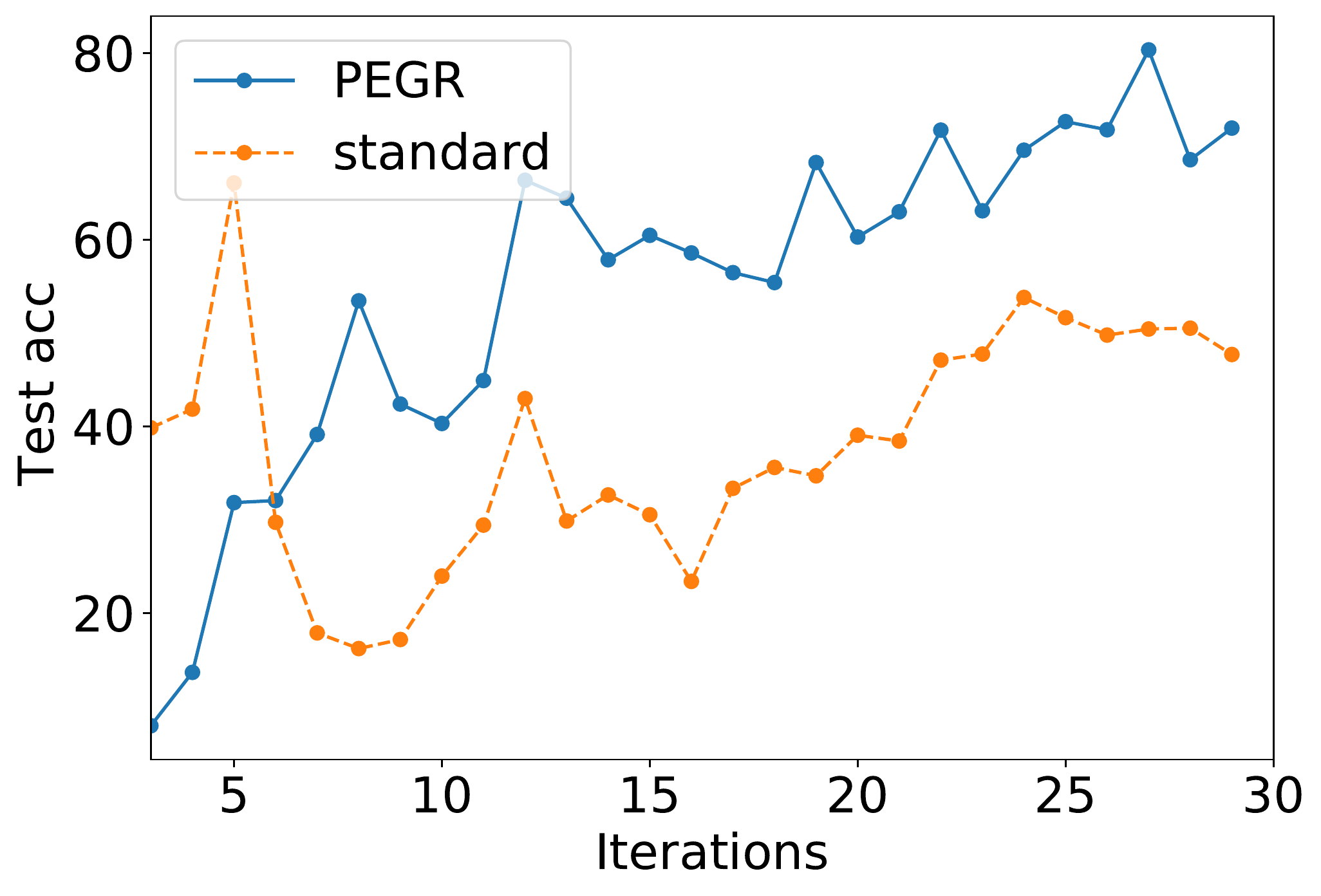}
}
\caption{~Test accuracy under different $\sigma_p$. (a) gives the test accuracy curve under $\sigma_p=0$; (b) gives the test accuracy curve under $\sigma_p=5$; (c) gives the test accuracy curve under $\sigma_p=10$; (d) gives the test accuracy curve under $\sigma_p=15$.\label{fig:trueacc}}
\end{figure}
As shown in Figure~\ref{fig:trueacc}, PEGR demonstrates its ability to enhance signal learning during real data training by yielding higher test accuracy compared to standard training under different values of $\sigma_p$ and training algorithms. The decrease in test accuracy for both approaches as the value of $\sigma_p$ increases indicates that noise obscures the signal, but PEGR is still able to distinguish the signal from noise and gradually increase the test accuracy. Moreover, as shown in Figure~\ref{fig:trueacc} (a), PEGR does not prevent signal learning when there is no noise. Hence, these results suggest that PEGR can be effectively applied to improve the generalization of deep neural networks in real-world settings where training data sets contain significant noise.

In conclusion, our real data experiments confirm the effectiveness of PEGR in enhancing signal learning. As the level of noise in the training data increases, both PEGR and standard training approaches achieve a small training loss. However, PEGR is able to separate the signal from the noise better than standard training, resulting in higher test accuracy. These results, combined with our theoretical guarantees in Theorems~\ref{thm:withregularization} and \ref{thm:withoutregularization}, suggest that the PEGR technique can be widely applied when training data sets contain significant noise and the signal is obscured.

\section{Overview of Proof Technique}
\label{sec:overviewproof}
In this section, we present the main technique in the study of CNN training under our setting. The maximum admissible iterations we set in the paper is $T^*=\eta^{-1}\poly({n,m,d,\varepsilon^{-1},\sigma_0^{-1},\sigma_p\sqrt{d}})$.  The complete proofs of all the results are given in the appendix.

\subsection{Why PEGR works better than standard training?}
\label{subsec:whyPEGR}
In this section, we give the equation of the gradient, and briefly discuss about why PEGR works better than standard training.
We further denote by $\zeta_i^{(t)}=\sum_{j'}\sum_{r'=1}^m\big(\ReLU^2( \la \wb_{j',r'}^{(t)},  y_i\bmu \ra )\|\bmu\|^2+\ReLU^2( \la \wb_{j',r'}^{(t)}, \bxi_i \ra )\|\bxi_i\|^2\big)$, $\ell'^{(t)}_{i}={\ell'}[y_{i}\cdot f(\Wb^{(t)},\xb_{i})],~\ell''^{(t)}_{i}={\ell''}[y_{i}\cdot f(\Wb^{(t)},\xb_{i})]$, and show the update rule in the following lemma.
\begin{lemma}
\label{lemma:update_rule}
Given $\tilde{L}(\Wb)$ above, we have
\begin{align*}
    &\wb_{j,r}^{(t+1)}=\wb_{j,r}^{(t)}-\frac{2\eta}{nm}\sum_{i=1}^{n}{\ell'^{(t)}_i}\bigg(1+\frac{4\lambda\zeta_i^{(t)}}{m}\cdot\ell''^{(t)}_i\bigg)\cdot jy_i \big(\ReLU ( \la\wb_{j,r}^{(t)}, y_i\bmu \ra)y_i\bmu+\ReLU (\la \wb_{j,r}^{(t)}, \bxi_i \ra) \bxi_i\big)\\
    &\qquad-\frac{4\lambda\eta}{nm^2}\sum_{i=1}^{n}{\ell'^{2(t)}_i}\big(\ReLU  ( \la \wb_{j,r}^{(t)}, y_i\bmu \ra)\|\bmu\|^2 y_i\bmu+\ReLU  ( \la \wb_{j,r}^{(t)}, \bxi_i \ra)\|\bxi_i\|^2\bxi_i\big).
\end{align*}
\end{lemma}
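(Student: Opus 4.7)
The plan is to expand $\tilde{L}(\Wb) = L_S(\Wb) + \frac{\lambda}{2n}\sum_i \|\nabla_\Wb \ell[y_i f(\Wb,\xb_i)]\|_F^2$ and differentiate each piece with respect to $\wb_{j,r}$, then rearrange so that all contributions line up along the two structural directions $\vb_{j,r,i} := \ReLU(\langle\wb_{j,r},y_i\bmu\rangle)y_i\bmu + \ReLU(\langle\wb_{j,r},\bxi_i\rangle)\bxi_i$ and $\ub_{j,r,i} := \ReLU(\langle\wb_{j,r},y_i\bmu\rangle)\|\bmu\|^2 y_i\bmu + \ReLU(\langle\wb_{j,r},\bxi_i\rangle)\|\bxi_i\|^2\bxi_i$ appearing in the claim. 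First I compute $\nabla_{\wb_{j,r}}f(\Wb,\xb_i)$: only the $(j,r)$ summand of the double sum defining $f$ depends on $\wb_{j,r}$, and since $\sigma'(z)=2\ReLU(z)$ with $\xb_i^{(1)}=y_i\bmu,\ \xb_i^{(2)}=\bxi_i$, this yields $\nabla_{\wb_{j,r}} f = (2j/m)\,\vb_{j,r,i}$. Chain rule through $\ell$ then gives $\nabla_{\wb_{j,r}}\ell[y_i f]=(2jy_i\ell'^{(t)}_i/m)\,\vb_{j,r,i}$, and averaging over $i\in[n]$ immediately reproduces the $\lambda=0$ portion of the update.

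The second step is to simplify the per-example penalty. Using the same expression for $\nabla_{\wb_{j',r'}}\ell[y_i f]$ and the orthogonality $\langle\bmu,\bxi_i\rangle=0$ baked into Definition~\ref{def:Data_distribution} via the noise covariance $\sigma_p^2(\Ib-\bmu\bmu^\top/\|\bmu\|^2)$, the cross term in $\|\vb_{j',r',i}\|_2^2$ vanishes, leaving $\|\nabla_{\wb_{j',r'}}\ell[y_i f]\|_2^2 = (4(\ell'^{(t)}_i)^2/m^2)\bigl(\ReLU^2(\langle\wb_{j',r'},y_i\bmu\rangle)\|\bmu\|^2 + \ReLU^2(\langle\wb_{j',r'},\bxi_i\rangle)\|\bxi_i\|^2\bigr)$. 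Summing over $j',r'$ turns the bracketed quantity into exactly $\zeta_i^{(t)}$, so the penalty collapses to a scalar multiple of $\sum_i (\ell'^{(t)}_i)^2 \zeta_i^{(t)}$. This algebraic collapse is the key simplification that makes the rest of the computation tractable; it is also the one place where the structure of the data model is used.

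Third, I differentiate this simplified penalty by the product rule, treating $(\ell'^{(t)}_i)^2$ and $\zeta_i^{(t)}$ as the two factors. Differentiating $(\ell'^{(t)}_i)^2$ via chain rule through $f$ again produces a vector in the direction $\vb_{j,r,i}$, scaled by a prefactor proportional to $\ell'^{(t)}_i\ell''^{(t)}_i\zeta_i^{(t)}$; combining with the $L_S$-gradient from step~one merges both $\vb_{j,r,i}$-aligned contributions into the single bracketed prefactor $\ell'^{(t)}_i\bigl(1+\tfrac{4\lambda\zeta_i^{(t)}}{m}\ell''^{(t)}_i\bigr)$ of the claim. Differentiating $\zeta_i^{(t)}$ is local — only the $(j,r)$ summand depends on $\wb_{j,r}$ — and using $\tfrac{d}{dz}\ReLU^2(z)=2\ReLU(z)$ produces exactly $2\ub_{j,r,i}$, weighted by $(\ell'^{(t)}_i)^2$; this gives the final line of the stated update. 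Multiplying the assembled gradient by $-\eta$ yields the formula.

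There is no conceptual obstacle; this is a chain/product rule calculation. The only places where care is required are (i) the orthogonality $\bmu\perp\bxi_i$ that kills the cross term and lets the squared-norm sum condense into $\zeta_i^{(t)}$ — a step that would fail for generic noise distributions — and (ii) careful bookkeeping of the many factors of $2$ arising from $\sigma'(z)=2\ReLU(z)$ together with the normalizations $1/m$, $1/n$, and $\lambda/2$, which must line up to produce the coefficients $2\eta/(nm)$ and $4\lambda\eta/(nm^2)$ as written.
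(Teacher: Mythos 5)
Your proposal is correct and follows essentially the same route as the paper's own proof in Appendix~\ref{subsec:proofPEGR}: chain rule for the $L_S$ term, collapse of the per-example penalty into $\tfrac{4\ell_i'^2}{m^2}\zeta_i^{(t)}$ via the orthogonality $\langle\bmu,\bxi_i\rangle=0$, and a product rule splitting the penalty gradient into the $\vb_{j,r,i}$- and $\ub_{j,r,i}$-aligned pieces. The only caveat is the one you already flag — the constant in front of the $\ell_i'\ell_i''\zeta_i^{(t)}$ term: differentiating $\ell_i'^2$ carries an extra $1/m$ from $\nabla_{\wb_{j,r}}f$, so a fully careful bookkeeping gives $\tfrac{4\lambda\zeta_i^{(t)}}{m^2}\ell_i''$ rather than $\tfrac{4\lambda\zeta_i^{(t)}}{m}\ell_i''$ inside the bracket (the paper's own displayed $\nabla_{\wb_{j,r}}\ell_i'^2$ drops this $1/m$), a discrepancy that is immaterial to the downstream analysis since that term is treated as a negligible higher-order correction.
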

The proof for Lemma~\ref{lemma:update_rule} can be found in Appendix~\ref{subsec:proofPEGR}. With the update rule established, we can now proceed to analyze the disparity in test error between cases with and without gradient regularization. By using Lemma~\ref{lemma:update_rule}, we can observe how per-example gradient regularization affects the learning of both noise and feature. Specifically, in the early stages of training where $|\wb_{j,t}^{(t)}|\ll 1$, we can ignore the high-order terms $\zeta_i^{(t)}$, and we can approximate $\ell_i^{(t)}\sim -0.5$ and $\ell_{i}'^{2(t)}\sim 1/4$. Moreover,  we assume that $\bxi_i$ is nearly orthogonal to all other noise vectors. These simplifications allow us to discuss the training with PEGR and the standard training method separately.

In standard training, we set $\lambda=0$, and the update rule becomes 
    \begin{align*}
        \wb_{j,r}^{(t+1)}=\wb_{j,r}^{(t)}-\frac{2\eta}{nm}\sum_{i=1}^{n}{\ell'^{(t)}_i}  \big(\ReLU ( \la\wb_{j,r}^{(t)}, y_i\bmu \ra)j \bmu+\ReLU (\la \wb_{j,r}^{(t)}, \bxi_i \ra) jy_i\bxi_i\big).
    \end{align*}
    If the noise is significantly stronger than the signal, the signal and noise  can grow at a rate of $\Theta(\eta\|\bmu\|^2)$ and $\Theta(\eta\|\bxi_i\|^2/n)$ respectively, indicating a faster rate of noise memorization. As training progresses, the noise memorization grows significantly faster than the signal learning, leading to a rapid decrease in the loss function. Consequently, the signal learning will no longer improve despite the small loss.
    
Similar in the PEGR training, we also ignore the higher-order terms $\zeta_i^{(t)}$, and the update rule can be approximated by
\begin{align*}
&\wb_{j,r}^{(t+1)}\approx\wb_{j,r}^{(t)}-\frac{2\eta}{nm}\sum_{i=1}^{n}{\ell'^{(t)}_i}\big(\ReLU ( \la\wb_{j,r}^{(t)}, y_i\bmu \ra)j\bmu+\ReLU (\la \wb_{j,r}^{(t)}, \bxi_i \ra) jy_i\bxi_i\big)\\
    &\qquad-\frac{4\lambda\eta}{nm^2}\sum_{i=1}^{n}{\ell'^{2(t)}_i}\big(\ReLU  ( \la \wb_{j,r}^{(t)}, y_i\bmu \ra)\|\bmu\|^2 y_i\bmu+\ReLU  ( \la \wb_{j,r}^{(t)}, \bxi_i \ra)\|\bxi_i\|^2\bxi_i\big).
\end{align*}
The per-example gradient regularization suppresses the learning of both features and noises, driving them back to zero at suppression speeds approximately $\lambda\cdot\Theta(\eta\|\bmu\|^4)$ and $\lambda\cdot\Theta(\eta\|\bxi_i\|^4/n)$, respectively. Recall the growing rate $\Theta(\eta\|\bmu\|^2)$ and $\Theta(\eta\|\bxi_i\|^2/n)$, we can tune $\lambda$ to make the suppression speeds $\Theta(\eta\|\bmu\|^4/\|\bxi_i\|)$ and $\Theta(\eta\|\bxi_i\|^3/n)$, which indicates that the regularization imposes a stronger suppression on noise memorization while still promoting signal learning. The analysis presented above suggests that feature learning can outperform noise memorization when the signal-to-noise ratio (SNR) is small. In contrast, standard training is likely to fail in such a low-SNR regime. By appropriately adjusting the regularization parameter, we can enhance the suppression of noise memorization and promote the learning of features, which can lead to improved performance in low-SNR scenarios.

% This observation enables us to adjust the regularization parameter $\lambda$ appropriately to promote signal learning.

In fact, the reason behind this is that the PEGR implicitly controls the variance of the gradients when using different training data points. As a consequence, the network is encouraged to make use of the \textbf{signal component} rather than the \textbf{noise component} to fit the training data, since the signal component, which is shared for all data in the same class, has a significantly smaller variance than  that of the noise components, which vary for different data.

\subsection{Analysis in the maximum admissible iterations.}

In this section, we give the analysis in the maximum admissible iterations $T^*$.
Since the vectors $\bmu$ and $\bxi_i$, where $i\in[n]$, are linearly independent with probability 1, we can introduce the following definition based on the gradient descent update rule in Lemma~\ref{lemma:update_rule}. 
\begin{definition}
\label{def:signal_to_noise_decomp}
Let $\wb_{j,r}^{(t)} $ for $j\in\{\pm1\}$, $r\in[m]$ be the convolution filter of the CNN at the $t$-th iteration of gradient descent. Then there exists unique coefficients $\gamma_{j,r}^{(t)}$ and $\rho_{j,r,i}^{(t)}$ such that
\begin{align}
\label{eq:signal_to_noise_decomp}
    \wb_{j,r}^{(t)}&=\wb_{j,r}^{(0)}+j\cdot\gamma_{j,r}^{(t)}\frac{\bmu}{\|\bmu\|^2}+\sum_{i=1}^n\rho_{j,r,i}^{(t)}\frac{\bxi_i}{\|\bxi_i\|^2}.
\end{align}
\end{definition}
Easy to see $\gamma_{j,r}^{(0)}=\rho_{j,r,i}^{(0)}=0$. We refer to \eqref{eq:signal_to_noise_decomp} as the signal-noise decomposition of $\wb_{j,r}^{(t)}$. The uniqueness  can be found in the proof of Lemma~\ref{lemma:decomposition_coef_dynamic_appendix} in Appendix~\ref{sec:update_rule}. In this decomposition, $\gamma_{j,r}^{(t)}$ characterizes the progress of learning the signal vector $\bmu$, and $\rho_{j,r,i}^{(t)}$ depicts the degree of noise memorization by the filter. Based on the decomposition, if some $\gamma_{j,r}^{(t)}$  is large enough while $|\rho_{j,r,i}^{(t)}|$ is small, then the CNN will achieve small training gradient and test error, if some $|\rho_{j,r,i}^{(t)}|$  is large enough while all $\gamma_{j,r}^{(t)}$ are small, then the CNN will achieve a small training gradient but a large test error. Thus, Definition~\ref{def:signal_to_noise_decomp} provides a method for us to study the convergence of the training procedure and the property of the test error with or without gradient regularization under SGD.
Note that the dynamic analysis is related to a non convex optimization problem, and the key technique is the signal-noise decomposition in Definition~\ref{def:signal_to_noise_decomp}, which is further investigated in the following lemma.

% \dz{looks like the last two terms on the RHS are both positive?}
\begin{lemma}
\label{lemma:decomposition_coef_dynamic}
The coefficients $\gamma_{j,r}^{(t)},\rho_{j,r,i}^{(t)}$ in Definition~\ref{def:signal_to_noise_decomp} satisfy the following equations:
\begin{align*}
\zeta_i^{(t)}&=\sum_{j'}\sum_{r'=1}^m\big(\ReLU^2( \la \wb_{j',r'}^{(t)},  y_i\bmu \ra )\|\bmu\|^2+\ReLU^2( \la \wb_{j',r'}^{(t)}, \bxi_i \ra )\|\bxi_i\|^2\big),\\
    \gamma_{j,r}^{(t+1)}&=\gamma_{j,r}^{(t)}-\frac{2\eta}{nm}\sum_{i=1}^{n}\bigg[{\ell'^{(t)}_i}\bigg(1+\frac{4\lambda\zeta_i^{(t)}}{m}\cdot\ell''^{(t)}_i\bigg) \ReLU ( \la\wb_{j,r}^{(t)}, y_i\cdot\bmu \ra)\|\bmu\|^2\\
    &\quad+\frac{  2\lambda}{m}{\ell'^{2(t)}_i}\ReLU  ( \la \wb_{j,r}^{(t)}, y_i\cdot\bmu \ra)\|\bmu\|^4\cdot jy_i\bigg],\\
    \rho_{j,r,i}^{(t+1)}&=\rho_{j,r,i}^{(t)}-\frac{2\eta}{nm}\bigg[{\ell'^{(t)}_i}\bigg(1+\frac{4\lambda \zeta_i^{(t)}}{m}\cdot\ell''^{(t)}_i\bigg) \ReLU ( \la\wb_{j,r}^{(t)}, \bxi_i \ra)\|\bxi_i\|^2\cdot jy_i\\
    &\quad+\frac{  2\lambda}{m}{\ell'^{2(t)}_i}\ReLU  ( \la \wb_{j,r}^{(t)}, \bxi_i \ra)\|\bxi_i\|^4\bigg],
\end{align*}
the initialization condition is $\gamma_{j,r}^{(0)}=\rho_{j,r,i}^{(0)}=0$.
\end{lemma}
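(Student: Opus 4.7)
The plan is to verify the update equations by direct computation: subtract $\wb_{j,r}^{(t)}$ from the gradient step of Lemma~\ref{lemma:update_rule}, regroup the result along the basis $\{\bmu,\bxi_1,\dots,\bxi_n\}$, and then read off the coefficient updates by appealing to linear independence. The formula for $\zeta_i^{(t)}$ is nothing to prove; it is simply the definition given immediately before Lemma~\ref{lemma:update_rule}, repeated here for convenience.

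First I would establish uniqueness of the decomposition in \eqref{eq:signal_to_noise_decomp}. Two ingredients: (i) because each $\bxi_i$ is drawn from the non-degenerate Gaussian $\cN(\zero, \sigma_p^2(\Ib - \bmu\bmu^\top/\|\bmu\|^2))$ on the $(d-1)$-dimensional subspace $\bmu^{\perp}$, when $d \geq n+1$ the family $\{\bmu,\bxi_1,\dots,\bxi_n\}$ is linearly independent with probability one; (ii) a straightforward induction on $t$, with base case $\gamma_{j,r}^{(0)}=\rho_{j,r,i}^{(0)}=0$ and inductive step reading off the form of Lemma~\ref{lemma:update_rule}, shows that $\wb_{j,r}^{(t)} - \wb_{j,r}^{(0)} \in \mathrm{span}(\bmu,\bxi_1,\dots,\bxi_n)$ for every $t$. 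Together these imply that the coefficients $\gamma_{j,r}^{(t)}$ and $\rho_{j,r,i}^{(t)}$ in the representation are uniquely determined.

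Next, I would take the difference
\begin{align*}
\wb_{j,r}^{(t+1)} - \wb_{j,r}^{(t)} = j\bigl(\gamma_{j,r}^{(t+1)}-\gamma_{j,r}^{(t)}\bigr)\frac{\bmu}{\|\bmu\|^2} + \sum_{i=1}^{n}\bigl(\rho_{j,r,i}^{(t+1)}-\rho_{j,r,i}^{(t)}\bigr)\frac{\bxi_i}{\|\bxi_i\|^2}
\end{align*}
and match it against the explicit right-hand side of Lemma~\ref{lemma:update_rule}. Using $y_i^2 = 1$, each term $\ReLU(\la\wb_{j,r}^{(t)}, y_i\bmu\ra)\,y_i\bmu$ contributes a scalar multiple of $\bmu$ only, and each term $\ReLU(\la\wb_{j,r}^{(t)},\bxi_i\ra)\,\bxi_i$ lies along $\bxi_i$. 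By the uniqueness established above I can equate the coefficient of $\bmu$ on the two sides, multiply through by $\|\bmu\|^{2}/j$ and use $j^{2}=1$ to obtain the claimed update for $\gamma_{j,r}^{(t)}$; similarly, equating the coefficient of $\bxi_i$ and multiplying by $\|\bxi_i\|^{2}$ yields the update for $\rho_{j,r,i}^{(t)}$. The $\|\bmu\|^{2}$ and $\|\bxi_i\|^{2}$ factors that surface in the final formulas are exactly those needed to cancel the $1/\|\bmu\|^{2}$ and $1/\|\bxi_i\|^{2}$ prefactors in the decomposition.

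The main obstacle is not conceptual but rather the bookkeeping of signs and indices: the prefactor $jy_i$ sitting in front of $\ReLU(\la\wb_{j,r}^{(t)},y_i\bmu\ra)y_i\bmu + \ReLU(\la\wb_{j,r}^{(t)},\bxi_i\ra)\bxi_i$ in the loss-derivative piece collapses to $j$ in the $\bmu$-direction (because $y_i^2=1$) but survives as $jy_i$ in the $\bxi_i$-direction; conversely, the regularization piece has no prefactor $jy_i$ of its own and so contributes a bare $y_i$ in the $\bmu$-direction and nothing extra in the $\bxi_i$-direction. Once these factors are tracked and the $\|\bmu\|^{2}$/$\|\bxi_i\|^{2}$ normalizations are absorbed correctly, the two stated equations drop out exactly. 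The whole argument is a purely algebraic rearrangement and requires no probabilistic estimates on inner products or on the magnitudes of the coefficients.
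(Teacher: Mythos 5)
Your proposal is correct and matches the paper's argument: the paper likewise establishes uniqueness of the decomposition from the almost-sure linear independence of $\{\bmu,\bxi_1,\dots,\bxi_n\}$ and then identifies the coefficient updates from Lemma~\ref{lemma:update_rule} (the paper phrases this as defining auxiliary sequences $\tgamma,\trho$ by the claimed recursions and checking they yield a valid decomposition, which is the same coefficient-matching you describe). Your sign bookkeeping — $jy_i\cdot y_i=j$ in the $\bmu$-direction, $jy_i$ surviving in the $\bxi_i$-direction, and the bare $y_i$ from the regularization term becoming $jy_i$ after dividing by $j$ — is exactly right.
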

Lemma~\ref{lemma:decomposition_coef_dynamic} gives the update rule of the coefficients $\gamma_{j,r}^{(t)}$ and $\rho_{j,r,i}^{(t)}$, which enables us further analyze the training process. We further define  
\begin{align*}
    \overrho_{j,r,i}^{(t)}=\rho_{j,r,i}^{(t)}\one\{j=y_i \},\quad\underrho_{j,r,i}^{(t)}=\rho_{j,r,i}^{(t)}\one\{j\not=y_i \}.
\end{align*}
Our proof then focuses on a careful assessment of $\gamma_{j,r}^{(t)}$  $\overrho_{j,r,i}^{(t)}$ and $\underrho_{j,r,i}^{(t)}$ throughout training. To prepare more detailed proof, we give the update rule of $\gamma_{j,r}^{(t)}$  $\overrho_{j,r,i}^{(t)}$ and $\underrho_{j,r,i}^{(t)}$ with $\lambda=0$.
\begin{equation}
\label{eq:Update_eq}
    \begin{split}
\gamma_{j,r}^{(t+1)}&=\gamma_{j,r}^{(t)}-\frac{2\eta}{nm}\sum_{i=1}^{n}{\ell'^{(t)}_i}\ReLU \bigg( \la \wb_{j,r}^{(0)},  y_i\bmu \ra+jy_i\cdot\gamma_{j,r}^{(t)} \bigg)\|\bmu\|^2,\\
\overrho_{j,r,i}^{(t+1)}&=\overrho_{j,r,i}^{(t)}-\frac{2\eta}{nm}{\ell'^{(t)}_i}\ReLU \bigg( \la \wb_{j,r}^{(0)}, \bxi_i \ra +\sum_{i'=1}^n\overrho_{j,r,i'}^{(t)}\frac{\la\bxi_i,\bxi_{i'}\ra}{\|\bxi_{i'}\|^2}+\sum_{i'=1}^n\underrho_{j,r,i'}^{(t)}\frac{\la\bxi_i,\bxi_{i'}\ra}{\|\bxi_{i'}\|^2}\bigg)\|\bxi_i\|^2,\\
\underrho_{j,r,i}^{(t+1)}&=\underrho_{j,r,i}^{(t)}+\frac{2\eta}{nm}{\ell'^{(t)}_i}\ReLU \bigg( \la \wb_{j,r}^{(0)}, \bxi_i \ra +\sum_{i'=1}^n\overrho_{j,r,i'}^{(t)}\frac{\la\bxi_i,\bxi_{i'}\ra}{\|\bxi_{i'}\|^2}+\sum_{i'=1}^n\underrho_{j,r,i'}^{(t)}\frac{\la\bxi_i,\bxi_{i'}\ra}{\|\bxi_{i'}\|^2}\bigg)\|\bxi_i\|^2.
    \end{split}
\end{equation}
With \eqref{eq:Update_eq} above, we give the following proposition which holds during in the whole training process, no matter there exists gradient regularization or not. 
\begin{proposition}
\label{prop:admissible_time_bound}
Under Condition~\ref{condition:condition}, if $\gamma_{j,r}^{(t)},\overrho_{j,r,i}^{(t)}$ and $\underrho_{j,r,i}^{(t)}$ satisfy the update rule \eqref{eq:Update_eq},
$|\gamma_{j,r}^{(0)}|=O(1)$ and $|\rho_{j,r,i}^{(0)}|\leq 8\sqrt{\log(8mn/\delta)}\cdot\sigma_0\sigma_p\sqrt{d}$. Then it holds that
\begin{align*}
    &0\leq\gamma_{j,r}^{(t)},\overrho_{j,r,i}^{(t)}\leq 4\log(T^*),\\
    &\underrho_{j,r,i}^{(t)}\geq -\beta-64n\sqrt{\frac{\log(4n^2/\delta)}{d}}\log(T^*)\geq-4\log(T^*)
\end{align*}
for any $t\in[T^*]$, where $\beta=2\max_{i,j,r}\{|\la w_{j,r}^{(0)},\bmu\ra|,|\la w_{j,r}^{(0)},\bxi_i\ra|\}$.
\end{proposition}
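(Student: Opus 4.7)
The plan is to prove the three bounds simultaneously by induction on $t\in[0,T^*]$, using the signal-noise decomposition dynamics \eqref{eq:Update_eq} together with the monotonicity and self-limiting behavior of the cross-entropy loss $\ell'(z)=-1/(1+e^z)$. First I would condition on the standard high-probability Gaussian event under which $\|\bxi_i\|^2=\Theta(\sigma_p^2 d)$ for every $i$, $|\la\bxi_i,\bxi_{i'}\ra|\leq 2\sigma_p^2\sqrt{d\log(4n^2/\delta)}$ for $i\neq i'$, and $|\la\wb_{j,r}^{(0)},\bmu\ra|,|\la\wb_{j,r}^{(0)},\bxi_i\ra|\leq \beta/2$. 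Since $\ell'<0$ and the $\ReLU$ and norm factors in \eqref{eq:Update_eq} are non-negative, the update immediately yields the monotonicities that $\gamma_{j,r}^{(t)}$ and $\overrho_{j,r,i}^{(t)}$ are non-decreasing, while $\underrho_{j,r,i}^{(t)}$ is non-increasing. Under the induction hypothesis, the $\ReLU$ arguments are bounded in magnitude by $O(\log T^*)$, yielding crude per-step increment bounds of $\tilde O(\eta\|\bmu\|^2\log T^*/m)$ for $\gamma$ and $\tilde O(\eta\sigma_p^2 d \log T^*/(nm))$ for $\rho$, both of which are $o(1)$ under Condition~\ref{condition:condition}.

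For the upper bound $\gamma_{j,r}^{(t)},\overrho_{j,r,i}^{(t)}\leq 4\log T^*$, I would argue by contradiction. Suppose $t_0$ is the first iteration at which some coefficient, say $\gamma_{j_0,r_0}^{(t_0)}$, just exceeds $4\log T^*$ (the argument for $\overrho$ is analogous with the noise coordinate $\bxi_i$ replacing $\bmu$). Let $t_1<t_0$ be the last iteration at which $\gamma_{j_0,r_0}^{(t)}<3\log T^*$; by the per-step bound, $\gamma_{j_0,r_0}^{(t_1+1)}\leq 3\log T^*+o(1)$, so the net increment on $[t_1+1,t_0]$ must be at least $\log T^*-o(1)$. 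For every $t$ in this window and every data index $i$ with $j_0 y_i=+1$, the signal contribution of the $(j_0,r_0)$ neuron alone to $y_i f(\Wb^{(t)},\xb_i)$ is at least $(3\log T^*-\beta)^2/m$; monotonicity ensures that same-sign signal and noise contributions from other neurons are non-negative, while the opposite-sign signal contribution vanishes (since $\la\wb_{-j_0,r},y_i\bmu\ra=\la\wb_{-j_0,r}^{(0)},y_i\bmu\ra-\gamma_{-j_0,r}^{(t)}$ is negative and killed by $\ReLU^2$), and the opposite-sign noise contribution is bounded by $(\beta/2+\tilde O(n\log T^*/\sqrt d))^2=o((\log T^*)^2/m)$ thanks to the dimension condition $d\gg m n^2$. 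Hence $y_i f(\Wb^{(t)},\xb_i)\geq \Omega((\log T^*)^2/m)$, which together with $|\ell'^{(t)}_i|\leq e^{-y_if}$ forces $|\ell'^{(t)}_i|$ to decay sufficiently rapidly; summing the increments over $[t_1+1,t_0]$ and using $T^*=\poly(d)/\eta$ shows the total growth of $\gamma_{j_0,r_0}^{(t)}$ on this window is $o(\log T^*)$, contradicting the required jump.

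For the lower bound on $\underrho_{j,r,i}^{(t)}$, the proof is a cleaner stopping argument: the update vanishes once the $\ReLU$ argument becomes non-positive. By the signal-noise decomposition this argument equals
\begin{align*}
\la\wb_{j,r}^{(0)},\bxi_i\ra+\underrho_{j,r,i}^{(t)}+\sum_{i'\neq i}\rho_{j,r,i'}^{(t)}\frac{\la\bxi_i,\bxi_{i'}\ra}{\|\bxi_{i'}\|^2},
\end{align*}
and the first and third terms are jointly bounded below on the Gaussian event by $-\beta/2-16n\sqrt{\log(4n^2/\delta)/d}\log T^*$ using the induction bounds $|\rho_{j,r,i'}^{(t)}|\leq 4\log T^*$. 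Once $\underrho_{j,r,i}^{(t)}$ drops below that threshold, further decrease stops; absorbing one final per-step decrement (which is $o(1)$ under Condition~\ref{condition:condition}) and slightly inflating constants yields the claimed bound. The final inequality $-\beta-64n\sqrt{\log(4n^2/\delta)/d}\log T^*\geq -4\log T^*$ follows from $d\geq\tilde\Omega(n^2)$ and $\beta=o(1)$.

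The main obstacle is the self-limiting argument in the second paragraph: one must verify that once any single $\gamma_{j_0,r_0}^{(t)}$ reaches $\approx 3\log T^*$, the logit $y_i f(\Wb^{(t)},\xb_i)$ for relevant $i$ grows large enough to make $|\ell'^{(t)}_i|$ decay fast enough that the remaining accumulated growth over the horizon $T^*$ is $o(\log T^*)$. This hinges on carefully controlling the potentially negative noise contribution from $\underrho$-neurons via the induction hypothesis and the Gaussian orthogonality between $\bxi_i$ and $\bxi_{i'}$; it is precisely the dimension condition $d\geq\tilde\Omega(m^2 n^{2+2\alpha})$ that ensures the noise cross-term $O((\log T^*)^2 n^2/d)$ is dominated by the signal contribution $(\log T^*)^2/m$, enabling the contradiction to close.
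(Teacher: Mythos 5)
Your proposal matches the paper's proof in both structure and substance: an induction over $t\le T^*$, a stopping-time/self-limiting argument for the upper bounds on $\gamma_{j,r}^{(t)}$ and $\overrho_{j,r,i}^{(t)}$ (once a coefficient crosses a threshold of order $\log T^*$, the corresponding logit is large, $|\ell_i'^{(t)}|$ is exponentially small, and the residual growth over the remaining at most $T^*$ steps is $o(\log T^*)$), and a deactivation argument for the lower bound on $\underrho_{j,r,i}^{(t)}$ (once it falls below $-\beta/2-32n\sqrt{\log(4n^2/\delta)/d}\log T^*$ the ReLU argument becomes non-positive and the coefficient freezes, with one extra controlled step absorbed into the constant). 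The only differences are presentational --- you phrase the upper bound as a contradiction between a first-crossing time and a last-time-below threshold, whereas the paper decomposes the post-threshold growth into a single step $I_1$ plus a tail sum $I_2$ --- so the proposal is correct and takes essentially the same route as the paper.
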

The initialization conditions for $\gamma_{j,r}^{(t)}$ and $\rho_{j,r,i}^{(t)}$ in the following proposition ensure that the inequalities in the proposition will always hold, regardless of whether we are dealing with Theorem~\ref{thm:withregularization} or Theorem~\ref{thm:withoutregularization}. We will prove that when we close the gradient regularization in the case of Theorem~\ref{thm:withregularization}, $\gamma_{j,r}^{(t)}$ and $\rho_{j,r,i}^{(t)}$ still satisfy the initialization conditions specified in Proposition~\ref{prop:admissible_time_bound}. After we prove that Proposition~\ref{prop:admissible_time_bound} holds in both cases, we have the following lemma, which gives an upper bound of the training gradient.
\begin{lemma}
    \label{lemma:admissible_trainloss}
    Under Condition~\ref{condition:condition}, for $0\leq t\leq T^*$ where $T^*=\eta^{-1}\poly({n,m,d,\varepsilon^{-1},\sigma_0^{-1},\sigma_p\sqrt{d}})$ is the maximum admissible iterations. the following result holds.
    \begin{align*}
        \| \nabla_{\Wb} L_S(\Wb)|_{\Wb=\Wb^{(t)}}\|_F^2\leq 72\sigma_p^2d L_S(\Wb^{(t)}).
    \end{align*}
\end{lemma}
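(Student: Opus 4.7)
The plan is two-step. First, applying Cauchy-Schwarz to the summation over training examples in the chain-rule expression $\nabla_{\wb_{j,r}} L_S = \frac{1}{n}\sum_i \ell'^{(t)}_i\, y_i\, \nabla_{\wb_{j,r}} f(\Wb,\xb_i)$ and summing over $(j,r)$ yields $\|\nabla_\Wb L_S\|_F^2 \leq \frac{1}{n}\sum_i (\ell'^{(t)}_i)^2 \|\nabla_\Wb f(\Wb,\xb_i)\|_F^2$. The standard logistic-loss inequality $(\ell'(z))^2 \leq \ell(z)$, immediate from $|\ell'(z)|\leq 1$ and $|\ell'(z)|\leq \ell(z)$, then reduces the lemma to proving the uniform per-example bound $\|\nabla_\Wb f(\Wb^{(t)},\xb_i)\|_F^2 \leq 72\sigma_p^2 d$ for every $i\in[n]$ and $t\in[0,T^*]$.

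Second, because $\sigma(z)=\ReLU^2(z)$ satisfies $(\sigma'(z))^2 = 4\sigma(z)$ and the two patches $y_i\bmu$ and $\bxi_i$ are orthogonal by construction in Definition~\ref{def:Data_distribution}, a direct expansion gives
\begin{align*}
\|\nabla_\Wb f(\Wb,\xb_i)\|_F^2 = \frac{4}{m^2}\sum_{j,r}\Big(\sigma(\la\wb_{j,r},y_i\bmu\ra)\|\bmu\|^2 + \sigma(\la\wb_{j,r},\bxi_i\ra)\|\bxi_i\|^2\Big) = \frac{4\zeta_i^{(t)}}{m^2}.
\end{align*}
To control the inner products, substitute the signal-noise decomposition (Definition~\ref{def:signal_to_noise_decomp}), which gives $\la\wb_{j,r}^{(t)},y_i\bmu\ra = y_i\la\wb_{j,r}^{(0)},\bmu\ra + jy_i\gamma_{j,r}^{(t)}$ and $\la\wb_{j,r}^{(t)},\bxi_i\ra = \la\wb_{j,r}^{(0)},\bxi_i\ra + \rho_{j,r,i}^{(t)} + \sum_{i'\neq i}\rho_{j,r,i'}^{(t)}\la\bxi_i,\bxi_{i'}\ra/\|\bxi_{i'}\|^2$. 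Proposition~\ref{prop:admissible_time_bound} bounds $|\gamma_{j,r}^{(t)}|,|\rho_{j,r,i}^{(t)}|\leq 4\log T^*$; Gaussian concentration of the initialization inner products together with the near-orthogonality estimate $|\la\bxi_i,\bxi_{i'}\ra|/\|\bxi_{i'}\|^2 = \tilde{O}(1/\sqrt d)$ (valid on the high-probability event guaranteed by the large-$d$ regime in Condition~\ref{condition:condition}) contribute only $o(1)$ terms. Hence each inner product is $O(\log T^*)$ and each $\sigma$-value is $O(\log^2 T^*)$.

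Combining with $\|\bmu\|^2\ll\sigma_p^2 d$ and chi-square concentration $\|\bxi_i\|^2 = (1\pm o(1))\sigma_p^2 d$, we obtain $\zeta_i^{(t)} = O(m\log^2 T^* \sigma_p^2 d)$ and hence $\|\nabla_\Wb f(\Wb^{(t)},\xb_i)\|_F^2 = O(\log^2 T^* \sigma_p^2 d/m)$. Since $T^*=\eta^{-1}\poly(n,m,d,\varepsilon^{-1},\sigma_0^{-1},\sigma_p\sqrt d)$ makes $\log T^*$ polylogarithmic, while Condition~\ref{condition:condition} permits $m\geq\Omega(\polylog(d,\sigma_0^{-1}))$ with any desired polylog, the ratio $\log^2 T^*/m$ can be forced below $18$, producing the stated constant $72$. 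The main (and essentially only) obstacle is careful bookkeeping—tracking constants through each estimate and verifying that the high-probability events (Gaussian initialization concentration, noise-noise near-orthogonality, and chi-square concentration of $\|\bxi_i\|^2$) all hold simultaneously across the admissible window; the nonconvex training dynamics have already been absorbed into Proposition~\ref{prop:admissible_time_bound}, so no further dynamical argument is needed here.
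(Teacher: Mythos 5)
Your Step 1 reduction is fine (Jensen over the sample average plus $(\ell')^2\leq|\ell'|\leq\ell$ is essentially what the paper does), but Step 2 contains a genuine gap: the uniform per-example bound $\|\nabla_\Wb f(\Wb^{(t)},\xb_i)\|_F^2\leq 72\sigma_p^2d$ is not available under Condition~\ref{condition:condition}. Your own computation shows $\|\nabla_\Wb f\|_F^2 = 4\zeta_i^{(t)}/m^2 \leq 6\sigma_p^2 d\,(F_{+1}+F_{-1})/m$, and since Proposition~\ref{prop:admissible_time_bound} only caps the coefficients at $4\log T^*$, the best you get is $O(\sigma_p^2 d\,\log^2 T^*/m)$. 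To push this below $72\sigma_p^2 d$ you need $m\gtrsim \log^2 T^*$. But $T^*=\eta^{-1}\poly(n,m,d,\varepsilon^{-1},\sigma_0^{-1},\sigma_p\sqrt d)$ contains $\varepsilon^{-1}$, and the theorems quantify over arbitrary $\varepsilon>0$, while Condition~\ref{condition:condition} only requires $m=\Omega(\polylog(d,\sigma_0^{-1}))$ — no dependence on $\varepsilon$. For $\varepsilon$ small enough, $\log^2(\varepsilon^{-1})/m$ is unbounded and your constant cannot be "forced below $18$." In short, the quantity you are trying to bound uniformly can genuinely exceed $72\sigma_p^2 d$ in the admitted parameter regime.

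The paper avoids this by never bounding $\|\nabla f(\Wb^{(t)},\xb_i)\|_F^2$ alone; it bounds the \emph{product} $-\ell'^{(t)}_i\cdot\|\nabla f(\Wb^{(t)},\xb_i)\|_F^2$. Writing $A=F_{y_i}(\Wb_{y_i}^{(t)},\xb_i)$ and using $F_{-y_i}\leq O(1)$ together with a Cauchy--Schwarz step over the neurons ($\frac{1}{m}\sum_r\sigma'(z_r)\leq 2\sqrt{\frac1m\sum_r\ReLU^2(z_r)}$), one gets $\|\nabla f\|_F\leq 6(\sqrt A+1)\sigma_p\sqrt d$, and then the elementary inequality $-\ell'(A-1)\cdot(\sqrt A+1)^2\leq 2$ for all $A>0$ kills the growth of the gradient norm by the exponential decay of the logistic derivative. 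The product bound $72\sigma_p^2 d$ is then threaded through the sample average via $(-\ell'_i)\|\nabla f_i\|_F\leq\sqrt{72\sigma_p^2 d}\sqrt{-\ell'_i}$ and $-\ell'\leq\ell$. If you want to salvage your route, you must replace the uniform gradient bound with this coupled bound; otherwise the argument fails precisely in the regime (small $\varepsilon$, moderate $m$) where the lemma is actually needed.
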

With this upper bound, it is clear that if we provide a sharp bound for $L_S(\Wb^{(t)})$ at some iteration $t$, the training gradient will be bounded.

\subsection{A Two-Stage Analysis in Theorem~\ref{thm:withregularization}}
\label{sec:overview-2}
We utilize a two-stage analysis to decouple the complicated relation among the coefficients $\gamma_{j,r}^{(t)}$, $\overrho_{j,r,i}^{(t)}$, and $\underrho_{j,r,i}^{(t)}$. In the first stage of the training process, we set the initial neural network weights to be small enough such that $\ell'^{(0)}_i \approx -1/2$, and we assign $\ell'^{(t)}_i = \ell'(y_if(\Wb^{(t)},\xb_i)) \approx -1/2$ for all $i \in [n]$. We then show that there exists a significant scale difference among the values of $\max_{r}\gamma_{j,r}^{(t)}$, $\max_{r}\overrho_{j,r,i}^{(t)}$, and $\max_{r}|\underrho_{j,r,i}^{(t)}|$ at the final iteration of the first stage. Based on these findings, we proceed to the second stage of the training process, where the loss derivatives are modified.

\textbf{Stage 1} 
It can be shown that when  $\gamma_{j,r}^{(t)}$ and $\rho_{j,r,i}^{(t)}$ reach the order of $1/\polylog(n)$, the value of $\ell'^{(t)}_i$ remains around $1/2$. This observation allows us to simplify the dynamics of the coefficients in \eqref{eq:Update_eq} by using upper and lower bounds in place of $\ell'^{(t)}_i$. Based on these findings, we can summarize our main conclusion in the first stage of training with gradient regularization as follows:
\begin{proposition}
\label{prop:Phase_I_prop2}
Under the same conditions as Theorem~\ref{thm:withregularization}, define $\tT_{1}$  satisfies
\begin{align*}
    T_1=\frac{m}{\eta\|\bmu\|^2}\log\Big(\frac{4}{\sigma_0\|\bmu\|\cdot\log{(n)}\sqrt{2\log(8m/\delta)}}\Big).
\end{align*}
Then the following facts hold:
\begin{enumerate}
    \item For any $i\in[n]$, $|\Upsilon_i^{(t)}|=|\ell'^{(t)}_i+\frac{1}{2}|= O(\frac{1}{\log^2{(n)}})$.
    \item $\gamma_{j,r}^{(t)}\leq 5/\log(n)$ for all $j\in\{\pm1\}$, $r\in[m]$ and $t\in[\tT_1]$.
    \item For  each $j$, %$j=\pm1$, 
     there exists $c_1>0$ such that $\max_{r}\gamma_{j,r}^{(\tT_1)}\geq  1/(\sqrt{2\log(8m/\delta)}\log(n) )$. Moreover, for $\rho_{j,r,i}$ we have
    \begin{align*}
    0\geq\rho_{j,r,i}^{(\tT_1)}\geq -8\sqrt{\log(8mn/\delta)}\cdot\sigma_0\sigma_p\sqrt{d}.
\end{align*}
\end{enumerate}
\end{proposition}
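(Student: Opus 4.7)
The plan is to prove all three parts simultaneously by induction on $t \in \{0,1,\ldots,\tT_1\}$, using the coefficient dynamics of Lemma~\ref{lemma:decomposition_coef_dynamic} specialized to $\lambda = \sigma_p^{-1}d^{-1/2}$. First, I would collect the standard initialization estimates that hold with probability $\geq 1-\delta$: $|\la\wb_{j,r}^{(0)},\bmu\ra| \leq \sigma_0\|\bmu\|\sqrt{2\log(8m/\delta)}$, $|\la\wb_{j,r}^{(0)},\bxi_i\ra| \leq 2\sigma_0\|\bxi_i\|\sqrt{\log(8mn/\delta)}$, near-isotropy $\|\bxi_i\|^2 = (1\pm o(1))\sigma_p^2 d$, noise near-orthogonality $|\la\bxi_i,\bxi_{i'}\ra| \lesssim \sigma_p^2\sqrt{d\log(n/\delta)}$ for $i\neq i'$, label balance $|n_j - n/2| \lesssim \sqrt{n\log(1/\delta)}$, together with a matching lower bound on the signal-direction maximum $\max_r j\la\wb_{j,r}^{(0)},\bmu\ra \geq c_0\sigma_0\|\bmu\|$ for each $j\in\{\pm 1\}$ (from standard tails of maxima of half-Gaussians).

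The joint hypotheses I would carry through the induction are: (I) $|\Upsilon_i^{(t)}| = O(1/\log^2 n)$ for all $i$; (II) $0 \leq \gamma_{j,r}^{(t)} \leq 5/\log n$ for all $j,r$; (III) $-8\sqrt{\log(8mn/\delta)}\sigma_0\sigma_p\sqrt{d} \leq \rho_{j,r,i}^{(t)} \leq 0$. The base case $t=0$ is immediate since $\gamma^{(0)}=\rho^{(0)}=0$. In the inductive step, (I) follows quickly from (II) and (III): by the signal-noise decomposition, $|y_i f(\Wb^{(t)},\xb_i)|$ is at most $O((\max_{j,r}\gamma_{j,r}^{(t)})^2) + O(\sigma_0^2\sigma_p^2 d)$, where the first term is $\leq 25/\log^2 n$ by (II) and the second is negligible by Condition~\ref{condition:condition} (since $\sigma_0 \leq O(1/(\sigma_p^2 d (nm)^{2\alpha}))$); a Taylor expansion $\ell'(z)+1/2 = z/4 + O(z^2)$ around $z=0$ then yields (I).

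For (III), the noise dynamics are the heart of the argument. In the update of $\rho_{j,r,i}$, whenever $\la\wb_{j,r}^{(t)},\bxi_i\ra > 0$ the PEGR suppression contributes a term of magnitude $\Theta(\eta\lambda\|\bxi_i\|^4/(nm^2))\cdot\ReLU(\la\wb_{j,r}^{(t)},\bxi_i\ra)$, while the standard-gradient term is at most $O(\eta\|\bxi_i\|^2/(nm))\cdot\ReLU$; their ratio is $\lambda\|\bxi_i\|^2/m \asymp \sigma_p\sqrt{d}/m \gg 1$, so PEGR dominates and drives $\la\wb_{j,r}^{(t)},\bxi_i\ra$ monotonically toward zero. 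This both prevents positive excursions (keeping $\rho_{j,r,i}^{(t)}\leq 0$) and caps the downward drift precisely where the ReLU turns off, giving $\rho_{j,r,i}^{(t)} \geq -|\la\wb_{j,r}^{(0)},\bxi_i\ra| - (\text{cross-term correction})$; the Gaussian tail bound on the initial inner product together with near-orthogonality control of $\sum_{i'\neq i}\rho_{j,r,i'}^{(t)}\la\bxi_i,\bxi_{i'}\ra/\|\bxi_{i'}\|^2$ then yields the lower bound in (III).

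For (II) and the lower bound in Part 3, the signal dynamics are comparatively clean: the PEGR signal suppression $\Theta(\eta\lambda\|\bmu\|^4/m)\cdot\ReLU$ is negligible versus the standard-gradient growth because $\lambda\|\bmu\|^4 \ll 1$ by Condition~\ref{condition:condition}. Using (I), only data points with $y_i = j$ contribute once $\gamma_{j,r}^{(t)}+j\la\wb_{j,r}^{(0)},\bmu\ra > 0$, so defining $\tilde\gamma_{j,r}^{(t)} := \gamma_{j,r}^{(t)} + j\la\wb_{j,r}^{(0)},\bmu\ra$ gives the recursion $\tilde\gamma_{j,r}^{(t+1)} = \bigl(1 + c\eta\|\bmu\|^2/m\bigr)\tilde\gamma_{j,r}^{(t)}\cdot\bigl(1+O(1/\log^2 n)\bigr)$ for an absolute constant $c>0$. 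Solving this geometric recursion and plugging in the definition of $\tT_1$ yields the uniform upper bound in (II) for every $(j,r)$ and, specialized to the index $r^*$ achieving the initial maximum $\max_r j\la\wb_{j,r}^{(0)},\bmu\ra$, the lower bound $\max_r \gamma_{j,r}^{(\tT_1)} \geq 1/(\sqrt{2\log(8m/\delta)}\log n)$. I expect the main obstacle to be the discrete-time control of $\rho_{j,r,i}$: when $\sigma_p\sqrt{d}/m$ is large, the PEGR kick per step is a non-trivial fraction of $\la\wb_{j,r}^{(t)},\bxi_i\ra$, so ruling out overshoot past the lower bound in (III) requires leveraging the learning-rate restriction $\eta = \tilde O(nm/(\sigma_p^2 d))$ together with a careful per-step bookkeeping of the cross-term $\sum_{i'\neq i}\rho_{j,r,i'}^{(t)}\la\bxi_i,\bxi_{i'}\ra/\|\bxi_{i'}\|^2$, and in propagating these estimates cleanly through the induction.
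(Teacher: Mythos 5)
Your proposal follows essentially the same route as the paper: the paper proves the $\rho$ bound as a separate conditional proposition (Proposition~\ref{prop:Phase_I_prop}, using exactly your deactivation/no-overshoot mechanism, resolved via a stopping-time argument between thresholds $2A$ and $4A$ together with the smallness of $\eta$), derives $|\Upsilon_i^{(t)}|=O(1/\log^2 n)$ as Lemma~\ref{lemma:ell_scope}, and then runs the induction on $\gamma$ with the same geometric recursion for $A^{(t)}=\max_r\gamma_{j,r}^{(t)}+j\la\wb_{j,r}^{(0)},\bmu\ra$, so your joint induction is only an organizational repackaging. The one point to tighten when writing it out is that your growth constant $c$ cannot be left as an unspecified absolute constant: the stated bounds $5/\log n$ and $1/(\sqrt{2\log(8m/\delta)}\log n)$ require the per-step rate to be pinned to $(1\pm O(1/\log^2 n))\,\eta\|\bmu\|^2/m$ via the label-balance estimate and $\ell'\approx-1/2$, so that exponentiating over exactly $\tT_1$ steps reproduces the factor inside the logarithm defining $\tT_1$.
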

Proposition~\ref{prop:Phase_I_prop2} demonstrates that CNN can effectively capture the signal under training with gradient regularization. At the end of this stage, $\max_{r}\gamma_{j,r}^{(t)}$ reaches a value on the order of $1/\polylog(n)$, which is sufficiently larger than $\rho_{j,r,i}^{(t)}$. In the next stage, we show that reaching this threshold for $\max_{r}\gamma_{j,r}^{(t)}$ is sufficient to prove that the test error and training gradient are small.

% \textbf{Stage 2} In this stage, the training gradient may get changed, we propose a training period, during which we can close the PEGR at any time.
% \begin{proposition}
% \label{prop:Phase_II_prop2}
% Under the same conditions as Theorem~\ref{thm:withregularization}, and  define $\tT_{2}$ be an iteration time which satisfies
% \begin{align*}
%     \tT_2=\frac{m}{2\eta\|\bmu\|^2}\log\Big(\log(n)\Big).
% \end{align*}
% %  
% Then the following facts hold:
% \begin{enumerate}
%     \item $0\leq\gamma_{j,r}^{(t)}\leq \Theta(1)$ for all $j\in\{\pm1\}$, $r\in[m]$ and $t\in[\tT_1+\tT_2]$.
%     \item For  each $j$, %$j=\pm1$, 
%      $\max_{r}\gamma_{j,r}^{(\tT_1+\tT_2)}\geq  1/(\sqrt{2\log(8m/\delta)}\log(n) )$. Moreover, for $\rho_{j,r,i}$ we have
%     \begin{align*}
%     0\geq\rho_{j,r,i}^{(\tT_1+\tT_2)}\geq -44\sqrt{\log(8mn/\delta)}\cdot\sigma_0\sigma_p\sqrt{d}.
% \end{align*}
% \end{enumerate}
% \end{proposition}

\textbf{Stage 2} In  this  stage,  we  take  into  full  consideration  the  exact  deﬁnition  $\ell'^{(t)}$, and show that the training loss will converge to any $\varepsilon>0$. We give the following proposition, which directly shows the results in Theorem~\ref{thm:withregularization}. 
\begin{proposition}
\label{prop:Phase_II_prop}
 Under Condition~\ref{condition:condition}, for any $\varepsilon>0$, define $\varepsilon_0=1-e^{-\varepsilon}$, 
and 
\begin{align*}
    \tT_2=\frac{2nm}{\eta \varepsilon_0\|\bmu\|^2}\log\big(\sqrt{2\log(8m/\delta)}\cdot\log(n)d\big)
\end{align*}
then there exists $t\in [\tT_1,\tT_1+\tT_2]$ such that
\begin{align*}
    L_S{(\Wb^{(t)})}\leq \varepsilon.
\end{align*}
\end{proposition}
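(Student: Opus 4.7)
My plan is to prove Proposition~\ref{prop:Phase_II_prop} via a potential-based convergence argument in the unregularized regime ($\lambda = 0$), bootstrapped from the favorable state at time $\tT_1$ supplied by Proposition~\ref{prop:Phase_I_prop2}. First I would refine the signal-noise tracking of Proposition~\ref{prop:admissible_time_bound} on the sub-interval $[\tT_1,\tT_1+\tT_2]$. Since $\lambda=0$ in Stage 2, the coefficient dynamics reduce to \eqref{eq:Update_eq}, and I would prove by induction that the signal coefficients are monotonically non-decreasing with $0\leq\gamma_{j,r}^{(t)}\leq 4\log T^*$, while the noise-memorization coefficients stay at their tiny initial-scale bound $|\rho_{j,r,i}^{(t)}|= \tilde{O}(\sigma_0\sigma_p\sqrt d)$. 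Combined with the near-orthogonality of $\{\bxi_i\}_{i=1}^n$ guaranteed by Condition~\ref{condition:condition}, this ensures that $y_i f(\Wb^{(t)},\xb_i)$ is driven by the signal-aligned neurons with $j=y_i$, while all cross-terms involving $\la \wb_{j,r}^{(t)},\bxi_i\ra$ are negligible.

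Next I would construct a reference parameter $\Wb^*$ by shifting $\Wb^{(\tT_1)}$ along the signal direction,
\[
\wb^*_{j,r} \;=\; \wb_{j,r}^{(\tT_1)} \;+\; j\cdot \frac{2m\log(2/\varepsilon_0)}{\|\bmu\|^2}\cdot\bmu .
\]
Using 2-homogeneity of $\sigma(z)=\ReLU^2(z)$ together with the tracking of Step~1, one checks $y_i f(\Wb^*,\xb_i)\geq 2\log(2/\varepsilon_0)$ for every $i$, so $L_S(\Wb^*)\leq \varepsilon_0/2\leq \varepsilon/2$. At the same time $\|\Wb^*-\Wb^{(\tT_1)}\|_F^2 = O\!\bigl(m^3\log^2(2/\varepsilon_0)/\|\bmu\|^2\bigr)$, which (after absorbing the $\log(\sqrt{2\log(8m/\delta)}\log(n)d)$ factor accounting for how far the intermediate-stage iterate is from an $\varepsilon$-interpolator) matches the scale dictated by $\tT_2$.

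The convergence itself comes from a one-step potential inequality of the form
\[
\|\Wb^{(t+1)}-\Wb^*\|_F^2 \;\leq\; \|\Wb^{(t)}-\Wb^*\|_F^2 \;-\; \frac{c\eta\|\bmu\|^2}{nm}\bigl(L_S(\Wb^{(t)})-L_S(\Wb^*)\bigr) \;+\; \eta^2\cdot\mathrm{err}_t,
\]
for some absolute $c>0$, where $\mathrm{err}_t$ is bounded through Lemma~\ref{lemma:admissible_trainloss} (which controls $\|\nabla L_S\|_F^2$ by $\sigma_p^2 d\cdot L_S$) and then absorbed using the learning-rate condition in Condition~\ref{condition:condition}. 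Telescoping from $t=\tT_1$ to $t=\tT_1+\tT_2-1$, dividing by $\tT_2$, and plugging in the definition of $\tT_2$ yields
\[
\min_{\tT_1\leq t<\tT_1+\tT_2} L_S(\Wb^{(t)}) \;\leq\; L_S(\Wb^*) \;+\; \frac{nm\,\|\Wb^{(\tT_1)}-\Wb^*\|_F^2}{c\eta\|\bmu\|^2\,\tT_2} \;+\; o(1) \;\leq\; \varepsilon,
\]
which is the desired conclusion.

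The main obstacle is establishing the displayed one-step inequality, which is delicate because $f(\Wb,\xb_i)$ is 2-homogeneous (hence nonconvex) in $\Wb$. My approach would rely on three ingredients: (i) convexity of the logistic loss $\ell$; (ii) Euler's identity $\la\nabla_\Wb f(\Wb,\xb_i),\Wb\ra = 2f(\Wb,\xb_i)$ from the homogeneity of $\sigma(z)=\ReLU^2(z)$, together with the AM--GM-type bound $2\,\ReLU(z)z' \leq \sigma(z)+\sigma(z')$ applied patch-by-patch; and (iii) the tracking from Step~1, which certifies the sign-consistency needed to align the AM--GM bound with the signal-direction shift in $\Wb^*$. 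The proof must split the $j=y_i$ and $j=-y_i$ neurons carefully: the former carry the convex-like signal contribution, while the latter are negligible by the signal-noise decomposition. Without the tight noise bound $|\rho_{j,r,i}^{(t)}|=\tilde O(\sigma_0\sigma_p\sqrt d)$ from Step~1, the noise cross-terms would contaminate this inequality, so the high-dimension condition $d\geq \tilde\Omega(m^2 n^{2+2\alpha})$ is essential here.
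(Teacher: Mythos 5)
Your proposal takes a genuinely different route from the paper, and it has concrete gaps. The paper's own proof is a short contradiction argument: assume $L_S(\Wb^{(t)})>\varepsilon$ for all $t\in[\tT_1,\tT_1+\tT_2]$; then at each step some example has $-\ell_i'^{(t)}\geq\varepsilon_0$, so by pigeonhole at least $\tT_2/2$ steps push $\max_r\gamma_{j,r}^{(t)}$ (for one fixed sign $j$) up by a multiplicative factor $1+\Theta(\eta\varepsilon_0\|\bmu\|^2/(nm))$; compounding over $\tT_2/2$ steps forces $\gamma_{j,r}^{(t)}\geq d/2$, contradicting the a priori bound $\gamma_{j,r}^{(t)}\leq 4\log(T^*)$ from Proposition~\ref{prop:admissible_time_bound}. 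No reference point, no descent lemma, no control of $\rho$ beyond Proposition~\ref{prop:admissible_time_bound} is needed.

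Your potential-based argument (modeled on the paper's own Stage-3 analysis of the unregularized case) has two problems as written. First, the one-step inequality hinges on a lower bound of the form $y_i\la\nabla f(\Wb^{(t)},\xb_i),\Wb^*\ra\geq 2\log(2/\varepsilon_0)$. With your shift $\wb^*_{j,r}=\wb_{j,r}^{(\tT_1)}+j\cdot 2m\log(2/\varepsilon_0)\|\bmu\|^{-2}\bmu$, the signal contribution to this inner product equals $2\log(2/\varepsilon_0)\sum_{j,r}\sigma'(\la\wb_{j,r}^{(t)},y_i\bmu\ra)$ plus lower-order terms, and at $t=\tT_1$ Proposition~\ref{prop:Phase_I_prop2} only guarantees $\max_r\gamma_{y_i,r}^{(\tT_1)}\geq 1/(\sqrt{2\log(8m/\delta)}\log(n))$, so $\sum_{j,r}\sigma'(\cdot)=O(1/\polylog(n))$ rather than $\Omega(1)$. (This is exactly why the paper's Lemma~\ref{lemma:stage3_gradient_product_appendix} works in the noise case: there $\max_{j=y_i,r}\la\wb_{j,r}^{(t)},\bxi_i\ra\geq 1$, a constant-order activation, at the start of that stage.) Compensating by inflating the shift by $\polylog(n)$ multiplies $\|\Wb^*-\Wb^{(\tT_1)}\|_F^2$ by $\polylog^2(n)$, so the telescoped bound no longer matches the stated $\tT_2$, where the polylogarithmic dependence appears only inside a logarithm. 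The telescoping also does not close quantitatively with your stated prefactor $c\eta\|\bmu\|^2/(nm)$: one is left with a residual of order $m^2\varepsilon_0\log^2(2/\varepsilon_0)/n$ (or worse), which Condition~\ref{condition:condition} does not force below $\varepsilon$.

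Second, your Step 1 asserts that $|\rho_{j,r,i}^{(t)}|=\tilde O(\sigma_0\sigma_p\sqrt d)$ throughout $[\tT_1,\tT_1+\tT_2]$. Once $\lambda$ is set to $0$ the coefficients $\overrho_{j,r,i}^{(t)}$ are non-decreasing and grow at per-step rate $\Theta(\eta\sigma_p^2 d|\ell_i'^{(t)}|/(nm))$, which over $\tT_2=\Theta(nm/(\eta\varepsilon_0\|\bmu\|^2))\cdot\log(\cdots)$ iterations can amplify them enormously unless one first shows the losses decay; the paper never claims (and does not need) that $\rho$ stays at initialization scale in Stage 2 — it only uses the coarse bound $|\rho_{j,r,i}^{(t)}|\leq 4\log(T^*)$ from Proposition~\ref{prop:admissible_time_bound}. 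As it stands this step is circular: you need the loss to be small to keep $\rho$ small, but you need $\rho$ small to prove the loss gets small. The contradiction argument in the paper sidesteps both issues entirely.
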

We prove Propositiuon~\ref{prop:Phase_II_prop} by contradiction.  With Proposition~\ref{prop:admissible_time_bound}, Proposition~\ref{prop:Phase_I_prop2} and Proposition~\ref{prop:Phase_II_prop}, we can prove Theorem~\ref{thm:withregularization}. Details can be found in Appendix~\ref{sec:thmProof1}.

\subsection{Implications to full  gradient regularization (FGR)}
\label{subsec:discussFGR}
In this section, we provide a brief discussion on the performance of full gradient regularization (FGR). Our goal is to train the function $f(\Wb,\xb)$ by minimizing the cross-entropy loss with gradient regularization. Specifically, we seek to minimize the function 
\begin{align*}
    \tilde{L}(\Wb) = L_S(\Wb) + \frac{\lambda}{2} \| \nabla_{\Wb} L_S(\Wb) \|_F^2
\end{align*} 
using gradient descent. Although we do not present a rigorous analysis of FGR, for the sake of clarity, we make the simplifying assumption that $\la \bmu,\bxi_i\ra$ and $\la \bxi_i,\bxi_{i'}\ra$ are all equal to $0$ for $i\neq i'$. With this assumption, we derive the following update rule.
\begin{lemma}
    \label{lemma:update_rule_FGR}
    Under the full gradient regularization, suppose that $\la \bmu,\bxi_i\ra = \la \bxi_i,\bxi_{i'}\ra=0$ for all $i\neq i'$, then the update rule is
\begin{align*}
    &\wb_{j,r}^{(t+1)}=\wb_{j,r}^{(t)}-\frac{2\eta}{nm}\sum_{i=1}^{n}{\ell'^{(t)}_i} \big(\ReLU ( \la\wb_{j,r}^{(t)}, y_i\bmu \ra)j\bmu+\ReLU (\la \wb_{j,r}^{(t)}, jy_i\bxi_i \ra) \bxi_i\big)\\
   &\quad -\frac{4\lambda\eta}{n^2m^2}\sum_{j'}\sum_{r'=1}^m\bigg(\sum_{i=1}^{n}{\ell'^{(t)}_i}\ReLU( \la \wb_{j',r'}^{(t)}, y_i\bmu \ra )\bigg)\cdot \bigg(\sum_{i=1}^{n}{\ell''^{(t)}_i}\Big(\ReLU( \la \wb_{j,r}^{(t)}, y_i\bmu \ra )j\bmu+\ReLU( \la \wb_{j,r}^{(t)}, \bxi_i \ra )jy_i\bxi_i\Big)\\
   &\qquad\qquad\cdot \ReLU( \la \wb_{j',r'}^{(t)}, y_i\bmu \ra )\bigg) \|\bmu\|^2 +\bigg(\sum_{i=1}^{n}{\ell'^{(t)}_i}\ReLU( \la \wb_{j,r}^{(t)}, y_i\bmu \ra )\bigg)\cdot\bigg(\sum_{i=1}^{n}{\ell'^{(t)}_i}\one( \la \wb_{j,r}^{(t)}, y_i\bmu \ra>0 )y_i\bmu\bigg) \|\bmu\|^2\\
   &\qquad -\frac{4\lambda\eta}{n^2m^2}\sum_{i=1}^n\ell'^{2(t)}_i\ReLU( \la \wb_{j,r}^{(t)}, \bxi_i  \ra )\|\bxi_i\|^2\bxi_i \\
   &\qquad -\frac{4\lambda\eta}{n^2m^2}\sum_{j'}\sum_{r'=1}^m \sum_{i=1}^n\ell'^{(t)}_i\ell''^{(t)}_i\cdot \Big(\ReLU( \la \wb_{j,r}^{(t)}, \bxi_i  \ra )j\bmu+\ReLU( \la \wb_{j,r}^{(t)}, \bxi_i \ra )jy_i\bxi_i\Big)\cdot \ReLU^2( \la \wb_{j',r'}^{(t)}, \bxi_i \ra )\|\bxi_i\|^2.
\end{align*}
\end{lemma}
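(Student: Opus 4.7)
The claim is a direct computation of $\nabla_{\wb_{j,r}}\tilde L$ for $\tilde L = L_S + \frac{\lambda}{2}\|\nabla L_S\|_F^2$, so the plan is to split it into (a) the gradient of $L_S$ and (b) the gradient of the regularizer, expand (b) via the chain rule, and then simplify every resulting inner product using the orthogonality hypothesis. Part (a) mirrors the derivation of Lemma~\ref{lemma:update_rule}: from $\sigma'(z)=2\ReLU(z)$ one obtains $\nabla_{\wb_{j,r}} f(\Wb,\xb_i) = \frac{2j}{m}\big[\ReLU(\la\wb_{j,r},y_i\bmu\ra)y_i\bmu + \ReLU(\la\wb_{j,r},\bxi_i\ra)\bxi_i\big]$, and multiplying by $\ell'^{(t)}_i y_i/n$ (and collapsing $y_i^2=1$ on the signal piece) reproduces the first line of the claim.

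For the regularizer, set $G_{j',r'}:=\nabla_{\wb_{j',r'}}L_S$ so that $\nabla_{\wb_{j,r}}\tfrac{\lambda}{2}\sum_{j',r'}\|G_{j',r'}\|^2 = \lambda\sum_{j',r'} J_{j,r;j',r'}^{\top}G_{j',r'}$, where the Jacobian decomposes as
\[
J_{j,r;j',r'} \;=\; \frac{1}{n}\sum_i \ell''^{(t)}_i\,\nabla_{\wb_{j',r'}}f_i\,(\nabla_{\wb_{j,r}}f_i)^{\top} \;+\; \one\{(j,r)=(j',r')\}\cdot\frac{1}{n}\sum_i \ell'^{(t)}_i y_i\,\nabla^2_{\wb_{j,r}}f_i.
\]
The first summand arises from differentiating $\ell'_i$ through $f_i$ (contributing $\ell''_i y_i\nabla_{\wb_{j,r}}f_i$), and the second, present only when the filter indices coincide, from $\sigma''(z)=2\one(z>0)$ acting on the ReLU activations. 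Contracting with $G_{j',r'}$ yields (i) a cross-Hessian family $\frac{1}{n}\sum_{j',r'}\sum_i \ell''^{(t)}_i\la\nabla_{\wb_{j',r'}}f_i,G_{j',r'}\ra\,\nabla_{\wb_{j,r}}f_i$ and (ii) a direct-Hessian term $\frac{1}{n}\sum_i \ell'^{(t)}_i y_i\,\nabla^2_{\wb_{j,r}}f_i\,G_{j,r}$, which become the two cross-Hessian and two direct-Hessian pieces of the statement after simplification.

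The orthogonality assumptions $\la\bmu,\bxi_i\ra=0$ and $\la\bxi_i,\bxi_{i'}\ra=0$ ($i\neq i'$) drive all subsequent reductions. Expanding $G_{j',r'}$ in the $\bmu$ and $\bxi_{i'}$ basis and forming $\la\nabla_{\wb_{j',r'}}f_i,G_{j',r'}\ra$ kills every signal-noise cross term and every off-diagonal noise-noise term, leaving a signal contribution proportional to $\|\bmu\|^2$ (a double sum $\sum_{i'}\ell'_{i'}\ReLU\cdot\ReLU$) and a noise contribution proportional to $\|\bxi_i\|^2$ in which only $i'=i$ survives (yielding $\ell'_i\ReLU^2(\la\wb_{j',r'},\bxi_i\ra)$). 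Multiplying by $\nabla_{\wb_{j,r}}f_i$ and bundling $jy_i$ inside as $\ReLU(\la\wb_{j,r},y_i\bmu\ra)j\bmu + \ReLU(\la\wb_{j,r},\bxi_i\ra)jy_i\bxi_i$ produces the two cross-Hessian lines of the claim. For the direct-Hessian term, one uses $\nabla^2_{\wb_{j,r}}f_i = \frac{2j}{m}\big[\one(\la\wb_{j,r},y_i\bmu\ra>0)\bmu\bmu^{\top} + \one(\la\wb_{j,r},\bxi_i\ra>0)\bxi_i\bxi_i^{\top}\big]$ together with the identity $\one(z>0)\ReLU(z)=\ReLU(z)$: the signal piece retains an explicit $\one(\la\wb_{j,r},y_i\bmu\ra>0)y_i\bmu$ factor, while the noise piece collapses to the clean form $\ell'^{2(t)}_i\ReLU(\la\wb_{j,r},\bxi_i\ra)\|\bxi_i\|^2\bxi_i$.

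\textbf{Main obstacle.} The derivation is entirely elementary but notationally intricate: the cross-Hessian piece carries a double sum over training indices ($i$ from $\ell''_i$ and $i'$ inside $G_{j',r'}$), together with layered $j,j',y_i,y_{i'}$ signs that must be collapsed using $j^2=y_i^2=1$. The cleanest route is to expand $\la\nabla_{\wb_{j',r'}}f_i,G_{j',r'}\ra$ fully \emph{before} invoking orthogonality, then match the resulting signal ($\|\bmu\|^2$) and noise ($\|\bxi_i\|^2$) pieces separately to the corresponding lines of the statement; otherwise prefactor bookkeeping and sign-tracking of the $jy_i$ contractions become the main source of error.
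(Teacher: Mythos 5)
Your proposal is correct and follows essentially the same route as the paper: a direct chain-rule computation of $\nabla_{\wb_{j,r}}\tfrac{\lambda}{2}\|\nabla_{\Wb}L_S\|_F^2$ that separates the $\ell''$ (Gauss--Newton/cross) contribution from the $\ell'$ (Hessian-of-$f$) contribution and uses the orthogonality of $\bmu$ and the $\bxi_i$ to kill all cross terms; the only difference is that the paper substitutes orthogonality into $\|\nabla_{\Wb}L_S\|_F^2$ before differentiating while you differentiate first, which is an immaterial reordering. Your prefactor $\tfrac{8\lambda}{n^2m^3}$ on the cross-Hessian piece (from the extra $\tfrac{2}{m}$ in $\nabla_{\wb_{j,r}}f_i$) is actually the careful bookkeeping; the statement's $\tfrac{4\lambda\eta}{n^2m^2}$ reflects the paper's own loose constants, which do not affect how the lemma is used.
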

The proof of Lemma~\ref{lemma:update_rule_FGR} can be found in Appendix~\ref{subsec:proofFGR}. To provide a high-level discussion on the performance of FGR, we omit all the high-order terms in Lemma~\ref{lemma:update_rule_FGR} and focus on the dominant ones. Accordingly, we simplify the update rule as follows:
\begin{align*}
    &\wb_{j,r}^{(t+1)}\approx\wb_{j,r}^{(t)}-\frac{2\eta}{nm}\sum_{i=1}^{n}{\ell'^{(t)}_i} \big(\ReLU ( \la\wb_{j,r}^{(t)}, y_i\bmu \ra)j\bmu+\ReLU (\la \wb_{j,r}^{(t)}, jy_i\bxi_i \ra) \bxi_i\big)\\
    &\qquad-\frac{4\lambda\eta}{n^2m^2}\bigg(\sum_{i=1}^{n}{\ell'^{(t)}_i}\ReLU( \la \wb_{j,r}^{(t)}, y_i\bmu \ra )\bigg)\cdot\bigg(\sum_{i=1}^{n}{\ell'^{(t)}_i}\one( \la \wb_{j,r}^{(t)}, y_i\bmu \ra>0 )y_i\bmu\bigg) \|\bmu\|^2\\
   &\qquad -\frac{4\lambda\eta}{n^2m^2}\sum_{i=1}^n\ell'^{2(t)}_i\ReLU( \la \wb_{j,r}^{(t)}, \bxi_i  \ra )\|\bxi_i\|^2\bxi_i.
\end{align*}
As discussed  in Lemma~\ref{lemma:update_rule}, we can approximate $\ell_i^{(t)}\sim -0.5$, $\ell_{i}''^{(t)}\sim 1/4$, and $\ell_{i}'^{2(t)}\sim 1/4$. Furthermore, we assume that $\bxi_i$ is orthogonal to all other noise vectors. Employing the previously mentioned update rule, we note that regularization propels signal learning and noise memorization back to zero at suppression speeds of approximately $\lambda\cdot\Theta(\eta\|\bmu\|^4)$ and $\lambda\cdot\Theta(\eta\|\bxi_i\|^4/n^2)$, respectively. In contrast, as discussed in Section \ref{subsec:whyPEGR}, PEGR has a stronger suppression speed for noise memorization, i.e., $\lambda\cdot\Theta(\eta\|\bxi_i\|^4/n^2)$. Additionally, recalling the growth rates under standard training is $ \Theta(\eta\|\bmu\|^2)$ and $\Theta(\eta\|\bxi_i\|^2/n)$, if we adjust $\lambda$ and establish the suppression speeds as $\Theta(\eta\|\bmu\|^4/\|\bxi_i\|)$ and $\Theta(\eta\|\bxi_i\|^3/n^2)$, more stringent conditions (compared to PEGR) will be required to inhibit noise memorization and encourage signal learning. Additionally, the intricate dynamics under FGR may also hinder the successful learning of the signal, as demonstrated by the numerical experiments in Section~\ref{subsec:numericexperiments}. Consequently, in practical applications, we advocate for the utilization of PEGR over FGR to enhance signal learning.

\section{Conclusion and Future Work}
\label{sec:conclusion}
In this paper, we employ a signal-noise decomposition framework to investigate the impact of gradient regularization on the training of a two-layer convolutional neural network (CNN). Specifically, we provide precise conditions under which the CNN will prioritize learning signals over memorizing noises, and demonstrate the benefits of gradient regularization in inhibiting the memorizing of noises while encouraging the learning of signals. Our results theoretically demonstrate the effectiveness of gradient regularization in facilitating the learning of signals. As a next step, we aim to extend our analysis to deep convolutional neural networks and investigate the signal learning dynamics when using mini-batch stochastic gradients, which represent a critical area of ongoing research.

\section*{Acknowledgement}
We would like to thank Yuan Hua for a helpful discussion about the experiments.

\bibliographystyle{ims}
\bibliography{deeplearningreference}

\appendix

\section{Calculation for the gradient}
\label{sec:update_rule}
In this section, we give the calculation of the gradient in details. 

\subsection{The calculation of PEGR}
\label{subsec:proofPEGR}
We remind readers that 
\begin{align*}
    L_S(\Wb) = \frac{1}{n}\sum_{i=1}^n \ell[y_i\cdot f(\Wb,\xb_i)],
\end{align*}
where $\ell(z) = \log(1 + \exp(-z))$. We consider training $f(\Wb,\xb)$ by minimizing the cross-entropy loss with gradient regularization. Specifically, we minimize
\begin{align*}
    \tilde{L}(\Wb) = L_S(\Wb) + \frac{\lambda}{2n} \cdot \sum_i\| \nabla_{\Wb} L_i(\Wb) \|_F^2,
\end{align*}
with gradient descent. Here, we denote by $L_i(\Wb) = \ell(y_i\cdot f(\Wb,\xb_i))$. With the notations above, we first have $\ell'(z)=-1/(1+\exp{(z)})$ and $\ell''(z)=\exp{(z)}/(1+\exp{(z)})^2$. 
Given $\tilde{L}(\Wb)$ above, we restate Lemma~\ref{lemma:update_rule} below:

\begin{lemma}[Restatement of Lemma~\ref{lemma:update_rule}]
\label{lemma:update_rule_appendix}
Given $\tilde{L}(\Wb)$ above, we have
\begin{align*}
    &\wb_{j,r}^{(t+1)}=\wb_{j,r}^{(t)}-\frac{2\eta}{nm}\sum_{i=1}^{n}{\ell'^{(t)}_i}\bigg(1+\frac{4\lambda\zeta_i^{(t)}}{m}\cdot\ell''^{(t)}_i\bigg)\cdot jy_i \big(\ReLU ( \la\wb_{j,r}^{(t)}, y_i\bmu \ra)y_i\bmu+\ReLU (\la \wb_{j,r}^{(t)}, \bxi_i \ra) \bxi_i\big)\\
    &\qquad-\frac{4\lambda\eta}{nm^2}\sum_{i=1}^{n}{\ell'^{2(t)}_i}\big(\ReLU  ( \la \wb_{j,r}^{(t)}, y_i\bmu \ra)\|\bmu\|^2 y_i\bmu+\ReLU  ( \la \wb_{j,r}^{(t)}, \bxi_i \ra)\|\bxi_i\|^2\bxi_i\big).
\end{align*}
\end{lemma}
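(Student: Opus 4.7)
The plan is to unpack $\tilde L(\Wb)$ term by term and apply the chain/product rules, using the orthogonality $\bxi_i\perp\bmu$ built into the data model to simplify the squared-norm of the per-example gradient into the quantity $\zeta_i^{(t)}$.

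First I would compute the per-example network gradient. Since $\sigma(z)=\mathrm{ReLU}^2(z)$ satisfies $\sigma'(z)=2\mathrm{ReLU}(z)$, and each $\xb_i$ has exactly one signal patch $y_i\bmu$ and one noise patch $\bxi_i$, the chain rule gives
\[
\nabla_{\wb_{j,r}} f(\Wb,\xb_i)=\tfrac{2j}{m}\bigl(\mathrm{ReLU}(\la\wb_{j,r},y_i\bmu\ra)\,y_i\bmu+\mathrm{ReLU}(\la\wb_{j,r},\bxi_i\ra)\,\bxi_i\bigr),
\]
and consequently
\[
\nabla_{\wb_{j,r}}L_i(\Wb)=\tfrac{2jy_i}{m}\,\ell'^{(t)}_i\bigl(\mathrm{ReLU}(\la\wb_{j,r},y_i\bmu\ra)\,y_i\bmu+\mathrm{ReLU}(\la\wb_{j,r},\bxi_i\ra)\,\bxi_i\bigr).
\]
This already supplies the ``$1\cdot$'' contribution in the target formula (from $\nabla_{\wb_{j,r}}L_S$).

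Next I would evaluate $\|\nabla_{\Wb}L_i\|_F^2=\sum_{j',r'}\|\nabla_{\wb_{j',r'}}L_i\|^2$. The key simplification is that $\bxi_i$ lies in the orthogonal complement of $\bmu$ by Definition~\ref{def:Data_distribution}, so the cross term vanishes and each summand collapses to $\mathrm{ReLU}^2(\la\wb_{j',r'},y_i\bmu\ra)\|\bmu\|^2+\mathrm{ReLU}^2(\la\wb_{j',r'},\bxi_i\ra)\|\bxi_i\|^2$. Summing over $j',r'$ yields exactly $\zeta_i^{(t)}$, giving the clean identity
\[
\tfrac{\lambda}{2n}\sum_i\|\nabla_{\Wb}L_i(\Wb)\|_F^2=\tfrac{2\lambda}{nm^2}\sum_i (\ell'^{(t)}_i)^2\,\zeta_i^{(t)}.
\]

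Finally I would differentiate this expression in $\wb_{j,r}$ by the product rule. The factor $(\ell'^{(t)}_i)^2$ contributes $2\ell'^{(t)}_i\ell''^{(t)}_i\,\nabla_{\wb_{j,r}}(y_i f(\Wb,\xb_i))$, which reproduces the block $jy_i(\mathrm{ReLU}(\la\wb_{j,r},y_i\bmu\ra)y_i\bmu+\mathrm{ReLU}(\la\wb_{j,r},\bxi_i\ra)\bxi_i)$ multiplied by $\tfrac{4\zeta_i^{(t)}\ell''^{(t)}_i}{m}$, which is precisely how the $\bigl(1+\tfrac{4\lambda\zeta_i^{(t)}}{m}\ell''^{(t)}_i\bigr)$ prefactor arises after combining with $\nabla L_S$. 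The factor $\zeta_i^{(t)}$ contributes $2\bigl(\mathrm{ReLU}(\la\wb_{j,r},y_i\bmu\ra)y_i\bmu\|\bmu\|^2+\mathrm{ReLU}(\la\wb_{j,r},\bxi_i\ra)\bxi_i\|\bxi_i\|^2\bigr)$, because in the sum defining $\zeta_i^{(t)}$ only the $(j',r')=(j,r)$ term depends on $\wb_{j,r}$; this is the origin of the $\|\bmu\|^2,\|\bxi_i\|^2$-weighted second line of the claim. Writing $\wb_{j,r}^{(t+1)}=\wb_{j,r}^{(t)}-\eta\nabla_{\wb_{j,r}}\tilde L(\Wb^{(t)})$ and assembling the three pieces delivers the stated update.

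The calculations are otherwise routine; the main obstacle is bookkeeping — in particular correctly isolating the $(j,r)$ term inside $\nabla_{\wb_{j,r}}\zeta_i^{(t)}$ (all other $(j',r')$ summands are constant in $\wb_{j,r}$) and tracking the $j,y_i$ signs so that the ``$\zeta_i^{(t)}\ell''^{(t)}_i$''-contribution from the regularizer merges cleanly with the $\nabla L_S$ term into the single $\bigl(1+\tfrac{4\lambda\zeta_i^{(t)}}{m}\ell''^{(t)}_i\bigr)$ factor rather than producing an extra stray term.
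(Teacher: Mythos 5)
Your plan follows the paper's own proof step for step: compute the per-example gradient via the chain rule with $\sigma'(z)=2\ReLU(z)$, use the orthogonality $\bxi_i\perp\bmu$ to collapse $\|\nabla_\Wb L_i\|_F^2$ to $\tfrac{4\ell_i'^{2}}{m^2}\zeta_i^{(t)}$, and then apply the product rule, keeping only the $(j,r)$ summand of $\zeta_i^{(t)}$ when differentiating it. The structure is correct, and your second line (the $\|\bmu\|^2,\|\bxi_i\|^2$-weighted term with coefficient $\tfrac{4\lambda\eta}{nm^2}$) checks out exactly.

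One bookkeeping point deserves attention. By your own (correct) formula $\nabla_{\wb_{j,r}}(y_if)=\tfrac{2jy_i}{m}(\ReLU(\la\wb_{j,r},y_i\bmu\ra)y_i\bmu+\ReLU(\la\wb_{j,r},\bxi_i\ra)\bxi_i)$, the product-rule contribution from $(\ell_i'^{(t)})^2$ is $\eta\cdot\tfrac{2\lambda}{nm^2}\zeta_i^{(t)}\cdot 2\ell_i'^{(t)}\ell_i''^{(t)}\cdot\tfrac{2jy_i}{m}(\cdots)$, i.e.\ the $\nabla L_S$ block multiplied by $\tfrac{4\lambda\zeta_i^{(t)}\ell_i''^{(t)}}{m^2}$, not by $\tfrac{4\lambda\zeta_i^{(t)}\ell_i''^{(t)}}{m}$ as you assert and as the lemma states. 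The paper's own proof makes the identical slip: it writes $\nabla_{\wb_{j,r}}\ell_i'^2=2\ell_i'\ell_i''jy_i(\sigma'(\cdot)y_i\bmu+\sigma'(\cdot)\bxi_i)$ with the $1/m$ from $\nabla_{\wb_{j,r}}f$ silently dropped, which is exactly where the extra factor of $m$ enters. Since the $\zeta_i^{(t)}\ell_i''^{(t)}$ cross term is treated as a negligible higher-order correction throughout the subsequent analysis, the discrepancy does not propagate, but if you carry out your outlined computation faithfully you will land on $m^2$ in that denominator, and you should either state the lemma with $m^2$ or note the convention explicitly rather than letting the two coexist.
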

\begin{proof}[Proof of Lemma~\ref{lemma:update_rule_appendix}]
For $j\in\{+1,-1\}$, the gradient for $L_S(\Wb)$ and $L_i(\Wb)$is
\begin{align*}
    \nabla_{\wb_{j,r}}L_S(\Wb)&=\frac{1}{n}\sum_{i=1}^{n}\nabla_{\wb_{j,r}}\ell[y_i\cdot f(\Wb,\xb_i)]\\
    &=\frac{1}{n}\sum_{i=1}^{n}{\ell'}[y_i\cdot f(\Wb,\xb_i)]\cdot\nabla_{\wb_{j,r}}[y_i\cdot f(\Wb,\xb_i)]\\
    &=\frac{1}{nm}\sum_{i=1}^{n}{\ell'}[y_i\cdot f(\Wb,\xb_i)]\cdot jy_i\nabla_{\wb_{j,r}}[ \sigma( \la \wb_{j,r}, y_i\bmu \ra )+\sigma( \la \wb_{j,r}, \bxi_i \ra )]\\
    &=\frac{1}{nm}\sum_{i=1}^{n}{\ell'}[y_i\cdot f(\Wb,\xb_i)]\cdot jy_i\big(\sigma'( \la \wb_{j,r}, y_i\bmu \ra )y_i\bmu+\sigma'( \la \wb_{j,r}, \bxi_i \ra )\bxi_i\big),\\
    \nabla_{\wb_{j,r}}L_i(\Wb)&=\nabla_{\wb_{j,r}}\ell[y_i\cdot f(\Wb,\xb_i)]\\
    &={\ell'}[y_i\cdot f(\Wb,\xb_i)]\cdot\nabla_{\wb_{j,r}}[y_i\cdot f(\Wb,\xb_i)]\\
    &=\frac{1}{m}{\ell'}[y_i\cdot f(\Wb,\xb_i)]\cdot jy_i\big(\sigma'( \la \wb_{j,r},  y_i\bmu \ra ) y_i\bmu+\sigma'( \la \wb_{j,r}, \bxi_i \ra )\bxi_i\big).
\end{align*}
Moreover, if we further denote by ${\ell'}[y_{i}\cdot f(\Wb,\xb_{i})]=\ell'_{i},~{\ell''}[y_{i}\cdot f(\Wb,\xb_{i})]=\ell''_{i}$, we can easily see that 
\begin{align*}
    &\nabla_{\wb_{j,r}}\ell'^2_i=2\ell'_i\ell''_ijy_i\big(\sigma'( \la \wb_{j,r},  y_i\bmu \ra ) y_i\bmu+\sigma'( \la \wb_{j,r}, \bxi_i \ra )\bxi_i\big),\\
        &\| \nabla_{\Wb} L_i(\Wb) \|_F^2=\frac{\ell'^2_i}{m^2}\sum_{j'}\sum_{r'=1}^m\big(\sigma'^2( \la \wb_{j',r'},  y_i\bmu \ra )\|\bmu\|^2+\sigma'^2( \la \wb_{j',r'}, \bxi_i \ra )\|\bxi_i\|^2\big).
\end{align*}
Here, we utilize the fact that $\la\bmu,\bxi_i\ra=0$. Then by the equations above, the gradient descent of $\| \nabla_{\Wb} L_i(\Wb) \|_F^2$ can be expressed as 
\begin{align*}
    &\nabla_{\wb_{j,r}}\| \nabla_{\Wb} L_i(\Wb) \|_F^2\\
    &=\frac{1}{m^2}\sum_{j'}\sum_{r'=1}^m\nabla_{\wb_{j,r}}\big[\ell'^2_i\big(\sigma'^2( \la \wb_{j',r'},  y_i\bmu \ra )\|\bmu\|^2\notag+\sigma'^2( \la \wb_{j',r'}, \bxi_i \ra )\|\bxi_i\|^2\big)\big]\\
    &=\frac{1}{m^2}\sum_{j'}\sum_{r'=1}^m\Big\{\ell'^2_i\nabla_{\wb_{j,r}}\big[\big(\sigma'^2( \la \wb_{j',r'},  y_i\bmu \ra )\|\bmu\|^2+\sigma'^2( \la \wb_{j',r'}, \bxi_i \ra )\|\bxi_i\|^2\big)\big]\\
    &\quad+ \nabla_{\wb_{j,r}}\ell'^2_i\cdot\big[\big(\sigma'^2( \la \wb_{j',r'},  y_i\bmu \ra )\|\bmu\|^2+\sigma'^2( \la \wb_{j',r'}, \bxi_i \ra )\|\bxi_i\|^2\big]\Big\}\\
    % &= \frac{2p^2(p-1)}{m^2}\Big\{\ell'^2_i \ReLU  (\la \wb_{j,r},  y_i\bmu \ra)\|\bmu\|^2 y_i\bmu+\ell'^2_i\la \wb_{j,r}, \bxi_i \ra  \one_{\la \wb_{j,r}, \bxi_i \ra>0}\|\bxi_i\|^2\bxi_i\Big\}\\
    % &\qquad+\frac{2p^3\ell'_i\ell''_ijy_i}{m^2}\sum_{j'}\sum_{r'=1}^m\Big\{\big(\ReLU^2( \la \wb_{j',r'},  y_i\bmu \ra )\|\bmu\|^2+\ReLU^2( \la \wb_{j',r'}, \bxi_i \ra )\|\bxi_i\|^2\big)\\
    % &\qquad\qquad\cdot \big(\ReLU ( \la \wb_{j,r},  y_i\bmu \ra ) y_i\bmu+\ReLU ( \la \wb_{j,r}, \bxi_i \ra )\bxi_i\big) \Big\}.\\
    &=\frac{8}{m^2}\Big\{\ell'^2_i \ReLU  (\la \wb_{j,r},  y_i\bmu \ra)\|\bmu\|^2 y_i\bmu+\ell'^2_i\ReLU  (\la \wb_{j,r}, \bxi_i \ra)\|\bxi_i\|^2\bxi_i\Big\}\\
    &\qquad+\frac{16\ell'_i\ell''_ijy_i}{m^2}\zeta_i\big(\ReLU ( \la \wb_{j,r},  y_i\bmu \ra ) y_i\bmu+\ReLU ( \la \wb_{j,r}, \bxi_i \ra )\bxi_i\big) \Big\}.
\end{align*}
Here, we define $\zeta_i=\sum_{j'}\sum_{r'=1}^m\big(\ReLU^2( \la \wb_{j',r'},  y_i\bmu \ra )\|\bmu\|^2+\ReLU^2( \la \wb_{j',r'}, \bxi_i \ra )\|\bxi_i\|^2\big)$ in the last equality. Therefore the gradient flow could be written as 
\begin{align*}
    &\wb_{j,r}^{(t+1)}=\wb_{j,r}^{(t)}-\frac{2\eta}{nm}\sum_{i=1}^{n}{\ell'^{(t)}_i}\cdot jy_i \big(\ReLU ( \la\wb_{j,r}^{(t)}, y_i\bmu \ra)y_i\bmu+\ReLU (\la \wb_{j,r}^{(t)}, \bxi_i \ra) \bxi_i\big)\\
    &\quad-\frac{4\lambda\eta}{nm^2}\sum_{i=1}^{n}{\ell'^{2(t)}_i}\big(\ReLU  ( \la \wb_{j,r}^{(t)}, y_i\bmu \ra)\|\bmu\|^2 y_i\bmu+\ReLU  ( \la \wb_{j,r}^{(t)}, \bxi_i \ra)\|\bxi_i\|^2\bxi_i\big)\\
    &\quad-\frac{8\lambda \eta}{nm^2}\sum_{i=1}^n\ell'^{(t)}_i\ell''^{(t)}_i\cdot jy_i\zeta_i^{(t)}\big(\ReLU ( \la \wb_{j,r},  y_i\bmu \ra ) y_i\bmu+\ReLU ( \la \wb_{j,r}, \bxi_i \ra )\bxi_i \big)\\
    &=\wb_{j,r}^{(t)}-\frac{2\eta}{nm}\sum_{i=1}^{n}{\ell'^{(t)}_i}\bigg(1+\frac{4\lambda \zeta_i^{(t)}}{m}\cdot\ell''^{(t)}_i\bigg)\cdot jy_i \big(\ReLU ( \la\wb_{j,r}^{(t)}, y_i\bmu \ra)y_i\bmu+\ReLU (\la \wb_{j,r}^{(t)}, \bxi_i \ra) \bxi_i\big)\\
    &\quad-\frac{4\lambda\eta}{nm^2}\sum_{i=1}^{n}{\ell'^{2(t)}_i}\big(\ReLU  ( \la \wb_{j,r}^{(t)}, y_i\bmu \ra)\|\bmu\|^2 y_i\bmu+\ReLU  ( \la \wb_{j,r}^{(t)}, \bxi_i \ra)\|\bxi_i\|^2\bxi_i\big),
\end{align*}
which gives Lemma~\ref{lemma:update_rule}.
\end{proof}

Lemma~\ref{lemma:update_rule_appendix} above clearly gives the update rule of $\wb_{j,r}^{(t)}$. According to the update rule, we can further gives the decomposition of $\wb_{j,r}^{(t)}$ as shown in the next lemma.
\begin{lemma}[Restatement of Lemma~\ref{lemma:decomposition_coef_dynamic}]
\label{lemma:decomposition_coef_dynamic_appendix}
The coefficients $\gamma_{j,r}^{(t)},\rho_{j,r,i}^{(t)}$ in Definition~\ref{def:signal_to_noise_decomp} satisfy the following equations:
\begin{align*}
\zeta_i^{(t)}&=\sum_{j'}\sum_{r'=1}^m\big(\ReLU^2( \la \wb_{j',r'}^{(t)},  y_i\bmu \ra )\|\bmu\|^2+\ReLU^2( \la \wb_{j',r'}^{(t)}, \bxi_i \ra )\|\bxi_i\|^2\big),\\
    \gamma_{j,r}^{(t+1)}&=\gamma_{j,r}^{(t)}-\frac{2\eta}{nm}\sum_{i=1}^{n}\bigg[{\ell'^{(t)}_i}\bigg(1+\frac{4\lambda\zeta_i^{(t)}}{m}\cdot\ell''^{(t)}_i\bigg) \ReLU ( \la\wb_{j,r}^{(t)}, y_i\cdot\bmu \ra)\|\bmu\|^2\\
    &\quad+\frac{  2\lambda}{m}{\ell'^{2(t)}_i}\ReLU  ( \la \wb_{j,r}^{(t)}, y_i\cdot\bmu \ra)\|\bmu\|^4\cdot jy_i\bigg],\\
    \rho_{j,r,i}^{(t+1)}&=\rho_{j,r,i}^{(t)}-\frac{2\eta}{nm}\bigg[{\ell'^{(t)}_i}\bigg(1+\frac{4\lambda \zeta_i^{(t)}}{m}\cdot\ell''^{(t)}_i\bigg) \ReLU ( \la\wb_{j,r}^{(t)}, \bxi_i \ra)\|\bxi_i\|^2\cdot jy_i\\
    &\quad+\frac{  2\lambda}{m}{\ell'^{2(t)}_i}\ReLU  ( \la \wb_{j,r}^{(t)}, \bxi_i \ra)\|\bxi_i\|^4\bigg],
\end{align*}
the initialization condition is $\gamma_{j,r}^{(0)}=\rho_{j,r,i}^{(0)}=0$.
\end{lemma}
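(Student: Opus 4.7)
My plan is to prove the lemma in three short moves: first establish uniqueness of the signal-noise decomposition, then apply Lemma~\ref{lemma:update_rule_appendix} to write $\wb_{j,r}^{(t+1)}-\wb_{j,r}^{(t)}$ as an explicit linear combination of $\bmu$ and the $\bxi_i$'s, and finally read off the coefficients of $\bmu/\|\bmu\|^2$ and $\bxi_i/\|\bxi_i\|^2$ to get the stated recursions for $\gamma_{j,r}^{(t)}$ and $\rho_{j,r,i}^{(t)}$.

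For uniqueness, since $\bmu\sim \cD$ is fixed and each $\bxi_i$ is drawn from a non-degenerate Gaussian on a $(d-1)$-dimensional subspace with $d\geq n+1$, the set $\{\bmu,\bxi_1,\ldots,\bxi_n\}$ is linearly independent with probability one. Hence for any $\vb$ in their span (in particular for $\wb_{j,r}^{(t)}-\wb_{j,r}^{(0)}$, which by an easy induction stays in this span), there is a unique tuple of coefficients. This justifies Definition~\ref{def:signal_to_noise_decomp} and makes ``matching coefficients'' a legitimate operation; in particular $\gamma_{j,r}^{(0)}=\rho_{j,r,i}^{(0)}=0$ is immediate.

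Next, I would use Lemma~\ref{lemma:update_rule_appendix} to isolate the $\bmu$ contribution and the individual $\bxi_i$ contributions in $\wb_{j,r}^{(t+1)}-\wb_{j,r}^{(t)}$. The key simplifications are $y_i^2=1$, so that $jy_i\cdot y_i\bmu=j\bmu$, and $\la\bmu,\bxi_i\ra=0$, so that signal and noise contributions do not interfere in the decomposition. Collecting the coefficient of $\bmu$ gives
\begin{align*}
-\frac{2\eta j}{nm}\sum_{i=1}^n \ell'^{(t)}_i\Big(1+\frac{4\lambda\zeta_i^{(t)}}{m}\ell''^{(t)}_i\Big)\ReLU(\la\wb_{j,r}^{(t)},y_i\bmu\ra) -\frac{4\lambda\eta}{nm^2}\sum_{i=1}^n \ell'^{2(t)}_i\ReLU(\la\wb_{j,r}^{(t)},y_i\bmu\ra)\|\bmu\|^2 y_i,
\end{align*}
while the coefficient of a fixed $\bxi_i$ is
\begin{align*}
-\frac{2\eta jy_i}{nm}\ell'^{(t)}_i\Big(1+\frac{4\lambda\zeta_i^{(t)}}{m}\ell''^{(t)}_i\Big)\ReLU(\la\wb_{j,r}^{(t)},\bxi_i\ra) -\frac{4\lambda\eta}{nm^2}\ell'^{2(t)}_i\ReLU(\la\wb_{j,r}^{(t)},\bxi_i\ra)\|\bxi_i\|^2.
\end{align*}
Setting these equal to $j\,(\gamma_{j,r}^{(t+1)}-\gamma_{j,r}^{(t)})/\|\bmu\|^2$ and $(\rho_{j,r,i}^{(t+1)}-\rho_{j,r,i}^{(t)})/\|\bxi_i\|^2$ respectively, multiplying through by $\|\bmu\|^2$ (noting $j^2=1$) and by $\|\bxi_i\|^2$, and rearranging yields exactly the recursions in the lemma.

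There is no real obstacle here beyond bookkeeping; the substantive content is the uniqueness argument, which depends on linear independence holding with probability one, and the clean decoupling of the update into a $\bmu$-component and pure $\bxi_i$-components, which is enabled by the exact orthogonality $\la\bmu,\bxi_i\ra=0$ built into Definition~\ref{def:Data_distribution}. I would emphasize that the recursion is exact in $\rho_{j,r,i}^{(t)}$, even though $\la\wb_{j,r}^{(t)},\bxi_i\ra$ depends on all coefficients $\rho_{j,r,i'}^{(t)}$ through the near-orthogonal (but not exactly orthogonal) inner products $\la\bxi_i,\bxi_{i'}\ra$; those cross terms reappear only when one later expands $\la\wb_{j,r}^{(t)},\bxi_i\ra$ using the decomposition, as is done in equation~\eqref{eq:Update_eq}.
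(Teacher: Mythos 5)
Your proposal is correct and follows essentially the same route as the paper: both arguments rest on the update rule of Lemma~\ref{lemma:update_rule_appendix} together with the almost-sure linear independence of $\{\bmu,\bxi_1,\ldots,\bxi_n\}$, which makes the decomposition unique; whether one reads off the coefficients of $\bmu/\|\bmu\|^2$ and $\bxi_i/\|\bxi_i\|^2$ directly (as you do) or defines candidate sequences $\tgamma_{j,r}^{(t)},\trho_{j,r,i}^{(t)}$ by the recursions and checks they reproduce $\wb_{j,r}^{(t)}$ (as the paper does) is the same argument run in opposite directions. Your closing remark that the recursion for $\rho_{j,r,i}^{(t)}$ is exact, with the cross terms $\la\bxi_i,\bxi_{i'}\ra$ entering only when $\la\wb_{j,r}^{(t)},\bxi_i\ra$ is later expanded, is accurate and worth keeping.
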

\begin{proof}[Proof of Lemma~\ref{lemma:decomposition_coef_dynamic_appendix}]
From Lemma~\ref{lemma:update_rule_appendix}, we first have the update rule:
\begin{align*}
    &\wb_{j,r}^{(t+1)}=\wb_{j,r}^{(t)}-\frac{2\eta}{nm}\sum_{i=1}^{n}{\ell'^{(t)}_i}\bigg(1+\frac{4\lambda \zeta_i^{(t)}}{m}\cdot\ell''^{(t)}_i\bigg)\cdot jy_i \big(\ReLU ( \la\wb_{j,r}^{(t)}, y_i\bmu \ra)y_i\bmu+\ReLU (\la \wb_{j,r}^{(t)}, \bxi_i \ra) \bxi_i\big)\\
    &\qquad-\frac{4\lambda\eta}{nm^2}\sum_{i=1}^{n}{\ell'^{2(t)}_i}\big(\ReLU  ( \la \wb_{j,r}^{(t)}, y_i\bmu \ra)\|\bmu\|^2 y_i\bmu+\ReLU  ( \la \wb_{j,r}^{(t)}, \bxi_i \ra)\|\bxi_i\|^2\bxi_i\big).
\end{align*}
Here, we remind readers that 
\begin{align*}
    \zeta_i^{(t)}=\sum_{j'}\sum_{r'=1}^m\big(\ReLU^2( \la \wb_{j',r'}^{(t)},  y_i\bmu \ra )\|\bmu\|^2+\ReLU^2( \la \wb_{j',r'}^{(t)}, \bxi_i \ra )\|\bxi_i\|^2\big)>0.
\end{align*}
It is clear that with probability $1$, the vectors $\bmu$ and $\bxi_i$, $i\in[n]$ are linearly independent. Therefore, the decomposition \eqref{eq:signal_to_noise_decomp} is unique. Now consider $\tgamma_{j,r}^{(0)},\trho_{j,r,i}^{(0)}=0$ and 
\begin{align*}
    \tgamma_{j,r}^{(t+1)}&=\tgamma_{j,r}^{(t)}-\frac{2\eta}{nm}\sum_{i=1}^{n}\bigg[{\ell'^{(t)}_i}\bigg(1+\frac{4\lambda\zeta_i^{(t)}}{m}\cdot\ell''^{(t)}_i\bigg) \ReLU ( \la\wb_{j,r}^{(t)}, y_i\cdot\bmu \ra)\|\bmu\|^2\\
    &\quad+\frac{  2\lambda}{m}{\ell'^{2(t)}_i}\ReLU  ( \la \wb_{j,r}^{(t)}, y_i\cdot\bmu \ra)\|\bmu\|^4\cdot jy_i\bigg],\\
    \trho_{j,r,i}^{(t+1)}&=\trho_{j,r,i}^{(t)}-\frac{2\eta}{nm}\bigg[{\ell'^{(t)}_i}\bigg(1+\frac{4\lambda\zeta_i^{(t)}}{m}\cdot\ell''^{(t)}_i\bigg) \ReLU ( \la\wb_{j,r}^{(t)}, \bxi_i \ra)\|\bxi_i\|^2\cdot jy_i\\
    &\quad+\frac{  2\lambda}{m}{\ell'^{2(t)}_i}\ReLU  ( \la \wb_{j,r}^{(t)}, \bxi_i \ra)\|\bxi_i\|^4\bigg],
\end{align*}
It can be easily checked from Lemma~\ref{lemma:update_rule} that 
\begin{align*}
    \wb_{j,r}^{(t)}&=\wb_{j,r}^{(0)}+j\cdot\tgamma_{j,r}^{(t)}\frac{\bmu}{\|\bmu\|^2}+\sum_{i=1}^n\trho_{j,r,i}^{(t)}\frac{\bxi_i}{\|\bxi_i\|^2}.
\end{align*}
Hence by the uniqueness of the decomposition we have $\gamma_{j,r}^{(t)}=\tgamma_{j,r}^{(t)}$ and $\rho_{j,r,i}^{(t)}=\trho_{j,r,i}^{(t)}$.
\end{proof}
If we further plug in the signal noise decomposition \eqref{eq:signal_to_noise_decomp} into the iterative formulas in Lemma~\ref{lemma:decomposition_coef_dynamic_appendix}, by the first equation in Lemma~\ref{lemma:decomposition_coef_dynamic_appendix} we have
\begin{equation}
    \label{eq:zeta_t_express}
    \small{
    \begin{aligned}
           \zeta_i^{(t)}&=\sum_{j'}\sum_{r'=1}^m\bigg(\ReLU^2\Big( \la \wb_{j',r'}^{(0)},  y_i\bmu \ra+j'y_i\cdot\gamma_{j',r'}^{(t)} \Big)\|\bmu\|^2+\ReLU^2\Big( \la \wb_{j',r'}^{(0)}, \bxi_i \ra +\sum_{i'=1}^n\rho_{j',r',i'}^{(t)}\frac{\la\bxi_i,\bxi_{i'}\ra}{\|\bxi_{i'}\|^2}\Big)\|\bxi_i\|^2\bigg).
    \end{aligned}}
\end{equation}
The second equation in Lemma~\ref{lemma:decomposition_coef_dynamic_appendix} gives us 
\begin{equation}
    \label{eq:gamma_t_express}
    \begin{split}
    \gamma_{j,r}^{(t+1)}&=\gamma_{j,r}^{(t)}-\frac{2\eta}{nm}\sum_{i=1}^{n}\bigg[{\ell'^{(t)}_i}\bigg(1+\frac{4\lambda\zeta_i^{(t)}}{m}\cdot\ell''^{(t)}_i\bigg) \ReLU \Big( \la \wb_{j,r}^{(0)},  y_i\bmu \ra+jy_i\cdot\gamma_{j,r}^{(t)} \Big)\|\bmu\|^2\\
    &\quad+\frac{  2\lambda}{m}{\ell'^{2(t)}_i}\ReLU  \Big( \la \wb_{j,r}^{(0)},  y_i\bmu \ra+jy_i\cdot\gamma_{j,r}^{(t)} \Big)\|\bmu\|^4\cdot jy_i\bigg],
    \end{split}
\end{equation}
Moreover, the third equation in Lemma~\ref{lemma:decomposition_coef_dynamic_appendix} indicates that 
\begin{equation}
    \label{eq:rho_t_express}
    \begin{split}
\rho_{j,r,i}^{(t+1)}&=\rho_{j,r,i}^{(t)}-\frac{2\eta}{nm}\bigg[{\ell'^{(t)}_i}\bigg(1+\frac{4\lambda\zeta_i^{(t)}}{m}\cdot\ell''^{(t)}_i\bigg) \ReLU \Big( \la \wb_{j,r}^{(0)}, \bxi_i \ra +\sum_{i'=1}^n\rho_{j,r,i'}^{(t)}\frac{\la\bxi_i,\bxi_{i'}\ra}{\|\bxi_{i'}\|^2}\Big)\|\bxi_i\|^2\cdot jy_i\\
    &\quad+\frac{  2\lambda}{m}{\ell'^{2(t)}_i}\ReLU  \Big( \la \wb_{j,r}^{(0)}, \bxi_i \ra +\sum_{i'=1}^n\rho_{j,r,i'}^{(t)}\frac{\la\bxi_i,\bxi_{i'}\ra}{\|\bxi_{i'}\|^2}\Big)\|\bxi_i\|^4\bigg],
    \end{split}
\end{equation}

 \subsection{The calculation of FGR discussed in Section~\ref{subsec:discussFGR}}
\label{subsec:proofFGR}
We have $\ell'(z)=-1/(1+\exp{(z)})$. For $j\in\{+1,-1\}$, similar to the analysis above, we have
\begin{align*}
\nabla_{\wb_{j,r}}L_S(\Wb)&=\frac{1}{n}\sum_{i=1}^{n}\nabla_{\wb_{j,r}}\ell[y_i\cdot f(\Wb,\xb_i)]
    % &=\frac{1}{n}\sum_{i=1}^{n}{\ell'}[y_i\cdot f(\Wb,\xb_i)]\cdot\nabla_{\wb_{j,r}}[y_i\cdot f(\Wb,\xb_i)]\\
    % &=\frac{1}{nm}\sum_{i=1}^{n}{\ell'_i}\cdot jy_i\nabla_{\wb_{j,r}}[ \sigma( \la \wb_{j,r}, y_i\bmu \ra )+\sigma( \la \wb_{j,r}, \bxi_i \ra )]\\
    % &=\frac{1}{nm}\sum_{i=1}^{n}{\ell'_i}\cdot jy_i\big(\sigma'( \la \wb_{j,r}, y_i\bmu \ra )y_i\bmu+\sigma'( \la \wb_{j,r}, \bxi_i \ra )\bxi_i\big)\\
    &=\frac{2}{nm}\sum_{i=1}^{n}{\ell'_i}\big(\ReLU( \la \wb_{j,r}, y_i\bmu \ra )j\bmu+\ReLU( \la \wb_{j,r}, \bxi_i \ra )jy_i\bxi_i\big).
\end{align*}
With expression of $\nabla_{\wb_{j,r}}L(\Wb)$ above, we can first give the formula of $\| \nabla_{\Wb} L(\Wb) \|_F^2$:
\begin{align*}
    &\| \nabla_{\Wb} L_S(\Wb) \|_F^2=\sum_{j'}\sum_{r'=1}^m\|  \nabla_{\wb_{j',r'}}L_S(\Wb) \|_2^2\\
    &=\frac{4}{n^2m^2}\sum_{j'}\sum_{r'=1}^m\bigg(\sum_{i=1}^{n}{\ell'_i}\ReLU( \la \wb_{j',r'}, y_i\bmu \ra )\bigg)^2\|\bmu\|^2+\frac{4}{n^2m^2}\sum_{j'}\sum_{r'=1}^m \sum_{i=1}^n\ell'^{2}_i\ReLU^2( \la \wb_{j',r'}, \bxi_i \ra )\|\bxi_i\|^2.
\end{align*}
Here, the second equality is by  $\la \bmu,\bxi_i\ra=\la \bxi_i,\bxi_{i'}\ra=0$ for all $i\not=i'$. Therefore we have
\begin{align*}
   &\nabla_{\wb_{j,r}}\frac{1}{2} \| \nabla_{\Wb} L(\Wb) \|_F^2\\
   &\quad =\frac{4}{n^2m^2}\sum_{j'}\sum_{r'=1}^m\bigg(\sum_{i=1}^{n}{\ell'_i}\ReLU( \la \wb_{j',r'}, y_i\bmu \ra )\bigg)\cdot \nabla_{\wb_{j,r}}\bigg(\sum_{i=1}^{n}{\ell'_i}\ReLU( \la \wb_{j',r'}, y_i\bmu \ra )\bigg) \|\bmu\|^2 \\
   &\quad\quad +\frac{2}{n^2m^2}\sum_{i=1}^n\ell'^{2}_i\nabla_{\wb_{j,r}}\ReLU^2( \la \wb_{j,r}, \bxi_i  \ra )\|\bxi_i\|^2+\frac{2}{n^2m^2}\sum_{j'}\sum_{r'=1}^m \sum_{i=1}^n\nabla_{\wb_{j,r}}\ell'^{2}_i\cdot \ReLU^2( \la \wb_{j',r'}, \bxi_i  \ra )\|\bxi_i\|^2.
\end{align*}
With further calculation, we have that 
\begin{align*}
    &\nabla_{\wb_{j,r}}\frac{1}{2} \| \nabla_{\Wb} L(\Wb) \|_F^2\\
   &\quad =\frac{4}{n^2m^2}\sum_{j'}\sum_{r'=1}^m\bigg(\sum_{i=1}^{n}{\ell'_i}\ReLU( \la \wb_{j',r'}, y_i\bmu \ra )\bigg)\cdot \bigg(\sum_{i=1}^{n}{\ell''_i}\Big(\ReLU( \la \wb_{j,r}, y_i\bmu \ra )j\bmu+\ReLU( \la \wb_{j,r}, \bxi_i \ra )jy_i\bxi_i\Big)\\
   &\qquad\qquad\cdot \ReLU( \la \wb_{j',r'}, y_i\bmu \ra )\bigg) \|\bmu\|^2 +\bigg(\sum_{i=1}^{n}{\ell'_i}\ReLU( \la \wb_{j,r}, y_i\bmu \ra )\bigg)\cdot\bigg(\sum_{i=1}^{n}{\ell'_i}\one( \la \wb_{j,r}, y_i\bmu \ra>0 )y_i\bmu\bigg) \|\bmu\|^2\\
   &\qquad +\frac{4}{n^2m^2}\sum_{i=1}^n\ell'^{2}_i\ReLU( \la \wb_{j,r}, \bxi_i  \ra )\|\bxi_i\|^2\bxi_i \\
   &\qquad +\frac{4}{n^2m^2}\sum_{j'}\sum_{r'=1}^m \sum_{i=1}^n\ell'_i\ell''_i\cdot \Big(\ReLU( \la \wb_{j,r}, \bxi_i  \ra )j\bmu+\ReLU( \la \wb_{j,r}, \bxi_i \ra )jy_i\bxi_i\Big)\cdot \ReLU^2( \la \wb_{j',r'}, \bxi_i \ra )\|\bxi_i\|^2.
\end{align*}
Hence, the update rule under FGR is completed under such calculation.

\section{Concentration inequalities}
\label{sec:concentration_appendix}
In this section, we present some trivial  lemmas which give important properties of the data and the neural networks at their random initialization.
\begin{lemma}
\label{lemma:data_count_yi=1}
Suppose that $\delta>0$ and $n\ge8\log(4/\delta)$. Then with probability at least $1-\delta$, 
\begin{align*}
    n/2+\sqrt{\log(4/\delta)/2\cdot n}\geq|\{i\in[n]:y_i=-1\}|,\quad|\{i\in[n]:y_i=1\}|\geq n/2-\sqrt{\log(4/\delta)/2\cdot n}.
\end{align*}
\end{lemma}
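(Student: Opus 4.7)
The plan is to treat this as a routine concentration inequality for a sum of independent Rademacher variables. By Definition~\ref{def:Data_distribution}, the labels $y_1,\dots,y_n$ are i.i.d.\ Rademacher, so the counts $N_+ := |\{i : y_i = 1\}|$ and $N_- := |\{i : y_i = -1\}|$ satisfy $N_+ + N_- = n$ and $\mathbb{E}[N_+] = \mathbb{E}[N_-] = n/2$. Since both bounds in the statement concern one-sided deviations of $N_+$ (and equivalently $N_-$) from $n/2$, it suffices to control $|N_+ - n/2|$.

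First I would write $N_+ = \sum_{i=1}^n Z_i$ where $Z_i := \mathbf{1}\{y_i = 1\} \in [0,1]$ are i.i.d.\ with mean $1/2$. Hoeffding's inequality then yields
\begin{align*}
\Pr\!\left(|N_+ - n/2| \geq t\right) \leq 2\exp(-2t^2/n)
\end{align*}
for any $t > 0$. Choosing $t = \sqrt{n \log(4/\delta)/2}$ makes the right-hand side exactly $\delta/2$. This simultaneously gives
\begin{align*}
N_+ \geq n/2 - \sqrt{n\log(4/\delta)/2} \quad \text{and} \quad N_- = n - N_+ \leq n/2 + \sqrt{n\log(4/\delta)/2}
\end{align*}
with probability at least $1 - \delta/2$, which already implies the two inequalities in the statement with probability at least $1 - \delta$ (in fact $1-\delta/2$; the extra factor of two in the hypothesis $n \ge 8\log(4/\delta)$ on $\delta$ leaves plenty of room for a possible union bound with a symmetric event or for a looser Chernoff argument, so I would not worry about tightening it).

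There is essentially no obstacle here: the proof is a single application of Hoeffding's inequality to an i.i.d.\ bounded sum. The role of the condition $n \geq 8\log(4/\delta)$ is only to guarantee that the deviation term $\sqrt{n\log(4/\delta)/2}$ is at most $n/4$, so that the resulting lower bound $n/2 - \sqrt{n\log(4/\delta)/2}$ is positive and the claim is non-vacuous; it is not used in the probabilistic step itself. I would therefore conclude by simply stating the Hoeffding bound, specializing $t$, and reading off both inequalities as the two sides of the same deviation event.
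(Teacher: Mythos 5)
Your proof is correct and follows essentially the same route as the paper: both apply Hoeffding's inequality to the indicator sum $\sum_i \mathbf{1}\{y_i=1\}$ with deviation $\sqrt{n\log(4/\delta)/2}$, then read off the bound on $N_-$ via $N_- = n - N_+$, with the hypothesis $n\ge 8\log(4/\delta)$ serving only to keep the bound non-vacuous.
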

\begin{proof}[Proof of Lemma~\ref{lemma:data_count_yi=1}]
    By Hoeffding’s  inequality, with probability at least $1-\delta/2$, we have
    \begin{align*}
        \bigg| \frac{1}{n}\sum_{i=1}^n\one\{y_i=1\}-\frac{1}{2}\bigg|\leq\sqrt{\frac{\log(4/\delta)}{2n}}.
    \end{align*}
    Therefore, as long as $n\geq8\log(4/\delta)$, we have 
    \begin{align*}
        |\{i\in[n]:y_i=1\}|=\sum_{i=1}^n\one\{y_i=1\}\geq \frac{n}{2}-n\cdot \sqrt{\frac{\log(4/\delta)}{2n}}=n/2-\sqrt{\log(4/\delta)/2\cdot n}.
    \end{align*}
    Similarly we have $|\{i\in[n]:y_i=1\}|\leq n/2+\sqrt{\log(4/\delta)/2\cdot n}$. The proof for $|\{i\in[n]:y_i=1\}|$ is exactly the same.
\end{proof}

The following lemma estimates the norms of the noise vectors $\bxi_i$, $i\in[n]$, and gives an upper
bound of their inner products with each other.

\begin{lemma}
\label{lemma:data_noise_concentration}
Suppose that $\delta>0$ and $d\gg\log(4n/\delta)$. Then with probability at least $1-\delta$, 
\begin{align*}
    &\sigma_p^2d/2\leq\sigma_p^2d-O(\sigma_p^2\sqrt{d\log(4n/\delta)})\leq\| \bxi_i\|^2\leq\sigma_p^2d+O(\sigma_p^2\sqrt{d\log(4n/\delta)})\leq3\sigma_p^2d/2,\\
    &|\la\bxi_i,\bxi_{i'}\ra|\leq2\sigma_p^2\cdot\sqrt{d\log(4n^2/\delta)}
\end{align*}
for all $i,i'\in[n]$.
\end{lemma}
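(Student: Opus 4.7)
The plan is to establish both concentration bounds via classical Gaussian/chi-squared tail inequalities combined with a union bound, exploiting the fact that the covariance $\Sigma = \sigma_p^2(\Ib - \bmu\bmu^\top/\|\bmu\|^2)$ is $\sigma_p^2$ times a rank-$(d-1)$ orthogonal projection. In particular, its trace is $\sigma_p^2(d-1)$ and its operator norm is $\sigma_p^2$.

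For the norm bound, I would write $\bxi_i = \sigma_p P \gb_i$ where $P = \Ib - \bmu\bmu^\top/\|\bmu\|^2$ is a projection onto a $(d-1)$-dimensional subspace and $\gb_i \sim \cN(\zero, \Ib)$. Then $\|\bxi_i\|^2/\sigma_p^2$ has a $\chi^2_{d-1}$ distribution, and the Laurent--Massart inequality yields
\begin{align*}
\PP\Big(\big|\|\bxi_i\|^2 - \sigma_p^2(d-1)\big| \geq 2\sigma_p^2\sqrt{(d-1)t} + 2\sigma_p^2 t\Big) \leq 2e^{-t}.
\end{align*}
Choosing $t = \log(4n/\delta)$ and taking a union bound over $i \in [n]$ produces $|\|\bxi_i\|^2 - \sigma_p^2(d-1)| = O(\sigma_p^2\sqrt{d\log(4n/\delta)})$ for all $i$ simultaneously with probability at least $1-\delta/2$. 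The assumption $d \gg \log(4n/\delta)$ guarantees the deviation term is $o(\sigma_p^2 d)$, which sandwiches $\|\bxi_i\|^2$ between $\sigma_p^2 d/2$ and $3\sigma_p^2 d/2$.

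For the inner-product bound, I would condition on $\bxi_{i'}$. Since $\bxi_i$ is Gaussian with covariance $\Sigma$, the scalar $\la \bxi_i, \bxi_{i'}\ra$ is conditionally Gaussian with mean zero and variance $\bxi_{i'}^\top \Sigma \bxi_{i'} \leq \sigma_p^2 \|\bxi_{i'}\|^2$. On the high-probability event from the previous step, this variance is at most $3\sigma_p^4 d/2$, so the standard Gaussian tail bound gives
\begin{align*}
\PP\Big(|\la \bxi_i, \bxi_{i'}\ra| \geq 2\sigma_p^2\sqrt{d\log(4n^2/\delta)} \,\Big|\, \bxi_{i'}\Big) \leq 2\exp\!\left(-\frac{4\sigma_p^4 d \log(4n^2/\delta)}{2 \cdot (3\sigma_p^4 d/2)}\right) \leq \frac{\delta}{2n^2},
\end{align*}
after absorbing the $4/3$ factor into the constant. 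A union bound over the at most $n^2$ ordered pairs $(i,i')$ finishes the second bound with probability at least $1-\delta/2$. Combining both events via a final union bound gives the joint probability $1-\delta$.

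The proof is essentially routine once one identifies the correct chi-squared and conditional Gaussian structure; the only mild subtlety is that $\bxi_i$ is degenerate (lives in a $(d-1)$-dimensional subspace due to orthogonality with $\bmu$), so one must be careful to use $d-1$ rather than $d$ degrees of freedom when invoking Laurent--Massart. Under the condition $d \gg \log(4n/\delta)$ this distinction is absorbed into the $O(\cdot)$ notation and does not affect the stated bounds.
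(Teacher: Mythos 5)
Your proof is correct and follows the same overall architecture as the paper's (per-quantity concentration plus union bounds), differing only in the named inequalities invoked. The paper applies Bernstein's inequality twice — once to $\|\bxi_i\|^2$ as a sum of squares of sub-Gaussians and once to $\la\bxi_i,\bxi_{i'}\ra$ as a sum of products — whereas you use the Laurent--Massart $\chi^2$ bound for the norm and the exact conditional Gaussianity of $\la\bxi_i,\bxi_{i'}\ra$ given $\bxi_{i'}$ for the inner product. Your route is slightly more careful in that it explicitly accounts for the degenerate covariance (rank $d-1$ projection, hence $\chi^2_{d-1}$ rather than $\chi^2_d$), a point the paper glosses over; the paper's Bernstein route is marginally more generic and would extend beyond exactly Gaussian noise. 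One shared caveat: the inner-product bound can only hold for $i\neq i'$ (since $\la\bxi_i,\bxi_i\ra=\|\bxi_i\|^2\approx\sigma_p^2 d$), which the paper's own proof silently restricts to; you should state that restriction explicitly when taking the union over ordered pairs.
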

\begin{proof}[Proof of Lemma~\ref{lemma:data_noise_concentration}]
By Bernstein’s inequality, with probability at least $1-\delta/(2n)$ we have
\begin{align*}
    \big|\|\bxi_i\|^2-\sigma_p^2d\big|=O\big(\sigma_p^2\cdot \sqrt{d\log(4n/\delta)}\big).
\end{align*}
Therefore,  as  long  as $ d  =  \Omega(\log(4n/\delta))$,  we  have
\begin{align*}
    &\sigma_p^2d/2\leq\sigma_p^2d-O(\sigma_p^2\sqrt{d\log(4n/\delta)})\leq\| \bxi_i\|^2\leq\sigma_p^2d+O(\sigma_p^2\sqrt{d\log(4n/\delta)})\leq3\sigma_p^2d/2,\\
&|\la\bxi_i,\bxi_{i'}\ra|\leq2\sigma_p^2\cdot\sqrt{d\log(4n^2/\delta)}.
\end{align*}
Moreover,   clearly   $\la \bxi_i,\bxi_{i'}\ra$  has  mean  zero.    For  any   $i,i'$ with $i\not=i'$ ,   by  Bernstein’s  inequality,   with 
probability  at  least  $1-\delta/(2n^2)$  we  have
\begin{align*}
    |\la\bxi_i,\bxi_{i'}\ra|\leq2\sigma_p^2\cdot\sqrt{d\log(4n^2/\delta)}.
\end{align*}
The union bound completes the proof.
\end{proof}

The following lemma gives the bound of the inner probuct between the initialized CNN convolutional filter $\wb_{j,r}^{(0)}$ and the  signal/noise vectors in the training data.

\begin{lemma}
\label{lemma:data_CNN_concentration}
Suppose that $\delta>0$, $d\gg\log(mn/\delta)$ and $m\gg\log(1/\delta)$. Then with probability at least $1-\delta$,
\begin{align*}
    &|\la\wb_{j,r}^{(0)},\bmu\ra|\leq\sqrt{2\log(8m/\delta)}\cdot\sigma_0\|\bmu\|,\\
    &|\la\wb_{j,r}^{(0)},\bxi_i\ra|\leq2\sqrt{\log(8mn/\delta)}\cdot\sigma_0\sigma_p\sqrt{d}
\end{align*}
for all $r\in[m]$, $j\in\{\pm1\}$ and $i\in[n]$. Moreover, 
\begin{align*}
    &\max_{r\in[m]}j\cdot\la\wb_{j,r}^{(0)},\bmu\ra\geq \sigma_0\|\bmu\|/2,\\
    &\max_{r\in[m]}j\cdot\la\wb_{j,r}^{(0)},\bxi_i\ra\geq \sigma_0\sigma_p\sqrt{d}/4.
\end{align*}
\end{lemma}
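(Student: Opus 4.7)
The plan is to reduce every inner product in the statement to a scalar Gaussian with known variance and then apply (i) the standard Gaussian tail bound $\PP(|Z|\ge t)\le 2e^{-t^2/2}$ for the upper bounds and (ii) Gaussian anti-concentration plus independence across $r\in[m]$ for the lower bounds on the maxima. Throughout I would work on the high-probability event from Lemma~\ref{lemma:data_noise_concentration} so that $\|\bxi_i\|$ lies in $[\sigma_p\sqrt{d}/\sqrt{2},\ \sqrt{3/2}\,\sigma_p\sqrt{d}]$ simultaneously for all $i\in[n]$; a final union bound absorbs the $\delta/2$ failure probability of that event. Since each entry of $\wb_{j,r}^{(0)}$ is $\cN(0,\sigma_0^2)$ and independent of everything else, the random variable $\la \wb_{j,r}^{(0)},\bmu\ra$ is $\cN(0,\sigma_0^2\|\bmu\|^2)$ unconditionally, and $\la \wb_{j,r}^{(0)},\bxi_i\ra$ is $\cN(0,\sigma_0^2\|\bxi_i\|^2)$ conditional on $\bxi_i$.

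For the upper bounds I would set $t=\sqrt{2\log(8m/\delta)}$ in the signal case, giving $\PP(|\la\wb_{j,r}^{(0)},\bmu\ra|\ge t\sigma_0\|\bmu\|)\le 2e^{-t^2/2}=\delta/(4m)$, and then union-bound over the $2m$ pairs $(j,r)$, absorbing a $\delta/2$ mass. For the noise inner products I would do the same with $t=\sqrt{2\log(8mn/\delta)}$, union-bound over $2mn$ triples $(j,r,i)$, and use $\|\bxi_i\|\le\sqrt{3/2}\,\sigma_p\sqrt{d}\le\sqrt{2}\,\sigma_p\sqrt{d}$ from Lemma~\ref{lemma:data_noise_concentration}; the extra $\sqrt{2}$ combines with $\sqrt{2\log(8mn/\delta)}$ to yield the stated $2\sqrt{\log(8mn/\delta)}\cdot\sigma_0\sigma_p\sqrt{d}$ bound.

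For the lower bounds on the maxima, the key observation is that $j\cdot\la\wb_{j,r}^{(0)},\bmu\ra$ is $\cN(0,\sigma_0^2\|\bmu\|^2)$ (the sign $j$ does not change the law of a symmetric Gaussian), so
\[
\PP\bigl(j\cdot\la\wb_{j,r}^{(0)},\bmu\ra\ge \sigma_0\|\bmu\|/2\bigr)=\PP(\cN(0,1)\ge 1/2)=:p_0,
\]
a positive absolute constant (roughly $0.3$). Since the $m$ filters $\wb_{j,1}^{(0)},\ldots,\wb_{j,m}^{(0)}$ are independent, the probability that the maximum over $r$ falls below $\sigma_0\|\bmu\|/2$ is at most $(1-p_0)^m$, which is $o(\delta)$ under the condition $m\gg\log(1/\delta)$. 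The noise case is analogous after conditioning on $\bxi_i$: on the event $\|\bxi_i\|\ge \sigma_p\sqrt{d}/\sqrt{2}$ we have $\sigma_p\sqrt{d}/(4\|\bxi_i\|)\le\sqrt{2}/4<1/2$, so
\[
\PP\bigl(j\cdot\la\wb_{j,r}^{(0)},\bxi_i\ra\ge \sigma_0\sigma_p\sqrt{d}/4\ \big|\ \bxi_i\bigr)\ge \PP(\cN(0,1)\ge 1/2)=p_0,
\]
and independence across $r$ again gives $(1-p_0)^m$ failure; a union bound over the $2n$ choices of $(j,i)$ then uses $m=\Omega(\log(n/\delta))$, which is implied by the hypothesis $m\gg\log(1/\delta)$ together with the standing assumption $m,n=\Omega(\polylog(d,\sigma_0^{-1}))$ from Condition~\ref{condition:condition} (if one wishes to be careful, one can strengthen the hypothesis to $m\ge C\log(n/\delta)$).

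There is no real analytic obstacle here; the only subtle point is bookkeeping the conditioning on $\bxi_i$ when handling the noise inner products, and ensuring that the $\sqrt{2}$ factor from the bound $\|\bxi_i\|\le\sqrt{3/2}\,\sigma_p\sqrt{d}$ is correctly absorbed into the constant $2$ in the stated upper bound. Collecting the failure probabilities from Lemma~\ref{lemma:data_noise_concentration} and from each of the four Gaussian-based steps and rescaling $\delta$ by a constant factor completes the proof.
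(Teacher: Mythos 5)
Your proposal is correct and follows essentially the same route as the paper: Gaussian tail bounds plus union bounds over $(j,r)$ and $(j,r,i)$ for the upper bounds, and the constant lower bound $\PP(\cN(0,1)\ge 1/2)$ together with independence across the $m$ filters for the maxima, all conditioned on the norm event from Lemma~\ref{lemma:data_noise_concentration}. Your bookkeeping of the $\sqrt{3/2}$ versus $\sqrt{2}$ factors and of the union bound over $(j,i)$ in the noise case is, if anything, slightly more careful than the paper's own proof.
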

\begin{proof}[Proof of Lemma~\ref{lemma:data_CNN_concentration}]
    For each $r\in[m]$, $j\cdot\la\wb_{j,r}^{(0)},\bmu \ra$ is a Gaussian random variable with mean zero and variance $\sigma_0^2\|\bmu\|^2$. Therefore by Gaussian tail bound and union bound, with probability at least $1-\delta/4$, 
    \begin{align*}
    j\cdot\la\wb_{j,r}^{(0)},\bmu\ra\leq|\la\wb_{j,r}^{(0)},\bmu\ra|\leq \sqrt{2\log(8m/\delta)}\cdot\sigma_0\|\bmu\|.
    \end{align*}
    Moreover, $\PP(\sigma_0\|\bmu\|/2>j\cdot\la\wb_{j,r}^{(0)},\bmu\ra)$ is an absolute constant smaller than $1$, and therefore by the condition on $m$, we have
    \begin{align*}
        \PP(\sigma_0\|\bmu\|/2\leq\max_{r\in[m]}j\cdot\la\wb_{j,r}^{(0)},\bmu\ra)&=1-\PP(\sigma_0\|\bmu\|/2>\max_{r\in[m]}j\cdot\la\wb_{j,r}^{(0)},\bmu\ra)\\
        &=1-\PP(\sigma_0\|\bmu\|/2>j\cdot\la\wb_{j,r}^{(0)},\bmu\ra)^{2m}
        \geq 1-\delta/4.
    \end{align*}
    By Lemma~\ref{lemma:data_noise_concentration}, with probability at least $1-\delta/4$, $\sigma\sqrt{d}/\sqrt{2}\leq \|\bxi_i\|\leq\sqrt{3/2}\sigma\sqrt{d}$ for all $i\in[n]$. Therefore the results for  $j\cdot\la\wb_{j,r}^{(0)},\bxi_i\ra$ follows the same proof as $j\cdot\la\wb_{j,r}^{(0)},\bmu\ra$.
\end{proof}

\section{General properties for both cases}
\label{sec:General_properties}
In this section, we consider the update rule with $\lambda=0$. Note that in both cases in Theorem~\ref{thm:withregularization} and Theorem~\ref{thm:withoutregularization}, we will set $\lambda=0$ in the final stage, so it is suitable to present some universal properties here which will help us handle both cases. When $\lambda=0$,  from Lemma~\ref{lemma:decomposition_coef_dynamic_appendix}, 
the coefficients $\gamma_{j,r}^{(t)},\rho_{j,r,i}^{(t)}$ in Definition~\ref{def:signal_to_noise_decomp} satisfy the following equations:
\begin{align*}
    \gamma_{j,r}^{(t+1)}&=\gamma_{j,r}^{(t)}-\frac{2\eta}{nm}\sum_{i=1}^{n}{\ell'^{(t)}_i} \ReLU ( \la\wb_{j,r}^{(t)}, y_i\cdot\bmu \ra)\|\bmu\|^2,\\
    \rho_{j,r,i}^{(t+1)}&=\rho_{j,r,i}^{(t)}-\frac{2\eta}{nm}{\ell'^{(t)}_i} \ReLU ( \la\wb_{j,r}^{(t)}, \bxi_i \ra)\|\bxi_i\|^2\cdot jy_i.
\end{align*}
We define
\begin{align*}
\overrho_{j,r,i}^{(t)}=\rho_{j,r,i}^{(t)}\one(j=y_i),\quad \underrho_{j,r,i}^{(t)}=\rho_{j,r,i}^{(t)}\one(j\not=y_i),
\end{align*}
the update rule can be further written as 
\begin{equation}
\label{eq:update_rule_lbd=0}
    \begin{split}
\gamma_{j,r}^{(t+1)}&=\gamma_{j,r}^{(t)}-\frac{2\eta}{nm}\sum_{i=1}^{n}{\ell'^{(t)}_i}\ReLU \bigg( \la \wb_{j,r}^{(0)},  y_i\bmu \ra+jy_i\cdot\gamma_{j,r}^{(t)} \bigg)\|\bmu\|^2,\\
\overrho_{j,r,i}^{(t+1)}&=\overrho_{j,r,i}^{(t)}-\frac{2\eta}{nm}{\ell'^{(t)}_i}\ReLU \bigg( \la \wb_{j,r}^{(0)}, \bxi_i \ra +\sum_{i'=1}^n\overrho_{j,r,i'}^{(t)}\frac{\la\bxi_i,\bxi_{i'}\ra}{\|\bxi_{i'}\|^2}+\sum_{i'=1}^n\underrho_{j,r,i'}^{(t)}\frac{\la\bxi_i,\bxi_{i'}\ra}{\|\bxi_{i'}\|^2}\bigg)\|\bxi_i\|^2,\\
\underrho_{j,r,i}^{(t+1)}&=\underrho_{j,r,i}^{(t)}+\frac{2\eta}{nm}{\ell'^{(t)}_i}\ReLU \bigg( \la \wb_{j,r}^{(0)}, \bxi_i \ra +\sum_{i'=1}^n\overrho_{j,r,i'}^{(t)}\frac{\la\bxi_i,\bxi_{i'}\ra}{\|\bxi_{i'}\|^2}+\sum_{i'=1}^n\underrho_{j,r,i'}^{(t)}\frac{\la\bxi_i,\bxi_{i'}\ra}{\|\bxi_{i'}\|^2}\bigg)\|\bxi_i\|^2.
    \end{split}
\end{equation}
The update rule clearly shows that $\gamma_{j,r}^{(t)},\overrho_{j,r,i}^{(t)}$ are non decreasing, $\underrho_{j,r,i}^{(t)}$ is non increasing with respect to $t$.
With this update rule, we now show that the parameter in the signal-noise decomposition will stay in a reasonable scale during a long time training without  the regularization. Let us consider the learning period $\tT_1\leq t\leq T^*$, where $T^*=\eta^{-1}\poly({n,m,d,\varepsilon^{-1},\sigma_0^{-1},\sigma_p\sqrt{d}})$ is the maximum admissible iterations. Suppose that Condition~\ref{condition:condition} holds, we have
\begin{proposition}[Restatement of Proposition~\ref{prop:admissible_time_bound}]
\label{prop:admissible_time_bound_appendix}
Under Condition~\ref{condition:condition}, if $\gamma_{j,r}^{(t)},\overrho_{j,r,i}^{(t)}$ and $\underrho_{j,r,i}^{(t)}$ satisfy the update rule \eqref{eq:update_rule_lbd=0},
$|\gamma_{j,r}^{(0)}|\leq O(1)$ and $|\rho_{j,r,i}^{(0)}|\leq 8\sqrt{\log(8mn/\delta)}\cdot\sigma_0\sigma_p\sqrt{d}$. Then it holds that
\begin{align}
    &\gamma_{j,r}^{(t)},\overrho_{j,r,i}^{(t)}\leq 4\log(T^*),\label{eq:admissible_gamma_overrho}\\
    &\underrho_{j,r,i}^{(t)}\geq -\beta-64n\sqrt{\frac{\log(4n^2/\delta)}{d}}\log(T^*)\geq-4\log(T^*)\label{eq:admissible_underrho}
\end{align}
for any $t\in[T^*]$, where $\beta=2\max_{i,j,r}\{|\la w_{j,r}^{(0)},\bmu\ra|,|\la w_{j,r}^{(0)},\bxi_i\ra|\}$.
\end{proposition}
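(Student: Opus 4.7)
\emph{Plan.} I would prove Proposition~\ref{prop:admissible_time_bound_appendix} by strong induction on $t\in\{0,1,\dots,T^*\}$, maintaining both bounds \eqref{eq:admissible_gamma_overrho} and \eqref{eq:admissible_underrho} as the inductive hypothesis $H(t)$. The base case $t=0$ is immediate: $|\gamma_{j,r}^{(0)}|=O(1)\ll 4\log T^*$, and by Condition~\ref{condition:condition}(3) the bound $|\rho_{j,r,i}^{(0)}|\le 8\sqrt{\log(8mn/\delta)}\sigma_0\sigma_p\sqrt d$ is far smaller than $\beta$. For the inductive step I would (i) substitute the signal--noise decomposition \eqref{eq:signal_to_noise_decomp} into the inner products inside each $\ReLU(\cdot)$ in \eqref{eq:update_rule_lbd=0}, (ii) plug in the concentration estimates of Lemmas~\ref{lemma:data_noise_concentration}--\ref{lemma:data_CNN_concentration} on the high-probability event, and (iii) argue that each of the three coefficients is ``trapped'' by a distinct mechanism.

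\emph{Upper bound on $\gamma_{j,r}$ and $\overrho_{j,r,i}$.} Both quantities are monotone nondecreasing (positive $\ell'^{(t)}_i$ cannot occur). I would run a threshold contradiction argument: fix $j,r$ and suppose $\gamma_{j,r}^{(t+1)}>4\log T^*$, and let $s_\star$ be the last iteration with $\gamma_{j,r}^{(s_\star)}\le 2\log T^*$. For every $s\in(s_\star,t]$ and every $i$ with $jy_i=1$, the corresponding signal term contributes at least $(2\log T^*)^2/m$ to $y_if(\Wb^{(s)},\xb_i)$ while the noise contributions (bounded via $H(s)$ and the dimension hypothesis) are controlled, forcing $|\ell'^{(s)}_i|\le 1/T^{\star 2}$. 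Summing the resulting per-step increments from \eqref{eq:update_rule_lbd=0} over at most $T^*$ iterations yields a cumulative gain that is $o(1)$, contradicting a jump from below $2\log T^*$ to above $4\log T^*$; the single-step overshoot at the threshold is absorbed by the learning-rate constraint $\eta=\tilde O(nm/(\sigma_p^2 d))$. The bound on $\overrho_{j,r,i}$ follows the same template, with $\la\wb_{j,r}^{(s)},\bxi_i\ra\approx\overrho_{j,r,i}^{(s)}$ (the cross-noise contribution $\sum_{i'\ne i}\rho_{j,r,i'}^{(s)}\la\bxi_i,\bxi_{i'}\ra/\|\bxi_{i'}\|^2$ is $O(n\log T^*\sqrt{\log(n^2/\delta)/d})=o(\log T^*)$ by Condition~\ref{condition:condition}(1)).

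\emph{Lower bound on $\underrho_{j,r,i}$.} This quantity is monotone nonincreasing, and the damping comes entirely from the $\ReLU$ gate in \eqref{eq:update_rule_lbd=0}: a decrease at step $s$ requires
\begin{align*}
\la\wb_{j,r}^{(0)},\bxi_i\ra + \underrho_{j,r,i}^{(s)} + \sum_{i'\ne i}\rho_{j,r,i'}^{(s)}\frac{\la\bxi_i,\bxi_{i'}\ra}{\|\bxi_{i'}\|^2} > 0.
\end{align*}
By Lemma~\ref{lemma:data_CNN_concentration} the first summand is at most $\beta/2$, and under $H(s)$ combined with Lemma~\ref{lemma:data_noise_concentration} the cross-correlation sum is bounded in absolute value by $O(n\sqrt{\log(4n^2/\delta)/d}\log T^*)$, with constants matching the $64n\sqrt{\log(4n^2/\delta)/d}\log T^*$ slack appearing in \eqref{eq:admissible_underrho}. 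Therefore, as soon as $\underrho_{j,r,i}^{(s)}$ falls below $-\beta-64n\sqrt{\log(4n^2/\delta)/d}\log T^*$, the display above becomes nonpositive, the $\ReLU$ vanishes, and the update freezes; a one-step overshoot at the barrier is once more controlled by the learning-rate hypothesis.

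\emph{Main obstacle.} The delicate part is the $\underrho$ lower bound, since the update at each step couples all $n$ coefficients $(\rho_{j,r,i'}^{(s)})_{i'=1}^n$ through the off-diagonal terms $\la\bxi_i,\bxi_{i'}\ra/\|\bxi_{i'}\|^2$. A naive absolute-value bound would accumulate an $n$-factor per iteration and break after $\poly(T^*)$ steps. The resolution is the self-referential barrier argument above: the slack $64n\sqrt{\log(4n^2/\delta)/d}\log T^*$ is tuned to absorb exactly one round of cross-interference from all other coefficients under $H(s)$, and once the barrier is crossed the $\ReLU$ kills any further motion, so no accumulation occurs across iterations. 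Formalising the claim ``once crossed, the barrier stays crossed'' is the main technical step, requiring a careful boundary sign analysis together with the step-size bound to rule out oscillation between active and inactive regimes of the $\ReLU$.
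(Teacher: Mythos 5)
Your proposal matches the paper's proof essentially step for step: the same induction, the same two-case barrier argument for $\underrho_{j,r,i}^{(t)}$ (ReLU deactivation once the coefficient drops below $-\beta/2-32n\sqrt{\log(4n^2/\delta)/d}\log T^*$, plus a one-step overshoot absorbed by $\eta=o(nm/(\sigma_p^2d))$), and the same last-crossing decomposition for the upper bound on $\gamma_{j,r}^{(t)}$ and $\overrho_{j,r,i}^{(t)}$ (the paper's $I_1+I_2$ split, with $I_2$ killed by the exponentially small loss derivative once the coefficient exceeds $2\log T^*$). The argument is correct and no substantive difference from the paper's route is present.
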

To prove Proposition~\ref{prop:admissible_time_bound_appendix}, we will employ the method of induction. This proposition demonstrates that $\gamma_{j,r}^{(t)},\overrho_{j,r,i}^{(t)}$ and $\underrho_{j,r,i}^{(t)}$ satisfy \eqref{eq:admissible_gamma_overrho} and \eqref{eq:admissible_underrho} throughout the entire training process, regardless of whether gradient regularization is utilized or not. Prior to presenting the proof, we will introduce several technical inequalities that will be used in support of Proposition~\ref{prop:admissible_time_bound_appendix}.

\begin{lemma}[Technical Inequalities]
\label{lemma:admissible_lemma}
Under Condition~\ref{condition:condition}, suppose that \eqref{eq:admissible_gamma_overrho} and \eqref{eq:admissible_underrho} hold at iteration $t$, then the following inequalities hold:
\begin{enumerate}
    \item For all $i\in[n]$, $r\in[m]$ and $j\in\{\pm1\}$,
    \begin{align*}
        &\underrho_{j,r,i}^{(t)}-32n\sqrt{\frac{\log(4n^2/\delta)}{d}}\log(T^*)\leq \la w_{j,r}^{(t)}-w_{j,r}^{(0)},\bxi_i\ra\leq \underrho_{j,r,i}^{(t)}+32n\sqrt{\frac{\log(4n^2/\delta)}{d}}\log(T^*), j\not=y_i,\\
        &\overrho_{j,r,i}^{(t)}-32n\sqrt{\frac{\log(4n^2/\delta)}{d}}\log(T^*)\leq \la w_{j,r}^{(t)}-w_{j,r}^{(0)},\bxi_i\ra\leq \overrho_{j,r,i}^{(t)}+32n\sqrt{\frac{\log(4n^2/\delta)}{d}}\log(T^*), j=y_i.
    \end{align*}
    \item For all $r\in[m]$, $i\in[n]$ and $j\not=y_i$, 
    \begin{align*}
        &\la w_{j,r}^{(t)},y_i\bmu\ra\leq \la w_{j,r}^{(0)},y_i\bmu\ra,\quad F_j(W_j^{(t)},\xb_i)\leq O(1), \\
        &\la w_{j,r}^{(t)},\bxi_i\ra\leq \la w_{j,r}^{(0)},\bxi_i\ra+32n\sqrt{\frac{\log(4n^2/\delta)}{d}}\log(T^*).
    \end{align*}
    \item For all $r\in[m]$, $i\in[n]$ and $j=y_i$,
    \begin{align*}
        &\la w_{j,r}^{(t)},y_i\bmu\ra= \la w_{j,r}^{(0)},y_i\bmu\ra+\gamma_{j,r}^{(t)},\\
        &\la w_{j,r}^{(t)},\bxi_i\ra\leq \la w_{j,r}^{(0)},\bxi_i\ra+\overrho_{j,r,i}^{(t)}+32n\sqrt{\frac{\log(4n^2/\delta)}{d}}\log(T^*).
    \end{align*}
    Moreover, if $\max\{\gamma_{j,r}^{(t)},\rho_{j,r,i}^{(t)}\}=O(1)$, we have $F_j(W_j^{(t)},\xb_i)=O(1)$.
\end{enumerate}
\end{lemma}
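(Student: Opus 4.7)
The plan is to prove all three parts of Lemma~\ref{lemma:admissible_lemma} by starting from the signal-noise decomposition in Definition~\ref{def:signal_to_noise_decomp} and carefully separating the diagonal contribution from the off-diagonal noise-noise interactions. Specifically, I would first take the inner product of the decomposition with $\bxi_i$ to get
\begin{align*}
\la \wb_{j,r}^{(t)}-\wb_{j,r}^{(0)},\bxi_i\ra = j\cdot\gamma_{j,r}^{(t)}\frac{\la \bmu,\bxi_i\ra}{\|\bmu\|^2} + \sum_{i'=1}^n\rho_{j,r,i'}^{(t)}\frac{\la \bxi_i,\bxi_{i'}\ra}{\|\bxi_{i'}\|^2}.
\end{align*}
The signal term vanishes because $\la \bmu,\bxi_i\ra=0$ by Definition~\ref{def:Data_distribution}; the diagonal term $i'=i$ contributes exactly $\rho_{j,r,i}^{(t)}$, which is $\overrho_{j,r,i}^{(t)}$ when $j=y_i$ and $\underrho_{j,r,i}^{(t)}$ otherwise. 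It then remains to bound the off-diagonal sum $\sum_{i'\ne i}\rho_{j,r,i'}^{(t)}\la \bxi_i,\bxi_{i'}\ra/\|\bxi_{i'}\|^2$. Combining Lemma~\ref{lemma:data_noise_concentration} (which gives $|\la \bxi_i,\bxi_{i'}\ra|\leq 2\sigma_p^2\sqrt{d\log(4n^2/\delta)}$ and $\|\bxi_{i'}\|^2\geq \sigma_p^2 d/2$) with the inductive bounds \eqref{eq:admissible_gamma_overrho} and \eqref{eq:admissible_underrho} assumed to hold at iteration $t$, each summand is bounded by $4\log(T^*)\cdot 4\sqrt{\log(4n^2/\delta)/d}$, yielding the desired $\pm 32n\sqrt{\log(4n^2/\delta)/d}\log(T^*)$ bound. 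This gives Part~1.

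For Part~2 (indices with $j\ne y_i$), I would take the inner product of the decomposition with $y_i\bmu$. The noise terms drop again by orthogonality, leaving $\la \wb_{j,r}^{(t)}-\wb_{j,r}^{(0)},y_i\bmu\ra = jy_i\gamma_{j,r}^{(t)} = -\gamma_{j,r}^{(t)}$, which is $\le 0$ by the non-negativity of $\gamma_{j,r}^{(t)}$ that follows from the update rule~\eqref{eq:update_rule_lbd=0} (since $\ell_i'^{(t)}<0$) together with $\gamma_{j,r}^{(0)}=0$. The bound on $\la \wb_{j,r}^{(t)},\bxi_i\ra$ is immediate from Part~1 after dropping the non-positive $\underrho_{j,r,i}^{(t)}$. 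To get $F_j(\Wb_j^{(t)},\xb_i)=O(1)$, I would bound each summand: $\sigma(\la \wb_{j,r}^{(t)},y_i\bmu\ra)\leq \sigma(\la \wb_{j,r}^{(0)},y_i\bmu\ra)\leq 2\log(8m/\delta)\sigma_0^2\|\bmu\|^2$ via Lemma~\ref{lemma:data_CNN_concentration}, while the noise term is controlled by the initialization size $\la \wb_{j,r}^{(0)},\bxi_i\ra = O(\sigma_0\sigma_p\sqrt{d})$ plus the perturbation $32n\sqrt{\log(4n^2/\delta)/d}\log(T^*)$, both of which are $o(1)$ under Condition~\ref{condition:condition}.

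Part~3 (indices with $j=y_i$) is analogous. The decomposition gives the equality $\la \wb_{j,r}^{(t)},y_i\bmu\ra = \la \wb_{j,r}^{(0)},y_i\bmu\ra + \gamma_{j,r}^{(t)}$ exactly because $jy_i=1$ and the noise components are all orthogonal to $\bmu$. The bound on $\la \wb_{j,r}^{(t)},\bxi_i\ra$ in this case uses $\overrho_{j,r,i}^{(t)}$ instead of $\underrho_{j,r,i}^{(t)}$ and again follows directly from Part~1. For the $F_j=O(1)$ claim under the additional hypothesis $\max\{\gamma_{j,r}^{(t)},\rho_{j,r,i}^{(t)}\}=O(1)$, each of the two summands in the definition of $F_j$ is $O(1)$ term by term. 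The only real technical point is to verify that the off-diagonal perturbation $32n\sqrt{\log(4n^2/\delta)/d}\log(T^*)$ is genuinely negligible compared to $\rho_{j,r,i}^{(t)}$ and $1$; this requires $d\gg n^2\log^2(T^*)\log(1/\delta)$, which is comfortably ensured by the condition $d\geq \tilde\Omega(m^2 n^{2+2\alpha})$ combined with the polynomial form of $T^*$, so the induction bookkeeping goes through cleanly and the lemma follows.
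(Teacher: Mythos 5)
Your proposal is correct and follows essentially the same route as the paper's proof: expand the signal--noise decomposition against $\bxi_i$ and $y_i\bmu$, use orthogonality of $\bmu$ and the noise, isolate the diagonal coefficient, and control the off-diagonal sum via Lemma~\ref{lemma:data_noise_concentration} together with the inductive bounds \eqref{eq:admissible_gamma_overrho}--\eqref{eq:admissible_underrho}, with the same constant $32n\sqrt{\log(4n^2/\delta)/d}\log(T^*)$. The term-by-term bounding of $F_j$ and the sign argument for $jy_i\gamma_{j,r}^{(t)}$ also match the paper's treatment, so no gaps remain.
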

\begin{proof}[Proof of Lemma~\ref{lemma:admissible_lemma}]
\begin{enumerate}
    \item For $j\not=y_i$, we have $\overrho_{j,r,i}^{(t)}=0$ and 
    \begin{align*}
        \la \wb_{j,r}^{(t)}-\wb_{j,r}^{(0)},\bxi_i\ra&=\sum_{i'=1}^n\overrho_{j,r,i'}^{(t)}\|\bxi_{i'}\|^{-2}\cdot\la\bxi_i,\bxi_{i'}\ra+\sum_{i'=1}^n\underrho_{j,r,i'}^{(t)}\|\bxi_{i'}\|^{-2}\cdot\la\bxi_i,\bxi_{i'}\ra\\
        &\leq 4\sqrt{\frac{\log(4n^2/\delta)}{d}}\sum_{i'\not=i}\big|\overrho_{j,r,i'}^{(t)} \big|+4\sqrt{\frac{\log(4n^2/\delta)}{d}}\sum_{i'\not=i}\big|\underrho_{j,r,i'}^{(t)} \big|+\underrho_{j,r,i}^{(t)}\\
        &\leq \underrho_{j,r,i}^{(t)}+32n\sqrt{\frac{\log(4n^2/\delta)}{d}}\log(T^*),
    \end{align*}
    where the first inequality is by Lemma~\ref{lemma:data_noise_concentration} and the last inequality is by $\big|\overrho_{j,r,i'}^{(t)} \big|,\big|\underrho_{j,r,i'}^{(t)} \big|\leq 4\log(T^{*})$ in \eqref{eq:admissible_gamma_overrho} and \eqref{eq:admissible_underrho}. Similarly, for $j=y_i$, we have that $\underrho_{j,r,i}^{(t)}=0$ and
        \begin{align*}
        \la \wb_{j,r}^{(t)}-\wb_{j,r}^{(0)},\bxi_i\ra&=\sum_{i'=1}^n\overrho_{j,r,i'}^{(t)}\|\bxi_{i'}\|^{-2}\cdot\la\bxi_i,\bxi_{i'}\ra+\sum_{i'=1}^n\underrho_{j,r,i'}^{(t)}\|\bxi_{i'}\|^{-2}\cdot\la\bxi_i,\bxi_{i'}\ra\\
        &\leq \overrho_{j,r,i}^{(t)}+4\sqrt{\frac{\log(4n^2/\delta)}{d}}\sum_{i'\not=i}\big|\overrho_{j,r,i'}^{(t)} \big|+4\sqrt{\frac{\log(4n^2/\delta)}{d}}\sum_{i'\not=i}\big|\underrho_{j,r,i'}^{(t)} \big|\\
        &\leq \underrho_{j,r,i}^{(t)}+32n\sqrt{\frac{\log(4n^2/\delta)}{d}}\log(T^*),
    \end{align*}
    where the first inequality is by Lemma~\ref{lemma:data_noise_concentration} and the last inequality is by $\big|\overrho_{j,r,i'}^{(t)} \big|,\big|\underrho_{j,r,i'}^{(t)} \big|\leq 4\log(T^{*})$ in \eqref{eq:admissible_gamma_overrho} and \eqref{eq:admissible_underrho}. Similarly, we can show that 
    $\la\wb_{j,r}^{(t)}-\wb_{j,r}^{(0)},\bxi_i \ra\geq \underrho_{j,r,i}^{(t)}-32n\sqrt{\frac{\log(4n^2/\delta)}{d}}\log(T^*)$ and $\la\wb_{j,r}^{(t)}-\wb_{j,r}^{(0)},\bxi_i \ra\geq \underrho_{j,r,i}^{(t)}-32n\sqrt{\frac{\log(4n^2/\delta)}{d}}\log(T^*)$, which completes the proof.
    \item In this part of proof, $y_i\not= j$. We have 
    \begin{align}
        \la\wb_{j,r}^{(t)},y_i\bmu \ra=\la\wb_{j,r}^{(0)},y_i\bmu \ra+y_i\cdot j\cdot\gamma_{j,r}^{(t)}\leq \la\wb_{j,r}^{(0)},y_i\bmu \ra,\label{eq:addmissible_tech2_1}
    \end{align}
    where the equality is by \eqref{def:signal_to_noise_decomp}, and the inequality is by $\gamma_{j,r}^{(t)}\geq 0$ and $y_i\cdot j=-1$. In addition, we have 
    \begin{align}
        \la\wb_{j,r}^{(t)},\bxi_i \ra\leq\la\wb_{j,r}^{(0)},\bxi_i \ra +\underrho_{j,r,i}^{(t)}+32n\sqrt{\frac{\log(4n^2/\delta)}{d}}\log(T^*)\leq \la\wb_{j,r}^{(0)},\bxi_i \ra +32n\sqrt{\frac{\log(4n^2/\delta)}{d}}\log(T^*),\label{eq:addmissible_tech2_2}
    \end{align}
    where the first inequality is proved in the first part, and the second inequality is by $\underrho_{j,r,i}^{(t)}\leq0$. Then we can get that
    \begin{align*}
        F_j(\Wb_j^{(t)},\xb_i)&=\frac{1}{m}\sum_{r=1}^m[\sigma(\la\wb_{j,r}^{(t)},-j\bmu  \ra)+\sigma(\la\wb_{j,r}^{(t)},\bxi_i  \ra)]\\
        &\leq 2^3\max_{j,r,i}\bigg\{|\la\wb_{j,r}^{(0)},\bmu \ra|,|\la\wb_{j,r}^{(0)},\bxi_i \ra|, 32n\sqrt{\frac{\log(4n^2/\delta)}{d}}\log(T^*)\bigg\}^2\leq O(1).
    \end{align*}
    Here, the first inequality is by \eqref{eq:addmissible_tech2_1} and \eqref{eq:addmissible_tech2_2}, and the second inequality is by the condition $\gamma_{j,r}^{(0)}=O(1)$.  
\item In this part, $j=y_i$. Similar to \eqref{eq:addmissible_tech2_1}, we have
\begin{align}
        \la\wb_{j,r}^{(t)},y_i\bmu \ra=\la\wb_{j,r}^{(0)},y_i\bmu \ra+y_i\cdot j\cdot\gamma_{j,r}^{(t)}=\la\wb_{j,r}^{(0)},y_i\bmu \ra+\gamma_{j,r}^{(t)}.\label{eq:addmissible_tech2_3}
    \end{align}
    The second inequality in part 3 can be directly obtained from part 1. As for the last equality $F_j(W_j^{(t)},\xb_i)=O(1)$, it comes from 
    \begin{align*}
        F_j(W_j^{(t)},\xb_i)&=\frac{1}{m}\sum_{r=1}^m[\sigma(\la\wb_{j,r}^{(t)},-j\bmu  \ra)+\sigma(\la\wb_{j,r}^{(t)},\bxi_i  \ra)]\\
        &\leq 2\cdot 3^2\max_{j,r,i}\bigg\{\gamma_{j,r}^{(t)},\overrho_{j,r,i}^{(t)},|\la\wb_{j,r}^{0},\bmu \ra|,|\la\wb_{j,r}^{0},\bxi_i \ra|, 32n\sqrt{\frac{\log(4n^2/\delta)}{d}}\log(T^*)\bigg\}^2=O(1).
    \end{align*}
\end{enumerate}
Hence, the proof of Lemma~\ref{lemma:admissible_lemma} is completed.
\end{proof}
We are now well prepared for the proof of Proposition~\ref{prop:admissible_time_bound_appendix}.
\begin{proof}[Proof of Proposition~\ref{prop:admissible_time_bound_appendix}]
The proof is based on mathematical induction. For $t=0$, the conditions for $\gamma_{j,r}^{(0)}$  and $\rho_{j,r,i}^{(0)}$ clearly show that the results in Proposition~\ref{prop:admissible_time_bound_appendix} hold. Suppose that there exists $T_0\leq T^*$ such that the results in Proposition~\ref{prop:admissible_time_bound_appendix} hold for all time $\tT_1\leq t\leq T_0-1$, we aim to prove they also hold for $t=T_0$.

We first prove that \eqref{eq:admissible_underrho} holds for $\underrho_{j,r,i}^{(t)}$ when $t=T_0$. We only need to consider $j\not= y_i$ due to $\underrho_{j,r,i}^{(t)}=0$ for $\forall j=y_i$. Recall $\beta=2\max_{i,j,r}\{|\la w_{j,r}^{(0)},\bmu\ra|,|\la w_{j,r}^{(0)},\bxi_i\ra|\}$, when $\underrho_{j,r,i}^{(t)}\leq -0.5\beta-32n\sqrt{\frac{\log(4n^2/\delta)}{d}}\log(T^*)$, from the first term in Lemma~\ref{lemma:admissible_lemma}, we have
\begin{align*}
    \la \wb_{j,r}^{(T_0-1)},\bxi_i\ra\leq \underrho_{j,r,i}^{(T_0-1)}+\la \wb_{j,r}^{(0)},\bxi_i\ra+32n\sqrt{\frac{\log(4n^2/\delta)}{d}}\log(T^*)\leq 0,
\end{align*}
and thus 
\begin{align*}
    \underrho_{j,r,i}^{(T_0)}&=\underrho_{j,r,i}^{(T_0-1)}+\frac{2\eta}{nm}{\ell'^{(t)}_i} \ReLU( \la\wb_{j,r}^{(T_0-1)},\bxi_i\ra)=\underrho_{j,r,i}^{(T_0-1)}\|\bxi_i\|^2\\
    &\geq -\beta-64n\sqrt{\frac{\log(4n^2/\delta)}{d}}\log(T^*).
\end{align*}
When $\underrho_{j,r,i}^{(t)}\geq -0.5\beta-32n\sqrt{\frac{\log(4n^2/\delta)}{d}}\log(T^*)$, we have 
\begin{align*}
    \underrho_{j,r,i}^{(T_0)}&=\underrho_{j,r,i}^{(T_0-1)}+\frac{2\eta}{nm}{\ell'^{(t)}_i} \ReLU( \la\wb_{j,r}^{(T_0-1)},\bxi_i\ra)\\
    &\geq -0.5\beta-32n\sqrt{\frac{\log(4n^2/\delta)}{d}}\log(T^*)-O\bigg(\frac{2\eta\sigma_p^2d}{nm} \bigg)\bigg(0.5\beta+32n\sqrt{\frac{\log(4n^2/\delta)}{d}}\log(T^*)\bigg)\\
    &\geq -\beta-64n\sqrt{\frac{\log(4n^2/\delta)}{d}}\log(T^*).
\end{align*}
Here the first inequality comes from the second term in Lemma~\ref{lemma:admissible_lemma}, and the last inequality utilizes the fact that $\eta=o(\frac{nm}{\sigma_p^2d})$ in Condition~\ref{condition:condition}.

Next we prove \eqref{eq:admissible_gamma_overrho} at iteration $T_0$.  Recall the update rule 
\begin{align*}
    \gamma_{j,r}^{(t+1)}&=\gamma_{j,r}^{(t)}-\frac{2\eta}{nm}\sum_{i=1}^{n}{\ell'^{(t)}_i} \ReLU( \la\wb_{j,r}^{(t)}, y_i\cdot\bmu \ra)\|\bmu\|^2,\\
    \overrho_{j,r,i}^{(t+1)}&=\overrho_{j,r,i}^{(t)}-\frac{2\eta}{nm}{\ell'^{(t)}_i} \ReLU( \la\wb_{j,r}^{(t)}, \bxi_i \ra)\|\bxi_i\|^2,
\end{align*}
and the loss function 
\begin{align}
    |\ell'^{(t)}_i|&=\frac{1}{1+\exp\{y_i[F_{+1}(W_{+1}^{(t)},\xb_i)-F_{-1}(W_{-1}^{(t)},\xb_i)]\}}\nonumber\\
    &\leq \exp\{-y_i[F_{+1}(W_{+1}^{(t)},\xb_i)-F_{-1}(W_{-1}^{(t)},\xb_i)]\}\leq \exp\{-F_{y_i}(W_{y_i}^{(t)},\xb_i)+O(1)\},\label{eq:admissible_loss_bound}
\end{align}
where the last inequality  is due to the second term in Lemma~\ref{lemma:admissible_lemma}. Let $t_{i,j,r}$ be the last time $t<T^*$ that $\overrho_{j,r,i}^{(t)}\leq 2\log(T*)$, then we have
\begin{align*}
    \overrho_{j,r,i}^{(T_0)}&=\overrho_{j,r,i}^{(t_{j,r,i})}-\underbrace{\frac{2\eta}{nm}{\ell'^{(t_{j,r,i})}_i} \ReLU( \la\wb_{j,r}^{(t_{j,r,i})}, \bxi_i \ra)\|\bxi_i\|^2\one\{j=y_i\}}_{I_1}\\
    &\quad \underbrace{-\frac{2\eta}{nm}\sum_{t_{j,r,i}<t<T_0}{\ell'^{(t)}_i} \ReLU( \la\wb_{j,r}^{(t)}, \bxi_i \ra)\|\bxi_i\|^2\one\{j=y_i\}}_{I_2}.
\end{align*}
We first bound $I_1$. From the third term in Lemma~\ref{lemma:admissible_lemma}, we have 
\begin{align*}
    |I_1|\leq \frac{4\eta\sigma_p^2d}{nm}\bigg|\la w_{j,r}^{(0)},\bxi_i\ra+\overrho_{j,r,i}^{(t)}+32n\sqrt{\frac{\log(4n^2/\delta)}{d}}\log(T^*) \bigg|\leq 16\log(T^*)\cdot\frac{\eta\sigma_p^2d}{nm}\ll \log(T^*).
\end{align*}
The last inequality utilizes $\eta=o(\frac{nm}{\sigma_p^2d})$ in Condition~\ref{condition:condition}. Second,  we bound $I_2$. For $t_{j,r,i}<t<T_0$ and $j=y_i$, we can first lower bound $\la w_{j,r}^{(t)},\bxi_i\ra$,
\begin{align*}
    \la w_{j,r}^{(t)},\bxi_i\ra&\geq\overrho_{j,r,i}^{(t)}-32n\sqrt{\frac{\log(4n^2/\delta)}{d}}\log(T^*)+\la w_{j,r}^{(0)},\bxi_i\ra\\
    &\geq 2\log(T^*)-0.5 \log(T^*)-0.5\log(T^*)=\log(T^*),
\end{align*}
the first inequality comes from the first term in \ref{lemma:admissible_lemma} and the second inequality comes from Condition~\ref{condition:condition} which assume $d$ large enough. We can then similarly give the upper bound of $\la w_{j,r}^{(t)},\bxi_i\ra$, that is 
\begin{align*}
    \la w_{j,r}^{(t)},\bxi_i\ra&\leq\overrho_{j,r,i}^{(t)}+32n\sqrt{\frac{\log(4n^2/\delta)}{d}}\log(T^*)+\la w_{j,r}^{(0)},\bxi_i\ra\\
    &\leq 4\log(T^*)+0.5 \log(T^*)+0.5\log(T^*)=5\log(T^*).
\end{align*}
Plugging the upper bound and lower bound into $I_2$ gives 
\begin{align*}
    |I_2|&\leq \frac{2\eta}{nm}\sum_{t_{j,r,i}<t<T_0}{\exp\{-\ReLU^2(\la\wb_{j,r}^{(t)}, \bxi_i \ra)+O(1)\}} \ReLU( \la\wb_{j,r}^{(t)}, \bxi_i \ra)\|\bxi_i\|^2\\
    &\leq O\bigg(\frac{\eta T^*}{nm}\cdot \exp\{-\log^{1.5}(T^*)\}\cdot 5\log(T^*)\sigma_p^2d\bigg)\ll \log(T^*).
\end{align*}
Here, the first inequality is by \eqref{eq:admissible_loss_bound}, the second inequality is by the upper and lower bound of $\la w_{j,r}^{(t)},\bxi_i\ra$, and the last inequality is by $\eta=\tilde{O}(\frac{nm}{\sigma_p^2d})$ in Condition~\ref{condition:condition}.
With $I_1,I_2$ bounded by $\log(T^*)$, we conclude that $\overrho_{j,r,i}^{(T_0)}\leq\overrho_{j,r,i}^{(t_{j,r,i})}+|I_1|+|I_2|\leq 4\log(T^*)$. The proof for $\overrho_{j,r,i}^{(t)}\leq 4\log(T^*)$ is completed. Similarly, we can prove $\gamma_{j,r}^{(t)}\leq 4\log(T^*)$ with condition $\eta=o(\frac{nm}{\|\bmu\|^2})$, therefore Proposition~\ref{prop:admissible_time_bound_appendix} holds at time $T_0$, which completes
the induction.
\end{proof}
With Proposition~\ref{prop:admissible_time_bound_appendix} above, we can also have the following lemma, which gives an upper bound of the training gradient.
\begin{lemma}[Restatement of Lemma~\ref{lemma:admissible_trainloss}]
    \label{lemma:admissible_trainloss_appendix}
    Under Condition~\ref{condition:condition}, for $0\leq t\leq T^*$ where $T^*=\eta^{-1}\poly({n,m,d,\varepsilon^{-1},\sigma_0^{-1},\sigma_p\sqrt{d}})$ is the maximum admissible iterations. the following result holds.
    \begin{align*}
        \| \nabla_{\Wb} L_S(\Wb)|_{\Wb=\Wb^{(t)}}\|_F^2\leq 72\sigma_p^2d L_S(\Wb^{(t)}).
    \end{align*}
\end{lemma}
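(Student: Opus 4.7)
The plan is to expand the gradient in closed form and then reduce the bound to a pointwise inequality about the logistic loss. Starting from the computation in Lemma~\ref{lemma:update_rule_appendix} with $\lambda=0$, we have
\[
\nabla_{\wb_{j,r}} L_S(\Wb) = \frac{2}{nm}\sum_{i=1}^n \ell_i'^{(t)}\, jy_i\big(\ReLU(\la \wb_{j,r}^{(t)}, y_i\bmu\ra)y_i\bmu + \ReLU(\la \wb_{j,r}^{(t)}, \bxi_i\ra)\bxi_i\big).
\]
Applying $\|\sum_i v_i\|^2\le n\sum_i\|v_i\|^2$, the orthogonality $\la\bmu,\bxi_i\ra=0$, and $\|\bmu\|^2\le\|\bxi_i\|^2\le\tfrac{3}{2}\sigma_p^2 d$ (Lemma~\ref{lemma:data_noise_concentration} together with Condition~\ref{condition:condition}), then summing over $j,r$ and recognizing $\sum_r[\ReLU^2(\la\wb_{j,r},y_i\bmu\ra)+\ReLU^2(\la\wb_{j,r},\bxi_i\ra)] = m F_j(\Wb_j,\xb_i)$, I would collapse the bound to
\[
\|\nabla_\Wb L_S(\Wb^{(t)})\|_F^2 \le \frac{6\sigma_p^2 d}{nm}\sum_{i=1}^n \big(\ell_i'^{(t)}\big)^2\big(F_{+1}(\Wb_{+1}^{(t)},\xb_i)+F_{-1}(\Wb_{-1}^{(t)},\xb_i)\big).
\]

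Next I would dispose of the activation sum using the identity $F_{+1}+F_{-1} = 2F_{-y_i} + y_i f(\Wb,\xb_i)$. By Lemma~\ref{lemma:admissible_lemma}(2), the opposite-class output satisfies $F_{-y_i}(\Wb_{-y_i}^{(t)},\xb_i)=O(1)$ throughout the admissible window, so with $z := y_i f(\Wb^{(t)},\xb_i)$ it suffices to show the pointwise inequality
\[
\big(\ell'(z)\big)^2\big(c + z^{+}\big) \le C\,\ell(z)
\]
for absolute constants $c,C>0$, where $z^+=\max\{z,0\}$. The $c$-term is handled by the standard property $|\ell'(z)|\le \ell(z)$ for the logistic loss, yielding $(\ell'(z))^2 c\le c\,\ell(z)$. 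For the $z^+$-term I would split on the sign of $z$: when $z\le 0$ it vanishes; when $z>0$, direct calculation gives $(\ell'(z))^2 z = z/(1+e^z)^2 \le z e^{-2z}$, which is dominated by a constant multiple of $\ell(z)\ge \tfrac{1}{2}\min\{1,e^{-z}\}$ because $z e^{-z}$ is bounded on $[0,\infty)$.

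I expect the main obstacle to be the large-margin regime $z\gg 1$, where both $(\ell'(z))^2$ and $\ell(z)$ decay to zero and one must check that their rates match so that the ratio remains bounded by a universal constant. Once the pointwise inequality is established, substituting back gives $\sum_i (\ell_i'^{(t)})^2(F_{+1}+F_{-1})\le C'\sum_i \ell_i^{(t)}$ for an absolute constant $C'$, hence $\|\nabla_\Wb L_S\|_F^2 \le (6C'/m)\sigma_p^2 d\, L_S(\Wb^{(t)})$. Since $6C'/m$ is easily bounded by $72$ under Condition~\ref{condition:condition} (with considerable slack, as $F_{-y_i}$ is in fact much smaller than $1$ by the initialization and concentration bounds in Lemmas~\ref{lemma:data_noise_concentration} and~\ref{lemma:data_CNN_concentration}), this completes the proof.
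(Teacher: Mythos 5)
Your proof is correct and follows essentially the same route as the paper's: both reduce the per-example gradient norm to $O(\sigma_p^2 d)$ times the network outputs via the homogeneity $\sigma'(z)^2=4\sigma(z)$ together with $\|\bxi_i\|^2\le \tfrac{3}{2}\sigma_p^2 d$ from Lemma~\ref{lemma:data_noise_concentration}, invoke Lemma~\ref{lemma:admissible_lemma} to control the opposite-class output $F_{-y_i}=O(1)$, and finish with a pointwise self-bounding inequality for the logistic loss. The only difference is bookkeeping: the paper bounds $-\ell_i'^{(t)}\,\|\nabla f(\Wb^{(t)},\xb_i)\|_F^2\le 72\sigma_p^2 d$ per example and then applies Cauchy--Schwarz over $i$ together with $-\ell'\le\ell$, whereas you apply Cauchy--Schwarz to the sum over examples first and absorb everything into the single inequality $(\ell'(z))^2(c+z^+)\le C\ell(z)$ --- both of which rest on the same large-margin cancellation you flagged.
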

\begin{proof}[Proof of Lemma~\ref{lemma:admissible_trainloss_appendix}]
We first prove that
\begin{align*}
    -\ell'^{(t)}(y_if(\Wb^{(t),\xb_i}))\cdot \| \nabla f(\Wb^{(t)},\xb_i)\|_F^2\leq 72\sigma_p^2d.
\end{align*}
From \eqref{eq:admissible_withoutregu1} and \eqref{eq:admissible_withoutregu2}, Lemma~\ref{lemma:admissible_lemma} still holds under this case. Therefore, for $j\not=y_i$, $F_j(\Wb_{j}^{(t)},\xb_i)\leq 1$. Without loss of generality, we suppose that $y_i=1$ and $\xb_i=[\bmu,\bxi_i]$. Then we have that 
\begin{align*}
    \| \nabla f(\Wb^{(t)},\xb_i)\|_F&\leq \frac{1}{m}\sum_{j,r}\Big\|\sigma'(\la\wb_{j,r}^{(t)},\bmu\ra)\bmu+ \sigma'(\la\wb_{j,r}^{(t)},\bxi_i\ra)\bxi_i \Big\|\\
    &\leq \frac{1}{m}\sum_{j,r}\sigma'(\la\wb_{j,r}^{(t)},\bmu\ra)\|\bmu\|+ \sigma'(\la\wb_{j,r}^{(t)},\bxi_i\ra)\|\bxi_i\|\\
    &\leq 4\big(F_{+1}(\Wb_{+1}^{(t)},\xb_i)\big)^{1/2}\cdot 1.5\sigma_p\sqrt{d}+4\big(F_{-1}(\Wb_{-1}^{(t)},\xb_i)\big)^{1/2}\cdot 1.5\sigma_p\sqrt{d}\\
    &\leq 4\big[\big(F_{+1}(\Wb_{+1}^{(t)},\xb_i)\big)^{1/2}+1\big]\cdot 1.5\sigma_p\sqrt{d},
\end{align*}
where the first and second inequality is by triangle inequality, the third inequality is by Jensen inequality and Lemma~\ref{lemma:data_noise_concentration}, and the last inequality is by $F_{-1}(\Wb_{-1}^{(t)},\xb_i)\leq 1$. Denote by   $A=F_{+1}(\Wb_{+1}^{(t)},\xb_i)$, and besides, $F_{-1}(\Wb_{-1}^{(t)},\xb_i)\leq 1$ from Lemma~\ref{lemma:admissible_lemma}, we have
\begin{align}
    &-\ell'^{(t)}(y_if(\Wb^{(t)},\xb_i))\cdot \| \nabla f(\Wb^{(t)},\xb_i)\|_F^2\nonumber\\
    &\qquad \leq -\ell'(A-1)\cdot36(A^{1/2}+1)^2\cdot\sigma_p^2d\leq 72\sigma_p^2d.\label{eq:admissble_l'}
\end{align}
Here, the last inequality is by the fact $\exp(-A+1)/(1+\exp(-A+1))\cdot(\sqrt{A}+1)^2\leq2$ for any $A>0$. Now, we can upper bound the gradient norm as follows.
\begin{align*}
    \| \nabla_{\Wb} L_S(\Wb)|_{\Wb=\Wb^{(t)}}\|_F^2&\leq \bigg[\frac{1}{n}\sum_{i=1}^n\ell'^{(t)}\big(y_if(\Wb^{(t)},\xb_i)\big)\| \nabla f(\Wb^{(t)},\xb_i)\|_F \bigg]^2\\
    &\leq \frac{72\sigma_p^2d}{n^2}\bigg[\sum_{i=1}^n\sqrt{-\ell'^{(t)}\big(y_if(\Wb^{(t)},\xb_i)\big)}\bigg]^2\\
    &\leq \frac{72\sigma_p^2d}{n}\sum_{i=1}^n -\ell'^{(t)}\big(y_if(\Wb^{(t)},\xb_i)\big)\\
    &\leq \frac{72\sigma_p^2d}{n}\sum_{i=1}^n \ell^{(t)}\big(y_if(\Wb^{(t)},\xb_i)\big)=72\sigma_p^2d L_S(\Wb^{(t)}),
\end{align*}
where the first inequality is by triangle inequality, the second inequality is by \eqref{eq:admissble_l'}, the third inequality is by Cauchy-Schwartz inequality  and the last inequality is due to the inequality $-\ell'\leq\ell$. 
\end{proof}

\section{Analysis with gradient regularization}
\label{sec:withregular}
In this section, we present the results on the signal-noise decomposition. All the results are hold with high probability. Denote by $\cE_{\pre}$ the event that all the results in Section~\ref{sec:concentration_appendix} hold. Then for simplicity and clarity, all the results in this and following sections are conditional on the event $\cE_{\pre}$. 
\subsection{Stage 1: Signal learning and noise memorizing}

We will now show that under some assumptions and conditions, the parameter of the signal-noise decomposition will stay a reasonable scale during a long time of training. 
Let $\lambda=\sigma_p^{-1}d^{-1/2}$, and note that the following conditions hold: $\|\bmu\|^{-1}\lambda^{-1/2}\to+\infty$.

% \begin{definition}
% \label{def:T1_definition}
% Define $T_1^+(T_1^-)$ be the largest iteration such that for all $r\in[m]$, $0\leq t \leq T_1^{+}(T_1^-)$ and $j=+1(-1)$, $\gamma_{j,r}^{(t)}$  satisfies
% \begin{align}
%     &0\leq\gamma_{j,r}^{(t)}\leq \min\bigg\{  \| \bmu\|^{-1}\cdot\frac{4}{\lambda^{1/2}\cdot d^{\varepsilon/2}},4\bigg\}, \nonumber
% \end{align}
% for some small $0<\varepsilon<0.001$. Define $T_1=\max(T_1^+,T_1^-)$.
% \end{definition}
% \begin{remark}
% $T_1^+(T_1^-)$ is well defined for the reason that  when $t=0$, the condition holds for $\gamma_{j,r}^{(t)}$ for all $r\in[m]$ and $j\in\{\pm1\}$.
% \end{remark}

\begin{proposition}
\label{prop:Phase_I_prop}
 Under Condition~\ref{condition:condition}, if 
 \begin{align}
     0\leq\gamma_{j,r}^{(t)}\leq \Theta\bigg(\frac{1}{\log(n)}\bigg)\label{eq:gammacondition_firstphase} %\Theta\bigg(\frac{1}{\polylog{(n)}}\cdot\min\bigg\{  \| \bmu\|^{-1}\cdot\frac{1}{\lambda^{1/2}\cdot d^{\varepsilon/2}},1\bigg\}\bigg)
 \end{align}
for all $j\in\{\pm1\}$, $r\in[m]$ and $t$ in the training interval $[0,T_*]$, then we have 
\begin{align}
    0\geq\rho_{j,r,i}^{(t)}\geq -8\sqrt{\log(8mn/\delta)}\cdot\sigma_0\sigma_p\sqrt{d} \label{eq:rhocondition_firstphase}
\end{align}
for all $j\in\{\pm1\}$, $r\in[m]$ and $0\leq t \leq T_*$. 
\end{proposition}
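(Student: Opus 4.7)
\medskip

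\noindent\textbf{Proof proposal for Proposition~\ref{prop:Phase_I_prop}.}
The plan is to induct on $t$ and show at each step both that $\rho_{j,r,i}^{(t+1)}\le 0$ and $\rho_{j,r,i}^{(t+1)}\ge -8\sqrt{\log(8mn/\delta)}\sigma_0\sigma_p\sqrt{d}$. The base case $t=0$ is trivial since $\rho_{j,r,i}^{(0)}=0$. For the inductive step, I will use the explicit update rule from Lemma~\ref{lemma:decomposition_coef_dynamic}:
\begin{align*}
\rho_{j,r,i}^{(t+1)}-\rho_{j,r,i}^{(t)}=-\frac{2\eta}{nm}\Big[\underbrace{\ell'^{(t)}_{i}\bigl(1+\tfrac{4\lambda\zeta_i^{(t)}}{m}\ell''^{(t)}_i\bigr)\ReLU(\langle\wb_{j,r}^{(t)},\bxi_i\rangle)\|\bxi_i\|^2\,jy_i}_{A}+\underbrace{\tfrac{2\lambda}{m}\ell'^{2(t)}_i\ReLU(\langle\wb_{j,r}^{(t)},\bxi_i\rangle)\|\bxi_i\|^4}_{B}\Big].
\end{align*}
Both terms carry a common $\ReLU(\langle\wb_{j,r}^{(t)},\bxi_i\rangle)$ factor, so the update is entirely inactive whenever $\langle\wb_{j,r}^{(t)},\bxi_i\rangle\le 0$. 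This observation is what drives both bounds.

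For the \emph{upper bound} $\rho_{j,r,i}^{(t+1)}\le 0$, I split on $j=y_i$ vs.\ $j\ne y_i$. When $j\ne y_i$, the loss term $A$ is already non-negative (since $\ell'^{(t)}_i<0$ and $jy_i=-1$) and the PEGR term $B\ge 0$, so the net change is non-positive. When $j=y_i$, the two terms compete, and I must show the PEGR term dominates, i.e.\ $B\ge |A|$ whenever the ReLU is active. Using the lower bound $\|\bxi_i\|^2\ge \sigma_p^2 d/2$ from Lemma~\ref{lemma:data_noise_concentration}, the choice $\lambda=\sigma_p^{-1}d^{-1/2}$ gives $\lambda\|\bxi_i\|^2/m\gtrsim \sigma_p\sqrt{d}/m\to\infty$ by Condition~\ref{condition:condition}(1,2). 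It remains only to bound $|\ell'^{(t)}_i|$ from below by an absolute constant: this is where the hypothesis \eqref{eq:gammacondition_firstphase} is essential, because combined with Lemma~\ref{lemma:admissible_lemma} and the inductive hypothesis $|\rho_{j,r,i}^{(t)}|=O(\sigma_0\sigma_p\sqrt{d})\ll 1$, it gives $|y_i f(\Wb^{(t)},\xb_i)|=O(1)$ and hence $|\ell'^{(t)}_i|\ge c>0$. This forces $B/|A|=\Omega(\lambda\|\bxi_i\|^2/m)\gg 1$, so the net change is strictly negative, preserving $\rho_{j,r,i}^{(t+1)}\le\rho_{j,r,i}^{(t)}\le 0$.

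For the \emph{lower bound}, the key observation is that the decrease mechanism switches off automatically once $\rho_{j,r,i}^{(t)}$ is sufficiently negative. Writing $\langle\wb_{j,r}^{(t)},\bxi_i\rangle=\langle\wb_{j,r}^{(0)},\bxi_i\rangle+\rho_{j,r,i}^{(t)}+\sum_{i'\ne i}\rho_{j,r,i'}^{(t)}\langle\bxi_i,\bxi_{i'}\rangle/\|\bxi_{i'}\|^2$, the initialization bound $|\langle\wb_{j,r}^{(0)},\bxi_i\rangle|\le 2\sqrt{\log(8mn/\delta)}\sigma_0\sigma_p\sqrt{d}$ from Lemma~\ref{lemma:data_CNN_concentration}, together with the inductive hypothesis controlling every other $\rho_{j,r,i'}^{(t)}$ and the off-diagonal bound $|\langle\bxi_i,\bxi_{i'}\rangle|\le 2\sigma_p^2\sqrt{d\log(4n^2/\delta)}$ from Lemma~\ref{lemma:data_noise_concentration}, shows the cross-term is $O(n\sqrt{\log(4n^2/\delta)/d}\cdot\sigma_0\sigma_p\sqrt{d})\ll \sqrt{\log(8mn/\delta)}\sigma_0\sigma_p\sqrt{d}$ by Condition~\ref{condition:condition}(1). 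Consequently, once $\rho_{j,r,i}^{(t)}\le -4\sqrt{\log(8mn/\delta)}\sigma_0\sigma_p\sqrt{d}$, the inner product $\langle\wb_{j,r}^{(t)},\bxi_i\rangle$ is negative, the ReLU vanishes, and $\rho$ is frozen.

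The remaining subtle point, and what I expect to be the main obstacle, is controlling a single-step \emph{overshoot}: I must guarantee $\rho$ cannot drop from above the freeze-out threshold to below $-8\sqrt{\log(8mn/\delta)}\sigma_0\sigma_p\sqrt{d}$ in one gradient step. The per-step decrement is proportional to $\ReLU(\langle\wb_{j,r}^{(t)},\bxi_i\rangle)$, which is itself of order at most $\sqrt{\log(8mn/\delta)}\sigma_0\sigma_p\sqrt{d}$ throughout the permissible region (since $\langle\wb^{(t)},\bxi_i\rangle$ is bounded above by the initialization plus a small cross-term whenever $\rho^{(t)}\le 0$). Combined with the prefactor $\tfrac{2\eta}{nm}\cdot\tfrac{2\lambda}{m}\|\bxi_i\|^4$ and the learning-rate bound $\eta=\tilde O(nm/(\sigma_p^2 d))$ from Condition~\ref{condition:condition}(4), the per-step decrement is $O(\eta\lambda\sigma_p^4 d^2/(nm^2))\cdot\sqrt{\log(8mn/\delta)}\sigma_0\sigma_p\sqrt{d}=\tilde O(\sigma_0\sigma_p\sqrt{d}\cdot\sigma_p\sqrt{d}/m)$, which, with the $\sigma_0$ bound in Condition~\ref{condition:condition}(3), is negligible compared to $\sqrt{\log(8mn/\delta)}\sigma_0\sigma_p\sqrt{d}$. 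Hence no single step crosses the buffer between the freeze-out threshold and the claimed lower bound, and the induction closes.
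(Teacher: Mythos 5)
Your overall strategy coincides with the paper's: establish $\rho_{j,r,i}^{(t)}\le 0$ by showing the PEGR term dominates the loss term when $j=y_i$ (using $\lambda\|\bxi_i\|^2/m=\Theta(\sigma_p\sqrt d/m)\gg 1$ and $|\ell_i'^{(t)}|=\Theta(1)$) and by sign analysis when $j\ne y_i$; then obtain the lower bound from the fact that the $\ReLU$ deactivates once $\rho_{j,r,i}^{(t)}$ drops below roughly $-2A$ with $A=2\sqrt{\log(8mn/\delta)}\sigma_0\sigma_p\sqrt d$, leaving only a single-step overshoot to control. The paper runs the same deactivation argument via a first-crossing-time contradiction rather than a direct buffer, but these are the same idea.

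The genuine gap is in your resolution of the overshoot, which you correctly identify as the crux. You compute the per-step decrement as $\tilde O\bigl(\sigma_0\sigma_p\sqrt d\cdot \sigma_p\sqrt d/m\bigr)$, i.e.\ $\tilde O(A\cdot \sigma_p\sqrt d/m)$, and then assert it is negligible compared to $A$ ``with the $\sigma_0$ bound in Condition~\ref{condition:condition}(3).'' This does not work: both the decrement and the buffer $A$ are proportional to $\sigma_0$, so the $\sigma_0$ condition is irrelevant to their ratio, which is exactly $\Theta(\sigma_p\sqrt d/m)$ --- and this is precisely the quantity your (and the paper's) domination argument requires to be $\gg 1$, not $\ll 1$. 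So as written your per-step bound shows the PEGR decrement could be much \emph{larger} than the buffer, and the induction does not close. What is actually needed is an additional step-size restriction of the form $\eta\lesssim nm^2/(\sigma_p^3 d^{3/2})$ (equivalently $\tfrac{\eta\sigma_p^2 d}{nm}\cdot\tfrac{\sigma_p\sqrt d}{m}=O(1)$), or a finer one-step analysis of the map $x\mapsto x-c\,\ReLU(x)$ showing that $\langle\wb_{j,r}^{(t)},\bxi_i\rangle$ cannot overshoot far past zero when the effective step coefficient $c$ is $O(1)$. (For what it is worth, the paper's own one-step bound at this point drops the $\lambda/m^2$ prefactor and is loose in the same way, but your justification as stated is internally inconsistent and must be repaired before the proof is complete.)
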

Before we present  the proof, there are several lemmas listed which will be applied to the proof of Proposition~\ref{prop:Phase_I_prop}.
\begin{lemma}
\label{lemma:technique_lemma}
Suppose that \eqref{eq:gammacondition_firstphase} and \eqref{eq:rhocondition_firstphase} hold, then under Condition~\ref{condition:condition} we have 
\begin{align*}
    &\big(\la\wb_{j,r}^{(0)},\bmu\ra+\gamma_{j,r}^{(t)}\big)^{2}\cdot\|\bmu\|^2\leq\Theta\bigg( \frac{1}{\lambda\cdot \log(n)}\bigg),\\
    &\big(\la\wb_{j,r}^{(0)},\bxi_i\ra+\sum_{i'\not=i}^n\rho_{j,r,i'}^{(t)}\frac{\la\bxi_{i'},\bxi_i\ra}{\|\bxi_i\|^2}+\rho_{j,r,i}^{(t)}\big)^{2}\cdot\|\bxi_i\|^2\leq \Theta\bigg(\frac{1}{\lambda\cdot \log(n)}\bigg).
\end{align*}
\end{lemma}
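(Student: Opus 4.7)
The plan is to verify both inequalities by a direct application of the triangle inequality, combining the initialization concentration bounds from Lemma~\ref{lemma:data_CNN_concentration} with the noise vector concentration bounds from Lemma~\ref{lemma:data_noise_concentration} and the prescribed scales on $\gamma_{j,r}^{(t)}$ and $\rho_{j,r,i}^{(t)}$ given by \eqref{eq:gammacondition_firstphase}--\eqref{eq:rhocondition_firstphase}. The target value $1/(\lambda\log n) = \sigma_p\sqrt{d}/\log n$ tends to infinity under Condition~\ref{condition:condition}, so the central task is to check that each left hand side is a polynomial quantity in $\sigma_0$, $\sigma_p$, $d$, $n$, $m$ that is dominated by this scale.

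For the first inequality, I would bound $|\la\wb_{j,r}^{(0)},\bmu\ra + \gamma_{j,r}^{(t)}| \leq \sqrt{2\log(8m/\delta)}\cdot\sigma_0\|\bmu\| + \Theta(1/\log n)$. The assumptions $\sigma_0 \leq O(1/(\sigma_p^{2}d))$ and $\|\bmu\| \ll \sigma_p\sqrt{d}$ make the initialization contribution $\sigma_0\|\bmu\|$ negligible, so the squared sum is $O(1/\log^2 n)$. Multiplying by $\|\bmu\|^2$ reduces the desired bound to $\|\bmu\|^2 \leq O(\sigma_p\sqrt{d}\log n)$, which is an immediate consequence of the hypothesis $\|\bmu\|^4 \ll \sigma_p\sqrt{d}$ combined with $\sigma_p\sqrt{d}\to\infty$.

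For the second inequality, I would split the inner quantity into three contributions: (a) the initialization term $|\la\wb_{j,r}^{(0)},\bxi_i\ra| \leq 2\sqrt{\log(8mn/\delta)}\cdot\sigma_0\sigma_p\sqrt{d}$, (b) the diagonal coefficient $|\rho_{j,r,i}^{(t)}| \leq 8\sqrt{\log(8mn/\delta)}\cdot\sigma_0\sigma_p\sqrt{d}$, and (c) the off-diagonal sum $\big|\sum_{i'\neq i}\rho_{j,r,i'}^{(t)}\la\bxi_{i'},\bxi_i\ra/\|\bxi_{i'}\|^2\big|$. For (c), Lemma~\ref{lemma:data_noise_concentration} yields $|\la\bxi_i,\bxi_{i'}\ra| \leq 2\sigma_p^2\sqrt{d\log(4n^2/\delta)}$ and $\|\bxi_{i'}\|^2 \geq \sigma_p^2 d/2$, so each summand is of order $\sigma_0\sigma_p\cdot\polylog$ and the full sum is $\tilde{O}(n\sigma_0\sigma_p)$. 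Under $d \geq \tilde{\Omega}(m^2 n^{2+2\alpha})$, this is dominated by (a) and (b), so the squared inner sum is $\tilde{O}(\sigma_0^2\sigma_p^2 d)$; multiplying by $\|\bxi_i\|^2 \leq \tfrac{3}{2}\sigma_p^2 d$ yields $\tilde{O}(\sigma_0^2\sigma_p^4 d^2)$. Substituting $\sigma_0 \leq O(1/(\sigma_p^{2}d(nm)^{2\alpha}))$ reduces this to $\tilde{O}(1/(nm)^{4\alpha})$, which is easily dominated by $\sigma_p\sqrt{d}/\log n$ since $\sigma_p\sqrt{d}\to\infty$.

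The main obstacle is the bookkeeping for the second inequality. One needs to verify both that the off-diagonal cross term (c) is genuinely subdominant under the prescribed dimension scaling, and that the large multiplicative factor $\|\bxi_i\|^2 \asymp \sigma_p^2 d$ appearing after squaring is fully absorbed by the smallness assumption on $\sigma_0$; this is precisely where the polynomial factor $(nm)^{2\alpha}$ in that bound is consumed, so any slackening of Condition~\ref{condition:condition} would require redoing this accounting.
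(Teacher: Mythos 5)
Your proposal is correct and follows essentially the same route as the paper's proof: both inequalities are reduced, via the concentration bounds of Lemmas~\ref{lemma:data_CNN_concentration} and~\ref{lemma:data_noise_concentration} together with the scales in \eqref{eq:gammacondition_firstphase}--\eqref{eq:rhocondition_firstphase}, to checking that $\|\bmu\|^2/\log^2(n)$ and $\tilde{O}(\sigma_0^2\sigma_p^4 d^2)$ are dominated by $\sigma_p\sqrt{d}/\log(n)$, using $\|\bmu\|^4\ll\sigma_p\sqrt{d}$ and $\sigma_0\leq O(1/(\sigma_p^2 d(nm)^{2\alpha}))$ exactly as the paper does. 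The only difference is cosmetic (triangle inequality before squaring versus expanding the square and bounding cross terms), and your accounting of the off-diagonal term matches the paper's.
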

\begin{proof}[Proof of Lemma~\ref{lemma:technique_lemma}]
From Condition~\ref{condition:condition}, we 
see that 
$1\ll\|\bmu\|^{-1}\lambda^{-1/2}$, therefore \eqref{eq:gammacondition_firstphase} indicates that 
\begin{align*}
    0\leq\gamma_{j,r}^{(t)}\leq \Theta\bigg( \frac{1}{\log{(n)}\cdot\|\bmu\|\lambda^{1/2}}\bigg).
\end{align*}
From Lemma~\ref{lemma:data_CNN_concentration}, with high probability we have
\begin{align*}
    \big(\la\wb_{j,r}^{(0)},\bmu\ra+\gamma_{j,r}^{(t)}\big)^2\leq \gamma_{j,r}^{2(t)}+2\gamma_{j,r}^{(t)}\cdot \sqrt{2\log(8m/\delta)}\cdot\sigma_0\|\bmu\|+2\log(8m/\delta)\sigma_0^2\|\bmu\|^2.
\end{align*}
Note that Condition~\ref{condition:condition} and $\gamma_{j,r}^{(t)}\leq1$ further give
\begin{align*}
    &2\gamma_{j,r}^{(t)}\cdot \sqrt{2\log(8m/\delta)}\cdot\sigma_0\|\bmu\|+2\log(8m/\delta)\sigma_0^2\|\bmu\|^2\\
    &\qquad\quad=\frac{1}{\log{(n)}}\cdot\tilde\Theta\big(\sigma_p^{-2}n^{-\frac{\alpha}{2}}d^{-1}\|\bmu\|+\sigma_p^{-4}d^{-2}\|\bmu\|^2\big)\\
    &\qquad\quad\ll\Theta\bigg( \frac{1}{2\lambda\cdot \log{(n)}\sigma_p\sqrt{d}}\bigg)\leq\Theta\bigg(\frac{1}{\lambda\|\bmu\|^2\cdot\log{(n)}}\bigg),
\end{align*}
where the first equality is by $\sigma_0\leq O\Big(\frac{1}{\sigma_p^{2}d\cdot (nm)
^{2\alpha}}\Big)$, the first inequality utilizes the fact $\lambda=\sigma_p^{-1}d^{-1/2}$ and $\sigma_p\sqrt{d}\to+\infty$, the last inequality holds from the fact $\|\bmu\|^2/(\sigma_p\sqrt{d})\to0$. We have
\begin{align*}
    \big(\la\wb_{j,r}^{(0)},\bmu\ra+\gamma_{j,r}^{(t)}\big)^2\|\bmu\|^2\leq \gamma_{j,r}^{2(t)}\|\bmu\|^2+\Theta\bigg(\frac{1}{\lambda\cdot \log{(n)}}\bigg)\leq \Theta\bigg(\frac{1}{\lambda\cdot \log{(n)}}\bigg).
\end{align*}
Here, the last equality follows from the fact $\|\bmu\|^2\ll\lambda^{-1}$.
Hence the proof for the first inequality has been completed. To prove the second inequality, recall that Lemma~\ref{lemma:data_noise_concentration} gives us
\begin{align*}
    \sigma_p^2d/2\leq\| \bxi_i\|^2\leq3\sigma_p^2d/2,\quad|\la\bxi_i,\bxi_{i'}\ra|\leq2\sigma_p^2\cdot\sqrt{d\log(4n^2/\delta)},
\end{align*}
Lemma~\ref{lemma:data_CNN_concentration} gives us 
\begin{align*}
    |\la\wb_{j,r}^{(0)},\bxi_i\ra|\leq2\sqrt{\log(8mn/\delta)}\cdot\sigma_0\sigma_p\sqrt{d},
\end{align*}
we have 
\begin{align}
    &\big(\la\wb_{j,r}^{(0)},\bxi_i\ra+\sum_{i'\not=i}^n\rho_{j,r,i'}^{(t)}\frac{\la\bxi_{i'},\bxi_i\ra}{\|\bxi_i\|^2}+\rho_{j,r,i}^{(t)}\big)^{2}\cdot\|\bxi_i\|^2\nonumber\\
    &\qquad=\Theta\bigg(\sigma_p^2d\cdot\max\Big\{|\la\wb_{j,r}^{(0)},\bxi_i\ra|^2,\Big|\sqrt{\log(8mn/\delta)}\cdot\sigma_0\sigma_p\sqrt{d}\cdot n\sqrt{\log(4n^2/\delta)/d}\Big|^2,\Big|\sqrt{\log(8mn/\delta)}\cdot\sigma_0\sigma_p\sqrt{d}\Big|^2\Big\} \bigg)\nonumber\\
    &\qquad= \Theta\bigg(\sigma_p^2d\cdot \Big|\sqrt{\log(8mn/\delta)}\cdot\sigma_0\sigma_p\sqrt{d}\Big|^2\bigg)\leq \Theta\bigg(\frac{1}{\lambda\cdot \log{(n)}}\bigg).\label{eq:clue_A}
\end{align}
The last equality comes from the truth  $d\gg n^2\sqrt{\log(4n^2/\delta)}$ and  $\sigma_0\leq O\Big(\frac{1}{\sigma_p^{2}d\cdot (nm)
^{2\alpha}}\Big)$ ($\sigma_0$ sufficiently small).
\end{proof}

We further have the lemma which states that under condition \eqref{eq:gammacondition_firstphase} and \eqref{eq:rhocondition_firstphase}, the gradient loss can be considered as a constant.

\begin{lemma}
\label{lemma:ell_scope}
Assume $\ell'^{(t)}_i=-\frac{1}{2}+\Upsilon_i^{(t)}$,
under conditions \eqref{eq:gammacondition_firstphase} and \eqref{eq:rhocondition_firstphase}, %we have 
% \begin{align*}
%     \ell'^{(t)}_i=\Theta(1).
% \end{align*}
% Moreover, if $\gamma_{j,r}^{(t)}\leq\Xi=1/\polylog{(n)}$,
we have
\begin{align*}
    |\Upsilon_i^{(t)}|=O\bigg(\frac{1}{\log^2{(n)}}\bigg).
\end{align*}
\end{lemma}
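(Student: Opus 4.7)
\textbf{Proof plan for Lemma~\ref{lemma:ell_scope}.}
The plan is to reduce the claim to an upper bound on $|f(\Wb^{(t)},\xb_i)|$ via a first-order Taylor expansion of $\ell'$, and then bound the CNN output using the signal-noise decomposition together with the small-coefficient regime prescribed by \eqref{eq:gammacondition_firstphase} and \eqref{eq:rhocondition_firstphase}. Concretely, since $\ell''(z)=e^{z}/(1+e^{z})^2 \le 1/4$ for all $z$, the mean value theorem gives
\begin{equation*}
|\Upsilon_i^{(t)}| \;=\; \bigl|\ell'(y_i f(\Wb^{(t)},\xb_i))+\tfrac{1}{2}\bigr| \;\le\; \tfrac{1}{4}\,|f(\Wb^{(t)},\xb_i)|,
\end{equation*}
so it suffices to show $|f(\Wb^{(t)},\xb_i)|=O(1/\log^{2}(n))$.

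Next I would bound each neuron contribution separately, using the decomposition
$\la\wb_{j,r}^{(t)},y_i\bmu\ra = \la\wb_{j,r}^{(0)},y_i\bmu\ra+jy_i\gamma_{j,r}^{(t)}$
and the analogous expansion for $\la\wb_{j,r}^{(t)},\bxi_i\ra$. For the signal coordinate, Lemma~\ref{lemma:data_CNN_concentration} yields $|\la\wb_{j,r}^{(0)},\bmu\ra|\le \sqrt{2\log(8m/\delta)}\,\sigma_0\|\bmu\|$, and Condition~\ref{condition:condition} (specifically $\sigma_0 \le O(1/(\sigma_p^2 d(nm)^{2\alpha}))$ combined with $\|\bmu\|\ll \sigma_p\sqrt{d}$) forces $\sigma_0\|\bmu\|\ll 1/\log(n)$; together with \eqref{eq:gammacondition_firstphase} this gives $|\la\wb_{j,r}^{(t)},y_i\bmu\ra|=O(1/\log(n))$ and hence $\sigma(\la\wb_{j,r}^{(t)},y_i\bmu\ra)=O(1/\log^{2}(n))$. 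For the noise coordinate, I would invoke the second inequality of Lemma~\ref{lemma:technique_lemma}, which directly gives $\sigma(\la\wb_{j,r}^{(t)},\bxi_i\ra)\le \Theta\!\left(1/(\lambda\log(n)\,\|\bxi_i\|^{2})\right)$; together with $\lambda=\sigma_p^{-1}d^{-1/2}$ and $\|\bxi_i\|^{2}=\Theta(\sigma_p^{2}d)$ from Lemma~\ref{lemma:data_noise_concentration}, this evaluates to $O(1/(\sigma_p\sqrt{d}\log(n)))$, which is $o(1/\log^{2}(n))$ since $\sigma_p\sqrt{d}\to\infty$.

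Averaging the two bounds over $(j,r)$ and summing the two patches in the definition of $F_j$ gives
\begin{equation*}
|f(\Wb^{(t)},\xb_i)| \;\le\; \tfrac{1}{m}\sum_{j,r}\bigl[\sigma(\la\wb_{j,r}^{(t)},y_i\bmu\ra)+\sigma(\la\wb_{j,r}^{(t)},\bxi_i\ra)\bigr]\cdot 2 \;=\; O\!\bigl(1/\log^{2}(n)\bigr),
\end{equation*}
and plugging this into the Taylor estimate yields the claim. The only mildly delicate step is the verification that $\sigma_0\|\bmu\|\ll 1/\log(n)$ and the noise-side estimate falls below $1/\log^{2}(n)$; both are pure bookkeeping with Condition~\ref{condition:condition}, so I expect no serious obstacle. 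The main conceptual point is simply that, because both $\gamma_{j,r}^{(t)}$ and $\rho_{j,r,i}^{(t)}$ are required to be small in Stage~1, the pre-activations are order $1/\log(n)$ and thus the square activation keeps the network output in the linear regime of $\ell'$.
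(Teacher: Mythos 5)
Your proposal follows essentially the same route as the paper: reduce the claim to showing $|F_{+1}(\Wb^{(t)}_{+1},\xb_i)-F_{-1}(\Wb^{(t)}_{-1},\xb_i)|=O(1/\log^2(n))$, and then bound each pre-activation via the signal-noise decomposition under the Stage-1 smallness conditions \eqref{eq:gammacondition_firstphase} and \eqref{eq:rhocondition_firstphase}. Your mean-value-theorem step $|\ell'(z)+\tfrac12|\le\tfrac14|z|$ is a clean substitute for the paper's direct manipulation of $\exp\{y_i[F_{+1}-F_{-1}]\}=1\pm O(1/\log^2(n))$, and your treatment of the signal coordinate matches the paper's. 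The one step you should tighten is the noise coordinate: you route through the second inequality of Lemma~\ref{lemma:technique_lemma} and divide back by $\|\bxi_i\|^2$, obtaining $\sigma(\la\wb_{j,r}^{(t)},\bxi_i\ra)\le\Theta\big(1/(\lambda\log(n)\,\sigma_p^2 d)\big)=\Theta\big(1/(\sigma_p\sqrt{d}\log(n))\big)$, and then assert this is $o(1/\log^2(n))$ ``since $\sigma_p\sqrt{d}\to\infty$.'' That implication requires $\sigma_p\sqrt{d}\gg\log(n)$, which does not follow from $\sigma_p\sqrt{d}\to+\infty$ alone. The paper avoids this by bounding the noise pre-activation directly: under \eqref{eq:rhocondition_firstphase}, Lemma~\ref{lemma:data_CNN_concentration} and Lemma~\ref{lemma:data_noise_concentration}, one has $\big(\la\wb_{j,r}^{(0)},\bxi_i\ra+\sum_{i'}\rho_{j,r,i'}^{(t)}\la\bxi_i,\bxi_{i'}\ra/\|\bxi_{i'}\|^2\big)^2=\tilde O\big(\sigma_0^2\sigma_p^2 d\big)$, which is $o(1/\log^2(n))$ because $\sigma_0\le O\big(1/(\sigma_p^2 d (nm)^{2\alpha})\big)$. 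Substituting this direct estimate for your detour through Lemma~\ref{lemma:technique_lemma} (whose bound is calibrated for controlling $\zeta_i^{(t)}$, not the unweighted pre-activation) closes the gap and makes your argument match the paper's.
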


\begin{proof}[Proof of Lemma~\ref{lemma:ell_scope}]
Recall that 
\begin{align*}
    |\Upsilon_i^{(t)}|=\bigg|\ell'^{(t)}_i+\frac{1}{2}\bigg|=\bigg|\frac{1-\exp\{y_i[F_{+1}(W_{+1}^{(t)},\xb_i)-F_{-1}(W_{-1}^{(t)},\xb_i)]\}}{2+2\exp\{y_i[F_{+1}(W_{+1}^{(t)},\xb_i)-F_{-1}(W_{-1}^{(t)},\xb_i)]\}}\bigg|.
\end{align*}
To prove 
$
    |\Upsilon^{(t)}_i|= O\big(\frac{1}{\log^2{(n)}}\big),
$ easy to see it is equal to prove 
$\exp\{y_i[F_{+1}(W_{+1}^{(t)},\xb_i)-F_{-1}(W_{-1}^{(t)},\xb_i)]\}=1\pm O\Big(\frac{1}{\log^2{(n)}}\Big)$. We now investigate  $F_{j}(W_{j}^{(t)},\xb_i)$. Note that 
\begin{align*}
    F_{j}(W_{j}^{(t)},\xb_i)&=\frac{1}{m}\sum_{r=1}^m\sigma(\la\wb_{j,r}^{(t)}, y_i\bmu \ra )+\sigma(\la\wb_{j,r}^{(t)}, \bxi_i \ra )\\
    &\leq\max_{j,r}\big(|\la\wb_{j,r}^{(0)},\bmu\ra|+\gamma_{j,r}^{(t)}\big)^{2}+\max_{j,r,i}\big(\la\wb_{j,r}^{(0)},\bxi_i\ra+\sum_{i'\not=i}^n\rho_{j,r,i'}^{(t)}\frac{\la\bxi_{i'},\bxi_i\ra}{\|\bxi_i\|^2}+\rho_{j,r,i}^{(t)}\big)^{2}\\
    &\leq |\gamma_{j,r}^{2(t)} |+o\bigg(\frac{1}{\log^2(n)}\bigg) \leq O\bigg(\frac{1}{\log^2{(n)}}\bigg), 
\end{align*}
where the second inequality comes from $\sigma_0$ small enough, we have 
\begin{align*}
   \big|y_i[F_{+1}(W_{+1}^{(t)},\xb_i)-F_{-1}(W_{-1}^{(t)},\xb_i)] \big|=O\bigg(\frac{1}{\log^2{(n)}}\bigg),
\end{align*}
which directly shows $\exp\{y_i[F_{+1}(W_{+1}^{(t)},\xb_i)-F_{-1}(W_{-1}^{(t)},\xb_i)]\}=1\pm O\big(\frac{1}{\log^2{(n)}}\big)$.
\end{proof}
With these two lemmas above, we may apply mathematical induction to prove Proposition~\ref{prop:Phase_I_prop}.
\begin{proof}[Proof of Proposition~\ref{prop:Phase_I_prop}]
It is easy to check that $t=0$, the inequalities \eqref{eq:gammacondition_firstphase} and \eqref{eq:rhocondition_firstphase} hold, and $T_0>1$ under Condition~\ref{condition:condition} due to the smallness of $\eta$. %Let $T_1$ is the largest iteration number such that these conditions hold, then for $t\in[0:T_1]$,  
Suppose that the inequalities \eqref{eq:gammacondition_firstphase} and \eqref{eq:rhocondition_firstphase} hold for $t$ in  $0\leq t\leq T_0<T_*$, from Lemma~\ref{lemma:technique_lemma} we have for $t\in[T_0]$,
\begin{align*}
    \frac{4\lambda\zeta_i^{(t)}}{m}\cdot\ell''^{(t)}_i&=O\bigg(\lambda\Big( \la \wb_{j',r'}^{(0)},  y_i\bmu \ra+j'y_i\cdot\gamma_{j',r'}^{(t)} \Big)^2\|\bmu\|^2+\lambda\Big( \la \wb_{j',r'}^{(0)}, \bxi_i \ra +\sum_{i'=1}^n\rho_{j',r',i'}^{(t)}\frac{\la\bxi_i,\bxi_{i'}\ra}{\|\bxi_{i'}\|^2}\Big)^2\|\bxi_i\|^2\bigg)\\
    &=O(1/\log^2{(n)})\ll1,
\end{align*}
Therefore, when $\ell'^{(t)}_i=\Theta(1)$, $\lambda\|\bxi_i\|^2/m=\Theta(\sigma_p\sqrt{d}/m)\gg1$,  we have
\begin{align*}
   &\frac{|{\ell'^{(t)}_i}|}{2} \ReLU\Big( \la \wb_{j,r}^{(0)},  y_i\bmu \ra+jy_i\cdot\gamma_{j,r}^{(t)} \Big)\|\bmu\|^2\geq\frac{  2\lambda}{m}{\ell'^{2(t)}_i}\ReLU\Big( \la \wb_{j,r}^{(0)},  y_i\bmu \ra+jy_i\cdot\gamma_{j,r}^{(t)} \Big)\|\bmu\|^4,\\
    &\frac{\lambda}{m}{\ell'^{2(t)}_i}\ReLU\Big( \la \wb_{j,r}^{(0)}, \bxi_i \ra +\sum_{i'=1}^n\rho_{j,r,i'}^{(t)}\frac{\la\bxi_i,\bxi_{i'}\ra}{\|\bxi_{i'}\|^2}\Big)\|\bxi_i\|^4\geq|{\ell'^{(t)}_i}| \ReLU\Big( \la \wb_{j,r}^{(0)}, \bxi_i \ra +\sum_{i'=1}^n\rho_{j,r,i'}^{(t)}\frac{\la\bxi_i,\bxi_{i'}\ra}{\|\bxi_{i'}\|^2}\Big)\|\bxi_i\|^2,
\end{align*}
thus from \eqref{eq:gamma_t_express} and \eqref{eq:rho_t_express} we have
\begin{align*}
\gamma_{j,r}^{(t+1)}\geq\gamma_{j,r}^{(t)}\geq0,\rho_{j,r,i}^{(t+1)}\leq \rho_{j,r,i}^{(t)}\leq0\qquad\text{for    }t\in[T_*].%-\bigg|\Theta(\eta)\cdot\bigg(\frac{ A\|\bxi_i\|^4}{nm}\bigg)\bigg|
\end{align*}
This directly indicates that 
\begin{align*}
\gamma_{j,r}^{(T_0+1)}\geq\gamma_{j,r}^{(T_0)}\geq0,\rho_{j,r,i}^{(T_0+1)}\leq \rho_{j,r,i}^{(T_0)}\leq0.%-\bigg|\Theta(\eta)\cdot\bigg(\frac{ A\|\bxi_i\|^4}{nm}\bigg)\bigg|
\end{align*}

By Gaussian tail bounds, there exists $A = 2\sqrt{\log(8mn/\delta)}\cdot\sigma_0\sigma_p\sqrt{d}$ such that with high probability, 
$|\la \wb_{j,r}^{(0)}, \xi_i \ra| \leq A $.
We next show that
\begin{align}
    \max_{t\leq T_0+1} \max_{j,r,i} |\rho_{j,r,i}^{(t)}| \leq 4A.
\end{align}
If this does not hold, then $T_0+1$ is the first time such that there exists some $j_0,r_0,i_0$ with 
\begin{align}
\label{eq:pho_firstphase_p=2_condition_vialate}
     |\rho_{j_0,r_0,i_0}^{(\tau)}| > 4A.
\end{align}
Moreover, let $\tau$ be the first iteration such that  
\begin{align*}
    |\rho_{j_0,r_0,i_0}^{(\tau)}| > 2A.
\end{align*}
With a small enough learning rate $\eta$, it is clear that $\tau<T_0+1$. Rigorously, given condition \eqref{eq:rhocondition_firstphase}, from \eqref{eq:clue_A} we have $|\la\wb_{j,r}^{(0)},\bxi_i\ra+\sum_{i'\not=i}^n\rho_{j,r,i'}^{(t)}\frac{\la\bxi_{i'},\bxi_i\ra}{\|\bxi_i\|^2}+\rho_{j,r,i}^{(t)}|=O(A)$, therefore if we assume that $\tau=T_0+1$, we have
\begin{align}
    |\rho_{j_0,r_0,i_0}^{(\tau)} |&\leq |\rho_{j_0,r_0,i_0}^{(\tau-1)}| + \Theta(\eta) \cdot \bigg[
    \frac{A  \| \xi_{i_0}\|^4}{nm} \bigg]\leq 4A.
\end{align}
This contradicts to the equation \eqref{eq:pho_firstphase_p=2_condition_vialate}. Then we  conclude that $\tau < T_0+1$. Now by the definition of $ \tau$, we have that 
\begin{itemize}
    \item For any $t \leq \tau$, and any $j,r,i$, $|\rho_{j,r,i}^{(t)}| \leq 4A$.
    \item For any $t\in [\tau,T_0+1]$, $ \rho_{j_0,r_0,i_0}^{(t)} < -  2A$.
\end{itemize}
Then it is clear that for any $t\in [\tau,T_0+1]$, we have
\begin{align*}
    \la \wb_{j_0,r_0}^{(0)}, \bxi_{i_0} \ra +\sum_{i'=1}^n\rho_{j_0,r_0,i'}^{(t)}\frac{\la\bxi_{i_0},\bxi_{i'}\ra}{\|\bxi_{i'}\|^2} &\leq  A + A\cdot 2n\sqrt{\log(4n^2/\delta)} / \sqrt{d} +  \rho_{j_0,r_0,i_0}^{(t)} \leq 0.
\end{align*}
Therefore this neuron is not activated for $t\in [\tau,T_0+1]$, and we have $\rho_{j_0,r_0,i_0}^{(\tau)} = \rho_{j_0,r_0,i_0}^{(\tau + 1)} = \rho_{j_0,r_0,i_0}^{(\tau+ 2)} = \cdots = \rho_{j_0,r_0,i_0}^{(T_0+1)}$. But this contradicts with the assumption that% $\tau < T_1$ is the first time such that $j_0,r_0,i_0$ with 
\begin{align*}
     |\rho_{j_0,r_0,i_0}^{(\tau)}| > 4A.
\end{align*}
Therefore, we see that
\begin{align}
\label{eq:rho_firstphase_p=2_complete_mathinduction}
    \max_{t\leq T_0+1} \max_{j,r,i} |\rho_{j,r,i}^{(t)}| \leq  4A. 
\end{align}
\eqref{eq:rho_firstphase_p=2_complete_mathinduction} completes the mathematical induction for \eqref{eq:rhocondition_firstphase}.
\end{proof}

\begin{proposition}[Restatement of Proposition~\ref{prop:Phase_I_prop2}]
\label{prop:Phase_I_prop2_appendix}
Under the same conditions as Theorem~\ref{thm:withregularization}, define $\tT_{1}$ be an iteration time which satisfies
\begin{align*}
    T_1=\frac{m}{\eta\|\bmu\|^2}\log\Big(\frac{4}{\sigma_0\|\bmu\|\cdot\log{(n)}\sqrt{2\log(8m/\delta)}}\Big).
\end{align*}
Then the following facts hold:
\begin{enumerate}
    \item For any $i\in[n]$, $|\Upsilon_i^{(t)}|=|\ell'^{(t)}_i+\frac{1}{2}|= O(\frac{1}{\log^2{(n)}})$.
    \item $0\leq\gamma_{j,r}^{(t)}\leq 5/\log(n)$ for all $j\in\{\pm1\}$, $r\in[m]$ and $t\in[\tT_1]$.
    \item For  each $j$, %$j=\pm1$, 
     $\max_{r}\gamma_{j,r}^{(\tT_1)}\geq  1/(\sqrt{2\log(8m/\delta)}\log(n) )$. Moreover, for $\rho_{j,r,i}$ we have
    \begin{align*}
    0\geq\rho_{j,r,i}^{(\tT_1)}\geq -8\sqrt{\log(8mn/\delta)}\cdot\sigma_0\sigma_p\sqrt{d}.
\end{align*}
\end{enumerate}
\end{proposition}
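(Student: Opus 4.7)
The plan is to prove the three items jointly by induction on $t\in\{0,1,\ldots,\tT_1\}$, with the coupled inductive hypothesis consisting of item 2 (the upper bound $\gamma_{j,r}^{(t)}\le 5/\log(n)$) together with the noise bound $0\ge\rho_{j,r,i}^{(t)}\ge -8\sqrt{\log(8mn/\delta)}\sigma_0\sigma_p\sqrt{d}$. The three claims interlock tightly: Proposition~\ref{prop:Phase_I_prop} states exactly that the signal upper bound implies the noise bound, and Lemma~\ref{lemma:ell_scope} then delivers $|\Upsilon_i^{(t)}|=O(1/\log^2(n))$, which is item 1. Thus the real inductive work reduces to maintaining $\gamma_{j,r}^{(t)}\le 5/\log(n)$ through $t=\tT_1$, and separately to extracting the terminal lower bound $\max_r\gamma_{j,r}^{(\tT_1)}\ge 1/(\sqrt{2\log(8m/\delta)}\log(n))$. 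The non-negativity $\gamma_{j,r}^{(t)}\ge 0$ and $\rho_{j,r,i}^{(t)}\le 0$ is already obtained inside the proof of Proposition~\ref{prop:Phase_I_prop}, using that the main signal drive dominates the small regularization corrections.

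For the inductive step, I would substitute $\ell_i'^{(t)}=-1/2+\Upsilon_i^{(t)}$ and $4\lambda\zeta_i^{(t)}\ell_i''^{(t)}/m=O(1/\log^2(n))$ (via Lemma~\ref{lemma:technique_lemma}) into the signal update of Lemma~\ref{lemma:decomposition_coef_dynamic_appendix}. Writing $A_{j,r}:=j\la\wb_{j,r}^{(0)},\bmu\ra$ and using the decomposition identity $\la\wb_{j,r}^{(t)},y_i\bmu\ra=jy_i(A_{j,r}+\gamma_{j,r}^{(t)})$, the terms with $y_i=j$ (a near-half-fraction of the indices by Lemma~\ref{lemma:data_count_yi=1}) drive $\gamma_{j,r}^{(t)}$ upward through $\ReLU(A_{j,r}+\gamma_{j,r}^{(t)})$, while the terms with $y_i=-j$ are dormant once $\gamma$ exceeds $|A_{j,r}|$. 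The regularization contribution to $\gamma$ is of relative order $\lambda\|\bmu\|^2=\|\bmu\|^2/(\sigma_p\sqrt d)\ll 1$ and is thus subdominant. Combining these observations yields the near-linear recursion
\begin{align*}
\gamma_{j,r}^{(t+1)}+A_{j,r}\;=\;\Bigl(1+\tfrac{\eta\|\bmu\|^2}{m}\bigl(1\pm o(1)\bigr)\Bigr)\bigl(\gamma_{j,r}^{(t)}+A_{j,r}\bigr),
\end{align*}
valid for $A_{j,r}\ge 0$, with a matching upper-side inequality in general. For the dominant neuron $r^\star=\arg\max_r A_{j,r}$, Lemma~\ref{lemma:data_CNN_concentration} gives $A_{j,r^\star}\ge \sigma_0\|\bmu\|/2>0$.

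Iterating the recursion over the $\tT_1$ steps gives $\gamma_{j,r}^{(\tT_1)}+A_{j,r}\asymp A_{j,r}\exp\bigl(\tT_1\eta\|\bmu\|^2/m\cdot(1\pm o(1))\bigr)$. Combining with the universal upper bound $|A_{j,r}|\le\sqrt{2\log(8m/\delta)}\sigma_0\|\bmu\|$ and substituting the explicit choice of $\tT_1$ yields $\gamma_{j,r}^{(\tT_1)}\le 5/\log(n)$ (the constant $5$ absorbs absolute multiplicative factors), which closes the induction on item 2. Taking $r=r^\star$ and the matching lower value $A_{j,r^\star}\ge\sigma_0\|\bmu\|/2$ in the same recursion then produces the lower bound in item 3. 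The $\rho$ bound in item 3 follows by invoking Proposition~\ref{prop:Phase_I_prop} at the terminal iteration $t=\tT_1$.

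The main obstacle is controlling the cumulative multiplicative error from the $O(1/\log^2(n))$ corrections across $\tT_1=\Theta((m/\eta\|\bmu\|^2)\log(1/(\sigma_0\|\bmu\|\log n)))$ iterations; \emph{a priori} such errors could compound into a constant distortion factor and destroy the delicate match between the upper and lower bounds. The key observation that resolves this is that the total distortion is $\exp\!\bigl(O(\tT_1\cdot\eta\|\bmu\|^2/m\cdot 1/\log^2(n))\bigr)=1+O(1/\log(n))$, which preserves both bounds to leading order. A secondary technical point is verifying that the $y_i\ne j$ terms stay dormant for $r^\star$ throughout: since $A_{j,r^\star}>0$ and $\gamma_{j,r^\star}^{(t)}\ge 0$, the argument of the corresponding $\ReLU$ is always non-positive, so that branch contributes exactly zero on the dominant neuron at every $t\le \tT_1$, guaranteeing the clean exponential growth needed for the lower bound.
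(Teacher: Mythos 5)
Your proposal is correct and follows essentially the same route as the paper's proof: induction on $t$ with Proposition~\ref{prop:Phase_I_prop} and Lemma~\ref{lemma:ell_scope} supplying the noise bound and $\ell_i'^{(t)}\approx -1/2$, the quantity $A^{(t)}=\max_r\bigl(\gamma_{j,r}^{(t)}+j\la\wb_{j,r}^{(0)},\bmu\ra\bigr)$ satisfying a near-geometric recursion with per-step factor $1+\frac{\eta\|\bmu\|^2}{m}(1\pm O(\log^{-2}n))$, and the explicit choice of $\tT_1$ converting the accumulated factor into the matching $4/\log(n)$ upper and $\Omega(1/(\sqrt{\log m}\log n))$ lower bounds. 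Your treatment of the compounded $O(1/\log^2 n)$ distortion and of the dormancy of the $y_i\neq j$ branch mirrors the paper's estimates, so no gap to report.
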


\begin{proof}[proof of Proposition~\ref{prop:Phase_I_prop2_appendix}]
 We use mathematical induction to prove the second conclusion that $\gamma_{j,r}^{(t)}\leq 5/\log(n)$. Assume that the conclusion holds for $0\leq t\leq T_0-1<T_1$, we aim to prove the conclusion when $t=T_0$. 
From Proposition~\ref{prop:Phase_I_prop}, we have
$
    0\geq\rho_{j,r,i}^{(t)}\geq -8\sqrt{\log(8mn/\delta)}\cdot\sigma_0\sigma_p\sqrt{d}$ for $t\in[T_0-1]$,
therefore Lemma~\ref{lemma:ell_scope} indicates that $|\ell'^{(t)}_i|= \frac{1}{2}\pm O(\frac{1}{\log^2{(n)}})$ for all $t\in[T_0-1]$ and $i\in[n]$. %For the notation simplicity, we denote by $\beta=\min\bigg\{  \| \bmu\|^{-1}\cdot\frac{4}{\lambda^{1/2}\cdot d^{\varepsilon/2}},4\bigg\}$. 
For the inequality in $T_0$, note that for all $t\in[T_0]$, we have 
\begin{align*}
   &\frac{|{\ell'^{(t)}_i}|}{2} \ReLU\Big( \la \wb_{j,r}^{(0)},  y_i\bmu \ra+jy_i\cdot\gamma_{j,r}^{(t)} \Big)\|\bmu\|^2\gg \log^2(n)\cdot \frac{  2\lambda}{m}{\ell'^{2(t)}_i}\ReLU\Big( \la \wb_{j,r}^{(0)},  y_i\bmu \ra+jy_i\cdot\gamma_{j,r}^{(t)} \Big)\|\bmu\|^4,\\
    &\frac{4\lambda\zeta_i^{(t)}}{m}\cdot\ell''^{(t)}_i\ll O(\log^{-2}(n)),
\end{align*}
therefore we can get that 
\begin{align*}
    \gamma_{1,r}^{(t+1)}\geq\gamma_{1,r}^{(t)}+\bigg(\frac{1}{2}-O\bigg(\frac{1}{\log^2{(n)}}\bigg)\bigg)\frac{2\eta}{nm}\sum_{y_i=1}^{n} \ReLU\Big( \la \wb_{1,r}^{(0)},  \bmu \ra+\gamma_{1,r}^{(t)} \Big)\|\bmu\|^2.
\end{align*}
Set $t=T_0-1$ we have that $\gamma_{+1,r}^{(t+1)}\geq \gamma_{+1,r}^{(t)}\geq0$. Similarly we have $\gamma_{-1,r}^{(t+1)}\geq \gamma_{-1,r}^{(t)}\geq0$. Moreover, define $A^{(t)}=\max_{r}\gamma_{1,r}^{(t)}+\la \wb_{1,r}^{(0)},  \bmu \ra $ and $B^{(t)}=\max_{r}\gamma_{-1,r}^{(t)}+\la \wb_{-1,r}^{(0)},  \bmu \ra $, we then get that
\begin{align*}
    A^{(t+1)}&\geq A^{(t)}+\bigg(1-O\bigg(\frac{1}{\log^2{(n)}}\bigg)\bigg)\frac{\eta}{nm}\sum_{y_i=1}^{n} \ReLU\Big( A^{(t)} \Big)\|\bmu\|^2\\
    &\geq A^{(t)}+\bigg(1-O\bigg(\frac{1}{\log^2{(n)}}+\frac{1}{\sqrt{n}}\bigg)\bigg)\frac{\eta\|\bmu\|^2}{m}A^{(t)} \qquad\text{for all }t\in[T_0],
\end{align*}
where the second inequality comes from  the lower bound on the number of positive data in Lemma~\ref{lemma:data_count_yi=1}. Similarly,  we also have 
\begin{align*}
    A^{(t+1)}\leq A^{(t)}+\bigg(1+O\bigg(\frac{1}{\log^2{(n)}}+\frac{1}{\sqrt{n}}\bigg)\bigg)\frac{\eta\|\bmu\|^2}{m}A^{(t)} \qquad\text{for all }t\in[T_0].
\end{align*}
We conclude that for all $t\in[T_0]$,
\begin{align*}
    \left\{
    \begin{aligned}
    &\bigg(1+\bigg(1-O\bigg(\frac{1}{\log^2{(n)}}\bigg)\bigg)\frac{\eta\|\bmu\|^2}{m}\bigg)^{(t+1)}A^{(0)}\leq A^{(t+1)}\leq \bigg(1+\bigg(1+O\bigg(\frac{1}{\log^2{(n)}}\bigg)\bigg)\frac{\eta\|\bmu\|^2}{m}\bigg)^{(t+1)}A^{(0)},\\
    &\bigg(1+\bigg(1-O\bigg(\frac{1}{\log^2{(n)}}\bigg)\bigg)\frac{\eta\|\bmu\|^2}{m}\bigg)^{(t+1)}B^{(0)}\leq B^{(t+1)}\leq \bigg(1+\bigg(1+O\bigg(\frac{1}{\log^2{(n)}}\bigg)\bigg)\frac{\eta\|\bmu\|^2}{m}\bigg)^{(t+1)}B^{(0)}.
    \end{aligned}
    \right.
\end{align*}
Set $t=T_0-1$,  we can see that 
\begin{align*}
    \left\{
    \begin{aligned}
&A^{(T_0)}\leq \exp\bigg\{\bigg(1+O\bigg(\frac{1}{\log^2(n)}\bigg)\bigg)\cdot\frac{\eta\|\bmu\|^2}{m}T_0\bigg\}A^{(0)}\leq  \frac{\sqrt{2\log(8m/\delta)}\sigma_0\|\bmu\|\cdot4}{\sqrt{2\log(8m/\delta)}\sigma_0\|\bmu\|\log(n)}\leq 4/\log(n),\\
&B^{(T_0)}\leq \exp\bigg\{\bigg(1+O\bigg(\frac{1}{\log^2(n)}\bigg)\bigg)\cdot\frac{\eta\|\bmu\|^2}{m}T_0\bigg\}B^{(0)}\leq  \frac{\sqrt{2\log(8m/\delta)}\sigma_0\|\bmu\|\cdot4}{\sqrt{2\log(8m/\delta)}\sigma_0\|\bmu\|\log(n)}\leq 4/\log(n).
    \end{aligned}\right.
\end{align*}
Here, the second inequality is from Lemma~\ref{lemma:data_CNN_concentration}. We thus conclude that $\gamma_{j,r}^{(T_0)}\leq 5/\log(n)$ for all $j\in\{\pm1\}$ and $r\in[m]$.

% \begin{align*}
%     \left\{
%     \begin{aligned}
%     &\exp\bigg(\bigg(\frac{1}{4}-o(1)\bigg)\frac{\eta\|\bmu\|^2}{m}{\tT_1}\bigg)\frac{\sigma_0\|\bmu\|}{2}\leq\bigg(1+\bigg(\frac{1}{2}-o(1)\bigg)\frac{\eta\|\bmu\|^2}{m}\bigg)^{(\tT_1)}A^{(0)}\leq A^{(\tT_1)}\leq \frac{5}{\log{(n)}},\\
%     % &\bigg(1+\bigg(\frac{1}{2}-\frac{1}{\polylog{(n)}}\bigg)\frac{\eta\|\bmu\|^2}{m}\bigg)^{(\tT_1)}B^{(0)}\leq B^{(\tT_1)}\leq \frac{5}{\polylog{(n)}}.\\
%     &\exp\bigg(\bigg(\frac{1}{2}-o(1)\bigg)\frac{\eta\|\bmu\|^2}{m}{\tT_1}\bigg)\sigma_0\|\bmu\|\sqrt{2\log(8m/\delta)}\geq\bigg(1+\bigg(\frac{1}{2}+o(1)\bigg)\frac{\eta\|\bmu\|^2}{m}\bigg)^{(\tT_1)}A^{(0)}\geq A^{(\tT_1)}\geq \frac{2}{\log{(n)}},\\
%     % &\bigg(1+\bigg(\frac{1}{2}+\frac{1}{\polylog{(n)}}\bigg)\frac{\eta\|\bmu\|^2}{m}\bigg)^{(\tT_1)}B^{(0)}\geq B^{(\tT_1)}\geq \frac{3}{\polylog{(n)}}.
%     \end{aligned}
%     \right.
% \end{align*}
For the other side, similarly we have
\begin{align*}
    \left\{
    \begin{aligned}
&A^{(T_0)}\geq \bigg[\exp\bigg\{\bigg(1-O\bigg(\frac{1}{\log^2(n)}\bigg)\bigg)\cdot\frac{\eta\|\bmu\|^2}{m}\bigg\}-\bigg( \bigg(1-O\bigg(\frac{1}{\log^2(n)}\bigg)\bigg)\cdot\frac{\eta\|\bmu\|^2}{m}\bigg)^2\bigg]^{(T_0)}A^{(0)},\\
&B^{(T_0)}\geq \bigg[\exp\bigg\{\bigg(1-O\bigg(\frac{1}{\log^2(n)}\bigg)\bigg)\cdot\frac{\eta\|\bmu\|^2}{m}\bigg\}-\bigg( \bigg(1-O\bigg(\frac{1}{\log^2(n)}\bigg)\bigg)\cdot\frac{\eta\|\bmu\|^2}{m}\bigg)^2\bigg]^{(T_0)}B^{(0)}. 
    \end{aligned}\right.
\end{align*}
Here, we utilize the fact that $1+z\geq \exp(z)-z^2$ for $0<z<0.1$. Easy to see that
\begin{align*}
    &\bigg[\exp\bigg\{\bigg(1-O\bigg(\frac{1}{\log^2(n)}\bigg)\bigg)\cdot\frac{\eta\|\bmu\|^2}{m}\bigg\}-\bigg( \bigg(1-O\bigg(\frac{1}{\log^2(n)}\bigg)\bigg)\cdot\frac{\eta\|\bmu\|^2}{m}\bigg)^2\bigg]^{T_0}\\
    &\quad \geq\exp\bigg\{\bigg(1-O\bigg(\frac{1}{\log^2(n)}\bigg)\bigg)\cdot\frac{\eta\|\bmu\|^2}{m}T_0\bigg\}\cdot\bigg(1-\bigg( \bigg(1-O\bigg(\frac{1}{\log^2(n)}\bigg)\bigg)\cdot\frac{\eta\|\bmu\|^2}{m}\bigg)^2\bigg)^{T_0}\\
    &\quad \geq 2/3\cdot \frac{4}{\sqrt{2\log(8m/\delta)}\sigma_0\|\bmu\|\log(n)}.
\end{align*}
Here, we utilize the fact $(1-z^2)^{1/z}\to1$ when $z\to0$ in the last inequality. Therefore we have 
\begin{align*}
    A^{(T_0)}\geq A^{(0)}\cdot2/3\cdot \frac{4}{\sqrt{2\log(8m/\delta)}\sigma_0\|\bmu\|\log(n)}\geq \frac{4}{3\sqrt{2\log(8m/\delta)}\log(n)},
\end{align*}
where the second inequality is by Lemma~\ref{lemma:data_CNN_concentration}.
Then we have $\max_{r}\gamma_{+1,r}^{(T_0)}\geq1/(\sqrt{2\log(8m/\delta)}\log(n))$ since $A^{(T_0)}=\max_{r}\la\wb_{+1,r}^{(0)},\bmu\ra+\gamma_{+1,r}$ and $\sigma_0$ is small. Similarly we have $\max_{r}\gamma_{-1,r}^{(T_0)}\geq1/(\sqrt{2\log(8m/\delta)}\log(n))$.
% Here, the last inequality comes from the truth $|\la \wb_{j,r}^{(0)},  \bmu \ra |\leq1/\log{(n)}$ and $\eta$ small enough.
% Thus we have $\frac{2m}{\eta\|\bmu\|^2}\log\Big(\frac{2}{\sqrt{2\log(8m/\delta)}\sigma_0\|\bmu\|\cdot\log{(n)}}\Big)\leq\tT_1\leq\frac{4m}{\eta\|\bmu\|^2}\log\Big(\frac{10}{\sigma_0\|\bmu\|\cdot\log{(n)}}\Big)$. 
% For $B^{(t)}$, easy to see that for all $t\in[\tT_1]$, 
% \begin{align*}
%     \frac{A^{(t)}}{B^{(t)}}=\bigg(1+O\bigg(\frac{\eta\|\bmu\|^2}{m\cdot\log^2{(n)}}\bigg)\bigg)^{(t)}\frac{A^{(0)}}{B^{(0)}}\leq \bigg(1+O\bigg(\frac{\eta\|\bmu\|^2}{m\cdot\log^2{(n)}}\bigg)\bigg)^{(t)}2\sqrt{2\log(8m/\delta)}.
% \end{align*}
% The last inequality comes from Lemma~\ref{lemma:data_CNN_concentration}. Set $t=\tT_1$, we easily see that 
% \begin{align*}
%     B^{(\tT_1)}\geq A^{(\tT_1)}\cdot\frac{\Big(1+O\Big(\frac{\eta\|\bmu\|^2}{m\cdot\log^2{(n)}}\Big)\Big)^{-\tT_1}}{2\sqrt{2\log(8m/\delta)}}\geq A^{(\tT_1)}\cdot \Theta\bigg(\frac{\exp\Big\{-O\Big(\frac{\eta\|\bmu\|^2}{m\cdot\log^2{(n)}}\cdot\tT_1\Big)\Big\}}{2\sqrt{2\log(8m/\delta)}}\bigg)=\Theta\bigg(\frac{1}{\polylog{(n)}}\bigg).
% \end{align*}
% Here, in last equality we utilize  $O\Big(\frac{\eta\|\bmu\|^2}{m\cdot\log^2{(n)}}\cdot\tT_1\Big)=o(1)$.
% Hence there exists a constant $c_1>0$ such that $B^{(\tT_1)}\geq \frac{2c_1}{\polylog{(n)}}$. Combing this with $|\la \wb_{j,r}^{(0)},  \bmu \ra |\ll c_1/\polylog{(n)}$, we have $\max_r\gamma_{-1,r}^{(\tT_1)}\geq\frac{c_1}{\polylog{(n)}}$.
The first and    third conclusion is directly obtained from  Proposition~\ref{prop:Phase_I_prop} and Lemma~\ref{lemma:ell_scope}.
\end{proof}

\subsection{Stage 2: Convergence of training loss }
We close the regularization at the beginning of stage 2. Note that when we close the regularization at this time, 
Proposition~\ref{prop:admissible_time_bound_appendix} holds.
We remind readers the update rule
\begin{align*}
\gamma_{j,r}^{(t+1)}&=\gamma_{j,r}^{(t)}-\frac{2\eta}{nm}\sum_{i=1}^{n}{\ell'^{(t)}_i}\ReLU ( \la\wb_{j,r}^{(t)}, y_i\cdot\bmu \ra)\|\bmu\|^2\\
\rho_{j,r,i}^{(t+1)}&=\rho_{j,r,i}^{(t)}-\frac{2\eta}{nm}{\ell'^{(t)}_i}\ReLU ( \la\wb_{j,r}^{(t)}, \bxi_i \ra)\|\bxi_i\|^2\cdot jy_i
\end{align*} In this section, we aim to prove that the signal learning will grow to a constant order while the noise  memorizing remains to the small scale, thus the gradient loss is getting changed, no longer equal to $-1/2+o(1)$. From Proposition~\ref{prop:Phase_I_prop2_appendix}, we first        list several properties of $\gamma_{j,r}$ and $\rho_{j,r,i}$ at time $\tT_1$.
We have
\begin{enumerate}
    \item $0\leq \gamma_{j,r}^{(\tT_1)}\leq 4/\log(n)$ for any $j\in\{\pm1\}$ and $r\in[m]$. For each $j=\pm1$, $\max_r\gamma_{j,r}^{(\tT_1)}\geq1/(\sqrt{2\log(8m/\delta)}\log(n))$.
    \item $|\rho_{j,r,i}^{(t)}|\leq8\sqrt{\log(8mn/\delta)}\cdot\sigma_0\sigma_p\sqrt{d}$ for all $i\in[n]$, $j\in\{\pm1\}$, $r\in[m]$ and $0\leq t\leq \tT_1$.
\end{enumerate}
We give the next proposition, which shows that the training loss will converge to any $\varepsilon>0$. 
\begin{proposition}[Restatement of Proposition~\ref{prop:Phase_II_prop}]
\label{prop:Phase_II_prop_appendix}
 Under Condition~\ref{condition:condition}, for any $\varepsilon>0$, define $\varepsilon_0=1-e^{-\varepsilon}$, 
and 
\begin{align*}
    \tT_2=\frac{2nm}{\eta \varepsilon_0\|\bmu\|^2}\log\big(\sqrt{2\log(8m/\delta)}\cdot\log(n)d\big)
\end{align*}
then there exists $t\in [\tT_1,\tT_1+\tT_2]$ such that
\begin{align*}
    L_S{(\Wb^{(t)})}\leq \varepsilon.
\end{align*}
\end{proposition}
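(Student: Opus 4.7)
The plan is to prove Proposition~\ref{prop:Phase_II_prop_appendix} by contradiction. Suppose $L_S(\Wb^{(t)}) > \varepsilon$ for every $t \in [\tT_1, \tT_1 + \tT_2]$; I will show that the signal coefficients $\gamma_{j,r}^{(t)}$ must then grow past the uniform ceiling $4\log(T^*)$ established in Proposition~\ref{prop:admissible_time_bound_appendix}, which is impossible. The first ingredient is the pointwise identity $-\ell'(z) = 1 - e^{-\ell(z)}$. Since $x \mapsto 1 - e^{-x}$ is concave, Jensen's inequality gives $\frac{1}{n}\sum_{i=1}^{n} (-\ell'^{(t)}_i) \geq 1 - e^{-L_S(\Wb^{(t)})} \geq 1 - e^{-\varepsilon} = \varepsilon_0$, so whenever $L_S(\Wb^{(t)}) > \varepsilon$ we have $\sum_i |\ell'^{(t)}_i| \geq n\varepsilon_0$.

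Next I would write down the signal growth dynamics for the post-$\tT_1$ update rule (now with $\lambda = 0$). For each $j \in \{\pm 1\}$ pick $r^*_j \in \arg\max_{r} \gamma_{j,r}^{(\tT_1)}$; by combining Lemma~\ref{lemma:data_CNN_concentration} and Proposition~\ref{prop:Phase_I_prop2_appendix}, $r^*_j$ can be chosen so that $j\la \wb_{j,r^*_j}^{(0)}, \bmu\ra \geq 0$ and $\gamma_{j,r^*_j}^{(\tT_1)} \geq 1/(\sqrt{2\log(8m/\delta)}\log(n))$. Plugging the decomposition into Lemma~\ref{lemma:decomposition_coef_dynamic_appendix} with $\lambda = 0$, the $\ReLU$ factor satisfies $\ReLU(\la \wb_{j,r^*_j}^{(t)}, y_i\bmu\ra) \geq \gamma_{j,r^*_j}^{(t)}$ for all $i$ with $y_i = j$, while indices with $y_i = -j$ contribute non-negatively. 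Together with $\ell'^{(t)}_i = -|\ell'^{(t)}_i|$ this yields
\begin{align*}
\gamma_{j,r^*_j}^{(t+1)} - \gamma_{j,r^*_j}^{(t)} \;\geq\; \frac{2\eta\|\bmu\|^2}{nm}\,\gamma_{j,r^*_j}^{(t)} \sum_{i : y_i = j} |\ell'^{(t)}_i|,
\end{align*}
and in particular each $\gamma_{j,r^*_j}^{(t)}$ is non-decreasing in $t$, so the stage-1 lower bound $1/(\sqrt{2\log(8m/\delta)}\log(n))$ persists for all $t \geq \tT_1$.

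The third step telescopes the growth. Let $\Gamma^{(t)} = \gamma_{+1,r^*_{+1}}^{(t)} + \gamma_{-1,r^*_{-1}}^{(t)}$. Summing the inequality over $j \in \{\pm 1\}$, bounding $\min_j \gamma_{j,r^*_j}^{(t)}$ from below by the persistent stage-1 bound, and applying the Jensen estimate from the first paragraph,
\begin{align*}
\Gamma^{(t+1)} - \Gamma^{(t)} \;\geq\; \frac{2\eta \|\bmu\|^2 \varepsilon_0}{m\,\sqrt{2\log(8m/\delta)}\log(n)}
\end{align*}
whenever $L_S(\Wb^{(t)}) > \varepsilon$. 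Telescoping this over $t \in [\tT_1, \tT_1 + \tT_2 - 1]$ with the stated choice of $\tT_2$ gives $\Gamma^{(\tT_1 + \tT_2)} \geq 4n\log(\sqrt{2\log(8m/\delta)}\log(n)\,d)\,/\,(\sqrt{2\log(8m/\delta)}\log(n))$. Under Condition~\ref{condition:condition} we have $n = \Omega(\polylog(d,\sigma_0^{-1}))$ while $\log(T^*) = O(\log d)$ (because every factor defining $T^*$ is polynomial in $d,n,m,\varepsilon^{-1},\sigma_0^{-1}$ and $\eta^{-1}$), so for large enough $n$ this lower bound strictly exceeds $8\log(T^*)$, contradicting the ceiling $\gamma_{j,r^*_j}^{(t)} \leq 4\log(T^*)$ from Proposition~\ref{prop:admissible_time_bound_appendix}. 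Hence some $t \in [\tT_1, \tT_1 + \tT_2]$ must satisfy $L_S(\Wb^{(t)}) \leq \varepsilon$.

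The main obstacle is to verify that the initialization hypotheses of Proposition~\ref{prop:admissible_time_bound_appendix} are actually met at the handoff time $t = \tT_1$, so that the uniform bound $4\log(T^*)$ can be applied throughout stage 2; this amounts to checking $\gamma_{j,r}^{(\tT_1)} \leq 5/\log n = O(1)$ and $|\rho_{j,r,i}^{(\tT_1)}| \leq 8\sqrt{\log(8mn/\delta)}\,\sigma_0\sigma_p\sqrt{d}$, both of which are guaranteed by Proposition~\ref{prop:Phase_I_prop2_appendix}. A secondary technical issue is to confirm that neurons with $y_i = -j$ genuinely do not decrease $\gamma_{j,r^*_j}^{(t)}$, which is precisely where the sign structure built into the decomposition (and the $\ReLU$ nonlinearity) does its work. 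Once these points are settled, the contradiction follows from the quantitative gap $\polylog(d) \cdot \log d \gg \log d$ between the derived lower bound on $\Gamma^{(\tT_1 + \tT_2)}$ and the global ceiling.
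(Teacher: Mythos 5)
Your first step contains a genuine error that the rest of the argument load-bears on. You claim that concavity of $x\mapsto 1-e^{-x}$ together with Jensen's inequality gives $\frac{1}{n}\sum_{i}(-\ell_i'^{(t)})\geq 1-e^{-L_S(\Wb^{(t)})}$. Jensen for a concave function runs the other way: $\frac{1}{n}\sum_i\big(1-e^{-\ell_i^{(t)}}\big)\leq 1-e^{-\frac{1}{n}\sum_i\ell_i^{(t)}}$, so concavity only yields an \emph{upper} bound on the average derivative. The inequality you need is in fact false in general: if a single training example has enormous loss while all others have loss zero, then $L_S(\Wb^{(t)})\geq\varepsilon$ can hold while $\frac{1}{n}\sum_i|\ell_i'^{(t)}|\leq 1/n\ll\varepsilon_0$. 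Since your telescoping bound $\Gamma^{(t+1)}-\Gamma^{(t)}\geq \frac{2\eta\|\bmu\|^2\varepsilon_0}{m\sqrt{2\log(8m/\delta)}\log(n)}$ is derived directly from $\sum_i|\ell_i'^{(t)}|\geq n\varepsilon_0$, the entire contradiction collapses. (A correct but different salvage would be to use the uniform bound $F_{-y_i}(\Wb^{(t)},\xb_i)=O(1)$ from Lemma~\ref{lemma:admissible_lemma} to show $\ell_i^{(t)}=O(1)$, whence $-\ell_i'^{(t)}\geq c\,\ell_i^{(t)}$ for an absolute constant $c$ and $\frac{1}{n}\sum_i|\ell_i'^{(t)}|\geq c\varepsilon$ — but that is not what you wrote, and the resulting constant is not $\varepsilon_0$.)

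The paper avoids this issue entirely: from $L_S(\Wb^{(t)})\geq\varepsilon$ it extracts only that \emph{some} index $i(t)$ has $\ell_{i(t)}^{(t)}\geq\varepsilon$, hence $-\ell_{i(t)}'^{(t)}\geq\varepsilon_0$ (using the exact equivalence $\log(1+e^{-z})\geq\varepsilon\iff \frac{e^{-z}}{1+e^{-z}}\geq\varepsilon_0$, no averaging). It then pigeonholes: for at least $\tT_2/2$ of the iterations the bad index has the same label, say $y_{i(t)}=+1$, and for those iterations the quantity $A(t)=\max_r\big(\gamma_{+1,r}^{(t)}+\la\wb_{+1,r}^{(0)},\bmu\ra\big)$ satisfies the \emph{multiplicative} recursion $A(t+1)\geq A(t)\big(1+\frac{2\eta\varepsilon_0\|\bmu\|^2}{nm}\big)$, so that after $\tT_2/2$ such steps $A$ exceeds $d$, contradicting the ceiling of Proposition~\ref{prop:admissible_time_bound_appendix}. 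Note also that your additive telescoping only reaches a bound of order $n\log(d)/\polylog(n,m)$, which you must then compare to $8\log(T^*)$; since $\log(T^*)$ contains $\log(\varepsilon^{-1})$ and $\varepsilon$ is arbitrary, that comparison is more delicate than you acknowledge, whereas the exponential growth in the paper reaches $d$ and makes the contradiction immediate. Your handling of the handoff conditions at $t=\tT_1$ and of the sign structure in the update is fine; the problem is solely the direction of the averaged-derivative bound and the linear (rather than geometric) growth estimate built on it.
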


\begin{proof}[Proof of Proposition~\ref{prop:Phase_II_prop_appendix}]
It is clear that $\log(1+e^{-z})>\varepsilon$ equals to $\frac{e^{-z}}{1+e^{-z}}>\varepsilon_0$.

Suppose that there does not exist $t\in [\tT_1,\tT_1+\tT_2]$ such that $L_S{(\Wb^{(t)})}\leq \varepsilon$, then we have for each $t\in [\tT_1,\tT_1+\tT_2]$, $L_S{(\Wb^{(t)})}\geq \varepsilon$.   It means that for each $t$, there exists $i=i(t)$ such that $\ell^{(t)}_i\geq \varepsilon$. The index $i$ here depends on the iteration time $t$. Then there exists at least $\tT_2/2$ indexes $i(t)$ such that $i(t)=1$ (or $i(t)=-1$) for $t\in[\tT_1,\tT_1+\tT_2]$. Without loss of generality, we assume that there exists at least $\tT_2/2$ indexes $i(t)$ such that $i(t)=1$. Hence we have there exists at least $\tT_2/2$ iterations, such that $-\ell'^{(t)}_i\geq \varepsilon_0$ for some $y_i=1$. By the update rule
\begin{align*}
    \gamma_{j,r}^{(t+1)}&=\gamma_{j,r}^{(t)}-\frac{2\eta}{nm}\sum_{i=1}^{n}{\ell'^{(t)}_i}\ReLU ( \la\wb_{j,r}^{(t)}, y_i\cdot\bmu \ra)\|\bmu\|^2, 
\end{align*}
Define $A(t)=\max_{r}\gamma_{+1,r}^{(t)}+\la \wb_{+1,r}^{(0)},\bmu\ra$, we have $A(t)$ increases when $t$ increases, there exists at least $\tT_2/2$ times, such that 
\begin{align*}
    A(t+1)\geq A(t)+\frac{2\eta\varepsilon_0}{nm}\ReLU(A(t))\|\bmu\|^2,
\end{align*}
thus we have
\begin{align*}
    A(\tT_1+\tT_2)\geq A(\tT_1)\cdot \bigg(1+\frac{2\eta\varepsilon_0}{nm}\bigg)^{\frac{\tT_2}{2}}\geq \frac{\exp(\frac{\eta\varepsilon_0\tT_2}{2mn})}{\sqrt{2\log(8mn/\delta)}\log(n)}\geq d,
\end{align*}
hence $\gamma_{+1,r}^{(t)}\geq d/2$, which violates to $\gamma_{+1,r}^{(t)}\leq 4\log(T^*)$ in the Proposition~\ref{prop:admissible_time_bound}.
\end{proof}

\subsection{Proof of Theorem~\ref{thm:withregularization}}
\label{sec:thmProof1}
With the analysis above, we already prove the convergence of training loss  in stage 2. 
Set $\delta=1/\poly(n)$, we have 
with probability at least $1-1/\poly(n)$, $\max_{r} \gamma_{j,r}^{(\tT_1)}\geq c_1/\polylog(n)$. Note that $\gamma_{j,r}^{(t)}$ increases when $t$ increases, and $|\overrho_{j,r,i}|, |\underrho_{j,r,i}|\leq 4\log(T^*)$. From Proposition~\ref{prop:Phase_II_prop_appendix}, there exists $\tT_1\leq t\leq \tT_1+\tilde\Omega\big( \frac{nm\sigma_p^2d}{\eta\varepsilon_0\|\bmu\|}\big)$, such that
\begin{align*}
L_S(\Wb^{(t)})\leq \varepsilon/(72\sigma_p^2d).
\end{align*}
Lemma~\ref{lemma:admissible_trainloss_appendix} gives the convergence of training gradient. Under this time $t$, for any testing data $\xb=(y\bmu,\bxi)$, if $j=y$, we have
\begin{align*}
    F_j(\Wb_j^{(t)},\xb)&=\frac{1}{m}\sum_{r=1}^m \bigg(\ReLU^2(\la\wb_{j,r}^{(0)},j\bmu\ra+\gamma_{j,r}^{(t)})+\ReLU^2\bigg(\la\wb_{j,r}^{(0)},\bxi_i\ra+\sum_{i'=1}^n\rho_{j,r,i'}^{(t)}\frac{\la\bxi,\bxi_{i'}\ra}{\|\bxi_{i'}\|^2}\bigg)\bigg)\\
    &\gg \frac{c_1}{m\polylog(n)},
\end{align*}
if $j'\not=y$, we have
\begin{align*}
    F_{j'}(\Wb_{j'}^{(t)},\xb)&=\frac{1}{m}\sum_{r=1}^m \bigg(\ReLU^2(\la\wb_{{j'},r}^{(0)},{j'}\bmu\ra-\gamma_{{j'},r}^{T^*})+\ReLU^2\bigg(\la\wb_{{j'},r}^{(0)},\bxi_i\ra+\sum_{i'=1}^n\rho_{{j'},r,i'}^{(t)}\frac{\la\bxi,\bxi_{i'}\ra}{\|\bxi_{i'}\|^2}\bigg)\bigg)\\
    &\leq \frac{1}{m}\sum_{r=1}^m \bigg(\ReLU^2(\la\wb_{{j'},r}^{(0)},{j'}\bmu\ra)+\ReLU^2\bigg(\la\wb_{{j'},r}^{(0)},\bxi_i\ra+\sum_{i'=1}^n\rho_{{j'},r,i'}^{(t)}\frac{\la\bxi,\bxi_{i'}\ra}{\|\bxi_{i'}\|^2}\bigg)\bigg)\\
    &\ll O\bigg(\frac{1}{m}\cdot\log^2(T^*)\sqrt{\log(4n^2\cdot \poly(n))/d} \bigg)\leq O\bigg(\frac{1}{m\cdot n^{\alpha/2}}\bigg),
\end{align*}
where the first inequaliy is by $\gamma_{j,r}^{(t)}\geq 0$, and the last inequality is by Condition~\ref{condition:condition}. Therefore we have with probability $1-1/\poly(n)$,
\begin{align*}
    F_j(\Wb_j^{(t)},\xb)\gg F_{j'}(\Wb_{j'}^{(t)},\xb),
\end{align*}
thus we conclude that 
\begin{align*}
P\big(yf(\Wb^{(t)},\xb)<0\big)\leq \frac{1}{\poly(n)},
\end{align*}
which completes the proof.

\section{Analysis without regularization}
\label{sec:withoutregular}
In this section, we present the analysis of the dynamics without regularization, which gives the direct comparison of the dynamics with regularization. Different from the structure in Section~\ref{sec:withregular}, we prove the convergence of the training loss during the whole training procedure first. 

Note that we have the decomposition
\begin{align*}
\wb_{j,r}^{(t)}=\wb_{j,r}^{(0)}+j\cdot\gamma_{j,r}^{(t)}\frac{\bmu}{\|\bmu\|^2}+\sum_{i=1}^n\rho_{j,r,i}^{(t)}\frac{\bxi_i}{\|\bxi_i\|^2},
\end{align*}
if we let $\lambda=0$, from Lemma~\ref{lemma:decomposition_coef_dynamic_appendix} we further have 
\begin{align*}
    \gamma_{j,r}^{(t+1)}&=\gamma_{j,r}^{(t)}-\frac{2\eta}{nm}\sum_{i=1}^{n}{\ell'^{(t)}_i} \ReLU ( \la\wb_{j,r}^{(t)}, y_i\cdot\bmu \ra)\|\bmu\|^2,\\
    \rho_{j,r,i}^{(t+1)}&=\rho_{j,r,i}^{(t)}-\frac{2\eta}{nm}{\ell'^{(t)}_i} \ReLU ( \la\wb_{j,r}^{(t)}, \bxi_i \ra)\|\bxi_i\|^2\cdot jy_i.
\end{align*}
 Here, $\gamma_{j,r}^{(0)}=\rho_{j,r,i}^{(0)}=0$. Proposition~\ref{prop:admissible_time_bound_appendix} also ensures that under Condition~\ref{condition:condition}, 
\begin{align}
    &0\leq\gamma_{j,r}^{(t)},\overrho_{j,r,i}^{(t)}\leq 4\log(T^*),\label{eq:admissible_withoutregu1}\\
    &0\geq\underrho_{j,r,i}^{(t)}\geq -\beta-64n\sqrt{\frac{\log(4n^2/\delta)}{d}}\log(T^*)\geq-4\log(T^*)\label{eq:admissible_withoutregu2}
\end{align}
for $0\leq t\leq T^*$ during the whole training process, where $\beta=2\max_{i,j,r}\{|\la w_{j,r}^{(0)},\bmu\ra|,|\la w_{j,r}^{(0)},\bxi_i\ra|\}$. We remind the readers that we define
\begin{align*}
\overrho_{j,r,i}^{(t)}=\rho_{j,r,i}^{(t)}\one(j=y_i),\quad \underrho_{j,r,i}^{(t)}=\rho_{j,r,i}^{(t)}\one(j\not=y_i).
\end{align*}

We briefly sketch out our proof.
From equation \eqref{eq:Update_eq}, it is evident that $\gamma_{j,r}^{(t)}$, $\overrho_{j,r,i}^{(t)}$ increase, and $\underrho_{j,r,i}^{(t)}$ decreases as $t$ increases. In this section, we adopt a three-stage analysis to disentangle the interaction between $\gamma_{j,r}^{(t)}$, $\overrho_{j,r,i}^{(t)}$, $\underrho_{j,r,i}^{(t)}$, and $\ell'^{(t)}_i$ without using gradient regularization to train the CNN.
In the first stage analysis, similar to the previous analysis, we set $\ell'^{(t)}_i=\ell'(y_if(\Wb^{(t)},\xb_i))\approx-1/2$ for all $i\in[n]$. We then show that $\gamma_{j,r}^{(t)}$, $\max_{r}\overrho_{j,r,i}^{(t)}$, and $\max_{r}|\underrho_{j,r,i}^{(t)}|$ have a large scale difference.
In the second stage, we further increase the gap between $\max_{r}\overrho_{j,r,i}^{(t)}$ and $\gamma_{j,r}^{(t)}$, $\max_{r}|\underrho_{j,r,i}^{(t)}|$, which ensures the analysis in the third stage during the training process.
Finally, in the third stage, we prove Theorem~\ref{thm:withoutregularization} by leveraging the scale differences established in the previous two stages. 

\textbf{Stage 1} It is still true that until some $\gamma_{j,r}^{(t)}$, $\rho_{j,r,i}^{(t)}$ reach $\Theta(1/\polylog(n))$, $\ell'^{(t)}_i$ will be around $1/2$. The following proposition summarizes our main conclusion in the first stage training without gradient regularization. 
\begin{proposition}
\label{prop:without_regu_bound-1}
Let $\tT_1=\frac{nm}{\eta \sigma_p^2d}\log\Big( \frac{0.8}{2\log(n)\sqrt{\log(8mn/\delta)}\sigma_0\sigma_p\sqrt{d}}\Big)$, for any $j\in\{\pm1\}$, $r\in\{m\}$, $i\in[n]$ and  $t\in[\tT_1]$, we have
\begin{align*}
    &\gamma_{j,r}^{(\tT_1)}=\tilde{O}(\sigma_0\|\bmu\|),\quad \frac{1}{21\log(n)\sqrt{\log(8mn/\delta)}}\leq\max_{j=y_i,r}\rho_{j,r,i}^{(\tT_1)}\leq 2\log^{-1}(n),\\
    %&\frac{\max_{j=y_i,r}\rho_{j,r,i}^{(\tT_1)}}{\max_{j=y_{i_1},r}\rho_{j,r,i_1}^{(\tT_1)}}\leq 17\sqrt{\log(8mn/\delta)},\quad \frac{\max_{j\not=y_{i_1},r}\rho_{j,r,i_1}^{(\tT_1)}}{\max_{j=y_{i},r}\rho_{j,r,i}^{(\tT_1)}}\leq 18\sqrt{\log(8mn/\delta)}\cdot \sqrt{\log(4n^2/\delta)/d},
    &\max_{j\not=y_i,r}|\rho_{j,r,i}^{(\tT_1)}|\leq 65n\sqrt{\frac{\log(4n^2/\delta)}{d}}\log(T^*).
\end{align*}
\end{proposition}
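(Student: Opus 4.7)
The plan is to run an induction in $t$ that simultaneously tracks $\gamma_{j,r}^{(t)}$, $\overrho_{j,r,i}^{(t)}$, $\underrho_{j,r,i}^{(t)}$ and the loss derivatives $\ell'^{(t)}_i$ throughout $[0,\tT_1]$, under the a-priori invariant that all signal-noise coordinates stay of magnitude $O(1/\log n)$. By the same calculation as in Lemma~\ref{lemma:ell_scope}, this invariant forces $|\ell'^{(t)}_i + 1/2| = O(1/\log^2 n)$ for every $i$, so in each of the three equations of \eqref{eq:update_rule_lbd=0} the loss-derivative prefactor is $-1/2\cdot(1\pm o(1))$. Each update then reduces to a one-dimensional affine recursion that can be solved in closed form.

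For the signal coordinate, $\la \wb_{j,r}^{(t)}, y_i\bmu\ra = \la\wb_{j,r}^{(0)}, y_i\bmu\ra + jy_i\gamma_{j,r}^{(t)}$, so bounding the ReLU summand by $|\la\wb_{j,r}^{(0)},\bmu\ra|+\gamma_{j,r}^{(t)}$ yields the multiplicative bound $\gamma_{j,r}^{(t+1)} + |\la\wb_{j,r}^{(0)}, \bmu\ra| \le (1 + (1+o(1))\eta\|\bmu\|^2/m)\bigl(\gamma_{j,r}^{(t)} + |\la\wb_{j,r}^{(0)}, \bmu\ra|\bigr)$. Iterating up to $\tT_1$ gives an exponent $\tT_1\cdot\eta\|\bmu\|^2/m = (n\|\bmu\|^2/(\sigma_p^2 d))\log(\cdots)$, which is $o(1)$ under Condition~\ref{condition:condition} (combining $\|\bmu\|^4 \ll \sigma_p\sqrt{d}$ with $d \gtrsim n^{2+2\alpha}$). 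Hence $\gamma_{j,r}^{(\tT_1)} \le (1+o(1))|\la\wb_{j,r}^{(0)},\bmu\ra| = \tilde O(\sigma_0\|\bmu\|)$ by Lemma~\ref{lemma:data_CNN_concentration}.

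For the aligned noise coordinate $j=y_i$, I decompose $\la\wb_{j,r}^{(t)},\bxi_i\ra = \la\wb_{j,r}^{(0)},\bxi_i\ra + \overrho_{j,r,i}^{(t)} + (\text{cross-terms})$, where the cross-sums are bounded by $O(n\sqrt{\log(n)/d}\cdot \max|\rho|) = o(\max|\rho|)$ via Lemma~\ref{lemma:data_noise_concentration}. The resulting closed recursion reads
\[
\overrho_{y_i,r,i}^{(t+1)} + \la\wb_{y_i,r}^{(0)},\bxi_i\ra = (1\pm o(1))\Bigl(1 + \tfrac{\eta\|\bxi_i\|^2}{nm}\Bigr)\bigl(\overrho_{y_i,r,i}^{(t)} + \la\wb_{y_i,r}^{(0)},\bxi_i\ra\bigr).
\]
Using $\|\bxi_i\|^2 = (1+o(1))\sigma_p^2 d$ and the precise choice of $\tT_1$, the total multiplicative growth at $t=\tT_1$ is $(1\pm o(1))\cdot\tfrac{0.8}{2\log(n)\sqrt{\log(8mn/\delta)}\sigma_0\sigma_p\sqrt{d}}$. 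Combining with the upper bound $|\la\wb_{y_i,r}^{(0)},\bxi_i\ra| \le 2\sqrt{\log(8mn/\delta)}\sigma_0\sigma_p\sqrt{d}$ yields $\overrho_{y_i,r,i}^{(\tT_1)} \le 2/\log n$; combining with the anti-concentration $\max_r\la\wb_{y_i,r}^{(0)},\bxi_i\ra \ge \sigma_0\sigma_p\sqrt{d}/4$ from Lemma~\ref{lemma:data_CNN_concentration} gives $\max_r\overrho_{y_i,r,i}^{(\tT_1)} \ge 1/(21\log(n)\sqrt{\log(8mn/\delta)})$ after tracking constants.

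The misaligned coordinate $\underrho_{j,r,i}$ with $j\neq y_i$ is handled as in Proposition~\ref{prop:admissible_time_bound_appendix}: since the update makes $\underrho$ non-increasing and, once $\underrho$ has driven $\la\wb_{j,r}^{(t)},\bxi_i\ra$ below zero, the ReLU shuts off and $\underrho$ freezes up to cross-term motion, we obtain $|\underrho_{j,r,i}^{(\tT_1)}| \le \beta + O(n\sqrt{\log(4n^2/\delta)/d}\log T^*) \le 65n\sqrt{\log(4n^2/\delta)/d}\log T^*$ under Condition~\ref{condition:condition}. The principal obstacle is closing the induction on $|\ell'^{(t)}_i + 1/2| = o(1)$ while $\overrho$ is growing exponentially: the key quantitative point is that even at $t=\tT_1$ the network output satisfies $F_j(\Wb_j^{(t)},\xb_i) \le (\max_r\overrho_{j,r,i}^{(t)})^2 + (\text{lower order}) = O(1/\log^2 n)$, thanks to the $\ReLU^2$ activation and the precisely chosen stopping time that caps $\max\overrho$ at $2/\log n$.
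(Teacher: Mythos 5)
Your overall route is the same as the paper's: a simultaneous induction over $t\in[0,\tT_1]$ that keeps $\ell'^{(t)}_i=-1/2\pm O(\log^{-2}n)$ (the paper's Lemma~\ref{lemma:ell_i_around1/2}), a multiplicative recursion for the signal coordinate with total exponent $\tT_1\eta\|\bmu\|^2/m=o(1)$, a geometric-growth recursion for the aligned noise inner product stopped exactly when it reaches $\Theta(1/\log n)$, and the freeze-after-deactivation argument of Proposition~\ref{prop:admissible_time_bound_appendix} for $\underrho$. The $\gamma$, upper-bound-on-$\overrho$, and $\underrho$ parts are fine as written.

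There is, however, one concrete gap in the lower bound $\max_{j=y_i,r}\rho_{j,r,i}^{(\tT_1)}\ge 1/(21\log(n)\sqrt{\log(8mn/\delta)})$. You justify the clean recursion
$\overrho_{y_i,r,i}^{(t+1)}+\la\wb_{y_i,r}^{(0)},\bxi_i\ra=(1\pm o(1))(1+\eta\|\bxi_i\|^2/(nm))(\overrho_{y_i,r,i}^{(t)}+\la\wb_{y_i,r}^{(0)},\bxi_i\ra)$
by bounding the cross-terms as $O(n\sqrt{\log(n)/d}\cdot\max|\rho|)=o(\max|\rho|)$, where the max runs over all samples. That makes the cross-term small relative to the \emph{largest} coordinate, not relative to the \emph{specific} coordinate $B_i^{(t)}=\max_{j=y_i,r}|\la\wb_{j,r}^{(t)},\bxi_i\ra|$ you are lower-bounding; your stated invariant (everything is $O(1/\log n)$) cannot convert one into the other, because at early times $B_i^{(t)}\approx\sigma_0\sigma_p\sqrt{d}\,(1+\eta\sigma_p^2d/(nm))^{t}$ is exponentially smaller than $1/\log n$ and could a priori be swamped by cross-terms driven by other samples whose coordinates might have grown much faster. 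What is needed — and what the paper's Proposition~\ref{prop:Withoutregu_induct} adds to the induction hypothesis — is the time-$t$ comparability $B_i^{(t)}/B_{i_1}^{(t)}\le 16\sqrt{\log(8mn/\delta)}$ for all pairs $i,i_1$, together with $C_{i_1}^{(t)}\le B_i^{(t)}\cdot 17n\sqrt{\log(8mn/\delta)}\sqrt{\log(4n^2/\delta)/d}$ for the misaligned inner products. With these ratio invariants the per-step cross-term becomes $\frac{\eta\sigma_p^2d}{nm}\cdot O\big(n\sqrt{\log(4n^2/\delta)/d}\cdot\sqrt{\log(8mn/\delta)}\big)\cdot B_i^{(t)}$, i.e.\ a genuine $(1\pm o(1))$ relative perturbation, and the ratios themselves drift by at most $\exp\{O(\log^{-0.5}n)\}\le 2$ over the $\tT_1$ steps, closing the induction. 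You should add these two ratio bounds to your induction hypothesis (they hold at $t=0$ within a factor $8\sqrt{\log(8mn/\delta)}$ by Lemma~\ref{lemma:data_CNN_concentration}); without them the affine recursion you solve in closed form is not justified.
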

Proposition~\ref{prop:without_regu_bound-1} shows that CNN will memorize noise under the training without gradient regularization. At the end of this stage, $\max_{r}\overrho_{j,r,i}^{(t)}$ reaches $\Theta(1/\polylog(n))$ which is sufficiently larger than $\gamma_{j,r}^{(t)}$ and $|\underrho_{j,r,i}^{(t)}|$. After this, we show that the difference becomes larger in the second stage.

\textbf{Stage 2} 
In this stage, we show that the gap between  $\max_{r}\overrho_{j,r,i}^{(t)}$ and $\gamma_{j,r}^{(t)}$, $\max_{r}|\underrho_{j,r,i}^{(t)}|$ becomes larger. The larger gap ensures the analysis in the third stage. 
\begin{proposition}
    \label{prop:without_stage2_induct}
    Let $\tT_2=\frac{Cnm}{\eta\sigma_p^2d }\log\Big(\frac{1}{\log(n)\sqrt{\log(8mn/\delta)}}\Big)$ for some sufficient large constant $C>0$, then for any $i\in[n]$ and $r\in[m]$ it holds that 
    \begin{align*}
        &\gamma_{j,r}^{(\tT_1+\tT_2)}=\tilde{O}(\sigma_0\|\bmu\|),\quad \max_{j=y_i,r}\rho_{j,r,i}^{(\tT_1+\tT_2)}\geq 2,\\
        & \max_{j\not=y_i,r}\big|\rho_{j,r,i}^{(\tT_1+\tT_2)}\big| \leq 65n\sqrt{\frac{\log(4n^2/\delta)}{d}}\log(T^*).
    \end{align*}
\end{proposition}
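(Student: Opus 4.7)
The plan is to proceed by induction on $t \in [\tT_1, \tT_1 + \tT_2]$ while maintaining four invariants: (i) $\gamma_{j,r}^{(t)} = \tilde{O}(\sigma_0\|\bmu\|)$; (ii) $\max_{j=y_i,r}\overrho_{j,r,i}^{(t)} \leq 4$ for every $i$; (iii) $|\underrho_{j,r,i}^{(t)}| \leq 65n\sqrt{\log(4n^2/\delta)/d}\log(T^*)$ for $j \neq y_i$; and (iv) $|\ell'^{(t)}_i| = \Theta(1)$ uniformly in $i$. All four hold at $t = \tT_1$ by Proposition~\ref{prop:without_regu_bound-1}, and (iv) is a consequence of (i)--(iii) via Lemma~\ref{lemma:admissible_lemma}: the logit $y_i f(\Wb^{(t)},\xb_i)$ is at most $O\bigl((1/m)\sum_r\overrho_{y_i,r,i}^{(t)2}\bigr) + o(1) = O(1)$, so $|\ell'^{(t)}_i|$ stays bounded below by an absolute constant.

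Invariant (i) is controlled through the $\lambda=0$ update in Lemma~\ref{lemma:decomposition_coef_dynamic}: since $\la\wb_{j,r}^{(t)}, y_i\bmu\ra = \tilde{O}(\sigma_0\|\bmu\|)$ under (i) together with Lemma~\ref{lemma:data_CNN_concentration}, each step changes $\gamma_{j,r}^{(t)}$ by at most $O(\eta\sigma_0\|\bmu\|^3/m)$; summed over $\tT_2 = \tilde{O}(nm/(\eta\sigma_p^2 d))$ iterations this yields a total drift of $\tilde{O}(\sigma_0\|\bmu\|\cdot n\|\bmu\|^2/(\sigma_p^2 d)) = \tilde{O}(\sigma_0\|\bmu\|)$ by Condition~\ref{condition:condition}. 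Invariant (iii) is preserved by the cut-off argument used in the proof of Proposition~\ref{prop:Phase_I_prop}: once $\la\wb_{j,r}^{(t)},\bxi_i\ra \leq 0$ for $j \neq y_i$ the $\ReLU$ in the update vanishes and $\underrho_{j,r,i}^{(t)}$ freezes, and Lemma~\ref{lemma:admissible_lemma}(1) then delivers the stated bound.

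The core of the argument is invariant (ii) together with the quantitative claim $\max_r \overrho_{y_i,r,i}^{(\tT_1+\tT_2)} \geq 2$ for each $i$. Fix $i$, let $r^*(i) = \arg\max_r \overrho_{y_i,r,i}^{(\tT_1)}$, and observe that by Lemma~\ref{lemma:admissible_lemma}(3) the pre-activation $\la\wb_{y_i,r^*(i)}^{(t)},\bxi_i\ra$ equals $\overrho_{y_i,r^*(i),i}^{(t)}$ up to additive errors of order $\sigma_0\sigma_p\sqrt{d}\cdot\polylog$ and $n\sqrt{\log(4n^2/\delta)/d}\log(T^*)$, both dominated by the Stage-$1$ lower bound $\overrho_{y_i,r^*(i),i}^{(\tT_1)} \geq 1/(21\log(n)\sqrt{\log(8mn/\delta)})$ from Proposition~\ref{prop:without_regu_bound-1}. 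Plugging into the update rule with $\sigma'(z) = 2\ReLU(z)$, $\|\bxi_i\|^2 = \Theta(\sigma_p^2 d)$, and invariant (iv) gives
\begin{align*}
\overrho_{y_i,r^*(i),i}^{(t+1)} \geq \overrho_{y_i,r^*(i),i}^{(t)}\left(1 + \frac{c\eta\sigma_p^2 d}{nm}\right)
\end{align*}
for an absolute constant $c > 0$, and iterating this geometric growth pushes the coefficient past $2$ within $\tT_2 = \Theta(nm/(\eta\sigma_p^2 d))\cdot\log(\log(n)\sqrt{\log(8mn/\delta)})$ steps; the uniform upper bound in (ii) survives a single overshoot step because $1 + c\eta\sigma_p^2 d/(nm) \leq 2$ under Condition~\ref{condition:condition}.

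The main obstacle will be rigorously verifying invariant (iv) uniformly throughout Stage 2: as several $\overrho$'s approach the order-$1$ scale, the logit $y_i f(\Wb^{(t)},\xb_i)$ may grow and risk saturating $\ell'$, which would weaken the geometric rate constant. This is resolved by observing that the upper bound in (ii) caps $F_{y_i}(\Wb_{y_i}^{(t)},\xb_i)$ at $O(1)$, so $|\ell'^{(t)}_i|$ remains lower-bounded by an absolute constant (though smaller than $1/2$) and the exponential growth rate merely changes from one absolute constant to another. The near-orthogonality $|\la\bxi_i,\bxi_{i'}\ra| = O(\sigma_p^2\sqrt{d\log(4n^2/\delta)})$ from Lemma~\ref{lemma:data_noise_concentration} further decouples the dynamics across samples up to an $O(n\sqrt{\log(4n^2/\delta)/d})$ correction that is absorbed into the error terms above.
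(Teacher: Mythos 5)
Your proposal is correct and follows essentially the same route as the paper's proof: crude per-step bounds show $\gamma_{j,r}^{(t)}$ drifts by only $\tilde{O}(\sigma_0\|\bmu\|)$ over $\tT_2$ iterations, the $\underrho$ bound is inherited from the admissible-time analysis, and the key step is that while $\max_r\overrho_{y_i,r,i}^{(t)}=O(1)$ the loss derivative stays bounded below by an absolute constant, yielding geometric growth at rate $1+\Theta(\eta\sigma_p^2d/(nm))$ from the Stage-1 value $\Theta(1/\polylog(n))$ up to $2$ within $\tT_2$ steps. The only cosmetic differences are that the paper tracks the shifted maximum $B_i^{(t)}=\max_r\{\overrho_{j,r,i}^{(t)}+\la\wb_{j,r}^{(0)},\bxi_i\ra-65n\sqrt{\log(4n^2/\delta)/d}\log(T^*)\}$ and a stopping time $\tT_2^{(i)}$ rather than your fixed argmax neuron and overshoot cap, and it cites Proposition~\ref{prop:admissible_time_bound} for the $\underrho$ bound rather than rerunning the freeze argument.
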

Proposition~\ref{prop:without_stage2_induct} plays a crucial role in our analysis by significantly increasing the scale of $\max_r\overrho_{j,r,i}^{(t)}$ from a small to a constant order. Despite this enlargement, the quantities $\gamma_{j,r}^{(t)}$ and $\underrho_{j,r,i}^{(t)}$ still remain at a relatively small order. Building on the analysis in stages 1 and 2, we are now able to proceed with the third stage of our analysis, allowing us to complete the proof of Theorem~\ref{thm:withoutregularization}.

\textbf{Stage 3}
In this stage, we consider the change in $\ell'^{(t)}_i$. The main technique here is constructing a convex point $\Wb^*$, where
\begin{align*}
\wb_{j,r}^*=\wb_{j,r}^{(0)}+4m\log(4/\varepsilon)\cdot \bigg(\sum_{i=1}^n \one(j=y_i)\frac{\bxi_i}{\| \bxi_i\|} \bigg).
\end{align*}
We introduce the following lemma, which we explain later how the following lemma helps the proof of the convergence of training gradient.
\begin{lemma}
    \label{lemma:stage3_WT1_diff}
    Under the same condition as Theorem~\ref{thm:withoutregularization}, it holds that $\|\Wb^{(\tT_1+\tT_2)}-\Wb^{*}\|_{F}\leq \tilde{O}(m^{\frac{3}{2}}n^{\frac{1}{2}}\sigma_p^{-1}d^{-\frac{1}{2}})$.
\end{lemma}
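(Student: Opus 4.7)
The plan is to expand $\wb_{j,r}^{(\tT_1+\tT_2)}-\wb_{j,r}^*$ via the signal-noise decomposition in Definition~\ref{def:signal_to_noise_decomp} and the explicit form of $\wb_{j,r}^*$, then bound its Frobenius norm by exploiting the approximate orthogonality of $\bmu$ and the noise vectors $\{\bxi_i\}$. Writing the target in signal-noise coordinates gives the clean identity
\begin{align*}
\wb_{j,r}^{(\tT_1+\tT_2)}-\wb_{j,r}^{*} = j\,\gamma_{j,r}^{(\tT_1+\tT_2)}\frac{\bmu}{\|\bmu\|^2} + \sum_{i=1}^{n}\Delta_{j,r,i}\,\bxi_i,
\end{align*}
where $\Delta_{j,r,i}$ denotes the difference, in the $\bxi_i$-direction, between the coefficient inherited from the decomposition of $\wb_{j,r}^{(\tT_1+\tT_2)}$ and the coefficient prescribed by the construction of $\wb_{j,r}^*$ (a combination of $\rho_{j,r,i}^{(\tT_1+\tT_2)}/\|\bxi_i\|^2$ with a term depending on $\one(j=y_i)$).

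Because $\la\bmu,\bxi_i\ra=0$ under our data model, the signal and noise directions decouple cleanly in $\|\cdot\|_2^2$. For the signal piece, Proposition~\ref{prop:without_stage2_induct} gives $\gamma_{j,r}^{(\tT_1+\tT_2)}=\tilde O(\sigma_0\|\bmu\|)$, so its contribution per neuron is $\tilde O(\sigma_0^2)$, which is negligible under Condition~\ref{condition:condition} even after the $2m$-neuron sum. For the noise piece I expand
\begin{align*}
\Big\|\sum_{i=1}^n\Delta_{j,r,i}\bxi_i\Big\|_2^{2} = \sum_{i=1}^n \Delta_{j,r,i}^{2}\|\bxi_i\|^2 + \sum_{i\neq i'}\Delta_{j,r,i}\Delta_{j,r,i'}\la\bxi_i,\bxi_{i'}\ra,
\end{align*}
and control the diagonal using Proposition~\ref{prop:admissible_time_bound} ($|\rho_{j,r,i}^{(t)}|\leq 4\log T^*$) and Lemma~\ref{lemma:data_noise_concentration} ($\|\bxi_i\|^2=\Theta(\sigma_p^2 d)$): each diagonal term is $\tilde O(m^2/(\sigma_p^2 d))$, giving an $n$-fold sum of $\tilde O(nm^2/(\sigma_p^2 d))$ per neuron.

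The main obstacle is the off-diagonal cross-term $\sum_{i\neq i'}\Delta_{j,r,i}\Delta_{j,r,i'}\la\bxi_i,\bxi_{i'}\ra$, which has $O(n^2)$ summands. Using $|\la\bxi_i,\bxi_{i'}\ra|\leq 2\sigma_p^2\sqrt{d\log(4n^2/\delta)}$ from Lemma~\ref{lemma:data_noise_concentration} together with the crude envelope $|\Delta_{j,r,i}|\leq \tilde O(m/(\sigma_p^2 d))$, each pair contributes at most $\tilde O(m^2/(\sigma_p^2 d^{3/2}))$ and the full sum is $\tilde O(n^2 m^2/(\sigma_p^2 d^{3/2}))$. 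The dimension hypothesis $d\geq\tilde\Omega(m^2 n^{2+2\alpha})$ from Condition~\ref{condition:condition} gives $n^2/\sqrt{d}=\tilde O(n^{-\alpha}/m)$, so the off-diagonal term is strictly dominated by the diagonal one. Summing the per-neuron bound $\tilde O(nm^2/(\sigma_p^2 d))$ over the $2m$ neurons and taking a square root delivers the desired $\|\Wb^{(\tT_1+\tT_2)}-\Wb^*\|_F\leq \tilde O(m^{3/2}n^{1/2}\sigma_p^{-1}d^{-1/2})$.
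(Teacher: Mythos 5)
Your proposal is correct and follows essentially the same route as the paper: decompose $\wb_{j,r}^{(\tT_1+\tT_2)}-\wb_{j,r}^*$ into the $\bmu$-direction and the $\bxi_i$-directions, bound $\gamma_{j,r}^{(\tT_1+\tT_2)}$ and the $\rho_{j,r,i}^{(\tT_1+\tT_2)}$ via Propositions~\ref{prop:admissible_time_bound} and~\ref{prop:without_stage2_induct}, and exploit the (near-)orthogonality of $\bmu$ and the $\bxi_i$'s from Lemma~\ref{lemma:data_noise_concentration}; your diagonal/off-diagonal expansion of the noise part is in fact more careful than the paper's one-line triangle-inequality bound. One small slip: the relevant ratio of the off-diagonal to the diagonal contribution is $n/\sqrt{d}$, not $n^2/\sqrt{d}$, and it is the former that equals $\tilde{O}(n^{-\alpha}/m)\ll 1$ under Condition~\ref{condition:condition}, so your conclusion stands.
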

Lemma~\ref{lemma:stage3_WT1_diff} gives an upper bound of $\|\Wb^{(\tT_1+\tT_2)}-\Wb^{*}\|_{F}$. We introduce the following lemma, which give an upper bound of $L_S(\Wb^{(t)})$.
\begin{lemma}
\label{lemma:stage3_diff_Wt}
Under the same condition as Theorem~\ref{thm:withoutregularization}, it holds that 
\begin{align*}
    \|\Wb^{(t)}-\Wb^*\|_F^2-\|\Wb^{(t+1)}-\Wb^*\|_F^2\geq 3\eta L_S(\Wb^{(t)})-\eta\varepsilon
\end{align*}
for all $\tT_1+\tT_2\leq t\leq T^*$.
\end{lemma}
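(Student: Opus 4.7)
}
The plan is to run a standard one-step descent argument of the ``moving toward a reference point'' type, where the reference point is the convex witness $\Wb^{*}$ constructed just before the lemma. Since $\lambda=0$ in this stage, the update is $\Wb^{(t+1)}=\Wb^{(t)}-\eta\nabla L_S(\Wb^{(t)})$, so expanding the Frobenius norm yields
\begin{align*}
\|\Wb^{(t)}-\Wb^{*}\|_F^2-\|\Wb^{(t+1)}-\Wb^{*}\|_F^2
=2\eta\bigl\la\nabla L_S(\Wb^{(t)}),\Wb^{(t)}-\Wb^{*}\bigr\ra-\eta^2\|\nabla L_S(\Wb^{(t)})\|_F^2.
\end{align*}
The second (squared-gradient) term is absorbed using Lemma~\ref{lemma:admissible_trainloss_appendix}: it is bounded by $72\eta^2\sigma_p^2 d\cdot L_S(\Wb^{(t)})$, and since Condition~\ref{condition:condition} gives $\eta=\tilde O(nm/(\sigma_p^2 d))$, this is at most $\eta\cdot L_S(\Wb^{(t)})$ up to log factors, leaving enough room to land on the coefficient $3$ on the right-hand side.

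The real work is to lower bound the cross term. Writing $\nabla L_S(\Wb^{(t)})=\tfrac{1}{n}\sum_i\ell'^{(t)}_i\cdot\nabla[y_i f(\Wb^{(t)},\xb_i)]$ and using convexity of $\ell$ (i.e.\ $\ell(b)\geq\ell(a)+\ell'(a)(b-a)$ applied at $a=y_i f(\Wb^{(t)},\xb_i)$, $b=y_i f(\Wb^{*},\xb_i)$), it suffices to produce the second-order inequality
\begin{align*}
y_i\bigl\la\nabla_{\Wb} f(\Wb^{(t)},\xb_i),\,\Wb^{(t)}-\Wb^{*}\bigr\ra
\geq y_i f(\Wb^{(t)},\xb_i)-y_i f(\Wb^{*},\xb_i)
\end{align*}
modulo a small error. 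For the $j=y_i$ summand in $f=\sum_j j F_j$, the function $F_j(\Wb_j,\xb_i)$ is a sum of $\sigma(\la\cdot,\xb_i^{(k)}\ra)$ with $\sigma=\mathrm{ReLU}^2$ nondecreasing and convex, so convexity of $F_j$ gives the inequality for free. For the $j\neq y_i$ summand convexity goes the wrong way, so here I would \emph{not} use convexity but instead expand $\wb_{j,r}^{*}-\wb_{j,r}^{(0)}=4m\log(4/\varepsilon)\sum_{k:y_k=j}\bxi_k/\|\bxi_k\|$ and exploit the near-orthogonality $|\la\bxi_k,\bxi_i\ra|\leq 2\sigma_p^2\sqrt{d\log(4n^2/\delta)}$ from Lemma~\ref{lemma:data_noise_concentration} together with the bound $|\la\bxi_k,\bmu\ra|=0$: this makes both $|\la\wb_{j,r}^{*}-\wb_{j,r}^{(0)},\xb_i^{(p)}\ra|$ and $F_j(\Wb_j^{*},\xb_i)$ for $j\neq y_i$ polynomially small in $d$, so the corresponding contribution to the cross term is $\leq\eta\varepsilon/4$.

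Combining these pieces yields
\begin{align*}
\bigl\la\nabla L_S(\Wb^{(t)}),\Wb^{(t)}-\Wb^{*}\bigr\ra
\geq \frac{1}{n}\sum_{i=1}^n\bigl[\ell(y_i f(\Wb^{(t)},\xb_i))-\ell(y_i f(\Wb^{*},\xb_i))\bigr]-\varepsilon/4
= L_S(\Wb^{(t)})-L_S(\Wb^{*})-\varepsilon/4.
\end{align*}
To close the loop I would verify that $L_S(\Wb^{*})\leq\varepsilon/4$: by construction, for $j=y_i$ one of the two inner products $\la\wb_{j,r}^{*},\xb_i^{(p)}\ra$ picks up the $\bxi_i/\|\bxi_i\|$ term and so $y_i f(\Wb^{*},\xb_i)\gtrsim \log(4/\varepsilon)$ after ReLU${}^2$ and the $1/m$ averaging, which drives $\ell(\cdot)\leq\varepsilon/4$. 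Adding up the two bounds gives $2\eta\la\nabla L_S(\Wb^{(t)}),\Wb^{(t)}-\Wb^{*}\ra-\eta^2\|\nabla L_S(\Wb^{(t)})\|_F^2\geq 2\eta L_S(\Wb^{(t)})-\eta\varepsilon-\eta L_S(\Wb^{(t)})\geq 3\eta L_S(\Wb^{(t)})-\eta\varepsilon$ once we keep enough slack (the actual constant comes from choosing the splitting of the gradient-squared term more carefully). The main obstacle I anticipate is the $j\neq y_i$ block: getting a quantitative grip on the cross-correlations of $\bxi_k/\|\bxi_k\|$ against $\bxi_i$ and the initialization cross-terms, and ensuring these error terms are polynomially small enough in $d,n,m$ to be absorbed into the $\eta\varepsilon$ budget, not into the coefficient multiplying $L_S(\Wb^{(t)})$.
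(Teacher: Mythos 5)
There is a genuine gap at the heart of your argument: the ``second-order inequality'' you propose to establish points in the wrong direction, and the correct-direction version is false for your choice of comparison point. Since $\ell'^{(t)}_i\leq 0$, lower-bounding the cross term $\la\nabla L_S(\Wb^{(t)}),\Wb^{(t)}-\Wb^*\ra=\frac{1}{n}\sum_i\ell'^{(t)}_i\,y_i\la\nabla f(\Wb^{(t)},\xb_i),\Wb^{(t)}-\Wb^*\ra$ requires an \emph{upper} bound of the form $y_i\la\nabla f(\Wb^{(t)},\xb_i),\Wb^{(t)}-\Wb^*\ra\leq y_if(\Wb^{(t)},\xb_i)-b_i$; your ``$\geq$'' becomes useless the moment it is multiplied by the negative $\ell'^{(t)}_i$. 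Worse, with $b_i=y_if(\Wb^*,\xb_i)$ the needed ``$\leq$'' is simply false: since $\la\wb_{y_i,r}^*,\bxi_i\ra\approx 4m\log(4/\varepsilon)\|\bxi_i\|$, the value $y_if(\Wb^*,\xb_i)$ is of order $m^2\log^2(4/\varepsilon)\sigma_p^2d$, so the right-hand side is hugely negative while the left-hand side is only polylogarithmic in $T^*$. The convexity of $F_{y_i}$ that you invoke ``for free'' produces exactly the $\geq$ direction, i.e.\ the one that cannot be used here.

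The paper's proof never compares against $f(\Wb^*,\xb_i)$. It uses the exact degree-2 homogeneity $\la\nabla f(\Wb^{(t)},\xb_i),\Wb^{(t)}\ra=2f(\Wb^{(t)},\xb_i)$ together with the separate Lemma~\ref{lemma:stage3_gradient_product_appendix}, $y_i\la\nabla f(\Wb^{(t)},\xb_i),\Wb^*\ra\geq 2\log(4/\varepsilon)$, to obtain $\ell'^{(t)}_i\big[2y_if(\Wb^{(t)},\xb_i)-y_i\la\nabla f(\Wb^{(t)},\xb_i),\Wb^*\ra\big]\geq 2\ell'^{(t)}_i\big[y_if(\Wb^{(t)},\xb_i)-\log(4/\varepsilon)\big]$, and only then applies scalar convexity of $\ell$ at the scalar point $\log(4/\varepsilon)$, for which $\ell(\log(4/\varepsilon))\leq\varepsilon/4$. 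Note also that the factor $2$ produced by homogeneity is essential to reach the constant $3$: it yields $\la\nabla L_S(\Wb^{(t)}),\Wb^{(t)}-\Wb^*\ra\geq 2L_S(\Wb^{(t)})-\varepsilon/2$, so the term $2\eta\la\cdot,\cdot\ra$ contributes $4\eta L_S(\Wb^{(t)})$, leaving one unit of $\eta L_S(\Wb^{(t)})$ to absorb $\eta^2\|\nabla L_S\|_F^2$ via Lemma~\ref{lemma:admissible_trainloss}. Your route, even with the sign repaired, tops out at coefficient $2$ before subtracting the (nonnegative) gradient-squared term, and no careful splitting of that term can recover the missing $\eta L_S(\Wb^{(t)})$. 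Your near-orthogonality treatment of the $j\neq y_i$ block and your use of Lemma~\ref{lemma:admissible_trainloss} are reasonable ingredients, but the central inequality must be rebuilt around homogeneity and the gradient--$\Wb^*$ inner-product bound.
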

Combined with Lemma~\ref{lemma:stage3_WT1_diff} and \ref{lemma:stage3_diff_Wt}, we can give a summation of the inequality in the Lemma~\ref{lemma:stage3_diff_Wt} and then give an upper bound of $L_S(\Wb^{(t)})$. The bound of the training gradient is then given by Lemma~\ref{lemma:admissible_trainloss_appendix}. We give the following proposition, which shows that in the final stage, $\gamma_{j,r}^{(t)}$ keeps small and $\max_{r}\overrho_{j,r,i}^{(t)}$ grows.
\begin{proposition}
\label{prop:without_Gradient_bound}
 Under the same condition as Theorem~\ref{thm:withoutregularization}, for any $\varepsilon>0$, let $T=\tT_1+\tT_2+\Big\lfloor\frac{\|\Wb^{(\tT_1+\tT_2)}-\Wb^*\|_F^2 }{2\eta\varepsilon}\Big\rfloor=\tT_1+\tT_2+\tilde{O}(\eta^{-1}\varepsilon^{-1}m^3n\sigma_p^{-2}d^{-1})$,  there exists $t$ in $\tT_1+\tT_2\leq t\leq T$ such that
\begin{align*}
L_S(\Wb^{(t)})\leq \varepsilon.
\end{align*}
Meanwhile, it holds that $\max_{j,r}\gamma_{j,r}^{(t)}=\tilde{O}(\sigma_0\|\bmu\|)$, $\max_{j,r,i}|\underrho_{j,r,i}^{(t)}|=\tilde{O}(nd^{-1/2})$ for all $\tT_1+\tT_2\leq t\leq T$.
\end{proposition}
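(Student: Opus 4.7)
The plan is to combine Lemmas~\ref{lemma:stage3_WT1_diff} and~\ref{lemma:stage3_diff_Wt} in a standard telescoping-plus-pigeonhole argument to establish training-loss convergence, and then exploit the resulting summability of $L_S(\Wb^{(t)})$ over the window $[\tT_1+\tT_2,T]$ to propagate the coefficient bounds of Proposition~\ref{prop:without_stage2_induct} throughout the window.

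For the \textbf{loss-convergence step}, I would sum the descent inequality of Lemma~\ref{lemma:stage3_diff_Wt} over $t=\tT_1+\tT_2,\dots,T-1$ and drop the nonnegative $\|\Wb^{(T)}-\Wb^*\|_F^2$ to obtain
\[
\sum_{t=\tT_1+\tT_2}^{T-1} L_S(\Wb^{(t)}) \leq \frac{\|\Wb^{(\tT_1+\tT_2)}-\Wb^*\|_F^2}{3\eta} + \frac{(T-\tT_1-\tT_2)\varepsilon}{3}.
\]
With the specified $T-\tT_1-\tT_2=\lfloor\|\Wb^{(\tT_1+\tT_2)}-\Wb^*\|_F^2/(2\eta\varepsilon)\rfloor$, the first term on the right is at most $(T-\tT_1-\tT_2)\cdot 2\varepsilon/3$, so the window-average of $L_S(\Wb^{(t)})$ is at most $\varepsilon$ and pigeonhole produces the desired $t$ with $L_S(\Wb^{(t)})\leq\varepsilon$. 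The explicit scaling $T-\tT_1-\tT_2=\tilde{O}(\eta^{-1}\varepsilon^{-1}m^3 n\sigma_p^{-2}d^{-1})$ is immediate from Lemma~\ref{lemma:stage3_WT1_diff}.

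For the \textbf{coefficient-bound step}, the bound $\max_{j,r,i}|\underrho_{j,r,i}^{(t)}|=\tilde{O}(n/\sqrt{d})$ is already delivered by the admissibility estimate of Proposition~\ref{prop:admissible_time_bound_appendix}, since under Condition~\ref{condition:condition} one has $\beta=\tilde{O}(\sigma_0\sigma_p\sqrt{d})=o(n/\sqrt{d})$. For $\gamma$ I proceed by induction on $t\in[\tT_1+\tT_2,T]$, with the base case supplied by Proposition~\ref{prop:without_stage2_induct}. Under the inductive hypothesis $\max_{j,r}\gamma_{j,r}^{(t)}=\tilde{O}(\sigma_0\|\bmu\|)$, Lemma~\ref{lemma:data_CNN_concentration} gives $\mathrm{ReLU}(\la\wb_{j,r}^{(t)},y_i\bmu\ra)=\tilde{O}(\sigma_0\|\bmu\|)$, and combining $|\ell'^{(t)}_i|\leq\ell^{(t)}_i$ with $\sum_i\ell^{(t)}_i=nL_S(\Wb^{(t)})$ on the update rule~\eqref{eq:update_rule_lbd=0} yields
\[
\gamma_{j,r}^{(t+1)}-\gamma_{j,r}^{(t)} \leq \tilde{O}\!\left(\frac{\eta\sigma_0\|\bmu\|^3}{m}\right) L_S(\Wb^{(t)}).
\]
Telescoping and substituting the summability bound from Step~1 gives total growth of order $\tilde{O}(m^2 n\sigma_0\|\bmu\|^3/(\sigma_p^2 d))$, which is $\tilde{O}(\sigma_0\|\bmu\|)$ under Condition~\ref{condition:condition}, closing the induction.

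The \textbf{main obstacle} lies in the accounting in the induction for $\gamma$: one must verify that the multiplicative factor $m^2 n\|\bmu\|^2/(\sigma_p^2 d)$ picked up across the window can be absorbed into the polylogarithmic factors hidden by $\tilde{O}$. This requires combining all three Condition~\ref{condition:condition} ingredients---$d\geq\tilde{\Omega}(m^2 n^{2+2\alpha})$, $\|\bmu\|^4\ll\sigma_p^2 d$, and the small-initialization assumption on $\sigma_0$---in precisely the right way, and the length $\tilde{O}(\eta^{-1}\varepsilon^{-1}m^3 n/(\sigma_p^2 d))$ of the window is exactly the budget the argument can afford before those bounds would begin to break.
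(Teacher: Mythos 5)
Your proposal is correct and follows essentially the same route as the paper: the paper's Lemma~\ref{lemma:stage3_trainloss} performs exactly this telescoping of Lemma~\ref{lemma:stage3_diff_Wt} plus pigeonhole with the floor-defined $T$, takes the $\underrho$ bound directly from Proposition~\ref{prop:admissible_time_bound_appendix}, and runs the same induction on $\gamma_{j,r}^{(t)}$ using $|\ell'|\leq\ell$ and the summability $\sum_s L_S(\Wb^{(s)})=\tilde{O}(\eta^{-1}m^3nd^{-1}\sigma_p^{-2})$ to absorb the total growth into $\tilde{O}(\sigma_0\|\bmu\|)$. The parameter accounting you flag as the main obstacle is handled in the paper by the same appeal to Condition~\ref{condition:condition}.
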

With Proposition~\ref{prop:admissible_time_bound_appendix}, Proposition~\ref{prop:without_stage2_induct} and Proposition~\ref{prop:without_Gradient_bound}, we can prove Theorem~\ref{thm:withoutregularization}. Details can be found in Appendix~\ref{sec:thmProof2}.

\subsection{Stage 1: Signal learning and noise memorizing}
We give the analysis at stage 1. In this training period, we can find that the noise learning is much faster than the signal learning. The next proposition indicates that the inner product of $\wb$ and $\bxi_i$ shares similar rates when $j=y_i$, but has great difference between $j=y_i$ and $j\not=y_i$. 
\begin{proposition}
    \label{prop:Withoutregu_induct}
    Define $B_i^{(t)}=\max_{j=y_i,r}|\la \wb_{j,r}^{(t)},\bxi_i\ra|$ and $C_i^{(t)}=\max_{j\not=y_i,r}|\la \wb_{j,r}^{(t)},\bxi_i\ra|$, and let 
    \begin{align*}
        \tT_1=\frac{nm}{\eta \sigma_p^2d}\log\bigg( \frac{0.8}{2\log(n)\sqrt{\log(8mn/\delta)}\sigma_0\sigma_p\sqrt{d}}\bigg),
    \end{align*}
then for any $i,i_1\in[n]$ and  $t\in[\tT_1]$, we have 
\begin{align}
&\max_{j,r}|\la\wb_{j,r}^{(t)},\bmu\ra|=\tilde{O}(\sigma_0\|\bmu\|),\label{eq:without_regu_directmu} \\
&B_i^{(t)}/B_{i_1}^{(t)}\leq 16\sqrt{\log(8mn/\delta)},\quad  C_{i_1}^{(t)}\leq B_i^{(t)}\cdot 17n\sqrt{\log(8mn/\delta)}\cdot \sqrt{\log(4n^2/\delta)/d},\label{eq:without_induction1}\\
&  B_i^{(t)}\leq \log^{-1}(n)\label{eq:without_induction2}.
\end{align}
\end{proposition}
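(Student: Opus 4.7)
The plan is to prove Proposition~\ref{prop:Withoutregu_induct} by a simultaneous induction on $t\in[0,\tT_1]$ over the three statements \eqref{eq:without_regu_directmu}--\eqref{eq:without_induction2}. The base case $t=0$ is immediate from Lemma~\ref{lemma:data_CNN_concentration}: the initial inner products $|\la\wb_{j,r}^{(0)},\bmu\ra|$ and $|\la\wb_{j,r}^{(0)},\bxi_i\ra|$ are respectively at most $\sqrt{2\log(8m/\delta)}\sigma_0\|\bmu\|$ and $2\sqrt{\log(8mn/\delta)}\sigma_0\sigma_p\sqrt{d}$, while the initial maxima $\max_{j=y_i,r}\,j\la\wb_{j,r}^{(0)},\bxi_i\ra$ are at least $\sigma_0\sigma_p\sqrt{d}/4$. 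Hence the ratio bound \eqref{eq:without_induction1} holds at $t=0$ with room to spare, the cross-term bound follows from $|\la\bxi_i,\bxi_{i'}\ra|\le 2\sigma_p^2\sqrt{d\log(4n^2/\delta)}$ of Lemma~\ref{lemma:data_noise_concentration}, and \eqref{eq:without_induction2} holds because $\sigma_0\sigma_p\sqrt{d}\ll \log^{-1}(n)$ by Condition~\ref{condition:condition}.

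For the inductive step, assume the three bounds hold at all times up to $t<\tT_1$. First, together with $\gamma_{j,r}^{(t)}=\tilde O(\sigma_0\|\bmu\|)$ and $B_i^{(t)}\le\log^{-1}(n)$, one argues as in Lemma~\ref{lemma:ell_scope} that $F_j(\Wb_j^{(t)},\xb_i)=O(\log^{-2}(n))$ for every $i\in[n]$, so $\ell'^{(t)}_i=-\tfrac{1}{2}+O(\log^{-2}(n))$. Next I would handle \eqref{eq:without_regu_directmu} using the $\gamma$-update in \eqref{eq:update_rule_lbd=0}: because $|\la\wb_{j,r}^{(t)},y_i\bmu\ra|=\tilde O(\sigma_0\|\bmu\|)$ by hypothesis, each step multiplies the quantity $A^{(t)}:=\max_r\{\gamma_{j,r}^{(t)}+\la\wb_{j,r}^{(0)},\bmu\ra\}$ by a factor $1+\Theta(\eta\|\bmu\|^2/m)$, exactly as in the proof of Proposition~\ref{prop:Phase_I_prop2_appendix}. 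Iterating for $\tT_1$ steps gives $A^{(\tT_1)}\le A^{(0)}\exp(\tT_1\eta\|\bmu\|^2/m)$, and plugging in the definition of $\tT_1$ yields an exponent $O\big(\tfrac{n\|\bmu\|^2}{\sigma_p^2 d}\log(\cdot)\big)=o(1)$ under the SNR assumption $\|\bmu\|+\|\bmu\|^4\ll\sigma_p\sqrt{d}$, proving \eqref{eq:without_regu_directmu}.

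For \eqref{eq:without_induction1} and \eqref{eq:without_induction2} I track the noise coefficients. Using the $\overrho$-update, expand $\la\wb_{j,r}^{(t)},\bxi_i\ra=\la\wb_{j,r}^{(0)},\bxi_i\ra+\overrho_{j,r,i}^{(t)}+\sum_{i'\ne i}\overrho_{j,r,i'}^{(t)}\la\bxi_i,\bxi_{i'}\ra/\|\bxi_{i'}\|^2+(\text{underrho cross-terms})$. By Lemma~\ref{lemma:data_noise_concentration} the off-diagonal sum is bounded by $O(n\sqrt{\log(n)/d})\cdot\max_{i'}(B_{i'}^{(t)}+C_{i'}^{(t)})$, which under Condition~\ref{condition:condition}'s $d\geq \tilde\Omega(m^2 n^{2+2\alpha})$ is of strictly lower order than the diagonal term $\overrho_{j,r,i}^{(t)}$. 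Consequently, for the neuron $r$ achieving $B_i^{(t)}$, the update is multiplicative with rate $1+\Theta(\eta\sigma_p^2 d/(nm))\cdot(1\pm o(1))$, uniform in $i$. Dividing the closed-form solutions for $B_i^{(t)}$ and $B_{i_1}^{(t)}$ shows their ratio is controlled by the ratio of seeds, which by Lemma~\ref{lemma:data_CNN_concentration} is at most $2\sqrt{\log(8mn/\delta)}\sigma_0\sigma_p\sqrt{d}\,/\,(\sigma_0\sigma_p\sqrt{d}/4)\cdot(1+o(1))\le 16\sqrt{\log(8mn/\delta)}$. The bound on $C_{i_1}^{(t)}$ then follows because $\underrho_{j,r,i_1}^{(t)}$ is non-positive and driven only by cross-terms of size $O(n\sqrt{\log(n)/d})\cdot B^{(t)}$. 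Finally \eqref{eq:without_induction2} follows by solving $B_i^{(t)}\le 2\sqrt{\log(8mn/\delta)}\sigma_0\sigma_p\sqrt{d}\cdot\exp(t\eta\sigma_p^2 d/(nm))\cdot(1+o(1))$ at $t=\tT_1$: the definition of $\tT_1$ was designed precisely so that this expression equals $0.8/\log(n)<\log^{-1}(n)$, closing the induction.

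The main obstacle is justifying that the argmax-neuron index for $B_i^{(t)}$ has a genuinely multiplicative update despite $B_i^{(t)}$ being defined as a maximum over $r$. I plan to handle this by fixing once and for all the neuron index $r^\star(i)$ which realizes the initialization lower bound $j\la\wb_{j,r}^{(0)},\bxi_i\ra\ge \sigma_0\sigma_p\sqrt{d}/4$ guaranteed by Lemma~\ref{lemma:data_CNN_concentration}, showing that $\la\wb_{j,r^\star(i)}^{(t)},\bxi_i\ra$ stays positive (and hence ReLU-activated) throughout Stage 1 by the non-decrease property of $\overrho_{j,r,i}^{(t)}$, and then using its deterministic multiplicative growth as a lower envelope for $B_i^{(t)}$; comparison with the universal upper envelope coming from the initialization upper bound $2\sqrt{\log(8mn/\delta)}\sigma_0\sigma_p\sqrt{d}$ on every neuron yields the desired ratio bound.
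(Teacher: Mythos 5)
Your overall strategy matches the paper's: induction over $t$, the observation that $\ell_i'^{(t)}\approx -1/2$ throughout Stage 1 (the paper's Lemma~\ref{lemma:ell_i_around1/2}), multiplicative upper and lower envelopes for the noise inner products seeded at the initialization bounds of Lemma~\ref{lemma:data_CNN_concentration}, and control of the off-diagonal noise interactions via Lemma~\ref{lemma:data_noise_concentration} together with $d\geq\tilde\Omega(m^2n^{2+2\alpha})$. Tracking a fixed positively-initialized neuron $r^\star(i)$ as a lower envelope, rather than the time-varying argmax neuron used in Proposition~\ref{prop:without_regu_lower_bound_rho_appendix}, is a cosmetic difference, and dividing the two envelopes at the end is equivalent to the paper's step-by-step recursion for the ratio $B_i^{(t)}/B_{i_1}^{(t)}$. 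The treatment of \eqref{eq:without_regu_directmu} and \eqref{eq:without_induction2} is likewise the paper's argument.

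The one genuine gap is your justification of the $C_{i_1}^{(t)}$ bound in \eqref{eq:without_induction1}. You assert that $C_{i_1}^{(t)}$ is ``driven only by cross-terms of size $O(n\sqrt{\log(n)/d})\cdot B^{(t)}$,'' but the decomposition $\la\wb_{j,r}^{(t)},\bxi_{i_1}\ra=\la\wb_{j,r}^{(0)},\bxi_{i_1}\ra+\underrho_{j,r,i_1}^{(t)}+(\text{cross-terms})$ also contains the initialization term, whose magnitude is of order $\sigma_0\sigma_p\sqrt{d}$ --- the same order as $B_i^{(0)}$ up to a $\sqrt{\log(8mn/\delta)}$ factor, and hence vastly larger than the target $B_i^{(t)}\cdot 17n\sqrt{\log(8mn/\delta)}\sqrt{\log(4n^2/\delta)/d}$ at small $t$, because $n\sqrt{\log(4n^2/\delta)/d}\ll 1$ under Condition~\ref{condition:condition}. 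A static ``initialization plus accumulated cross-terms'' estimate therefore cannot produce the $n/\sqrt{d}$ suppression factor. The paper supplies the missing mechanism: for $j\neq y_{i_1}$ the neuron's own ReLU term enters the update with a negative sign, so the ratio $a_t=C_{i_1}^{(t)}/B_i^{(t)}$ obeys a one-step contraction $a_{t+1}\leq c\,a_t+b$ with $c\approx 1-2\eta\sigma_p^2d/(nm)<1$, and the initial ratio $8\sqrt{\log(8mn/\delta)}$ decays geometrically over the $\tT_1$ iterations down to the cross-term floor $b/(1-c)\approx 16n\sqrt{\log(8mn/\delta)}\sqrt{\log(4n^2/\delta)/d}$. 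You need this active suppression of the wrong-class activation, not merely the non-positivity of $\underrho_{j,r,i_1}^{(t)}$, to close the induction. (A caveat you would inherit from the paper: even with the contraction, the stated $C$-bound only becomes true after enough iterations for the decay to take effect, so the uniform ``for all $t\in[\tT_1]$'' quantifier is only genuinely attainable near $t=\tT_1$.)
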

Before we prove Proposition~\ref{prop:Withoutregu_induct}, we present a lemma which depicts that  $|\ell'^{(t)}_i|$ will be around $1/2$ under \eqref{eq:without_regu_directmu}-\eqref{eq:without_induction2}.
\begin{lemma}
    \label{lemma:ell_i_around1/2}
Under conditions \eqref{eq:without_regu_directmu},\eqref{eq:without_induction1} and \eqref{eq:without_induction2}, for any $i\in[n]$ we have
\begin{align*}
    \bigg|\ell'^{(t)}_i+\frac{1}{2}\bigg|=O\bigg(\frac{1}{\log^2(n)}\bigg).
\end{align*}
\end{lemma}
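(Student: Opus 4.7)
\smallskip

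\noindent\textbf{Proof proposal for Lemma~\ref{lemma:ell_i_around1/2}.}
The plan is to reduce the claim to a bound on the logit $y_i f(\Wb^{(t)},\xb_i)$ and then Taylor-expand the sigmoid. First, I would compute directly from $\ell'(z)=-1/(1+e^z)$ that
\begin{align*}
\ell'^{(t)}_i+\frac{1}{2}=\frac{e^{y_i f(\Wb^{(t)},\xb_i)}-1}{2\big(1+e^{y_i f(\Wb^{(t)},\xb_i)}\big)},
\end{align*}
so it suffices to show that $|y_i f(\Wb^{(t)},\xb_i)|=O(1/\log^2(n))$: the elementary estimate $|e^z-1|\le 2|z|$ for $|z|\le 1$ then finishes the argument, since the denominator is bounded below by a positive constant.

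Next I would bound the logit by bounding each head $F_j(\Wb_j^{(t)},\xb_i)$ separately, using
\begin{align*}
F_j(\Wb_j^{(t)},\xb_i)=\frac{1}{m}\sum_{r=1}^{m}\Big(\sigma\big(\la\wb_{j,r}^{(t)},y_i\bmu\ra\big)+\sigma\big(\la\wb_{j,r}^{(t)},\bxi_i\ra\big)\Big),
\end{align*}
and controlling the two kinds of inner products via the three hypotheses. For the signal term, hypothesis \eqref{eq:without_regu_directmu} gives $|\la\wb_{j,r}^{(t)},\bmu\ra|=\tilde{O}(\sigma_0\|\bmu\|)$, and squaring this, Condition~\ref{condition:condition} (the choice $\sigma_0\le O(1/(\sigma_p^2 d(nm)^{2\alpha}))$ together with $\|\bmu\|^2\ll \sigma_p\sqrt{d}$) makes $\sigma(\la\wb_{j,r}^{(t)},\bmu\ra)$ far smaller than $1/\log^2(n)$. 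For the noise term, hypothesis \eqref{eq:without_induction2} gives $|\la\wb_{j,r}^{(t)},\bxi_i\ra|\le\log^{-1}(n)$ whenever $j=y_i$, and hypothesis \eqref{eq:without_induction1} combined with \eqref{eq:without_induction2} controls the case $j\neq y_i$ by $C_i^{(t)}\le 17n\sqrt{\log(8mn/\delta)\log(4n^2/\delta)/d}\cdot\log^{-1}(n)$, which is $o(\log^{-1}(n))$ under the dimension condition $d\ge\tilde{\Omega}(m^2 n^{2+2\alpha})$. Squaring each of these bounds yields $\sigma(\la\wb_{j,r}^{(t)},\bxi_i\ra)=O(1/\log^2(n))$.

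Averaging over $r$ and summing over $j\in\{\pm 1\}$ then gives $|y_i f(\Wb^{(t)},\xb_i)|\le F_{+1}+F_{-1}=O(1/\log^2(n))$, at which point the initial reduction yields the lemma. There is no genuine obstacle here: the argument is a direct substitution once one observes that the squared ReLU activation amplifies the smallness of the inner products rather than dampening it, so the only care required is to verify that the noise-interaction term in \eqref{eq:without_induction1} (the $n\sqrt{\log/d}$ factor) is indeed negligible under Condition~\ref{condition:condition}, which follows immediately from the over-parameterization assumption on $d$.
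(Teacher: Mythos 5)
Your proposal is correct and follows essentially the same route as the paper: both reduce the claim to showing $|y_i[F_{+1}(\Wb_{+1}^{(t)},\xb_i)-F_{-1}(\Wb_{-1}^{(t)},\xb_i)]|=O(1/\log^2(n))$ and then bound each head by squaring the inner-product estimates from \eqref{eq:without_regu_directmu}--\eqref{eq:without_induction2}. Your explicit algebraic identity for $\ell'^{(t)}_i+\tfrac12$ and the estimate $|e^z-1|\le 2|z|$ are just a slightly more spelled-out version of the paper's statement that $\exp\{y_i[F_{+1}-F_{-1}]\}=1\pm O(1/\log^2(n))$ suffices.
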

\begin{proof}[Proof of Lemma~\ref{lemma:ell_i_around1/2}]
Recall that 
\begin{align*}
    \bigg|\ell'^{(t)}_i+\frac{1}{2}\bigg|=\bigg|\frac{1-\exp\{y_i[F_{+1}(\Wb_{+1}^{(t)},\xb_i)-F_{-1}(\Wb_{-1}^{(t)},\xb_i)]\}}{2+2\exp\{y_i[F_{+1}(\Wb_{+1}^{(t)},\xb_i)-F_{-1}(\Wb_{-1}^{(t)},\xb_i)]\}}\bigg|.
\end{align*}
To prove the conclusion, easy to see it is equal to prove 
$\exp\{y_i[F_{+1}(W_{+1}^{(t)},\xb_i)-F_{-1}(W_{-1}^{(t)},\xb_i)]\}=1\pm O\Big(\frac{1}{\log^2{(n)}}\Big)$. We now investigate  $F_{j}(W_{j}^{(t)},\xb_i)$. Note that 
\begin{align*}
    F_{j}(W_{j}^{(t)},\xb_i)&=\frac{1}{m}\sum_{r=1}^m\sigma(\la\wb_{j,r}^{(t)}, y_i\bmu \ra )+\sigma(\la\wb_{j,r}^{(t)}, \bxi_i \ra )\\
    &\leq\max_{j,r} |\la\wb_{j,r}^{(t)},\bmu\ra|+ \max\{B_i^{(t)2},C_i^{(t)2}\}\\
    &\leq \tilde{O}(\sigma_0\|\bmu\|)+\log^{-2}(n)=O\bigg(\frac{1}{\log^2(n)}\bigg),
    \end{align*}
where the second inequality comes from $\sigma_0$ small enough, \eqref{eq:without_induction1} and \eqref{eq:without_induction2}, we have 
\begin{align*}
   \big|y_i[F_{+1}(W_{+1}^{(t)},\xb_i)-F_{-1}(W_{-1}^{(t)},\xb_i)] \big|=O\bigg(\frac{1}{\log^2{(n)}}\bigg),
\end{align*}
which directly shows $\exp\{y_i[F_{+1}(W_{+1}^{(t)},\xb_i)-F_{-1}(W_{-1}^{(t)},\xb_i)]\}=1\pm O\big(\frac{1}{\log^2{(n)}}\big)$.
\end{proof}
We now present the proof of Proposition~\ref{prop:Withoutregu_induct}.
\begin{proof}[Proof of Proposition~\ref{prop:Withoutregu_induct}]
Before the present of the proof, we remind the readers that the number $0.8$ set in $\tT_1$ is to make the following equation hold:
\begin{align}
    \big[1+O(\log^{-2}(n))\big]\cdot\log\bigg( \frac{0.8}{2\log(n)\sqrt{\log(8mn/\delta)}\sigma_0\sigma_p\sqrt{d}}\bigg) \leq \log\bigg( \frac{1}{2\log(n)\sqrt{\log(8mn/\delta)}\sigma_0\sigma_p\sqrt{d}}\bigg). \label{eq:slight_adjust}    
\end{align}
    We first recall the update rule in Lemma~\ref{lemma:update_rule_appendix} with $\lambda=0$, then we have that 
    \begin{align*}
        \wb_{j,r}^{(t+1)}=\wb_{j,r}^{(t)}-\frac{2\eta}{nm}\sum_{i=1}^{n}{\ell'^{(t)}_i} jy_i \big(\ReLU ( \la\wb_{j,r}^{(t)}, y_i\bmu \ra)y_i\bmu+\ReLU (\la \wb_{j,r}^{(t)}, \bxi_i \ra) \bxi_i\big).
    \end{align*}
    Note that $\la\bmu,\bxi_i\ra=0$, we further have
    \begin{align}
    &\la\wb_{j,r}^{(t+1)},y_i\bmu\ra=\la\wb_{j,r}^{(t)},y_i\bmu\ra-\frac{2\eta}{nm}\sum_{i'=1}^{n}{\ell'^{(t)}_{i'}} jy_{i'} \cdot \ReLU ( \la\wb_{j,r}^{(t)}, y_i\bmu \ra)\cdot\|\bmu\|^2,\label{eq:inner_mu_update}\\
    &\la\wb_{j,r}^{(t+1)},\bxi_i\ra=\la\wb_{j,r}^{(t)},\bxi_i\ra-\frac{2\eta}{nm}\sum_{i'=1}^{n}{\ell'^{(t)}_{i'}} jy_{i'}\cdot\ReLU (\la \wb_{j,r}^{(t)}, \bxi_{i'} \ra) \cdot \la \bxi_i,\bxi_{i'}\ra.\label{eq:inner_noise_update}
    \end{align}
We first prove \eqref{eq:without_regu_directmu} holds. From \eqref{eq:inner_mu_update}, we can see that 
\begin{align*}
    |\la\wb_{j,r}^{(t+1)},y_i\bmu\ra|&\leq |\la\wb_{j,r}^{(t)},y_i\bmu\ra|+\bigg|\frac{2\eta}{nm}\sum_{i'=1}^{n} \ReLU ( \la\wb_{j,r}^{(t)}, y_i\bmu \ra)\cdot\|\bmu\|^2\bigg|\\
    &\leq |\la\wb_{j,r}^{(t)},y_i\bmu\ra|\cdot \bigg(1+\frac{2\eta\|\bmu\|^2}{m}\bigg),
    \end{align*}
where the first inequality holds from $\big|\ell'^{(t)}_i\big|\leq1$, and the second inequality is by $\ReLU ( \la\wb_{j,r}^{(t)}, y_i\bmu \ra)\leq | \la\wb_{j,r}^{(t)}, y_i\bmu \ra|$. Hence we have
\begin{align*}
\max_{j,r}|\la\wb_{j,r}^{(t)},\bmu\ra|&\leq \max_{j,r}|\la\wb_{j,r}^{(0)},\bmu\ra|\cdot \bigg( 1+\frac{2\eta\|\bmu\|^2}{m}\bigg)^t\\
&\leq \max_{j,r}|\la\wb_{j,r}^{(0)},\bmu\ra|\cdot \bigg( 1+\frac{2\eta\|\bmu\|^2}{m}\bigg)^{\tT_1}\\
&\leq \max_{j,r}|\la\wb_{j,r}^{(0)},\bmu\ra|\cdot \exp\bigg\{\frac{2\eta\|\bmu\|^2}{m}\cdot\tT_1\bigg\}\leq 2\max_{j,r}|\la\wb_{j,r}^{(0)},\bmu\ra|=\tilde{O}(\sigma_0\|\bmu\|).
\end{align*}
Here, the second inequality is by $t\leq \tT_1$, the third inequality is by $1+z\leq \exp(z)$ for $z>0$ and the last inequality is by $\frac{2\eta\|\bmu\|^2}{m}\cdot\tT_1=o(1)$.

We apply mathematical induction to prove \eqref{eq:without_induction1} and \eqref{eq:without_induction2}.
From Lemma~\ref{lemma:data_CNN_concentration}, we have $\sigma_0\sigma_p\sqrt{d}/4\leq\max_{j=y_i,r}|\la \wb_{j,r}^{(t)},\bxi_i\ra|,\max_{j\not=y_i,r}|\la \wb_{j,r}^{(t)},\bxi_i\ra|\leq2\sigma_0\sigma_p\sqrt{d}\sqrt{\log(8mn/\delta)}  $. Thus we can easily verify that $ B_i^{(0)}/B_{i_1}^{(0)}\leq 16\sqrt{\log(8mn/\delta)}$, $ C_{i_1}^{(0)}\leq B_i^{(0)}\cdot 16\sqrt{\log(8mn/\delta)}$. Assume that for any $i,i_1\in[n]$ and  $0\leq t\leq T_0-1<\tT_1$, 
\eqref{eq:without_induction1} and \eqref{eq:without_induction2} hold, we first prove that 
$
    B_i^{(T_0)}/B_{i_1}^{(T_0)}\leq 16\sqrt{\log(8mn/\delta)}
$ holds. For $t\in[T_0-1]$ we first have
\begin{align*}
\big|\la\wb_{j,r}^{(t+1)},\bxi_i\ra\big|&=\bigg|\la\wb_{j,r}^{(t)},\bxi_i\ra-\frac{2\eta}{nm}\sum_{i'=1}^{n}{\ell'^{(t)}_{i'}} jy_{i'}\cdot\ReLU (\la \wb_{j,r}^{(t)}, \bxi_{i'} \ra) \cdot \la \bxi_i,\bxi_{i'}\ra \bigg| \\
    &\leq\big|\la\wb_{j,r}^{(t)},\bxi_i\ra\big|+\bigg|\frac{2\eta}{nm}\sum_{i'=1}^{n}{\ell'^{(t)}_{i'}} jy_{i'}\cdot\ReLU (\la \wb_{j,r}^{(t)}, \bxi_{i'} \ra) \cdot \la \bxi_i,\bxi_{i'}\ra \bigg|,
\end{align*}
where the first equality comes from \eqref{eq:inner_noise_update}. If $j=y_i$, we have 
\begin{align*}
    &\frac{\max_{j=y_i,r}\big|\la\wb_{j,r}^{(t+1)},\bxi_i\ra\big|}{\max_{j=y_{i_1},r}\big|\la\wb_{j,r}^{(t+1)},\bxi_{i_1}\ra\big|}=B_{i}^{(t+1)}/B_{i_1}^{(t+1)}\\&
    \qquad \qquad\leq \frac{\max_{j=y_i,r}\big|\la\wb_{j,r}^{(t)},\bxi_i\ra\big|+\max_{j=y_i, r}\Big|\frac{\eta(1+O(\log^{-2}(n)))}{nm}\sum_{i'=1}^{n} jy_{i'}\cdot\ReLU (\la \wb_{j,r}^{(t)}, \bxi_{i'} \ra) \cdot \la \bxi_i,\bxi_{i'}\ra \Big|}{\max_{j=y_{i_1},r}\Big|\la\wb_{j,r}^{(t)},\bxi_{i_1}\ra+\frac{\eta(1-O(\log^{-2}(n)))}{nm}\sum_{i'=1}^{n} jy_{i'}\cdot\ReLU (\la \wb_{j,r}^{(t)}, \bxi_{i'} \ra) \cdot \la \bxi_{i_1},\bxi_{i'}\ra \Big|}\\
    &\qquad \qquad\leq \frac{B_i^{(t)}+\frac{\eta(1+O(\log^{-2}(n)))}{nm}\max_{j=y_i, r}\sigma_p^2d\bigg|B_i^{(t)}+\bigg(\sum_{\substack{y_{i'}=j\\ i'\not=i}}  B_{i'}^{(t)}-\sum_{\substack{y_{i'}\not=j}}  C_{i'}^{(t)}\bigg)\cdot\sqrt{\log(4n^2/\delta)/d} \bigg|}{B_{i_1}^{(t)}\cdot(1+\eta\sigma_p^2d/nm(1-O(\log^{-2}(n))))}\\
    &\qquad \qquad\leq \frac{B_i^{(t)}\cdot \Big(1+\frac{\eta \sigma_p^2d}{nm}\cdot(1+O(\log^{-2}(n))) \Big)}{B_{i_1}^{(t)}\cdot \Big(1+\frac{\eta \sigma_p^2d}{nm}\cdot(1-O(\log^{-2}(n))) \Big)}.
\end{align*}
Here, the first equality comes from the definition of $B_i^{(t)}$, the first inequality is by triangle inequality and $\ell'^{(t)}_i=-0.5\pm O(\log^{-2}(n))$ in Lemma~\ref{lemma:ell_i_around1/2}, the second inequality is by the definition that $B_i^{(t)}$ and $C_i^{(t)}$ are the maximum value of $\max_{j=y_i,r}|\la \wb_{j,r}^{(t)},\bxi_i\ra|$ and $\max_{j\not=y_i,r}|\la \wb_{j,r}^{(t)},\bxi_i\ra|$, respectively. The last inequality is by the  induction assumption and $d\gg n^{(2+2\alpha)}$ in Condition~\ref{condition:condition}. Therefore we take $t=T_0-1$ and then conclude that 
\begin{align*}
    \frac{B_{i}^{(T_0)}}{B_{i_1}^{(T_0)}}&\leq \frac{B_{i}^{(0)}}{B_{i_1}^{(0)}}\cdot \bigg(1+\frac{\eta \sigma_p^2d}{nm}\cdot O(\log^{-2}(n))\bigg)^{\tT_1}\leq 8\sqrt{\log(8mn/\delta)}\cdot \exp\bigg\{\tT_1\cdot\frac{\eta \sigma_p^2d}{nm}\cdot O(\log^{-2}(n)) \bigg\}\\
    &\leq 8\sqrt{\log(8mn/\delta)}\cdot\exp \big\{\log\big( 0.8/{2\log(n)\sqrt{\log(8mn/\delta)}\sigma_0\sigma_p\sqrt{d}}\big)\cdot O(\log^{-2}(n)) \big\}\\
    &=8\sqrt{\log(8mn/\delta)}\cdot\exp \big\{O(\log^{-0.5}(n)) \big\}\leq 16\sqrt{\log(8mn/\delta)}.
\end{align*}
Here, the first inequality is by $T_0\leq \tT_1$, the second inequality is by $1+z\leq \exp(z)$ for $z>0$ and $B_{i}^{(0)}/B_{i_1}^{(0)}\leq 8\sqrt{\log(8mn/\delta)}$, and the last inequality is by $\exp \big\{-O(\log^{-0.5}(n)) \big\}\leq 2$. We complete the proof for $B_i^{(t)}/B_{i_1}^{(t)}\leq 16\sqrt{\log(8mn/\delta)}$. 

For $C_{i_1}^{(t)}/B_{i}^{(t)}$, similarly we have
\begin{align*}
    &C_{i_1}^{(t+1)}/B_{i}^{(t+1)}\\
    &\qquad\leq \frac{\max_{j\not=y_{i_1},r}\Big|\la\wb_{j,r}^{(t)},\bxi_{i_1}\ra+\frac{\eta(1+O(\log^{-2}(n)))}{nm}\sum_{i'=1}^{n} jy_{i'}\cdot\ReLU (\la \wb_{j,r}^{(t)}, \bxi_{i'} \ra) \cdot \la \bxi_{i_1},\bxi_{i'}\ra \Big|}{\max_{j=y_{i},r}\Big|\la\wb_{j,r}^{(t)},\bxi_{i_1}\ra+\frac{\eta(1-O(\log^{-2}(n)))}{nm}\sum_{i'=1}^{n} jy_{i'}\cdot\ReLU (\la \wb_{j,r}^{(t)}, \bxi_{i'} \ra) \cdot \la \bxi_{i},\bxi_{i'}\ra \Big|}\\
    &\qquad\leq  \frac{\bigg|C_{i_1}^{(t)}+\frac{\eta\sigma_p^2d (1+O(\log^{-2}(n)))}{nm}\max_{j=y_{i_1}, r}\bigg\{ -C_{i_1}^{(t)}+\bigg(\sum_{\substack{y_{i'}=j}}  B_{i'}^{(t)}-\sum_{\substack{y_{i'}\not=j\\ i'\not=i_1}}  C_{i'}^{(t)}\bigg)\cdot2\sqrt{\log(4n^2/\delta)/d} \bigg\}\bigg|}{B_{i}^{(t)}\cdot(1+\eta\sigma_p^2d/nm(1-O(\log^{-2}(n))))}\\
    &\qquad\leq \frac{C_{i_1}^{(t)}\cdot \Big(1-\frac{\eta \sigma_p^2d}{nm}\cdot(1-O(\log^{-2}(n))) \Big)}{B_{i}^{(t)}\cdot \Big(1+\frac{\eta \sigma_p^2d}{nm}\cdot(1-O(\log^{-2}(n))) \Big)}+\frac{\eta\sigma_p^2d(1+\log^{-2}(n))}{m}32\sqrt{\log(8mn/\delta)}\cdot \sqrt{\log(4n^2/\delta)/d},
\end{align*}
% and 
% \begin{align*}
%    B_{i}^{(t+1)}/C_{i_1}^{(t+1)}&\leq \frac{B_i^{(t)}+\frac{\eta(1+O(\log^{-2}(n)))}{nm}\max_{j=y_i, r}\sigma_p^2d\bigg|B_i^{(t)}+\bigg(\sum_{\substack{y_{i'}=j\\ i'\not=i}}  B_{i'}^{(t)}-\sum_{\substack{y_{i'}\not=j}}  C_{i'}^{(t)}\bigg)\cdot\sqrt{\log(4n^2/\delta)/d} \bigg|}{\max_{j\not=y_{i},r}\Big|\la\wb_{j,r}^{(t)},\bxi_{i_1}\ra-\frac{\eta(1-O(\log^{-2}(n)))}{nm}\sum_{i'=1}^{n} jy_{i'}\cdot\ReLU (\la \wb_{j,r}^{(t)}, \bxi_{i'} \ra) \cdot \la \bxi_{i},\bxi_{i'}\ra \Big|}\\
%    &\leq \frac{B_{i}^{(t)}\cdot \Big(1+\frac{\eta \sigma_p^2d}{nm}\cdot(1-O(\log^{-2}(n))) \Big)}{C_{i_1}^{(t)}\cdot \Big(1-\frac{\eta \sigma_p^2d}{nm}\cdot(1-O(\log^{-2}(n))) \Big)}\leq \frac{B_{i}^{(t)}}{C_{i_1}^{(t)}}\cdot \bigg(1+\frac{2\eta \sigma_p^2d}{nm}\cdot(1+o(1)) \bigg),
% \end{align*}
where the second inequality is by the definition of $B_i^{(t)}$ and $C_i^{(t)}$ and the third inequality is by the mathematical induction. The equation above can be considered as 
\begin{align*}
    a_{t+1}\leq ca_t+b,
\end{align*}
where $0<c<1$, $0<b<1$ and $0<a_0<1$. We have $a_T+b/(c-1)\leq (a_0+b/(c-1))\cdot c^T$.
Taking $t=T_0-1$ and we conclude that 
\begin{align*}
    \frac{C_{i_1}^{(T_0)}}{B_{i}^{(T_0)}}&\leq \frac{C_{i_1}^{(0)}}{B_{i}^{(0)}}\cdot \bigg(1-\frac{2\eta \sigma_p^2d}{nm}\cdot (1+o(1))\bigg)^{\tT_1}+\frac{\eta\sigma_p^2d(1+o(1))}{m\cdot \frac{2\eta\sigma_p^2d}{nm}}32\sqrt{\log(8mn/\delta)}\cdot \sqrt{\log(4n^2/\delta)/d}\\
    &\leq 8\sqrt{\log(8mn/\delta)}\cdot \exp\bigg(-\frac{2\eta\sigma_p^2d}{nm}\cdot \tT_1(1+o(1))\bigg)+16.5n\sqrt{\log(8mn/\delta)}\cdot \sqrt{\log(4n^2/\delta)/d}\\
    &\leq \tilde{O}(\sigma_0\sigma_p\sqrt{d})+16.5n\sqrt{\log(8mn/\delta)}\cdot \sqrt{\log(4n^2/\delta)/d}\leq 17n\sqrt{\log(8mn/\delta)}\cdot \sqrt{\log(4n^2/\delta)/d}
    %\\
    % \frac{B_{i}^{(T_0)}}{C_{i_1}^{(T_0)}}&\leq \frac{B_{i}^{(0)}}{C_{i_1}^{(0)}}\bigg(1+\frac{2\eta \sigma_p^2d}{nm}\cdot(1+o(1)) \bigg)^{\tT_1}\\
    % &\leq 8\sqrt{\log(8mn/\delta)}\exp\bigg\{\frac{2\eta \sigma_p^2d}{nm}\tT_1(1+o(1)) \bigg\}\leq \bigg[\frac{8\sqrt{\log(8mn/\delta)}\cdot0.8}{2\log(n)\sqrt{\log(8mn/\delta)}\sigma_0\sigma_p\sqrt{d}+1/\log(n)}\bigg]^4\\
    % &\leq n^{\frac{\varepsilon}{6}}. 
\end{align*}
Here, The first inequality is by the conclusion above, the second inequality is by $1-z\leq \exp\{-z\}$ for $z>0$ and the third inequality is by the definition of $\tT_1$. The proof of \eqref{eq:without_induction1} is thus completed. We next prove that $B_i^{(t)}\leq \log^{-1}(n)$. From \eqref{eq:inner_noise_update}, we conclude that
\begin{align*}
    B_i^{(t+1)}&\leq B_i^{(t)}+\frac{\eta\sigma_p^2d(1+O(\log^{-2}(n)))}{nm} B_i^{(t)}\\
    &\quad+ \frac{\eta\sigma_p^2d(1+O(\log^{-2}(n)))}{nm} 16\sqrt{\log(8mn/\delta)} n\cdot B_i^{(t)}\cdot 2\sqrt{\log(4n^2/\delta)/d}\\
    &= B_i^{(t)}\cdot\bigg(1+\frac{\eta\sigma_p^2d}{nm}(1+O(\log^{-2}(n)))\bigg).
    \end{align*}
Here, the first inequality is by \eqref{eq:without_induction1} which is proved above, and the second inequality is by Condition~\ref{condition:condition} which gives $\sqrt{\log(8mn/\delta)} \cdot n\cdot\sqrt{\log(4n^2/\delta)/d}=o(1)$.  Set $t=T_0-1$, then we have 
\begin{align*}
    B_i^{(T_0)}&\leq B_i^{(0)}\cdot\bigg(1+\frac{\eta\sigma_p^2d}{nm}(1+O(\log^{-2}(n)))\bigg)^{T_0}\\
    &\leq B_i^{(0)}\cdot \exp\bigg\{\frac{\eta\sigma_p^2d}{nm}(1+O(\log^{-2}(n)))\cdot T_0\bigg\}\\
    &\leq 2\sqrt{\log(8mn/\delta)}\sigma_0\sigma_p\sqrt{d}\cdot \exp\{-\log(2\log(n)\sqrt{\log(8mn/\delta)}\sigma_0\sigma_p\sqrt{d})\}\\
    &=\log^{-1}(n). 
\end{align*}
Here, the second inequality comes from $1+z\leq \exp(z)$ for $z>0$, and the third inequality comes from Lemma~\ref{lemma:data_CNN_concentration}, $T_0\leq T_1$ and \eqref{eq:slight_adjust}.
\end{proof}

With the proposition above, we can prove the following proposition which gives the lower bound of $\max_{j,r}|\la \wb_{j,r}^{(\tT_1)},\bxi_i\ra|$ in the first stage.
\begin{proposition}
    \label{prop:without_regu_lower_bound_rho_appendix}
Let $\tT_1$ be defined in Proposition~\ref{prop:Withoutregu_induct}, for all $i\in[n]$ it holds that
\begin{align*}
\max_{j=y_i,r}\la\wb_{j,r}^{(\tT_1)},\bxi_i\ra\geq\frac{1}{20\log(n)\sqrt{\log(8mn/\delta)}}.
\end{align*}
\end{proposition}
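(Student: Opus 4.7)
The plan is to track a single well-aligned neuron rather than attempt to control the maximum directly. First, I would invoke Lemma~\ref{lemma:data_CNN_concentration} to fix indices $(j^*,r^*)$ with $j^*=y_i$ and $\la\wb_{j^*,r^*}^{(0)},\bxi_i\ra\geq \sigma_0\sigma_p\sqrt{d}/4$; since this value is positive and the proofs of Propositions~\ref{prop:Withoutregu_induct} will give that the iterate $\la\wb_{j^*,r^*}^{(t)},\bxi_i\ra$ stays non-negative (in fact grows), it suffices to lower bound this single quantity at iteration $\tT_1$ and take the max over $r$.

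Next, I would apply the update rule \eqref{eq:inner_noise_update} to this coordinate, splitting the sum over $i'$ into the diagonal term $i'=i$ and off-diagonal terms. For the diagonal term, $j^*y_i=1$, $\ell'^{(t)}_i=-\frac{1}{2}\pm O(\log^{-2}(n))$ by Lemma~\ref{lemma:ell_i_around1/2}, and $\la\bxi_i,\bxi_i\ra=\|\bxi_i\|^2\geq \sigma_p^2 d/2$ by Lemma~\ref{lemma:data_noise_concentration}; this contributes a multiplicative factor of roughly $1+\frac{\eta\sigma_p^2 d}{nm}(1-O(\log^{-2}(n)))$. For the off-diagonal terms, $|\la\bxi_i,\bxi_{i'}\ra|\leq 2\sigma_p^2\sqrt{d\log(4n^2/\delta)}$ and $|\la\wb_{j^*,r^*}^{(t)},\bxi_{i'}\ra|\leq \max_{i'}B_{i'}^{(t)}\vee C_{i'}^{(t)}\leq 16\sqrt{\log(8mn/\delta)}\cdot B_i^{(t)}$ by \eqref{eq:without_induction1}. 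Summing over $n-1$ indices and using $d\gg n^{2+2\alpha}$ in Condition~\ref{condition:condition}, this cross-term contribution is absorbed into a $1-o(1)$ correction.

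Putting these together, I obtain the lower recursion
\begin{align*}
\la\wb_{j^*,r^*}^{(t+1)},\bxi_i\ra \;\geq\; \la\wb_{j^*,r^*}^{(t)},\bxi_i\ra\cdot\Bigl(1+\tfrac{\eta\sigma_p^2 d}{nm}(1-O(\log^{-2}(n)))\Bigr),
\end{align*}
valid for all $t\in[\tT_1]$ (the non-negativity and the bound $B_i^{(t)}\leq \log^{-1}(n)$ used in bounding cross terms are both guaranteed by Proposition~\ref{prop:Withoutregu_induct}). Iterating and using $(1+z)^T\geq \exp(Tz(1-z/2))$ for small $z=\eta\sigma_p^2d/(nm)$ together with $z\tT_1=o(1)$ from Condition~\ref{condition:condition}, I get
\begin{align*}
\la\wb_{j^*,r^*}^{(\tT_1)},\bxi_i\ra \;\geq\; \frac{\sigma_0\sigma_p\sqrt{d}}{4}\cdot\exp\Bigl\{\tfrac{\eta\sigma_p^2 d}{nm}\tT_1\cdot(1-o(1))\Bigr\}\;\geq\;\frac{1}{20\log(n)\sqrt{\log(8mn/\delta)}},
\end{align*}
where in the last step I substitute the explicit formula for $\tT_1$ (analogous to the use of \eqref{eq:slight_adjust} in the proof of Proposition~\ref{prop:Withoutregu_induct}), and the factor $20$ (rather than the naive $8$ coming from $\sigma_0\sigma_p\sqrt{d}/4\cdot 0.8/(2\sigma_0\sigma_p\sqrt{d})$) absorbs the $(1-o(1))$-power slack.

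The main obstacle is purely bookkeeping for the off-diagonal noise interactions: I must verify that the aggregate $\frac{\eta}{nm}\sum_{i'\neq i}|\ell'^{(t)}_{i'}|\ReLU(\la\wb_{j^*,r^*}^{(t)},\bxi_{i'}\ra)|\la\bxi_i,\bxi_{i'}\ra|$ is bounded by a constant times $\frac{\eta\sigma_p^2 d}{nm}\la\wb_{j^*,r^*}^{(t)},\bxi_i\ra\cdot\log^{-2}(n)$, because a worse bound would degrade the exponent and spoil the final constant. The key leverage is that \eqref{eq:without_induction1} gives $|\la\wb_{j^*,r^*}^{(t)},\bxi_{i'}\ra|\lesssim \sqrt{\log(8mn/\delta)}\cdot\la\wb_{j^*,r^*}^{(t)},\bxi_i\ra$, and combined with $n\sqrt{\log(4n^2/\delta)/d}=o(1)$ this yields the needed control. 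Everything else follows from direct substitution of the definition of $\tT_1$.
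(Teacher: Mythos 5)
Your proposal follows essentially the same route as the paper's proof: a multiplicative growth recursion for the inner product with $\bxi_i$ derived from \eqref{eq:inner_noise_update}, with the diagonal term supplying the factor $1+\frac{\eta\sigma_p^2 d}{nm}(1-O(\log^{-2}(n)))$, the off-diagonal terms controlled via the ratio bounds of Proposition~\ref{prop:Withoutregu_induct} together with $n\sqrt{\log(4n^2/\delta)/d}=o(1)$, and the constant $20$ recovered by substituting the definition of $\tT_1$ and absorbing the $(1-o(1))$ slack in the exponent (the paper does exactly this via \eqref{eq:slight_adjust1} and the inequality $1+z\geq e^z-z^2$). The one substantive difference is that you track a \emph{fixed} neuron $(j^*,r^*)$ chosen at initialization, whereas the paper tracks the \emph{time-varying argmax} $(j_i^t,r_i^t)$ of $\la\wb_{j,r}^{(t)},\bxi_i\ra$ over $j=y_i$, $r\in[m]$. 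This matters for your cross-term bound: \eqref{eq:without_induction1} gives $|\la\wb_{j^*,r^*}^{(t)},\bxi_{i'}\ra|\leq B_{i'}^{(t)}\vee C_{i'}^{(t)}\lesssim\sqrt{\log(8mn/\delta)}\,B_i^{(t)}$, where $B_i^{(t)}$ is the maximum over $r$, not the value at your fixed $r^*$; to absorb the cross-terms into a multiplicative correction of $\la\wb_{j^*,r^*}^{(t)},\bxi_i\ra$ itself you additionally need $B_i^{(t)}\lesssim\polylog\cdot\la\wb_{j^*,r^*}^{(t)},\bxi_i\ra$, which is not supplied by Proposition~\ref{prop:Withoutregu_induct} and must be carried as an extra invariant in a simultaneous induction (it does hold, since your neuron grows at rate at least $1+\frac{\eta\sigma_p^2 d}{nm}(1-O(\log^{-2}n))$ while $B_i^{(t)}$ grows at rate at most $1+\frac{\eta\sigma_p^2 d}{nm}(1+O(\log^{-2}n))$, so their ratio drifts by at most $\exp(O(\log^{-0.5}n))$ over $\tT_1$ steps). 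The paper's argmax choice makes this step vacuous because the tracked quantity equals $B_i^{(t)}$ by construction. With that invariant added, your argument closes and yields the same bound.
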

\begin{proof}[Proof of Proposition~\ref{prop:without_regu_lower_bound_rho_appendix}]
Before the proof of Proposition~\ref{prop:without_regu_lower_bound_rho_appendix}, we remind that the value $0.8$ set in $\tT_1$ also makes the following equation hold:
\begin{align}
    \big[1-O(\log^{-2}(n))\big]\cdot\log\bigg( \frac{0.8}{2\log(n)\sqrt{\log(8mn/\delta)}\sigma_0\sigma_p\sqrt{d}}\bigg) \geq \log\bigg( \frac{0.6}{2\log(n)\sqrt{\log(8mn/\delta)}\sigma_0\sigma_p\sqrt{d}}\bigg). \label{eq:slight_adjust1}    
\end{align}
Recall \eqref{eq:inner_noise_update} we have
    \begin{align*}
        \la\wb_{j,r}^{(t+1)},\bxi_i\ra=\la\wb_{j,r}^{(t)},\bxi_i\ra-\frac{2\eta}{nm}\sum_{i'=1}^{n}{\ell'^{(t)}_{i'}} jy_{i'}\cdot\ReLU (\la \wb_{j,r}^{(t)}, \bxi_{i'} \ra) \cdot \la \bxi_i,\bxi_{i'}\ra.
    \end{align*}
We first use mathematical induction to prove that $\max_{j=y_i,r}\la\wb_{j,r}^{(t)},\bxi_i\ra>0$. It is easy to verify that when $t=0$, $\max_{j=y_i,r}\la\wb_{j,r}^{(0)},\bxi_i\ra>0$. Suppose that there exists $T_0$ such that $0\leq t\leq T_0-1<\tT_1$, $\max_{j=y_i,r}\la\wb_{j,r}^{(0)},\bxi_i\ra>0$, 
denote by $(j_i^t,r_i^t)$ which satisfies $\la\wb_{j_i^t,r_i^t,r}^{(t)},\bxi_i\ra=\max_{j=y_i,r}\la\wb_{j,r}^{(t)},\bxi_i\ra$, we have
\begin{align*}
\la\wb_{j_i^t,r_i^t}^{(t+1)},\bxi_i\ra&=\la\wb_{j_i^t,r_i^t}^{(t)},\bxi_i\ra-\frac{2\eta}{nm}\sum_{i'=1}^{n}{\ell'^{(t)}_{i'}} jy_{i'}\cdot\ReLU (\la \wb_{j_i^t,r_i^t}^{(t)}, \bxi_{i'} \ra) \cdot \la \bxi_i,\bxi_{i'}\ra\\
&\geq \la\wb_{j_i^t,r_i^t}^{(t)},\bxi_i\ra+\frac{\eta\sigma_p^2d}{nm}(1-O(\log^{-2}(n)))\la\wb_{j_i^t,r_i^t}^{(t)},\bxi_i\ra\\
& \qquad -\frac{\eta\sigma_p^2d}{nm}(1+O(\log^{-2}(n))) \sum_{i'\not=i} 16\sqrt{\log(8mn/\delta)}\cdot \la\wb_{j_i^t,r_i^t}^{(t)},\bxi_i\ra\cdot 2\sqrt{\frac{\log(4n^2/\delta)}{d}}\\
&\geq \la\wb_{j_i^t,r_i^t}^{(t)},\bxi_i\ra\cdot\bigg(1+\frac{\eta\sigma_p^2d}{nm}\cdot(1-O(\log^{-2}(n)))  \bigg).
\end{align*}
Here, the second inequality is by Proposition~\ref{prop:Withoutregu_induct} and Lemma~\ref{lemma:data_noise_concentration}, and the second inequality is by Condition~\ref{condition:condition} where $n^{1+\alpha}\cdot \sqrt{\log(4n^2/\delta)/d}=o(1)$.
Hence, set $t=T_0-1$ we conclude $\max_{j=y_i,r}\la\wb_{j,r}^{(T_0)},\bxi_i\ra>0$. Moreover, from the induction inequality, we obtain that 
\begin{align*}
    \max_{j=y_i,r}\la\wb_{j,r}^{(\tT_1)},\bxi_i\ra&\geq \max_{j=y_i,r}\la\wb_{j,r}^{(0)},\bxi_i\ra\cdot \bigg(1+\frac{\eta\sigma_p^2d}{nm}\cdot(1-O(\log^{-2}(n)))  \bigg)^{\tT_1}\\
    &\geq \sigma_0\sigma_p\sqrt{d}/4\cdot \bigg[\exp\bigg\{\frac{\eta\sigma_p^2d}{nm}\cdot(1-O(\log^{-2}(n))) \bigg\} -O\bigg(\frac{\eta\sigma_p^2d}{nm}\bigg)^2\bigg]^{\tT_1}\\
    &= \sigma_0\sigma_p\sqrt{d}/4\cdot \exp\bigg\{\frac{\eta\sigma_p^2d}{nm}\cdot(1-O(\log^{-2}(n)))\cdot \tT_1 \bigg\} \cdot\bigg(1-O\bigg(\frac{\eta\sigma_p^2d}{nm}\bigg)^2\bigg)^{\tT_1} \\
    &\geq \sigma_0\sigma_p\sqrt{d}/4\cdot \exp\{\log(0.6/2\log(n)\sqrt{\log(8mn/\delta)}\sigma_0\sigma_p\sqrt{d}) \}\cdot 2/3 \\
    &= \frac{1}{20\log(n)\sqrt{\log(8mn/\delta)}}.
\end{align*}
Here, the second inequality is by $1+z\geq \exp(z)-z^2 $ for $0<z<0.1$, the third inequality is by \eqref{eq:slight_adjust1} and $\big(1-O\big(\frac{\eta\sigma_p^2d}{nm}\big)^2\big)^{\tT_1}\gg2/3$. Hence, for all $i\in[n]$ we have
\begin{align*}
\max_{j=y_i,r}\la\wb_{j,r}^{(\tT_1)},\bxi_i\ra\geq\frac{1}{20\log(n)\sqrt{\log(8mn/\delta)}},
\end{align*}
which completes the proof.
\end{proof}

From Proposition~\ref{prop:Withoutregu_induct} and \ref{prop:without_regu_lower_bound_rho_appendix}, we return to the decomposition of $\wb_{j,r}^{(t)}$ and conclude the following corollary.
\begin{corollary}[Restatement of Proposition~\ref{prop:without_regu_bound-1}]
\label{coro: without_decompose_bound}
Let $\tT_1$ be defined in Proposition~\ref{prop:Withoutregu_induct}, for any $j\in\{\pm1\}$, $r\in\{m\}$, $i\in[n]$ and  $t\in[\tT_1]$, we have
\begin{align*}
    &\gamma_{j,r}^{(\tT_1)}=\tilde{O}(\sigma_0\|\bmu\|),\quad \frac{1}{21\log(n)\sqrt{\log(8mn/\delta)}}\leq\max_{j=y_i,r}\rho_{j,r,i}^{(\tT_1)}\leq 2\log^{-1}(n),\\
    %&\frac{\max_{j=y_i,r}\rho_{j,r,i}^{(\tT_1)}}{\max_{j=y_{i_1},r}\rho_{j,r,i_1}^{(\tT_1)}}\leq 17\sqrt{\log(8mn/\delta)},\quad \frac{\max_{j\not=y_{i_1},r}\rho_{j,r,i_1}^{(\tT_1)}}{\max_{j=y_{i},r}\rho_{j,r,i}^{(\tT_1)}}\leq 18\sqrt{\log(8mn/\delta)}\cdot \sqrt{\log(4n^2/\delta)/d},
    &\max_{j\not=y_i,r}|\rho_{j,r,i}^{(\tT_1)}|\leq 65n\sqrt{\frac{\log(4n^2/\delta)}{d}}\log(T^*).
\end{align*}
\end{corollary}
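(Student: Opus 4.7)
The plan is to pass from the inner-product bounds established in Propositions~\ref{prop:Withoutregu_induct} and \ref{prop:without_regu_lower_bound_rho_appendix} to the coefficient bounds of the signal–noise decomposition, using the near-orthogonality machinery of Lemma~\ref{lemma:admissible_lemma} (which applies here since $\lambda=0$ and we are in the admissible-iteration regime covered by Proposition~\ref{prop:admissible_time_bound_appendix}). Concretely, Definition~\ref{def:signal_to_noise_decomp} combined with $\la\bmu,\bxi_i\ra=0$ gives the clean identities $\la\wb_{j,r}^{(t)},\bmu\ra=\la\wb_{j,r}^{(0)},\bmu\ra+j\gamma_{j,r}^{(t)}$, and the two-sided inequalities from Lemma~\ref{lemma:admissible_lemma},
\begin{align*}
\overrho_{j,r,i}^{(t)}-E\le\la\wb_{j,r}^{(t)}-\wb_{j,r}^{(0)},\bxi_i\ra\le\overrho_{j,r,i}^{(t)}+E\quad(j=y_i),\\
\underrho_{j,r,i}^{(t)}-E\le\la\wb_{j,r}^{(t)}-\wb_{j,r}^{(0)},\bxi_i\ra\le\underrho_{j,r,i}^{(t)}+E\quad(j\ne y_i),
\end{align*}
where $E=32n\sqrt{\log(4n^2/\delta)/d}\log(T^*)$, which is negligible under Condition~\ref{condition:condition}.

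First I would derive the bound on $\gamma_{j,r}^{(\tT_1)}$. Solving the identity above yields $\gamma_{j,r}^{(\tT_1)}=j(\la\wb_{j,r}^{(\tT_1)},\bmu\ra-\la\wb_{j,r}^{(0)},\bmu\ra)$, and applying the triangle inequality together with the bound $\max_{j,r}|\la\wb_{j,r}^{(\tT_1)},\bmu\ra|=\tilde O(\sigma_0\|\bmu\|)$ from \eqref{eq:without_regu_directmu} and the initialization bound of Lemma~\ref{lemma:data_CNN_concentration} gives $\gamma_{j,r}^{(\tT_1)}=\tilde O(\sigma_0\|\bmu\|)$ immediately.

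Next, for the lower bound on $\max_{j=y_i,r}\rho_{j,r,i}^{(\tT_1)}$, I would take the index $r^\star$ realizing the maximum in Proposition~\ref{prop:without_regu_lower_bound_rho_appendix}, so that $\la\wb_{y_i,r^\star}^{(\tT_1)},\bxi_i\ra\ge 1/(20\log(n)\sqrt{\log(8mn/\delta)})$. Then the lower side of the $\overrho$-inequality gives
\begin{align*}
\rho_{y_i,r^\star,i}^{(\tT_1)}=\overrho_{y_i,r^\star,i}^{(\tT_1)}\ge\la\wb_{y_i,r^\star}^{(\tT_1)},\bxi_i\ra-|\la\wb_{y_i,r^\star}^{(0)},\bxi_i\ra|-E,
\end{align*}
and the two subtracted terms are $\tilde O(\sigma_0\sigma_p\sqrt d)$ and $\tilde O(n/\sqrt d)$ respectively, both of smaller order than $1/(20\log(n)\sqrt{\log(8mn/\delta)})$ under Condition~\ref{condition:condition}, which yields the claimed lower bound $1/(21\log(n)\sqrt{\log(8mn/\delta)})$. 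The matching upper bound $\max_{j=y_i,r}\rho_{j,r,i}^{(\tT_1)}\le 2\log^{-1}(n)$ follows symmetrically from the upper side of the same inequality and the bound $B_i^{(\tT_1)}\le\log^{-1}(n)$ in \eqref{eq:without_induction2}, again absorbing the $O(\sigma_0\sigma_p\sqrt d)$ and $E$ error terms.

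Finally, for $j\ne y_i$, the $\underrho$-inequality and the bound $C_i^{(\tT_1)}\le 17n\sqrt{\log(8mn/\delta)\log(4n^2/\delta)/d}\cdot\log^{-1}(n)$ from \eqref{eq:without_induction1} give
\begin{align*}
|\rho_{j,r,i}^{(\tT_1)}|\le|\la\wb_{j,r}^{(\tT_1)},\bxi_i\ra|+|\la\wb_{j,r}^{(0)},\bxi_i\ra|+E,
\end{align*}
which under Condition~\ref{condition:condition} is dominated by $E$ and can be bounded by $65n\sqrt{\log(4n^2/\delta)/d}\log(T^*)$ with room to spare. I expect no serious obstacle here; the only care needed is keeping track of which terms are the true dominant ones (the cross-term bound $C_i^{(\tT_1)}$ is smaller than $E$, so essentially the $E$ term drives the final estimate), and verifying that the constant $65$ correctly absorbs the $32n\sqrt{\log(4n^2/\delta)/d}\log(T^*)$ from Lemma~\ref{lemma:admissible_lemma} plus the lower-order $C_i$ and initialization contributions.
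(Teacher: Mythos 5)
Your proposal is correct and follows essentially the same route as the paper: the $\gamma$ bound comes from the identity $\la\wb_{j,r}^{(\tT_1)},\bmu\ra=\la\wb_{j,r}^{(0)},\bmu\ra+j\gamma_{j,r}^{(\tT_1)}$ together with \eqref{eq:without_regu_directmu}, and the $\rho$ bounds come from inverting the noise part of the decomposition using Propositions~\ref{prop:Withoutregu_induct} and \ref{prop:without_regu_lower_bound_rho_appendix} while absorbing the initialization and cross-correlation terms as negligible. The only cosmetic difference is that you route the cross-term error through the constant $32n\sqrt{\log(4n^2/\delta)/d}\log(T^*)$ of Lemma~\ref{lemma:admissible_lemma} (justified via Proposition~\ref{prop:admissible_time_bound_appendix}) where the paper re-estimates that sum directly and, for $j\neq y_i$, simply cites \eqref{eq:admissible_withoutregu2}; the substance is identical.
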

\begin{proof}[Proof of Corollary~\ref{coro: without_decompose_bound}]
    Recall that
    \begin{align*}
        \wb_{j,r}^{(t)}=\wb_{j,r}^{(0)}+j\cdot\gamma_{j,r}^{(t)}\frac{\bmu}{\|\bmu\|^2}+\sum_{i=1}^n\rho_{j,r,i}^{(t)}\frac{\bxi_i}{\|\bxi_i\|^2},
    \end{align*}
    from \eqref{eq:without_regu_directmu} we have
    \begin{align*}
       \max_{j,r} |\la \wb_{j,r}^{(\tT_1)},\bmu\ra|= \max_{j,r}|\la \wb_{j,r}^{(0)},\bmu\ra+j\gamma_{j,r}^{(\tT_1)}|=\tilde{O}(\sigma_0\|\bmu\|),
    \end{align*}
    we can easily conclude that $\max_{j,r}\gamma_{j,r}^{(\tT_1)}=\tilde{O}(\sigma_0\|\bmu\|)$. From \eqref{eq:without_induction2} and Proposition~\ref{prop:without_regu_lower_bound_rho_appendix}, we have 
    \begin{align*}  
    \frac{1}{20\log(n)\sqrt{\log(8mn/\delta)}}\leq\max_{j=y_i,r}\bigg|\rho_{j,r,i}^{(\tT_1)}+\la\wb_{j,r}^{(0)},\bxi_i\ra+\sum_{i'\not=i}\rho_{j,r,i'}^{(\tT_1)}\frac{\la\bxi_{i'},\bxi_i\ra}{\|\bxi_{i'}\|^2}\bigg|\leq \log^{-1}(n), 
    \end{align*}
    note that $|\rho_{j,r,i}^{(t)}|\leq 4\log(T^*)$, 
\begin{align*}
\bigg|\la\wb_{j,r}^{(0)},\bxi_i\ra+\sum_{i'\not=i}\rho_{j,r,i'}^{(\tT_1)}\frac{\la\bxi_{i'},\bxi_i\ra}{\|\bxi_{i'}\|^2}\bigg|&\leq 2\sqrt{\log(8mn)/\delta}\sigma_0\sigma_p\sqrt{d}+8\log(T^*)n\sqrt{\log(4n^2/\delta)/d}\\
&\ll \frac{1}{\log(n)\sqrt{\log(8mn/\delta)}},
\end{align*}
therefore we conclude that 
\begin{align*}
    \max_{j=y_i,r}\big|\rho_{j,r,i'}^{(\tT_1)}\big|&\geq \max_{j=y_i,r}\bigg|\rho_{j,r,i}^{(\tT_1)}+\la\wb_{j,r}^{(0)},\bxi_i\ra+\sum_{i'\not=i}\rho_{j,r,i'}^{(\tT_1)}\frac{\la\bxi_{i'},\bxi_i\ra}{\|\bxi_{i'}\|^2}\bigg|-\max_{j=y_i,r}\bigg|\la\wb_{j,r}^{(0)},\bxi_i\ra+\sum_{i'\not=i}\rho_{j,r,i'}^{(\tT_1)}\frac{\la\bxi_{i'},\bxi_i\ra}{\|\bxi_{i'}\|^2}\bigg|\\
    &\geq \frac{1}{20\log(n)\sqrt{\log(8mn/\delta)}}-\frac{1}{420\log(n)\sqrt{\log(8mn/\delta)}}=\frac{1}{21\log(n)\sqrt{\log(8mn/\delta)}}.
\end{align*}
Similarly, we can also conclude that  
\begin{align*}
    \max_{j=y_i,r}\big|\rho_{j,r,i'}^{(\tT_1)}\big|\leq 2\log^{-1}(n).
\end{align*}
The last inequality directly comes from \eqref{eq:admissible_withoutregu2}, where we set $\sigma_0$ small enough in Condition~\ref{condition:condition}.
% For the last inequality, from \eqref{eq:without_induction1} we have 
% \begin{align*}
% \max_{j\not=y_i,r}&\bigg|\rho_{j,r,i}^{(\tT_1)}+\la\wb_{j,r}^{(0)},\bxi_i\ra+\sum_{i'\not=i}\rho_{j,r,i'}^{(\tT_1)}\frac{\la\bxi_{i'},\bxi_i\ra}{\|\bxi_{i'}\|^2}\bigg|\\
% &\leq \max_{j=y_i,r}\big|\la \wb_{j,r}^{\tT_1},\bxi_i \ra\big|\cdot 17n\sqrt{\log(8mn/\delta)}\cdot \sqrt{\log(4n^2/\delta)/d}
% \end{align*}
% \begin{align*}
%     17n\sqrt{\log(8mn/\delta)}\cdot \sqrt{\log(4n^2/\delta)/d}
% \end{align*}
    % From \eqref{eq:without_induction1}, we also have 
    % \begin{align*}
    %     \frac{\max_{j=y_i,r}\Big|\rho_{j,r,i}^{(\tT_1)}+\la\wb_{j,r}^{(0)},\bxi_i\ra+\sum_{i'\not=i}\rho_{j,r,i'}^{(\tT_1)}\frac{\la\bxi_{i'},\bxi_i\ra}{\|\bxi_{i'}\|^2}\Big|}{\max_{j=y_{i_1},r}\Big|\rho_{j,r,i_1}^{(\tT_1)}+\la\wb_{j,r}^{(0)},\bxi_{i_1}\ra+\sum_{i'\not={i_1}}\rho_{j,r,i'}^{(\tT_1)}\frac{\la\bxi_{i'},\bxi_{i_1}\ra}{\|\bxi_{i'}\|^2}\Big|}\leq 16\sqrt{\log(8mn)/\delta},
    % \end{align*}
\end{proof}

\subsection{Stage 2: Continuous growth of noise memorizing}
In this section, we aim to prove that the noise memorizing will grow to a constant order while the signal learning remains to the small scale. Thus the gradient loss is getting changed, no longer equal to $-1/2+o(1)$. In this stage, we select a large enough constant $C>0$, and define the time interval $\tT_2$.
\begin{align}
    \tT_2=\frac{Cnm}{\eta\sigma_p^2d }\log\bigg(\frac{1}{\log(n)\sqrt{\log(8mn/\delta)}}\bigg).\label{eq:def_T2}
\end{align}
We then have the following proposition, which shows the growth of noise memorizing.
\begin{proposition}[Restatement of Proposition~\ref{prop:without_stage2_induct}]
    \label{prop:without_stage2_induct_appendix}
    Let $\tT_2$ be defined in \eqref{eq:def_T2}, then for any $i\in[n]$ and $r\in[m]$ it holds that 
    \begin{align*}
        &\gamma_{j,r}^{(\tT_1+\tT_2)}=\tilde{O}(\sigma_0\|\bmu\|),\quad \max_{j=y_i,r}\rho_{j,r,i}^{(\tT_1+\tT_2)}\geq 2,\\
        & \max_{j\not=y_i,r}\big|\rho_{j,r,i}^{(\tT_1+\tT_2)}\big| \leq 65n\sqrt{\frac{\log(4n^2/\delta)}{d}}\log(T^*).
    \end{align*}
\end{proposition}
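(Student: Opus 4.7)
I will propagate the stage-1 bounds forward by induction on $t\in[\tT_1,\tT_1+\tT_2]$, carrying three invariants: (a) $\gamma_{j,r}^{(t)}=\tilde O(\sigma_0\|\bmu\|)$ for every $j,r$; (b) $\max_{j=y_i,r}\overrho_{j,r,i}^{(t)}\le 3$ uniformly in $i$, with the threshold $2$ reached at some $t\le\tT_1+\tT_2$; and (c) $\max_{j\ne y_i,r}|\underrho_{j,r,i}^{(t)}|\le 65n\sqrt{\log(4n^2/\delta)/d}\log(T^*)$. The base case $t=\tT_1$ is exactly Corollary~\ref{coro: without_decompose_bound}, and once the three invariants are propagated, the conclusion of Proposition~\ref{prop:without_stage2_induct_appendix} follows.

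\textbf{Replacing $\ell'\approx-1/2$.} The essential technical change from stage~1 is that $|\ell'^{(t)}_i|$ can drop well below $1/2$ once $\overrho$ enters a constant regime; my substitute is a uniform \emph{constant} lower bound. Under invariants (a)--(c), parts~2 and 3 of Lemma~\ref{lemma:admissible_lemma} give $\la\wb_{j,r}^{(t)},\bxi_i\ra,\la\wb_{j,r}^{(t)},y_i\bmu\ra=O(1)$ for every $j,r,i$, hence $F_j(\Wb_j^{(t)},\xb_i)=O(1)$ and $|y_if(\Wb^{(t)},\xb_i)|=O(1)$. Since $\ell'(z)=-1/(1+e^z)$ is bounded away from $0$ on any bounded set, there exists an absolute constant $c_0>0$ with $|\ell'^{(t)}_i|\ge c_0$ throughout the stage. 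This constant-order lower bound is the cornerstone of everything that follows.

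\textbf{Exponential growth of $\overrho$.} Fix $i$ and let $(j^*,r^*)=\arg\max_{j=y_i,r}\overrho_{j,r,i}^{(t)}$. Part~3 of Lemma~\ref{lemma:admissible_lemma} gives $\la\wb_{j^*,r^*}^{(t)},\bxi_i\ra\ge\overrho_{j^*,r^*,i}^{(t)}-32n\sqrt{\log(4n^2/\delta)/d}\log(T^*)$, and this correction is $o\bigl(\overrho_{j^*,r^*,i}^{(\tT_1)}\bigr)$ by Corollary~\ref{coro: without_decompose_bound} and $d\gg n^{2+2\alpha}$. Combining with $|\ell'^{(t)}_i|\ge c_0$ and $\|\bxi_i\|^2\ge\sigma_p^2d/2$ in the $\overrho$-update of \eqref{eq:Update_eq} yields
\begin{align*}
\overrho_{j^*,r^*,i}^{(t+1)}\ge\overrho_{j^*,r^*,i}^{(t)}\Bigl(1+\tfrac{c_1\eta\sigma_p^2d}{nm}\Bigr)
\end{align*}
for an absolute $c_1>0$. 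Iterating from $\overrho_{j^*,r^*,i}^{(\tT_1)}\ge 1/(21\log(n)\sqrt{\log(8mn/\delta)})$, the threshold $2$ is hit within $O\bigl(\tfrac{nm}{\eta\sigma_p^2d}\log(\log(n)\sqrt{\log(8mn/\delta)})\bigr)$ iterations, which is covered by $\tT_2$ once the constant $C$ in \eqref{eq:def_T2} is taken large enough; a single-step increment is $(1+o(1))$ multiplicative, so no overshoot past $3$ can occur.

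\textbf{Slow invariants and main obstacle.} The $\gamma$-update is weighted by $\|\bmu\|^2$ instead of $\|\bxi_i\|^2\sim\sigma_p^2d$, so the argument from Proposition~\ref{prop:Withoutregu_induct} gives $\gamma_{j,r}^{(\tT_1+\tT_2)}\le\gamma_{j,r}^{(\tT_1)}\exp\bigl(O(\tT_2\eta\|\bmu\|^2/m)\bigr)=(1+o(1))\gamma_{j,r}^{(\tT_1)}$, since $\|\bmu\|^2\ll\sigma_p^2d$ forces $\tT_2\eta\|\bmu\|^2/m=o(1)$; invariant (a) is preserved. For $\underrho$ I reuse the last-hitting-time argument of Proposition~\ref{prop:admissible_time_bound_appendix}: whenever $|\underrho_{j,r,i}^{(t)}|$ is substantial, the bias plus cross-noise contribution inside the ReLU is dominated by $|\underrho_{j,r,i}^{(t)}|$, so the ReLU is inactive and $\underrho$ does not move; otherwise a single-step increment is at most $O(\eta\sigma_p^2d/(nm))\cdot(\beta+n\sqrt{\log(4n^2/\delta)/d}\log(T^*))$, keeping it within (c) by Condition~\ref{condition:condition}. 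The main obstacle is the circular dependence between (b) and the bound $|\ell'^{(t)}_i|\ge c_0$: the lower bound needs (b), and (b) is established through the growth that relies on the lower bound. I break the circle by running the induction only up to the first time some $\max_{j=y_i,r}\overrho_{j,r,i}^{(t)}$ reaches $2$: up to this stopping time (b) holds tautologically with a cushion, the constant $c_0$ is valid, the exponential growth drives the stopping time to occur at or before $\tT_1+\tT_2$, and invariants (a),(c) -- propagated in parallel throughout -- deliver the remaining two conclusions.
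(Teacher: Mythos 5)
Your proposal is correct and follows essentially the same route as the paper's proof: both arguments bound $\gamma$ crudely via $|\ell'_i|\le 1$ and the smallness of $\tT_2\eta\|\bmu\|^2/m$, obtain the $\underrho$ bound from Proposition~\ref{prop:admissible_time_bound_appendix}, and drive $\max_{j=y_i,r}\overrho_{j,r,i}^{(t)}$ to the constant threshold by establishing a constant lower bound $-\ell'^{(t)}_i\ge c_1$ before a stopping time (the paper's $\tT_2^{(i)}$, your first-hitting time of $2$) and then iterating the multiplicative recursion at rate $1+\Theta(\eta\sigma_p^2 d/(nm))$ from the stage-1 lower bound. The only cosmetic difference is that the paper tracks the shifted quantity $B_i^{(t)}=\max_{j=y_i,r}\{\overrho_{j,r,i}^{(t)}+\la\wb_{j,r}^{(0)},\bxi_i\ra-65n\sqrt{\log(4n^2/\delta)/d}\log(T^*)\}$ while you track $\overrho$ at the argmax and absorb the same correction terms as $o(1)$ perturbations.
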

\begin{proof}[Proof of Proposition~\ref{prop:without_stage2_induct_appendix}]
    We first prove that $\gamma_{j,r}^{(\tT_1+\tT_2)}=\tilde{O}(\sigma_0\|\bmu\|)$. It holds that
    \begin{align*}
        \gamma_{j,r}^{(\tT_1+\tT_2)}+|\la \wb_{j,r}^{(0)},\bmu\ra|\leq \big(\gamma_{j,r}^{\tT_1}+|\la \wb_{j,r}^{(0)},\bmu\ra|\big)\cdot \bigg(1+\frac{2\eta\|\bmu\|^2}{nm}\bigg)^{\tT_2}\\
        \leq 2\big(\gamma_{j,r}^{\tT_1}+|\la \wb_{j,r}^{(0)},\bmu\ra|\big)=\tilde{O}(\sigma_0\|\bmu\|).
    \end{align*}
    Here, the first inequality utilizes $|\ell'_i|\leq1$ and the second inequality is by $\frac{2\eta\|\bmu\|^2}{nm}\cdot\tT_2=o(1)$. Hence we have $\gamma_{j,r}^{(\tT_1+\tT_2)}=\tilde{O}(\sigma_0\|\bmu\|)$. \eqref{eq:admissible_withoutregu2} also gives that $\max_{j\not=y_i,r}\big|\rho_{j,r,i}^{(\tT_1+\tT_2)}\big|\leq 65n\sqrt{\frac{\log(4n^2/\delta)}{d}}\log(T^*)$. We next prove that $\max_{j=y_i,r}\rho_{j,r,i}^{(\tT_1+\tT_2)}\geq 2$. We remind the readers the update rule.
    \begin{align*}
    \rho_{j,r,i}^{(t+1)}=\rho_{j,r,i}^{(t)}-\frac{2\eta}{nm}{\ell'^{(t)}_i} \ReLU ( \la\wb_{j,r}^{(t)}, \bxi_i \ra)\|\bxi_i\|^2\cdot jy_i.
\end{align*}
    Note that when $j=y_i$, $\rho_{j,r,i}^{(t)}=\overrho_{j,r,i}^{(t)}$. For each $i$, define $\tT_2^{(i)}$ the last time satisfying $\overrho_{j,r,i}^{(\tT_1+t)}\leq 2$. Then for $0<t\leq\tT_2 $, we have $\max_{j,r}\{\overrho_{j,r,i}^{(\tT_1+t)},\underrho_{j,r,i}^{(\tT_1+t)} \}=O(1)$, $\max_{j,r}\{\gamma_{j,r}^{\tT_1+t} \}=o(1)$. Therefore we know that $F_{j}(\Wb^{(t)},\xb_i)=O(1)$ for each $j$. We conclude that there exists constant $c_1>0$ such that $-\ell'^{(\tT_1+t)}_i\geq c_1$ for all $t\in[\tT_2^{(i)}]$.  Now let 
    \begin{align*}
    B_i^{(t)}=\max_{j=y_i,r}\bigg\{\overrho_{j,r,i}^{(t)}+\la \wb_{j,r}^{(0)},\bxi_i\ra -65n\sqrt{\frac{\log(4n^2/\delta)}{d}}\log(T^*)\bigg\},
    \end{align*}
    from the analysis above we have
    \begin{align}
        B_i^{(t+\tT_1)}&\geq B_i^{(t-1+\tT_1)}+\frac{c_1\eta\sigma_p^2d}{nm}\cdot B_i^{t-1+\tT_1}\nonumber\\
        &\geq B_i^{(\tT_1)}\bigg(1+\frac{c_1\eta\sigma_p^2d}{nm}\bigg)^{\tT_2}\geq B_i^{(\tT_1)}\exp\bigg\{\frac{c_1\eta\sigma_p^2d}{2nm}\cdot \tT_2 \bigg\},\label{eq:stage2_lowerbound}
    \end{align}
    where the first inequality is by the definition of $B_i^{(t)}$ and the update rule of $\rho_{j,r,i}^{(t)}$. By Corollary~\ref{coro: without_decompose_bound}, we have 
    \begin{align*}
        B_i^{(\tT_1)}&=\max_{j=y_i,r}\bigg\{\overrho_{j,r,i}^{(t)}+\la \wb_{j,r}^{(0)},\bxi_i\ra -65n\sqrt{\frac{\log(4n^2/\delta)}{d}}\log(T^*)\bigg\}\\
        &\geq  \frac{1}{21\log(n)\sqrt{\log(8mn/\delta)}}+\max_{j=y_i,r}\bigg\{\la \wb_{j,r}^{(0)},\bxi_i\ra -65n\sqrt{\frac{\log(4n^2/\delta)}{d}}\log(T^*)\bigg\}\\
        &\geq \frac{1}{22\log(n)\sqrt{\log(8mn/\delta)}}.
    \end{align*}
    Here, the first inequality is by $\max_{j=y_i,r}\overrho_{j,r,i}^{(\tT_1)}\geq\frac{1}{21\log(n)\sqrt{\log(8mn/\delta)}}$ in Corollary~\ref{coro: without_decompose_bound}, and the second inequality is by the remaining part is far smaller than $1/\{\log(n)\sqrt{\log(8mn/\delta)}\}$. Within $\tT_2$ iterations after stage 1, \eqref{eq:stage2_lowerbound} indicates that $B_i^{(\tT_1+\tT_2)}\geq 3$. By the definition of $B_i^{(\tT_1+\tT_2)}$, and $\max_{j,r}|\la\wb_{j,r}^{(0)},\bxi_i\ra -65n\sqrt{\log(4n^2/\delta)/d}\log(T^*)|=o(1)$, we conclude that $\max_{j,r}\overrho_{j,r,i}^{(\tT_1+t)}\geq 2$ for $t=\tT_2$.  Hence we have that $\tT_2^{(i)}\leq \tT_2$, which indicates that $\max_{j=y_i,r}\rho_{j,r,i}^{(\tT_1+\tT_2)}\geq 2$.
\end{proof}

\subsection{Stage 3: The boundary of signal and noise learning in final stage}
In this section, we aim to prove the boundary of signal and noise learning which helps us prove the final result. By the signal noise decomposition, at the end of Stage 2, we have
\begin{align*}
    \wb_{j,r}^{(\tT_1+\tT_2)}=\wb_{j,r}^{(0)}+j\gamma_{j,r}^{(\tT_1+\tT_2)}\frac{\bmu}{\|\bmu\|^2}+\sum_{i=1}^{n}\overrho_{j,r,i}^{(\tT_1+\tT_2)}\frac{\bxi_i}{\|\bxi_i\|^2}+\sum_{i=1}^{n}\underrho_{j,r,i}^{(\tT_1+\tT_2)}\frac{\bxi_i}{\|\bxi_i\|^2}
\end{align*}
for $j\in\{\pm1\}$ and $r\in[m]$. By the results we get in the second stage, we know that at the beginning of this stage, the following properties hold:
\begin{enumerate}
    \item $\max_{j,r}\gamma_{j,r}^{(\tT_1+\tT_2)}=\tilde{O}(\sigma_0\|\bmu\|)$.
    \item $\max_{j\not=y_i,r,i}\underrho_{j,r,i}^{(\tT_1+\tT_2)}\leq 65n\sqrt{\log(4n^2/\delta)/d}\log(T^*)$ for all $i\in[n]$.
    \item $\max_{j=y_i,r,i}\overrho_{j,r,i}^{(\tT_1+\tT_2)}\geq 2$ for all $i\in[n]$.
\end{enumerate}
For any $\varepsilon>0$, we define 
\begin{align*}
\wb_{j,r}^*=\wb_{j,r}^{(0)}+4m\log(4/\varepsilon)\cdot \bigg(\sum_{i=1}^n \one(j=y_i)\frac{\bxi_i}{\| \bxi_i\|} \bigg).
\end{align*}
 Based on the definition of $\Wb^*$, we have the following lemma.
\begin{lemma}[Restatement of Lemma~\ref{lemma:stage3_WT1_diff}]
    \label{lemma:stage3_WT1_diff_appendix}
    Under the same condition as Theorem~\ref{thm:withoutregularization}, it holds that $\|\Wb^{(\tT_1+\tT_2)}-\Wb^{*}\|_{F}\leq \tilde{O}(m^{\frac{3}{2}}n^{\frac{1}{2}}\sigma_p^{-1}d^{-\frac{1}{2}})$.
\end{lemma}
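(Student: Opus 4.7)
The plan is to directly compute the difference $\wb_{j,r}^{(\tT_1+\tT_2)}-\wb_{j,r}^*$ for each $(j,r)$ using the signal-noise decomposition and the explicit formula for $\Wb^*$, then bound the Frobenius norm by summing coordinate-wise contributions along the (nearly orthogonal) directions $\bmu, \bxi_1, \ldots, \bxi_n$. Subtracting the two expressions, the $\wb_{j,r}^{(0)}$ pieces cancel, and what remains splits into three groups: (i) a signal piece $j\gamma_{j,r}^{(\tT_1+\tT_2)}\bmu/\|\bmu\|^2$; (ii) for the indices with $j=y_i$, the difference between the actual noise coefficient $\overrho_{j,r,i}^{(\tT_1+\tT_2)}$ and the target coefficient coming from $4m\log(4/\varepsilon)$; and (iii) for the indices with $j\ne y_i$, the residual noise-memorization coefficient $\underrho_{j,r,i}^{(\tT_1+\tT_2)}$.

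Next I would bound each group using the inputs already granted at the end of Stage~2. For group (i), $\gamma_{j,r}^{(\tT_1+\tT_2)}=\tilde O(\sigma_0\|\bmu\|)$ makes this term negligible. For group (ii), Proposition~\ref{prop:admissible_time_bound_appendix} gives $\overrho_{j,r,i}^{(\tT_1+\tT_2)}\le 4\log(T^*)$, so the coefficient difference is $\tilde O(m)$; the associated squared contribution, after dividing by $\|\bxi_i\|^2=\Theta(\sigma_p^2 d)$ (Lemma~\ref{lemma:data_noise_concentration}), is $\tilde O(m^2/(\sigma_p^2 d))$. For group (iii), Proposition~\ref{prop:without_stage2_induct_appendix} gives $|\underrho_{j,r,i}^{(\tT_1+\tT_2)}|\le 65 n\sqrt{\log(4n^2/\delta)/d}\,\log(T^*)$, which is of smaller order under Condition~\ref{condition:condition} (large $d$) and hence lower-order.

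Then I would assemble the pieces. Using the almost-orthogonality $|\la\bxi_i,\bxi_{i'}\ra|\le 2\sigma_p^2\sqrt{d\log(4n^2/\delta)}$, I expand $\|\wb_{j,r}^{(\tT_1+\tT_2)}-\wb_{j,r}^*\|_2^2$ as the sum of squared coefficients along $\bxi_i/\|\bxi_i\|$ plus cross-terms, where the cross-terms pick up a factor $O(\sqrt{\log(n)/d})$ and are therefore absorbed into the main term under Condition~\ref{condition:condition}. Lemma~\ref{lemma:data_count_yi=1} gives $|\{i:j=y_i\}|=\Theta(n)$, so the per-neuron squared norm is $\tilde O(nm^2/(\sigma_p^2 d))$. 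Summing over the $2m$ pairs $(j,r)$ produces $\|\Wb^{(\tT_1+\tT_2)}-\Wb^*\|_F^2=\tilde O(m^3 n/(\sigma_p^2 d))$, and taking a square root gives the claimed bound.

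The main obstacle is bookkeeping the cross-terms arising from $\la\bxi_i,\bxi_{i'}\ra\ne 0$: one must check that these off-diagonal contributions, after being summed over all $i\ne i'$, over $r\in[m]$, and over $j\in\{\pm1\}$, remain strictly lower order than the diagonal $\tilde O(nm^3/(\sigma_p^2 d))$ contribution. This reduces to verifying $n^2\log(4n^2/\delta)/d = o(1)$, which follows from the dimension assumption $d\ge\tilde\Omega(m^2 n^{2+2\alpha})$ in Condition~\ref{condition:condition}; once this is in place, every other step is a routine norm expansion combined with the Stage~2 coefficient bounds.
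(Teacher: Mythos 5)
Your proposal is correct and follows essentially the same route as the paper: cancel the $\wb_{j,r}^{(0)}$ terms, bound the signal coefficient by $\tilde O(\sigma_0\|\bmu\|)$, bound the noise coefficients via the Stage-2 estimates and Proposition~\ref{prop:admissible_time_bound_appendix}, use $\|\bxi_i\|^2=\Theta(\sigma_p^2 d)$ and near-orthogonality, and sum over $\Theta(n)$ active indices and $2m$ filters to get $\tilde O(m^{3/2}n^{1/2}\sigma_p^{-1}d^{-1/2})$. Your version is in fact somewhat more careful than the paper's terse triangle-inequality argument, since you explicitly verify that the off-diagonal $\la\bxi_i,\bxi_{i'}\ra$ cross-terms are lower order under the dimension condition.
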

\begin{proof}[Proof of Lemma~\ref{lemma:stage3_WT1_diff_appendix}]
It is easy to see that 
\begin{align*}
    \wb_{j,r}^{(\tT_1+\tT_2)}-\wb_{j,r}^*=-4m\log(4/\varepsilon)\cdot \bigg(\sum_{i=1}^n \one(j=y_i)\frac{\bxi_i}{\| \bxi_i\|} \bigg)+\wb_{j,r}^{(\tT_1+\tT_2)}-\wb_{j,r}^{(0)},
\end{align*}
thus we have
\begin{align*}
    \|\Wb^{(\tT_1+\tT_2)}-\Wb^{*}\|_{F}&\leq 2m\|\wb_{j,r}^{(\tT_1+\tT_2)}-\wb_{j,r}^{(0)}\|+2m\bigg\|4m\log(4/\varepsilon)\cdot \bigg(\sum_{i=1}^n \one(j=y_i)\frac{\bxi_i}{\| \bxi_i\|} \bigg) \bigg\|_F\\
    &\leq 2m\frac{|\gamma_{j,r}^{(t)}|}{\|\bmu\|}+\sum_{i=1}^n\frac{|\rho_{j,r,i}^{(t)}|}{\|\bxi_i\|}+ \tilde{O}(m^{\frac{3}{2}}n^{\frac{1}{2}}\sigma_p^{-1}d^{-\frac12})=\tilde{O}(m^{\frac{3}{2}}n^{\frac{1}{2}}\sigma_p^{-1}d^{-\frac12}).
\end{align*}
Here, the first inequality is by triangle inequality, and the second inequality comes from $|\la\bxi_i,\bxi_{i'} \ra|/\|\bxi_i \|^2=\tilde{O}(\sigma_p^{-1}d^{-1/2})$ and $|\gamma_{j,r}^{(t)}|, |\rho_{j,r,i}^{(t)}|\leq 4\log(T^*)$.
\end{proof}

\begin{lemma}
 \label{lemma:stage3_gradient_product_appendix}   
 Under the same condition as Theorem~\ref{thm:withoutregularization}, it holds that
 \begin{align*}
     y_i\la \nabla_{\Wb^{(t)}}f(\Wb^{(t)},\xb_i),\Wb^*\ra\geq 2\log(4/\varepsilon)
\end{align*}
for all $\tT_1+\tT_2\leq t\leq T^*$.
\end{lemma}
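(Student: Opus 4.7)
The plan is to expand, using $\sigma'(z)=2\ReLU(z)$,
\begin{align*}
y_i\langle\nabla_{\Wb^{(t)}}f(\Wb^{(t)},\xb_i),\Wb^*\rangle = \sum_{j,r}\frac{2jy_i}{m}\bigl[\ReLU(\langle\wb_{j,r}^{(t)},y_i\bmu\rangle)\langle y_i\bmu,\wb_{j,r}^*\rangle + \ReLU(\langle\wb_{j,r}^{(t)},\bxi_i\rangle)\langle\bxi_i,\wb_{j,r}^*\rangle\bigr],
\end{align*}
and to show that a single noise-direction term with $j=y_i$ and an index $r^*$ for which $\overrho_{y_i,r^*,i}^{(t)}\geq 2$ already produces the stated bound with a factor $\sigma_p\sqrt{d}$ to spare.

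First I would observe that such an $r^*$ exists at time $\tT_1+\tT_2$ by Proposition~\ref{prop:without_stage2_induct_appendix}, and that $\overrho_{y_i,r^*,i}^{(t)}$ is non-decreasing under the unregularized update \eqref{eq:update_rule_lbd=0} (since $\ell'^{(t)}_i<0$ and $\ReLU\geq 0$), so $\overrho_{y_i,r^*,i}^{(t)}\geq 2$ persists for all $t\geq \tT_1+\tT_2$. Using the signal-noise decomposition \eqref{eq:signal_to_noise_decomp} and Lemmas~\ref{lemma:data_noise_concentration}, \ref{lemma:data_CNN_concentration} together with Condition~\ref{condition:condition}, I would then derive
\begin{align*}
\langle\wb_{y_i,r^*}^{(t)},\bxi_i\rangle\geq \overrho_{y_i,r^*,i}^{(t)}-|\langle\wb_{y_i,r^*}^{(0)},\bxi_i\rangle|-O\bigl(n\sqrt{\log(n/\delta)/d}\log(T^*)\bigr)\geq 1.
\end{align*}
Separately, expanding $\wb_{y_i,r}^*$ gives
\begin{align*}
\langle\bxi_i,\wb_{y_i,r}^*\rangle = \langle\bxi_i,\wb_{y_i,r}^{(0)}\rangle+4m\log(4/\varepsilon)\Bigl[\|\bxi_i\|+\sum_{i'\neq i,\,y_{i'}=y_i}\frac{\langle\bxi_i,\bxi_{i'}\rangle}{\|\bxi_{i'}\|}\Bigr]\geq m\log(4/\varepsilon)\sigma_p\sqrt{d},
\end{align*}
since the cross sum is $O(n\sigma_p\sqrt{\log(n/\delta)})\ll \|\bxi_i\|$ and the initialization term is $O(\sigma_0\sigma_p\sqrt{d})$. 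Hence the $(j{=}y_i,r{=}r^*)$ noise term alone contributes $(2/m)\cdot 1\cdot m\log(4/\varepsilon)\sigma_p\sqrt{d}=2\log(4/\varepsilon)\sigma_p\sqrt{d}\gg 2\log(4/\varepsilon)$.

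It then remains to control the rest of the sum. For the other $r$ with $j=y_i$, the lower bound $\langle\bxi_i,\wb_{y_i,r}^*\rangle>0$ above holds for every $r$, and $\ReLU\geq 0$, so these only add nonnegatively and cannot spoil the bound. For $j=-y_i$, part 2 of Lemma~\ref{lemma:admissible_lemma} gives $\ReLU(\langle\wb_{-y_i,r}^{(t)},\bxi_i\rangle)=\tilde O(\sigma_0\sigma_p\sqrt{d})$, while an analogous expansion of $\langle\bxi_i,\wb_{-y_i,r}^*\rangle$ yields $|\langle\bxi_i,\wb_{-y_i,r}^*\rangle|=O(m\log(4/\varepsilon)n\sigma_p\sqrt{\log(n/\delta)})$, so the total signed $j=-y_i$ contribution is $O(\sigma_0\sigma_p^2\sqrt{d}\,mn\log(4/\varepsilon)\sqrt{\log(n/\delta)})$, negligible against $2\log(4/\varepsilon)\sigma_p\sqrt{d}$ by Condition~\ref{condition:condition}'s smallness of $\sigma_0$. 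The $\bmu$-direction contributions are likewise negligible since $\langle y_i\bmu,\wb_{j,r}^*\rangle=\langle y_i\bmu,\wb_{j,r}^{(0)}\rangle=O(\sigma_0\|\bmu\|)$ (by $\langle\bmu,\bxi_i\rangle=0$) and $\ReLU(\langle\wb_{j,r}^{(t)},y_i\bmu\rangle)=\tilde O(\sigma_0\|\bmu\|)$ from $\gamma_{j,r}^{(t)}=\tilde O(\sigma_0\|\bmu\|)$.

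The main technical delicacy will be the simultaneous handling of the cross-noise terms $\langle\bxi_i,\bxi_{i'}\rangle$ for $i'\neq i$, which appear both in the expansion of $\langle\wb_{j,r}^{(t)},\bxi_i\rangle$ via the signal-noise decomposition and in $\langle\bxi_i,\wb_{j,r}^*\rangle$ via the definition of $\wb_{j,r}^*$; both must be uniformly bounded using $|\langle\bxi_i,\bxi_{i'}\rangle|\leq 2\sigma_p^2\sqrt{d\log(4n^2/\delta)}$ from Lemma~\ref{lemma:data_noise_concentration} so that the diagonal $i'=i$ part is isolated as the dominant contribution while the off-diagonal residuals are absorbed into the error terms above.
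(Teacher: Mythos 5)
Your proposal is correct and follows essentially the same route as the paper's proof: isolate the diagonal noise term of the activated neuron $(j=y_i,r^*)$, use the persistence of $\overrho_{y_i,r^*,i}^{(t)}\geq 2$ (hence $\la\wb_{y_i,r^*}^{(t)},\bxi_i\ra\geq 1$) from the monotone unregularized update, and absorb the cross-noise, opposite-sign, and $\bmu$-direction contributions into negligible error terms. The only divergence is that you read the definition of $\wb^*$ literally with $\bxi_{i'}/\|\bxi_{i'}\|$, which yields an extra factor of $\sigma_p\sqrt{d}$ in the margin, whereas the paper's own computations (here and in Lemma~\ref{lemma:stage3_WT1_diff}) implicitly use $\bxi_{i'}/\|\bxi_{i'}\|^2$ so that the diagonal term contributes exactly $8\log(4/\varepsilon)$; this does not affect the validity of the present lower bound, but the squared normalization is the one needed for the subsequent bound on $\|\Wb^{(\tT_1+\tT_2)}-\Wb^*\|_F$.
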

\begin{proof}[Proof of Lemma~\ref{lemma:stage3_gradient_product_appendix}]
    Recall that $f(\Wb^{(t)},\xb_i)=(1/m)\sum_{j,r}j\cdot\big[\sigma(\la\wb_{j,r}^{(t)},y_i\bmu\ra)+\sigma(\la\wb_{j,r}^{(t)},\bxi_i \ra)\big]$, so we have
    \begin{align*}
        &y_i\la\nabla f(\Wb^{(t)},\xb_i),\Wb^*\ra\\
        &\quad =\frac{1}{m}\sum_{j,r}\sigma'(\la\wb_{j,r}^{(t)},y_i\bmu\ra)\la\bmu,j\wb_{j,r}^* \ra+\frac{1}{m}\sum_{j,r}\sigma'(\la\wb_{j,r}^{(t)},\bxi_i \ra)\la y_i\bxi_i,j\wb_{j,r}^*\ra\\
        &\quad =\frac{1}{m}\sum_{j,r}\sum_{i'=1}^n\sigma'(\la\wb_{j,r}^{(t)},\bxi_i \ra)4\log(4/\varepsilon)\one(j=y_{i'})\cdot \frac{\la \bxi_{i'},\bxi_i\ra}{\|\bxi_{i'} \|}\\
        &\quad\quad+\frac{1}{m}\sum_{j,r}\sigma'(\la\wb_{j,r}^{(t)},y_i\bmu \ra)\la\bmu,j\wb_{j,r}^{(0)}\ra+\frac{1}{m}\sum_{j,r}\sigma'(\la\wb_{j,r}^{(t)},\bxi_i \ra)\la y_i\bxi_i,j\wb_{j,r}^{(0)}\ra\\
        &\quad \geq \frac{1}{m}\sum_{j=y_i,r}\sigma'(\la\wb_{j,r}^{(t)},\bxi_i \ra)4\log(4/\varepsilon)-\frac{1}{m}\sum_{j,r}\sum_{i'\not=i}\sigma'(\la\wb_{j,r}^{(t)},\bxi_i \ra)4\log(4/\varepsilon)\cdot \frac{|\la \bxi_{i'},\bxi_i\ra|}{\|\bxi_{i'} \|}\\
        &\quad \quad -\frac{1}{m}\sum_{j,r}\sigma'(\la\wb_{j,r}^{(t)},y_i\bmu \ra)\tilde{O}(\sigma_0\|\bmu\|)-\frac{1}{m}\sum_{j,r}\sigma'(\la\wb_{j,r}^{(t)},\bxi_i \ra)\tilde{O}(\sigma_0\sigma_p\sqrt{d})\\
        &\quad \geq \frac{1}{m}\sum_{j=y_i,r}\sigma'(\la\wb_{j,r}^{(t)},\bxi_i \ra)4\log(4/\varepsilon)-\frac{1}{m}\sum_{j,r}\sigma'(\la\wb_{j,r}^{(t)},\bxi_i \ra)4\log(4/\varepsilon)\tilde{O}(mnd^{-1/2})\\
        &\quad \quad -\frac{1}{m}\sum_{j,r}\sigma'(\la\wb_{j,r}^{(t)},y_i\bmu \ra)\tilde{O}(\sigma_0\|\bmu\|)-\frac{1}{m}\sum_{j,r}\sigma'(\la\wb_{j,r}^{(t)},\bxi_i \ra)\tilde{O}(\sigma_0\sigma_p\sqrt{d}),
    \end{align*}
where the first inequality is by Lemma~\ref{lemma:data_CNN_concentration} and the last inequality is by Lemma~\ref{lemma:data_noise_concentration}. We next give the lower bound of $y_i\la\nabla f(\Wb^{(t)},\xb_i),\Wb^*\ra$ from the equation above. From \eqref{eq:admissible_withoutregu1} and \eqref{eq:admissible_withoutregu2}, we can easily have that for any $j\in\{\pm1\}$, $r\in[m]$ and $i\in[n]$,
\begin{align*}
    |\la\wb_{j,r}^{(t)},\bmu \ra|=\tilde{O}(1), \quad |\la\wb_{j,r}^{(t)},\bxi_i\ra|=\tilde{O}(1).
\end{align*}
On the other hand, for $j=y_i$, we can bound the inner product between the parameter and the noise as follows
\begin{align*}
    \max_{j=y_i,\bxi_i}\la\wb_{j,r}^{(t)},\bxi_i \ra\geq \max_{j=y_i,r}[\la\wb_{j,r}^{(0)},\bxi_i\ra+\overrho_{j,r,i}^{(t)}]-32n\sqrt{\frac{\log(4n^2/\delta)}{d}}\log(T^*)\geq1,
\end{align*}
where the first inequality is by \eqref{eq:admissible_withoutregu1} and \eqref{eq:admissible_withoutregu2}, the second inequality is by Corollary~\ref{coro: without_decompose_bound} and $\overrho_{j,r,i}$ increases when $t$ increases. Therefore, we can conclude that
\begin{align*}
    y_i\la\nabla f(\Wb^{(t)},\xb_i),\Wb^*\ra&\geq4\log(4/\varepsilon)-\tilde{O}(mnd^{-1/2})-\tilde{O}(\sigma_0\|\bmu\|)-\tilde{O}(\sigma_0\sigma_p\sqrt{d})\\
    &\geq2\log(4/\varepsilon),
\end{align*}
where the last inequality is by Condition~\ref{condition:condition}.
\end{proof}

\begin{lemma}[Restatement of Lemma~\ref{lemma:stage3_diff_Wt}]
\label{lemma:stage3_diff_Wt_appendix}
Under the same condition as Theorem~\ref{thm:withoutregularization}, it holds that 
\begin{align*}
    \|\Wb^{(t)}-\Wb^*\|_F^2-\|\Wb^{(t+1)}-\Wb^*\|_F^2\geq 3\eta L_S(\Wb^{(t)})-\eta\varepsilon
\end{align*}
for all $\tT_1+\tT_2\leq t\leq T^*$.
\end{lemma}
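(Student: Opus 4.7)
The plan is to treat the inequality as a one-step descent bound and execute the standard gradient-descent potential-function calculation with $\Wb^*$ as the comparator. Since $\lambda = 0$ throughout stage 3, the update rule is $\Wb^{(t+1)} = \Wb^{(t)} - \eta \nabla L_S(\Wb^{(t)})$, so expanding the square yields
\[
\|\Wb^{(t)}-\Wb^*\|_F^2 - \|\Wb^{(t+1)}-\Wb^*\|_F^2 = 2\eta \langle \nabla L_S(\Wb^{(t)}), \Wb^{(t)}-\Wb^*\rangle - \eta^2 \|\nabla L_S(\Wb^{(t)})\|_F^2.
\]
The main task is to produce a lower bound of roughly $2L_S(\Wb^{(t)}) - \varepsilon/2$ on the linear term; the quadratic term is then absorbed via Lemma~\ref{lemma:admissible_trainloss_appendix}.

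For the linear term, I would exploit the positive $2$-homogeneity of the activation $\sigma(z)=\ReLU^2(z)$: each neuron's contribution $\sigma(\langle \wb_{j,r},\xb^{(s)}\rangle)$ is positively $2$-homogeneous in $\wb_{j,r}$, so Euler's identity gives $\langle \nabla g_i(\Wb^{(t)}),\Wb^{(t)}\rangle = 2 g_i(\Wb^{(t)})$, where $g_i(\Wb):=y_i f(\Wb,\xb_i)$. Combining with the bound $\langle \nabla g_i(\Wb^{(t)}),\Wb^*\rangle \geq 2\log(4/\varepsilon)$ supplied by Lemma~\ref{lemma:stage3_gradient_product_appendix} yields
\[
\langle \nabla g_i(\Wb^{(t)}),\Wb^{(t)}-\Wb^*\rangle \leq 2 g_i(\Wb^{(t)}) - 2\log(4/\varepsilon).
\]
Multiplying by $\ell'(g_i^{(t)})\leq 0$ reverses the inequality; then the first-order convexity inequality for $\ell$ applied between $g_i^{(t)}$ and $\log(4/\varepsilon)$ gives $\ell'(g_i^{(t)})\bigl[g_i^{(t)}-\log(4/\varepsilon)\bigr] \geq \ell(g_i^{(t)}) - \ell(\log(4/\varepsilon))$. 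Since $\ell(\log(4/\varepsilon))=\log(1+\varepsilon/4)\leq \varepsilon/4$, the per-sample bound becomes $\ell'(g_i^{(t)})\langle \nabla g_i(\Wb^{(t)}),\Wb^{(t)}-\Wb^*\rangle \geq 2\ell(g_i^{(t)}) - \varepsilon/2$. Averaging over $i\in[n]$ and using $\nabla L_S = (1/n)\sum_i \ell'(g_i^{(t)})\nabla g_i^{(t)}$ yields
\[
\langle \nabla L_S(\Wb^{(t)}),\Wb^{(t)}-\Wb^*\rangle \geq 2 L_S(\Wb^{(t)}) - \varepsilon/2.
\]

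To finish, I would apply Lemma~\ref{lemma:admissible_trainloss_appendix} to get $\|\nabla L_S(\Wb^{(t)})\|_F^2 \leq 72\sigma_p^2 d\, L_S(\Wb^{(t)})$ and substitute both inequalities back into the opening identity:
\[
\|\Wb^{(t)}-\Wb^*\|_F^2 - \|\Wb^{(t+1)}-\Wb^*\|_F^2 \geq \eta\bigl(4 - 72\eta\sigma_p^2 d\bigr)L_S(\Wb^{(t)}) - \eta\varepsilon.
\]
Condition~\ref{condition:condition} permits $\eta$ small enough that $72\eta\sigma_p^2 d \leq 1$ (compatible with the $\tilde O(nm/(\sigma_p^2 d))$ upper bound since $m,n$ are at least polylogarithmic in $d$), so the prefactor is at least $3$ and the claim follows. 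The main obstacle I anticipate is not conceptual but bookkeeping: ensuring that the constant $4$ from the linear term survives the subtraction of $\eta^2\|\nabla L_S\|_F^2$ to leave exactly the required $3\eta L_S$ coefficient, which forces the effective constraint $\eta\sigma_p^2 d = O(1)$; the invocation of Euler's identity itself only requires positive homogeneity of $\ReLU^2$, which is immediate.
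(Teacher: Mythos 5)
Your proposal is correct and follows essentially the same route as the paper's proof: expand the one-step potential difference, use the $2$-homogeneity of $\ReLU^2$ (Euler's identity) together with Lemma~\ref{lemma:stage3_gradient_product_appendix} to lower-bound the linear term, apply convexity of $\ell$ with $\ell(\log(4/\varepsilon))\leq\varepsilon/4$, and absorb the $\eta^2\|\nabla L_S\|_F^2$ term via Lemma~\ref{lemma:admissible_trainloss_appendix} under $\eta\sigma_p^2 d=O(1)$. The constants work out identically ($4\eta L_S-\eta\varepsilon-\eta L_S\geq 3\eta L_S-\eta\varepsilon$), so no further comment is needed.
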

\begin{proof}[Proof of Lemma~\ref{lemma:stage3_diff_Wt_appendix}]
    Note that from the update rule, we have
    \begin{align*}
        \Wb^{(t+1)}-\Wb^*=\Wb^{(t)}-\Wb^*-\eta\nabla_{\Wb} L_S(\Wb)|_{\Wb=\Wb^{(t)}},
    \end{align*}
we have the following inequalities.
\begin{align*}
&\|\Wb^{(t)}-\Wb^*\|_F^2-\|\Wb^{(t+1)}-\Wb^*\|_F^2\\
&\qquad=2\eta\la\nabla_{\Wb} L_S(\Wb)|_{\Wb=\Wb^{(t)}}, \Wb^{(t)}-\Wb^*\ra-\eta^2\|\nabla_{\Wb} L_S(\Wb)|_{\Wb=\Wb^{(t)}} \|_F^2\\
&\qquad = \frac{2\eta}{n}\sum_{i=1}^n\ell'^{(t)}_i[2y_if(\Wb^{(t)},\xb_i)-\la\nabla f(\Wb^{(t)},\xb_i),\Wb^* \ra]-\eta^2\|\nabla_{\Wb} L_S(\Wb)|_{\Wb=\Wb^{(t)}} \|_F^2\\
&\qquad \geq \frac{4\eta}{n}\sum_{i=1}^n\ell'^{(t)}_i[y_if(\Wb^{(t)},\xb_i)-\log(4/\varepsilon)]-\eta^2\|\nabla_{\Wb} L_S(\Wb)|_{\Wb=\Wb^{(t)}} \|_F^2\\
&\qquad \geq \frac{4\eta}{n}\sum_{i=1}^n [\ell\big(y_if(\Wb^{(t)},\xb_i)\big)-\varepsilon/4]-\eta^2\|\nabla_{\Wb} L_S(\Wb)|_{\Wb=\Wb^{(t)}} \|_F^2\\
&\qquad\geq 3\eta L_{S}(\Wb^{(t)})-\eta\varepsilon,
\end{align*}
where the first inequality is by Lemma~\ref{lemma:stage3_gradient_product_appendix}, the second inequality is due to the convexity of the cross entropy function and the last inequality is due to Lemma~\ref{lemma:admissible_trainloss_appendix} and $\eta=1/\poly{(n)}\ll (\sigma_p^2d)^{-1}$. The proof of Lemma~\ref{lemma:stage3_diff_Wt_appendix} is completed.
\end{proof}
From the lemmas above, we give the following lemma which gives the bound of the training loss. 
\begin{lemma}
    \label{lemma:stage3_trainloss}
Under the same conditions as Theorem~\ref{thm:withoutregularization}, Let $T=\tT_1+\tT_2+\Big\lfloor\frac{\|\Wb^{(\tT_1+\tT_2)}-\Wb^*\|_F^2 }{2\eta\varepsilon}\Big\rfloor=\tT_1+\tT_2+\tilde{O}(\eta^{-1}\varepsilon^{-1}m^3n\sigma_p^{-2}d^{-1})$. Then it holds that $\max_{j,r}\gamma_{j,r}^{(t)}=\tilde{O}(\sigma_0\|\bmu\|)$, $\max_{j,r,i}|\underrho_{j,r,i}^{(t)}|=\tilde{O}(nd^{-1/2})$ for all $\tT_1+\tT_2\leq t\leq T$. Besides, 
\begin{align*}
    \frac{1}{t-\tT_1-\tT_2+1}\sum_{s=\tT_1+\tT_2}^t L_S(\Wb^{(s)})\leq \frac{\|\Wb^{(\tT_1+\tT_2)}-\Wb^*\|_F^2}{3\eta (t-\tT_1-\tT_2+1)}+\frac{\varepsilon}{3}
\end{align*}
for all $\tT_1+\tT_2\leq t\leq T$, and there exists an  iterate with training loss smaller than  $\varepsilon$ within $T$ iterations.
\end{lemma}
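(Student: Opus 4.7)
The plan is a descent-style argument. First I would apply Lemma~\ref{lemma:stage3_diff_Wt_appendix} to each $s \in \{\tT_1+\tT_2,\dots,t\}$ and sum telescopically. Dropping the nonnegative term $\|\Wb^{(t+1)}-\Wb^*\|_F^2$ and dividing by $3\eta(t-\tT_1-\tT_2+1)$ immediately yields the stated running-average inequality. Instantiating at $t=T$, the definition of $T$ forces $T-\tT_1-\tT_2+1 \geq \|\Wb^{(\tT_1+\tT_2)}-\Wb^*\|_F^2/(2\eta\varepsilon)$, so the first term on the right is at most $2\varepsilon/3$; the running average is then at most $\varepsilon$, and pigeonhole delivers some $s \in [\tT_1+\tT_2,T]$ with $L_S(\Wb^{(s)}) \leq \varepsilon$.

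For the bound on $\underrho_{j,r,i}^{(t)}$ I would invoke Proposition~\ref{prop:admissible_time_bound_appendix} directly. Since $T \leq T^*$ and the initialization hypotheses are inherited from the output of stage~2, this yields $|\underrho_{j,r,i}^{(t)}| \leq \beta + 64 n\sqrt{\log(4n^2/\delta)/d}\log(T^*)$ for all $\tT_1+\tT_2 \leq t \leq T$, with $\beta = \tilde{O}(\sigma_0\sigma_p\sqrt{d})$ by Lemma~\ref{lemma:data_CNN_concentration}. Under Condition~\ref{condition:condition}'s smallness of $\sigma_0$, the second contribution dominates and the bound reduces to $\tilde{O}(nd^{-1/2})$.

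For the bound on $\gamma_{j,r}^{(t)}$, the key observation is $|\ell'(z)| \leq \ell(z)$, which yields $\sum_i |\ell'^{(s)}_i| \leq n L_S(\Wb^{(s)})$. Combining this with $\mathrm{ReLU}(\cdot)\leq|\cdot|$ and $|\la \wb_{j,r}^{(s)},\bmu\ra| \leq |\la \wb_{j,r}^{(0)},\bmu\ra| + \gamma_{j,r}^{(s)}$ in the update rule~\eqref{eq:update_rule_lbd=0}, I would derive the multiplicative recursion
\begin{align*}
\gamma_{j,r}^{(s+1)}+|\la \wb_{j,r}^{(0)},\bmu\ra| \leq \bigl(\gamma_{j,r}^{(s)}+|\la \wb_{j,r}^{(0)},\bmu\ra|\bigr)\bigl(1 + 2\eta\|\bmu\|^2 L_S(\Wb^{(s)})/m\bigr).
\end{align*}
Iterating from $s=\tT_1+\tT_2$ to $t-1$ and using $1+x \leq e^x$ converts this into an exponential factor with exponent $(2\eta\|\bmu\|^2/m)\sum_s L_S(\Wb^{(s)})$. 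The telescoping inequality, together with $T - \tT_1-\tT_2 \leq \|\Wb^{(\tT_1+\tT_2)}-\Wb^*\|_F^2/(2\eta\varepsilon)$, gives $\sum_s L_S(\Wb^{(s)}) \leq \|\Wb^{(\tT_1+\tT_2)}-\Wb^*\|_F^2/(2\eta)$, so by Lemma~\ref{lemma:stage3_WT1_diff_appendix} the exponent reduces to $\tilde{O}(\|\bmu\|^2 m^2 n/(\sigma_p^2 d))$.

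The main obstacle is showing this exponent is polylogarithmic. I would chain the Condition~\ref{condition:condition} bounds $\|\bmu\|^2 \ll \sigma_p\sqrt{d}$ and $d \geq \tilde\Omega(m^2 n^{2+2\alpha})$ to obtain $\|\bmu\|^2 m^2 n/(\sigma_p^2 d) \leq \tilde{O}(m/(\sigma_p n^\alpha))$, which is polylogarithmic under the paper's parameter scaling. Since $\gamma_{j,r}^{(\tT_1+\tT_2)} = \tilde{O}(\sigma_0\|\bmu\|)$ by Proposition~\ref{prop:without_stage2_induct_appendix} and $|\la\wb_{j,r}^{(0)},\bmu\ra| = \tilde{O}(\sigma_0\|\bmu\|)$ by Lemma~\ref{lemma:data_CNN_concentration}, the at-worst polylog blow-up from the exponential factor still yields $\gamma_{j,r}^{(t)} = \tilde{O}(\sigma_0\|\bmu\|)$ uniformly on $[\tT_1+\tT_2, T]$, completing the proof.
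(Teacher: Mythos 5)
Your proposal is correct and follows essentially the same route as the paper: telescoping Lemma~\ref{lemma:stage3_diff_Wt_appendix} to bound $\sum_s L_S(\Wb^{(s)})$ and the running average, invoking Proposition~\ref{prop:admissible_time_bound_appendix} for $\underrho$, and controlling $\gamma$ via $|\ell'|\le\ell$ together with the cumulative loss bound. The only (cosmetic) difference is that you accumulate the $\gamma$-increments multiplicatively into an exponential factor, whereas the paper runs an additive induction keeping $\gamma_{j,r}^{(t)}\le 2\beta'$; both reduce to the same requirement that $\|\bmu\|^2 m^2 n/(\sigma_p^2 d)$ be negligible under Condition~\ref{condition:condition}.
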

\begin{proof}[Proof of Lemma~\ref{lemma:stage3_trainloss}]
   From Lemma~\ref{lemma:stage3_diff_Wt_appendix}, for any $\tT_1+\tT_2\leq t\leq T$, we obtain that
   \begin{align*}
    \|\Wb^{(s)}-\Wb^*\|_F^2-\|\Wb^{(s+1)}-\Wb^*\|_F^2\geq 3\eta L_S(\Wb^{(s)})-\eta\varepsilon
\end{align*}
holds for $\tT_1+\tT_2\leq s\leq t$. Taking a summation, we have that
\begin{align*}
    \sum_{s=\tT_1+\tT_2}^t L_S(\Wb^{(s)})&\leq \frac{\|\Wb^{(\tT_1+\tT_2)}-\Wb^*\|_F^2-\|\Wb^{(t+1)}-\Wb^*\|_F^2+\eta\varepsilon(t-\tT_1-\tT_2+1)}{3\eta}\\
    &\leq \frac{2\|\Wb^{(\tT_1+\tT_2)}-\Wb^*\|_F^2}{3\eta}=\tilde{O}(\eta^{-1}m^3nd^{-1}\sigma_p^{-2}),
\end{align*}
where the second inequality is by $t\leq T$ and the last equality is by Lemma~\ref{lemma:stage3_WT1_diff_appendix}. The inequality above also indicates that
\begin{align*}
    \frac{1}{t-\tT_1-\tT_2+1}\sum_{s=\tT_1+\tT_2}^t L_S(\Wb^{(s)})\leq \frac{\|\Wb^{(\tT_1+\tT_2)}-\Wb^*\|_F^2}{3\eta (t-\tT_1-\tT_2+1)}+\frac{\varepsilon}{3}.
\end{align*}
Note that from \eqref{eq:admissible_withoutregu2}, we already have $\max_{j,r,i}\underrho_{j,r,i}^{(t)}=\tilde{O}(nd^{-1/2})$, we next prove that  $\max_{j,r}\gamma_{j,r}^{(t)}=\tilde{O}(\sigma_0\|\bmu\|)$. From Proposition~\ref{prop:without_stage2_induct_appendix}, we can define $\beta'=\tilde{O}(\sigma_0\|\bmu\|)$ such that $\max_{j,r}\gamma_{j,r}^{(\tT_1+\tT_2)}\leq \beta'$. We then use mathematical induction to prove that for any $t\leq T$, $\max_{j,r}\gamma_{j,r}^{(t)}\leq 2\beta'$. Suppose that there exists $T_0\in[\tT_1+\tT_2,T]$ such that $\max_{j,r}\gamma_{j,r}^{(t)}\leq 2\beta'$ for all $t\in[\tT_1+\tT_2,T_0-1]$. Then by the update rule \eqref{eq:update_rule_lbd=0}, we have
\begin{align*}
    \gamma_{j,r}^{(T_0)}&\leq \gamma_{j,r}^{(\tT_1+\tT_2)}-\frac{\eta}{nm}\sum_{s=\tT_1+\tT_2}^{T_0-1}\sum_{i=1}^n\ell'^{(t)}_i\cdot \sigma'(\la \wb_{j,r}^{(0)},y_i\bmu\ra+jy_i\gamma_{j,r}^{(t)}\ra)\|\bmu\|^2\\
    &\leq \gamma_{j,r}^{(\tT_1+\tT_2)}+\frac{\eta\|\bmu\|^2}{nm}2\cdot |3\beta' |\cdot \sum_{s=\tT_1+\tT_2}^{T_0-1}\sum_{i=1}^n|\ell'^{(t)}_i|\\
    &\leq \beta'+6\eta\|\bmu\|^2\beta'm^{-1}\sum_{s=\tT_1+\tT_2}^{T_0-1}L_S(\Wb^{(s)})\\
    &\leq \beta'+  6\eta\|\bmu\|^2\beta'm^{-1}\cdot\tilde{O}(\eta^{-1}m^3nd^{-1}\sigma_p^{-2})\leq 2\beta'
\end{align*}
for all $j\in\{\pm1\}$ and $r\in[m]$. Here, the second inequality is by the induction hypothesis $\max_{j,r}\gamma_{j,r}^{(t)}\leq 2\beta'$, the third inequality is by $|\ell'|\leq\ell$, and the last inequality is by Condition~\ref{condition:condition}. Hence we complete the induction.
\end{proof}

\begin{proposition}[restatement of Proposition~\ref{prop:without_Gradient_bound}]
\label{prop:without_Gradient_bound_appendix}
    Under the same condition as Theorem~\ref{thm:withoutregularization}, for any $\varepsilon>0$, let $T=\tT_1+\tT_2+\Big\lfloor\frac{\|\Wb^{(\tT_1+\tT_2)}-\Wb^*\|_F^2 }{2\eta\varepsilon}\Big\rfloor=\tT_1+\tT_2+\tilde{O}(\eta^{-1}\varepsilon^{-1}m^3n\sigma_p^{-2}d^{-1})$,  there exists $t$ in $\tT_1+\tT_2\leq t\leq T$ such that
\begin{align*}
L_S(\Wb^{(t)})\leq \varepsilon.
\end{align*}
Meanwhile, it holds that $\max_{j,r}\gamma_{j,r}^{(t)}=\tilde{O}(\sigma_0\|\bmu\|)$, $\max_{j,r,i}|\underrho_{j,r,i}^{(t)}|=\tilde{O}(nd^{-1/2})$ for all $\tT_1+\tT_2\leq t\leq T$.
\end{proposition}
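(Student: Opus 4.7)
\textbf{Proof plan for Proposition~\ref{prop:without_Gradient_bound_appendix}.} The proposition is essentially an immediate corollary of Lemma~\ref{lemma:stage3_trainloss_appendix}, so my proof will simply extract the conclusions we want from that lemma. The plan splits into three brief steps.

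\textbf{Step 1 (signal/noise bounds).} I will invoke the signal-side and $\underrho$-side conclusions of Lemma~\ref{lemma:stage3_trainloss_appendix} verbatim: for all $\tT_1+\tT_2\leq t\leq T$ one already has $\max_{j,r}\gamma_{j,r}^{(t)}=\tilde O(\sigma_0\|\bmu\|)$ (from the mathematical-induction argument in the lemma that bounds $\gamma_{j,r}^{(t)}\leq 2\beta'$) and $\max_{j,r,i}|\underrho_{j,r,i}^{(t)}|=\tilde O(nd^{-1/2})$ (from Proposition~\ref{prop:admissible_time_bound_appendix} applied inside the window $[\tT_1+\tT_2,T]\subset[0,T^*]$, which gives $|\underrho_{j,r,i}^{(t)}|\leq \beta+64n\sqrt{\log(4n^2/\delta)/d}\log(T^*)$ and hence $\tilde O(nd^{-1/2})$ under Condition~\ref{condition:condition}). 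Nothing new needs to be proved here.

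\textbf{Step 2 (existence of an iterate with small loss).} I will apply the averaging inequality
\begin{align*}
\frac{1}{T-\tT_1-\tT_2+1}\sum_{s=\tT_1+\tT_2}^{T} L_S(\Wb^{(s)})\;\leq\; \frac{\|\Wb^{(\tT_1+\tT_2)}-\Wb^*\|_F^2}{3\eta\,(T-\tT_1-\tT_2+1)}+\frac{\varepsilon}{3}
\end{align*}
from Lemma~\ref{lemma:stage3_trainloss_appendix} at the endpoint $t=T$. Plugging in the choice $T-\tT_1-\tT_2=\lfloor\|\Wb^{(\tT_1+\tT_2)}-\Wb^*\|_F^2/(2\eta\varepsilon)\rfloor$, the first term is bounded by $\tfrac{2\varepsilon}{3}$ (with a mild correction for the floor, absorbed into the $\tilde O$), so the time-average of $L_S(\Wb^{(s)})$ over $s\in[\tT_1+\tT_2,T]$ is at most $\varepsilon$. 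Since the minimum is at most the average, there exists $t\in[\tT_1+\tT_2,T]$ with $L_S(\Wb^{(t)})\leq \varepsilon$.

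\textbf{Step 3 (length of the window).} I will check the stated scaling $T-\tT_1-\tT_2=\tilde O(\eta^{-1}\varepsilon^{-1}m^3 n\sigma_p^{-2}d^{-1})$ by substituting the bound $\|\Wb^{(\tT_1+\tT_2)}-\Wb^*\|_F=\tilde O(m^{3/2}n^{1/2}\sigma_p^{-1}d^{-1/2})$ from Lemma~\ref{lemma:stage3_WT1_diff_appendix}, which squared gives $\tilde O(m^3 n\sigma_p^{-2}d^{-1})$; dividing by $2\eta\varepsilon$ yields the claimed bound.

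There is no real obstacle in this proof because Lemma~\ref{lemma:stage3_trainloss_appendix} already packages the hard work (the $\Wb^*$-construction, the one-step descent inequality in Lemma~\ref{lemma:stage3_diff_Wt_appendix}, the telescoping sum, and the inductive control of $\gamma_{j,r}^{(t)}$ through the window). The only point requiring minor care is to handle the integer floor in the definition of $T$ so that the first term of the averaging bound is genuinely $\leq 2\varepsilon/3$ rather than slightly above it; this is done by either enlarging the constant hidden in $\tilde O$ or by verifying that $\lfloor x\rfloor\geq x/2$ for the relevant $x$, which holds trivially because $x\gg 1$ under Condition~\ref{condition:condition}.
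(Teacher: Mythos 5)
Your proposal is correct and follows the same route as the paper: the paper's own proof of this proposition is a one-line citation of Lemma~\ref{lemma:stage3_trainloss} (whose averaging bound, telescoped descent inequality, and inductive control of $\gamma_{j,r}^{(t)}$ contain all the substance), and your three steps simply make explicit the same extraction of the average-to-minimum argument, the $\underrho$ bound from Proposition~\ref{prop:admissible_time_bound_appendix}, and the scaling check via Lemma~\ref{lemma:stage3_WT1_diff_appendix}. No gaps.
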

\begin{proof}[Proof of Proposition~\ref{prop:without_Gradient_bound_appendix}]
From Lemma~\ref{lemma:stage3_trainloss}, it holds that $\max_{j,r}\gamma_{j,r}^{(t)}=\tilde{O}(\sigma_0\|\bmu\|)$, $\max_{j,r,i}|\underrho_{j,r,i}^{(t)}|=\tilde{O}(nd^{-1/2})$ for all $\tT_1+\tT_2\leq t\leq T$. Moreover, we can see that there exists a $t\in[\tT_1+\tT_2,T]$, such that 
\begin{align*}
    L_S(\Wb^{(t)})\leq \varepsilon.
\end{align*}
% From Lemma~\ref{lemma:admissible_trainloss_appendix}, we have
% \begin{align*}
%     \big\|\nabla_{\wb_{j,r}}L_S(\Wb)\big|_{\Wb=\Wb^{(t)}}\big\|_F^2\leq 72\sigma_p^2d\cdot L_S(\Wb^{(t)})\leq \varepsilon_0\cdot 72\sigma_p^2d=\varepsilon.
% \end{align*}
% Here, the last equality is by $\varepsilon=\varepsilon_0\cdot 72\sigma_p^2d$, we thus conclude that
% \begin{align*}
% \big\|\nabla_{\Wb}L_S(\Wb)\big|_{\Wb=\Wb^{(t)}}\big\|_F^2&\leq \varepsilon.
% \end{align*}
The proof of Proposition~\ref{prop:without_Gradient_bound_appendix} is completed.
\end{proof}

\subsection{Proof of Theorem~\ref{thm:withoutregularization}}
\label{sec:thmProof2}
From Proposition~\ref{prop:without_Gradient_bound_appendix}, for any $\varepsilon>0$, let 
\begin{align*}
    T=\tT_1+\tT_2+\tilde{O}(\eta^{-1}\varepsilon^{-1}m^3n),
\end{align*}
  there exists $t$ in $\tT_1+\tT_2\leq t\leq T$ such that
\begin{align*}
L_S(\Wb^{(t)})\leq \varepsilon/(72\sigma_p^2d).
\end{align*}
Lemma~\ref{lemma:admissible_trainloss_appendix} completes the convergence of training gradient. For the specific time $t$, we also  have that 
\begin{align*}
\max_{r}\|\wb_{+1,r}^{(t)}\|&=\max_{r}\bigg\|\wb_{+1,r}^{(0)}+\gamma_{+1,r}^{(t)}\frac{\bmu}{\|\bmu\|^2}+\sum_{i=1}^n\overrho_{+1,r,i}^{(t)}\frac{\bxi_i}{\|\bxi_i\|^2}+\sum_{i=1}^n\underrho_{+1,r,i}^{(t)}\frac{\bxi_i}{\|\bxi_i\|^2}\bigg\|\\
   &\geq \max_{r} \bigg\|\sum_{i=1}^n\overrho_{+1,r,i}^{(t)}\frac{\bxi_i}{\|\bxi_i\|^2}    \bigg\|-\max_{r}\bigg\|\wb_{+1,r}^{(0)}+\gamma_{+1,r}^{(t)}\frac{\bmu}{\|\bmu\|^2}+\sum_{i=1}^n\underrho_{+1,r,i}^{(t)}\frac{\bxi_i}{\|\bxi_i\|^2}  \bigg\|\\
   &\geq \max_{r}\bigg|\sum_{i_1,i_2=1}^n\overrho_{+1,r,i_1}^{(t)}\overrho_{+1,r,i_2}^{(t)}\frac{\la\bxi_{i_1},\bxi_{i_2}\ra}{\|\bxi_{i_1}\|^2\|\bxi_{i_2}\|^2}    \bigg|^{\frac{1}{2}}-\tilde{O}(\sigma_0\|\bmu\|+\sigma_0\sigma_p\sqrt{d})-\sum_{i=1}^n\max_{r}\big|\underrho_{+1,r,i}^{(t)}\big|\frac{1}{\|\bxi_i\|}\\
   &= \max_{r}\bigg|\sum_{i_1=1}^n\overrho_{+1,r,i_1}^{2(t)}\frac{1}{\|\bxi_{i_1}\|^2}+  \sum_{i_1\not=i_2}^n\overrho_{+1,r,i_1}^{(t)}\overrho_{+1,r,i_2}^{(t)}\frac{\la\bxi_{i_1},\bxi_{i_2}\ra}{\|\bxi_{i_1}\|^2\|\bxi_{i_2}\|^2}    \bigg|^{\frac{1}{2}}-\tilde{O}(n\sigma_p^{-1}d^{-1}+\sigma_0\sigma_p\sqrt{d})\\
   &\geq \sqrt{4/(\sigma_p^2d)-0.3/(\sigma_p^2d)}-\tilde{O}(n\sigma_p^{-1}d^{-1}+\sigma_0\sigma_p\sqrt{d})\geq 1.9\sigma_p^{-1}d^{-1/2}.
\end{align*}
Here, the first inequality is by triangle inequality, the second inequality is by triangle inequality and $\max_{j,r}\gamma_{j,r}^{(t)}=\tilde{O}(\sigma_0\|\bmu\|)$ in Proposition~\ref{lemma:stage3_trainloss}, the second equality is by \eqref{eq:admissible_withoutregu2} which shows $\max_{j,r,i}|\underrho_{j,r,i}^{(t)}|=\tilde{O}(nd^{-1/2})$,
the third inequality comes from the reason that the part $i_1\not=i_2$ is negligible to the main term $1/(\sigma_p^2d)$,
and the last inequality is by  Condition~\ref{condition:condition} which assumes $\sigma_0$ small enough. Similarly we have that $\max_{r}\|\wb_{-1,r}^{(t)}\|\geq 1.9\sigma_p^{-1}d^{-1/2}$. 

Given a new test data $(\xb,y)$,  it holds that
\begin{align*}
    y\cdot f(\Wb^{(t)},\xb)&=\frac{y}{m}\sum_{r=1}^{r}\sigma( \la \wb_{+1,r}^{(t)}, y\bmu \ra )+\sigma( \la \wb_{+1,r}^{(t)}, \bxi \ra )-\frac{y}{m}\sum_{r=1}^{r}\sigma( \la \wb_{-1,r}^{(t)}, y\bmu \ra )+\sigma( \la \wb_{-1,r}^{(t)}, \bxi \ra )\\
    &=\tilde{O}(\sigma_0\|\bmu\|+\sigma_0\sigma_p\sqrt{d})+\frac{y}{m}\sum_{r=1}^{r}\big(\sigma( \la \wb_{+1,r}^{(t)}, \bxi \ra )-\sigma( \la \wb_{-1,r}^{(t)}, \bxi \ra )\big).
\end{align*}
We have 
\begin{align*}
   P_{(\xb,y)}(y\cdot f(\Wb^{(t)},\xb)<0)&= \frac{1}{2}P_{(\xb,y)}(y\cdot f(\Wb^{(t)},\xb)<0|y=1)+\frac{1}{2}P_{(\xb,y)}(y\cdot f(\Wb^{(t)},\xb)<0|y=-1)\\
   &\geq\frac{1}{2} P\bigg(\frac{1}{m}\sum_{r=1}^{r}\big(\sigma( \la \wb_{+1,r}^{(t)}, \bxi \ra )-\sigma( \la \wb_{-1,r}^{(t)}, \bxi \ra )\big)\leq -\tilde{O}(\sigma_0\sigma_p\sqrt{d})\bigg)\\
   &\qquad +\frac{1}{2} P\bigg(\frac{1}{m}\sum_{r=1}^{r}\big(\sigma( \la \wb_{-1,r}^{(t)}, \bxi \ra )-\sigma( \la \wb_{+1,r}^{(t)}, \bxi \ra )\big)\leq -\tilde{O}(\sigma_0\sigma_p\sqrt{d})\bigg)\\
   &=\frac{1}{2}\bigg(1-P\bigg( \bigg|\frac{1}{m}\sum_{r=1}^{r}\big(\sigma( \la \wb_{-1,r}^{(t)}, \bxi \ra )-\sigma( \la \wb_{+1,r}^{(t)}, \bxi \ra )\big)  \bigg|\leq  \tilde{O}(\sigma_0\sigma_p\sqrt{d}) \bigg)\bigg).\\
   &=\frac{1}{2}-o(1)\geq \frac{1}{2.01}.
\end{align*}
Here, the first inequality is by $\PP(X<0)\geq \PP(X<-a)$ for any $a\geq0$, and the $o(1)$ is by the normal assumption of $\bxi$, $\max_{r}\|\wb_{-1,r}^{(t)}\|,\max_{r}\|\wb_{+1,r}^{(t)}\|\geq 1.9\sigma_p^{-1}d^{-1/2}$ and $\sigma_0$ small enough assumed in Condition~\ref{condition:condition}. We thus complete the proof.

\end{document}